\documentclass{article}


\usepackage[final]{neurips_2021}




\usepackage[utf8]{inputenc} 
\usepackage[T1]{fontenc}    
\usepackage[colorlinks=true, linkcolor=blue, citecolor=blue,urlcolor=black]{hyperref}  
\usepackage{url}            
\usepackage{booktabs}       
\usepackage{amsfonts}       
\usepackage{nicefrac}       
\usepackage{microtype}      
\usepackage{xcolor}         

\definecolor{Green}{rgb}{0.13, 0.65, 0.3}
\definecolor{Amber}{rgb}{0.3, 0.5, 1.0}
\usepackage[]{color-edits}
\addauthor{HL}{Green}
\addauthor{TJ}{red}

\usepackage[utf8]{inputenc} 
\usepackage[T1]{fontenc}    

\usepackage{amsthm}
\usepackage[algo2e, ruled, vlined]{algorithm2e}
\usepackage{algorithmicx}
\usepackage{algorithm}
\usepackage[noend]{algcompatible}
\usepackage{bbm}      
\usepackage{mathtools}    
\usepackage{amsopn}
\usepackage{amssymb}
\usepackage{graphicx}
\usepackage{xcolor}

\definecolor{Green}{rgb}{0.13, 0.65, 0.3}
\definecolor{Amber}{rgb}{0.3, 0.5, 1.0}
\usepackage[]{color-edits}
\addauthor{HL}{Green}
\addauthor{TJ}{Amber}
\usepackage{comment}

\usepackage[skins,theorems]{tcolorbox} 
\tcbset{highlight math style={enhanced,colframe=red,colback=white,arc=0pt,boxrule=1pt}} 

\newcommand{\inner}[1]{ \left\langle {#1} \right\rangle }
\newcommand{\Ind}[1]{ \field{I}{\left\{{#1}\right\}} }
\newcommand{\Indt}[1]{ \field{I}_t{\left({#1}\right)} }
\newcommand{\Indtau}[1]{ \field{I}_\tau{\left({#1}\right)} }

\newcommand{\norm}[1]{\left\|{#1}\right\|}

\newcommand{\scO}{\mathcal{O}}

\newcommand{\gap}{\Delta}
\newcommand{\hatl}{\widehat{\ell}}

\newcommand{\whatq}{\widehat{q}}

\newcommand{\whatQ}{\widehat{Q}}
\newcommand{\whatV}{\widehat{V}}

\newcommand{\clip}{\text{clip}}
\newcommand{\wtilq}{\widetilde{q}}

\newcommand{\wtill}{\widetilde{\ell}}
\newcommand{\selfterm}{\mathbb{G}}


\theoremstyle{theorem} 
	\newtheorem{theorem}{Theorem}[subsection]
	\newtheorem{lemma}[theorem]{Lemma}
	
	\newtheorem{corollary}[theorem]{Corollary}
	\newtheorem{proposition}[theorem]{Proposition}
	\newtheorem{definition}[theorem]{Definition}

\renewcommand{\thetheorem}{%
	\ifnum\value{subsection}>0 
	\thesubsection
	\else
	\thesection
	\fi
	.\arabic{theorem}%
}

\allowdisplaybreaks


\usepackage{amsmath}
\DeclareMathOperator*{\argmin}{\arg\!\min}

\usepackage{amsthm}
\usepackage{amsmath}
\usepackage{amssymb}
\usepackage{graphicx}
\usepackage{mathtools}
\usepackage{enumerate}
\usepackage{enumitem}
\usepackage{footnote}
\usepackage{float}
\usepackage{xspace}
\usepackage{multirow}
\usepackage{nicefrac}
\usepackage{wrapfig}
\usepackage{framed}
\usepackage{url}
\usepackage{tikz}
\usetikzlibrary{arrows,chains,matrix,positioning,scopes}

\newcommand{\calA}{{\mathcal{A}}}

\newcommand{\calE}{{\mathcal{E}}}

\newcommand{\calP}{{\mathcal{P}}}

\newcommand{\paralog}{\beta}
\newcommand{\cnt}{B}
\newcommand{\cons}{J}
\newcommand{\pcons}{\iota}
\newcommand{\gapmin}{\Delta_{\textsc{min}}}
\DeclareMathOperator{\polylog}{{\ensuremath{\mathrm{polylog}}}}

\newcommand{\field}[1]{\mathbb{#1}}

\newcommand{\fR}{\field{R}}

\newcommand{\E}{\field{E}}

\newcommand{\Reg}{{\text{\rm Reg}}}
\newcommand{\EReg}{{\text{\rm EstReg}}}

\newcommand{\order}{\ensuremath{\mathcal{O}}}
\newcommand{\otil}{\ensuremath{\widetilde{\mathcal{O}}}}

\newcommand{\opt}{\mathring{q}}
\newcommand{\optpi}{\mathring{\pi}}

\newcommand{\myComment}[1]{\null\hfill\scalebox{0.9}{\text{\color{black}$\triangleright$ \textsf{#1}}}}


\newcommand{\rbr}[1]{\left(#1\right)}
\newcommand{\Bigrbr}[1]{\Big(#1\Big)}
\newcommand{\Biggrbr}[1]{\Bigg(#1\Bigg)}
\newcommand{\sbr}[1]{\left[#1\right]}
\newcommand{\Bigsbr}[1]{\Big[#1\Big]}
\newcommand{\Biggsbr}[1]{\Bigg[#1\Bigg]}
\newcommand{\cbr}[1]{\left\{#1\right\}}

\newcommand{\abr}[1]{\left|#1\right|}

\DeclareFontFamily{OMX}{MnSymbolE}{}
\DeclareFontShape{OMX}{MnSymbolE}{m}{n}{
    <-6>  MnSymbolE5
   <6-7>  MnSymbolE6
   <7-8>  MnSymbolE7
   <8-9>  MnSymbolE8
   <9-10> MnSymbolE9
  <10-12> MnSymbolE10
  <12->   MnSymbolE12}{}
\DeclareSymbolFont{mnlargesymbols}{OMX}{MnSymbolE}{m}{n}
\SetSymbolFont{mnlargesymbols}{bold}{OMX}{MnSymbolE}{b}{n}
\DeclareMathDelimiter{\llangle}{\mathopen}{mnlargesymbols}{'164}{mnlargesymbols}{'164}
\DeclareMathDelimiter{\rrangle}{\mathclose}{mnlargesymbols}{'171}{mnlargesymbols}{'171}

\usepackage{prettyref}
\newcommand{\pref}[1]{\prettyref{#1}}

\newcommand{\savehyperref}[2]{\texorpdfstring{\hyperref[#1]{#2}}{#2}}
\newrefformat{eq}{\savehyperref{#1}{Eq.~\eqref{#1}}}
\newrefformat{eqn}{\savehyperref{#1}{Equation~\eqref{#1}}}
\newrefformat{lem}{\savehyperref{#1}{Lemma~\ref*{#1}}}
\newrefformat{def}{\savehyperref{#1}{Definition~\ref*{#1}}}
\newrefformat{line}{\savehyperref{#1}{line~\ref*{#1}}}
\newrefformat{thm}{\savehyperref{#1}{Theorem~\ref*{#1}}}
\newrefformat{corr}{\savehyperref{#1}{Corollary~\ref*{#1}}}
\newrefformat{cor}{\savehyperref{#1}{Corollary~\ref*{#1}}}
\newrefformat{col}{\savehyperref{#1}{Corollary~\ref*{#1}}}
\newrefformat{sec}{\savehyperref{#1}{Section~\ref*{#1}}}
\newrefformat{app}{\savehyperref{#1}{Appendix~\ref*{#1}}}
\newrefformat{assum}{\savehyperref{#1}{Assumption~\ref*{#1}}}
\newrefformat{ex}{\savehyperref{#1}{Example~\ref*{#1}}}
\newrefformat{fig}{\savehyperref{#1}{Figure~\ref*{#1}}}
\newrefformat{alg}{\savehyperref{#1}{Algorithm~\ref*{#1}}}
\newrefformat{rem}{\savehyperref{#1}{Remark~\ref*{#1}}}
\newrefformat{conj}{\savehyperref{#1}{Conjecture~\ref*{#1}}}
\newrefformat{prop}{\savehyperref{#1}{Proposition~\ref*{#1}}}
\newrefformat{proto}{\savehyperref{#1}{Protocol~\ref*{#1}}}
\newrefformat{prob}{\savehyperref{#1}{Problem~\ref*{#1}}}
\newrefformat{claim}{\savehyperref{#1}{Claim~\ref*{#1}}}
\newrefformat{que}{\savehyperref{#1}{Question~\ref*{#1}}}
\newrefformat{obs}{\savehyperref{#1}{Observation~\ref*{#1}}}

\newcommand{\ftrl}{\textsc{FTRL}\xspace}

\title{The best of both worlds: stochastic and adversarial episodic MDPs with unknown transition}

%

\author{%
Tiancheng Jin \\
University of Southern California\\
\texttt{tiancheng.jin@usc.edu} \\
\And
 Longbo Huang \\
Tsinghua University\\
\texttt{longbohuang@tsinghua.edu.cn} \\
\And
  Haipeng Luo \\
University of Southern California\\
\texttt{haipengl@usc.edu} \\
}

\begin{document}

\maketitle

\begin{abstract}
We consider the best-of-both-worlds problem for learning an episodic Markov Decision Process through $T$ episodes, with the goal of achieving $\otil(\sqrt{T})$ regret when the losses are adversarial and simultaneously $\order(\polylog(T))$ regret when the losses are (almost) stochastic.
Recent work by~\citep{jin2020simultaneously} achieves this goal when the fixed transition is known, and leaves the case of unknown transition as a major open question.
In this work, we resolve this open problem by using the same Follow-the-Regularized-Leader (\ftrl) framework together with a set of new techniques.
Specifically, we first propose a loss-shifting trick in the \ftrl analysis, which greatly simplifies the approach of~\citep{jin2020simultaneously} and already improves their results for the known transition case.
Then, we extend this idea to the unknown transition case and develop a novel analysis which upper bounds the transition estimation error by (a fraction of) the regret itself in the stochastic setting, a key property to ensure $\order(\polylog(T))$ regret.
\end{abstract}

\section{Introduction}\label{sec:intro}

We study the problem of learning finite-horizon Markov Decision Processes (MDPs) with unknown transition through $T$ episodes. 
In each episode, the learner starts from a fixed initial state and repeats the following for a fixed number of steps: select an available action, incur some loss, and transit to the next state according to a fixed but unknown transition function.
The goal of the learner is to minimize her regret, which is the difference between her total loss and that of the optimal stationary policy in hindsight. 

When the losses are stochastically generated, \citep{simc2019, yang2021q} show that $\order(\log T)$ regret is achievable (ignoring dependence on some gap-dependent quantities for simplicity).
On the other hand, even when the losses are adversarially generated, \citep{rosenberg19a, jin2019learning} show that $\otil(\sqrt{T})$ regret is achievable.\footnote{Throughout the paper, we use $\otil(\cdot)$ to hide polylogarithmic terms.}
Given that the existing algorithms for these two worlds are substantially different,
\citet{jin2020simultaneously} asked the natural question of whether one can achieve the \emph{best of both worlds}, that is, enjoying (poly)logarithmic regret in the stochastic world while simultaneously ensuring some worst-case robustness in the adversarial world.
Taking inspiration from the bandit literature and using the classic Follow-the-regularized-Leader (\ftrl) framework with a novel regularizer, they successfully achieved this goal, albeit under a strong restriction that the transition has to be known ahead of time.
Since it is highly unclear how to ensure that the transition estimation error is only $\order(\polylog(T))$, extending their results to the unknown transition case is highly challenging and was left as a key open question.

In this work, we resolve this open question and propose the first algorithm with such a best-of-both-worlds guarantee under unknown transition.
Specifically, our algorithm enjoys $\otil(\sqrt{T})$ regret always, and simultaneously $\order(\log^2 T)$ regret if the losses are i.i.d. samples of a fixed distribution.
More generally, our polylogarithmic regret holds under a general condition similar to that of~\citep{jin2020simultaneously}, which requires neither independence nor identical distributions.
For example, it covers the corrupted i.i.d. setting where our algorithm achieves $\otil(\sqrt{C})$ regret with $C \leq T$ being the total amount of corruption.


\paragraph{Techniques}
Our results are achieved via three new techniques.
First, we propose a new \textit{loss-shifting} trick for the \ftrl analysis when applied to MDPs.
While similar ideas have been used for the special case of multi-armed bandits (e.g.,~\citep{wei2018more, zimmert2019optimal, lee2020closer, zimmert2021tsallis}),
its extension to MDPs has eluded researchers, which is also the reason why~\citep{jin2020simultaneously} resorts to a different approach with a highly complex analysis involving analyzing the inverse of the non-diagonal Hessian of a complicated regularizer.
Instead, inspired by the well-known performance difference lemma, we design a key shifting function in the \ftrl analysis, which helps reduce the variance of the stability term  and eventually leads to an adaptive bound with a certain self-bounding property known to be useful for the stochastic world.
To better illustrate this idea, we use the known transition case as a warm-up example in \pref{sec:loss_shifting}, and show that the simple Tsallis entropy regularizer (with a diagonal Hessian) is already enough to achieve the best-of-both-worlds guarantee.
This not only greatly simplifies the approach of~\cite{jin2020simultaneously} (paving the way for extension to unknown transition), but also leads to bounds with better dependence on some parameters, which on its own is a notable result already.

Our second technique is a new framework to deal with unknown transition under adversarial losses, which is important for incorporating the loss-shifting trick mentioned above.
Specifically, when the transition is unknown, prior works~\citep{rosenberg19a, rosenberg2019online, jin2019learning, lee2020bias} perform \ftrl over the set of all plausible occupancy measures according to a confident set of the true transition,
which can be seen as a form of optimism encouraging exploration.
Since our loss-shifting trick requires a fixed transition, we propose to move the optimism from the decision set of \ftrl to the losses fed to \ftrl.
More specifically, we perform \ftrl over the empirical transition in some doubling epoch schedule, and add (negative) bonuses to the loss functions so that the algorithm is optimistic and never underestimates the quality of a policy, an idea often used in the stochastic setting (e.g.,~\citep{azar2017minimax}).
See \pref{sec:algorithms} for the details of our algorithm.

Finally, we develop a new analysis to show that the transition estimation error of our algorithm is only polylogarithmic in $T$, overcoming the most critical obstacle in achieving best-of-both-worlds.
An important aspect of our analysis is to make use of the amount of underestimation of the optimal policy, a term that is often ignored since it is nonpositive for optimistic algorithms.
We do so by proposing a novel decomposition of the regret inspired by the work of~\cite{simc2019},
and show that in the stochastic world, every term in this decomposition can be bounded by a fraction of the regret itself plus some polylogarithmic terms, which is enough to conclude the final polylogarithmic regret bound.
See \pref{sec:analysis} for a formal summary of this idea.

\paragraph{Related work} 
For earlier results in each of the two worlds, we refer the readers to the systematic surveys in~\citep{simc2019,yang2021q, jin2019learning}. 
The work closest to ours is~\citep{jin2020simultaneously} which assumes known transition, and as mentioned, we strictly improve their bounds and more importantly extend their results to the unknown transition case.

Two recent works~\citep{lykouris2021corruption, chen2021improved} also consider the corrupted stochastic setting, where both the losses and the transition function can be corrupted by a total amount of $C$.
This is more general than our results since we assume a fixed transition and only allow the losses to be corrupted.
On the other hand, their bounds are worse than ours when specified to our setting --- 
\citep{lykouris2021corruption} ensures a gap-dependent polylogarithmic regret bound of $\order(C \log^3 T + C^2)$,
while \citep{chen2021improved} achieves $\order(\log^3 T + C)$ but with a potentially larger gap-dependent quantity.
Therefore, neither result provides a meaningful guarantee in the adversarial world when $C=T$, while our algorithm always ensures a robustness guarantee with $\otil(\sqrt{T})$ regret.
Their algorithms are also very different from ours and are not based on \ftrl.

The question of achieving best-of-both-worlds guarantees for the special case of multi-armed bandits was first proposed in~\citep{bubeck2012best}.
Since then, many improvements using different approaches have been established over the years~\citep{seldin2014one, auer2016algorithm, seldin2017improved, wei2018more, lykouris2018stochastic, gupta2019better, zimmert2019beating, zimmert2021tsallis, lee2021achieving}.  
One notable and perhaps surprising approach is to use the \ftrl framework, originally designed only for the adversarial settings but later found to be able to automatically adapt to the stochastic settings as long as certain regularizers are applied~\citep{wei2018more, zimmert2019beating, zimmert2021tsallis}.
Our approach falls into this category, and our regularizer design is also based on these prior works.
As mentioned, however, obtaining our results requires the new loss-shifting technique as well as the novel analysis on controlling the estimation error, both of which are critical to address the extra challenges presented in MDPs.

\section{Preliminaries}\label{sec:prelim}

We consider the problem of learning an episodic MDP through $T$ episodes, where the MDP is formally defined by a tuple $(S,A,L,P,\cbr{\ell_t}_{t=1}^{T})$ with $S$ being a finite state set, $A$ being a finite action set, $L$ being the horizon, $\ell_t: S \times A \rightarrow [0,1]$ being the loss function of episode $t$,  and $P: S\times A\times S \rightarrow [0,1]$ being the transition function so that $P(s'|s,a)$ is the probability of moving to state $s'$ after executing action $a$ at state $s$. 

Without loss of generality~\citep{jin2019learning}, the MDP is assumed to have a layer structure, that is, the state set $S$ is partitioned into $L+1$ subsets $S_0$, $S_1$, \ldots, $S_L$ such that the state transition is only possible from one layer to the next layer (in other words, $P(s'|s,a)$ must be zero unless $s\in S_k$ and $s' \in S_{k+1}$ for some $k \in \{0, \ldots, L-1\}$).
Moreover, $S_0$ contains $s_0$ only (the initial state), and $S_L$ contains $s_L$ only (the terminal state).
We use $k(s)$ to represent the layer to which state $s$ belongs.

Ahead of time, the environment decides an MDP with $P$ and  $\cbr{\ell_t}_{t=1}^{T}$ unknown to the learner. The interaction proceeds through $T$ episodes. 
In episode $t$, the learner selects a stochastic policy $\pi_t: S\times A \rightarrow [0,1]$ where $\pi_t(a|s)$ denotes the probability of taking action $a$ at state $s$.\footnote{%
Note that $\pi_t(\cdot|s_L)$ is not meaningful since no action will be taken at $s_L$.
For conciseness, however, we usually define functions over $S\times A$ instead of $(S\setminus\{s_L\}) \times A$.
} 
Starting from the initial state $s^t_0=s_0$, the learner then repeatedly selects an action $a_k^t$ drawn from $\pi_t\rbr{ \cdot \left\lvert s_k^t \right. }$, suffers loss $\ell_t(s_k^t,a_k^t)$, and transits to the next state $s_{k+1}^t \in S_{k+1}$ for $k=0,\ldots, L-1$, until reaching the terminal state $s_L$. 
At the end of the episode, the learner receives some feedback on the loss function $\ell_t$.
In the \textit{full-information} setting, the learner observes the entire loss function $\ell_t$, while in the more challenging \textit{bandit feedback} setting, the learner only observes the losses of those visited state-action pairs, that is, $\ell_t(s_0^t,a_0^t), \ldots, \ell_t(s_{L-1}^t,a_{L-1}^t)$. 

With slight abuse of notation, we denote the expected loss of a policy $\pi$ for episode $t$ by $\ell_t(\pi) = \E\sbr{ \left. \sum_{k=0}^{L-1} \ell_t(s_k, a_k) \right\rvert P, \pi }$, where the trajectory $\{(s_{k},a_{k})\}_{k=0,\ldots,L-1}$ is the generated by executing policy $\pi$ under transition $P$.
The regret of the learner against some policy $\pi$ is then defined as $\Reg_T(\pi)= \E\sbr{\sum_{t=1}^{T} \ell_t(\pi_t) - \ell_t(\pi)}$,
and we denote by $\optpi$ one of the optimal policies in hindsight such that $\Reg_T(\optpi) = \max_{\pi} \Reg_T(\pi)$.


\paragraph{Adversarial world versus stochastic world} 
We consider two different setups depending on how the loss functions $\ell_1, \ldots, \ell_T$ are generated.
In the adversarial world, the environment decides the loss functions arbitrarily with knowledge of the learner's algorithm (but not her randomness).
In this case, the goal is to minimize the regret against the best policy $\Reg_T(\optpi)$,
with the best existing upper bound being $\otil(L|S|\sqrt{|A|T})$~\citep{rosenberg19a, jin2019learning} and the best lower bound being $\Omega(L\sqrt{|S||A|T})$~\citep{jin2018q} (for both full-information and bandit feedback).

In the stochastic world, following~\citep{jin2020simultaneously} (which generalizes the bandit case of~\citep{zimmert2019optimal,zimmert2021tsallis}), we assume that the loss functions satisfy the following condition:  there exists a deterministic policy $\pi^\star: S \rightarrow A$, a gap function $\gap: S\times A \rightarrow \fR_{+}$ and a constant $C > 0$ such that 
 \begin{equation}
 \label{eq:loss_condition}
 \Reg_T(\pi^\star) \geq \E\sbr{  \sum_{t=1}^{T} \sum_{s\neq s_L} \sum_{a \neq \pi^\star(s)} q_t(s,a) \gap(s,a)   }  - C,
 \end{equation}
where $q_t(s,a)$ is the probability of the learner visiting $(s,a)$ in episode $t$.
This general condition covers the heavily-studied i.i.d. setting where $\ell_1, \ldots, \ell_T$ are i.i.d. samples of a fixed distribution, in which case $C=0$, $\pi^\star$ is simply the optimal policy, and $\gap$ is the gap function with respect to the optimal $Q$-function.
More generally, the condition also covers the corrupted i.i.d. setting with $C$ being the total amount of corruption.
We refer the readers to~\citep{jin2020simultaneously} for detailed explanation.
In this stochastic world, our goal is to minimize regret against $\pi^\star$, that is, $\Reg_T(\pi^\star)$.\footnote{%
Some works (such as~\citep{jin2020simultaneously}) still consider minimizing $\Reg_T(\optpi)$ as the goal in this case.
More discussions are deferred to the last paragraph of \pref{sec:results}.
}
With unknown transition, this general setup has not been studied before,
but for specific examples such as the i.i.d. setting, regret bounds of order $\order(\frac{\log T}{\gapmin})$ where $\gapmin = \min_{s,a\neq \pi^\star(s)}\gap(s,a)$ have been derived~\citep{simc2019, yang2021q}.



\paragraph{Occupancy measure and \ftrl} To solve this problem with online learning techniques, a commonly used concept is the occupancy measure. Specifically, an occupancy measure $q^{\bar{P},\pi}: S\times A \rightarrow [0,1]$ associated with a policy $\pi$ and a transition function $\bar{P}$ is such that $q^{\bar{P},\pi}(s,a)$ equals the probability of  visiting state-action pair $(s,a)$ under the given policy $\pi$ and transition $\bar{P}$. 
Our earlier notation $q_t$ in \pref{eq:loss_condition} is thus simply a shorthand for $q^{P, \pi_t}$.
Moreover, by definition, $\ell_t(\pi)$ can be rewritten as $\inner{q^{P,\pi}, \ell_t}$ by naturally treating $q^{P,\pi}$ and $\ell_t$ as vectors in $\fR^{|S|\times|A|}$, and thus the regret $\Reg_T(\pi)$ can be written as $\E\sbr{\sum_{t=1}^{T} \inner{ q_t - q^{P, \pi}, \ell_t}}$,
connecting the problem to online linear optimization.


Given a transition function $\bar{P}$, we denote by $\Omega(\bar{P}) = \big\{q^{\bar{P},\pi}: \text{$\pi$ is a stochastic policy}\big\}$ the set of all valid occupancy measures associated with the transition $\bar{P}$.
It is known that $\Omega(\bar{P})$ is a simple polytope with $\order(|S||A|)$ constraints~\citep{zimin2013}.
When $P$ is unknown, our algorithm uses an estimated transition $\bar{P}$ as a proxy and searches for a ``good'' occupancy measure within $\Omega(\bar{P})$.
More specifically, this is done by the classic Follow-the-Regularized-Leader (\ftrl) framework which solves the following at the beginning of episode $t$:
\begin{equation}\label{eq:FTRL}
\widehat{q}_t = \argmin_{q \in \Omega(\bar{P})} \inner{q, \sum_{\tau < t} \hatl_\tau} + \phi_t(q),
\end{equation}
where $\hatl_\tau$ is some estimator for $\ell_\tau$ and $\phi_t$ is some regularizer.
The learner's policy $\pi_t$ is then defined through $\pi_t(a|s) \propto \widehat{q}_t(s,a)$.
Note that we have $\widehat{q}_t = q^{\bar{P}, \pi_t}$ but not necessarily $\widehat{q}_t = q_t$ unless $\bar{P} = P$.


\section{Warm-up for Known Transition: A New Loss-shifting Technique}\label{sec:loss_shifting}

One of the key components of our approach is a new loss-shifting technique for analyzing FTRL applied to MDPs.
To illustrate the key idea in a clean manner, in this section we focus on the known transition setting with bandit feedback, the same setting studied by~\citet{jin2020simultaneously}.
As we will show, our method not only improves their bounds, but also significantly simplifies the analysis, which paves the way for extending the result to the unknown transition setting studied in following sections.


First note that when $P$ is known, one can simply take $\bar{P}=P$ (so that $\whatq_t = q_t$) and use the standard importance-weighted estimator $\hatl_\tau(s,a) = \ell_\tau(s,a)\Indtau{s,a}/q_\tau(s,a)$ in the \ftrl framework~\pref{eq:FTRL}, where $\Indtau{s,a}$ is $1$ if $(s,a)$ is visited in episode $\tau$, and $0$ otherwise.
It remains to determine the regularizer $\phi_t$.
While there are many choices of $\phi_t$ leading to $\sqrt{T}$-regret in the adversarial world, obtaining logarithmic regret in the stochastic world requires some special property of the regularizer.
Specifically, generalizing the idea of~\citep{zimmert2019optimal} for multi-armed bandits, \citep{jin2020simultaneously} shows that it suffices to find $\phi_t$
such that the following adaptive regret bound holds
\begin{equation}
\Reg_T(\optpi) \lesssim \E\sbr{\sum_{t=1}^{T} \sum_{s \neq s_L} \sum_{a \neq \pi^\star(s) } \sqrt{\frac{q_t(s,a)}{t}}   }, \label{eq:known_self_bounding_regret_bounding}
\end{equation}
which then automatically implies logarithmic regret under \pref{eq:loss_condition}.
This is because \pref{eq:known_self_bounding_regret_bounding} admits a self-bounding property under \pref{eq:loss_condition} --- one can bound the right-hand side of \pref{eq:known_self_bounding_regret_bounding} as follows using AM-GM inequality (for any $z>0$), which can then be related to the regret itself using \pref{eq:loss_condition}:
\begin{equation}\label{eq:self_bounding_argument}
\E\sbr{\sum_{t=1}^{T} \sum_{s \neq s_L} \sum_{a \neq \pi^\star(s)} \frac{q_t(s,a)\gap(s,a)}{2z} + \frac{z}{2t\gap(s,a)}}
\leq \frac{\Reg_T(\optpi) + C}{2z} + z\sum_{s\neq s_L}\sum_{a\neq \pi^\star(s)}\frac{\log T}{\gap(s,a)}.
\end{equation}
Rearranging and picking the optimal $z$ then shows a logarithmic bound for $\Reg_T(\optpi)$ (see Section~2 of \cite{jin2020simultaneously} for detailed discussions). 

To achieve \pref{eq:known_self_bounding_regret_bounding},
a natural candidate of $\phi_t$ would be a direct generalization of the Tsallis-entropy regularizer of~\citep{zimmert2019optimal}, which takes the form $\phi_t(q) = -\frac{1}{\eta_t}\sum_{s, a}\sqrt{q(s,a)}$ with $\eta_t = 1/\sqrt{t}$.
However, \citet{jin2020simultaneously} argued that it is highly unclear how to achieve \pref{eq:known_self_bounding_regret_bounding} with this natural candidate,
and instead, inspired by~\citep{zimmert2019beating} they ended up using a different regularizer with a complicated non-diagonal Hessian to achieve \pref{eq:known_self_bounding_regret_bounding}, which makes the analysis extremely complex since it requires analyzing the inverse of this non-diagonal Hessian.

Our first key contribution is to show that this natural and simple candidate is in fact (almost) enough to achieve \pref{eq:known_self_bounding_regret_bounding} after all.
To show this, we propose a new a loss-shifting technique in the analysis.
Similar techniques have been used for multi-armed bandits, but the extension to MDPs is much less clear.
Specifically, observe that for any \textit{shifting function} $g_\tau: S\times A \rightarrow \fR$ such that the value of $\inner{q, g_\tau}$ is independent of $q$ for any $q \in \Omega(\bar{P})$, we have
\begin{equation}\label{eq:loss_shifting}
\widehat{q}_t = \argmin_{q \in \Omega(\bar{P})} \inner{q, \sum_{\tau < t} \hatl_\tau} + \phi_t(q) = \argmin_{q \in \Omega(\bar{P})} \inner{q, \sum_{\tau < t} (\hatl_\tau + g_\tau)} + \phi_t(q).
\end{equation}
Therefore, we can pretend that the learner is performing \ftrl over the shifted loss sequence $\{\hatl_\tau + g_\tau\}_{\tau < t}$ (even when $g_\tau$ is unknown to the learner).
The advantage of analyzing \ftrl over this shifted loss sequence is usually that it helps reduce the variance of the loss functions.

For multi-armed bandits, prior works~\citep{wei2018more, zimmert2019optimal} pick $g_\tau$ to be a constant such as the negative loss of the learner in episode $\tau$.
For MDPs, however, this is not enough to show \pref{eq:known_self_bounding_regret_bounding}, as already pointed out by~\citet{jin2020simultaneously} (which is also the reason why they resorted to a different approach).
Instead, we propose the following shifting function:
\begin{equation}
g_\tau(s,a) = \widehat{Q}_\tau(s,a) - \widehat{V}_\tau(s) - \hatl_\tau(s,a), \quad \forall (s,a) \in S\times A, \label{eq:loss_shift_function}
\end{equation}
where $\widehat{Q}_\tau$ and $\widehat{V}_\tau$ are the state-action and state value functions with respect to the transition $\bar{P}$, the loss function $\hatl_\tau$, and the policy $\pi_\tau$, that is:
$\widehat{Q}_\tau(s,a) = \hatl_\tau(s,a) + \E_{s' \sim \bar{P}(\cdot|s,a)}[\widehat{V}_\tau(s')] $ and $\widehat{V}_\tau(s) = \E_{a \sim \pi_\tau(\cdot|s)}[\widehat{Q}_\tau(s,a)]$ (with $\widehat{V}_\tau(s_L) = 0$).
This indeed satisfies the invariant condition since using a well-known performance difference lemma one can show $\inner{q, g_\tau} = -\widehat{V}_\tau(s_0)$ for any $q \in \Omega(\bar{P})$ (\pref{lem:loss_shifting_invariant}).
With this shifting function, the learner is equivalently running \ftrl over the ``advantage'' functions ($\widehat{Q}_\tau(s,a) - \widehat{V}_\tau(s)$ is often called the advantage at $(s,a)$ in the literature).

More importantly, it turns out that when seeing \ftrl in this way, a standard analysis with some direct calculation already shows \pref{eq:known_self_bounding_regret_bounding}.
One caveat is that since $\widehat{Q}_\tau(s,a) - \widehat{V}_\tau(s)$ can potentially have a large magnitude, we also need to stabilize the algorithm by adding a small amount of the so-called log-barrier regularizer to the Tsallis entropy regularizer, an idea that has appeared in several prior works (see~\citep{jin2020simultaneously} and references therein).
We defer all details including the concrete algorithm and analysis to \pref{app:bobw_known_transition}, and show the final results below.

\begin{theorem}
\label{thm:known_transition_main}
When $P$ is known, \pref{alg:known_transition_tsallis_entropy_algoritm} (with parameter $\gamma=1$) ensures the optimal regret $\Reg_T(\optpi) = \order(\sqrt{L|S||A|T})$ in the adversarial world, and simultaneously $\Reg_T(\pi^\star) \leq \Reg_T(\optpi) = \order(U+\sqrt{UC})$ where $U =\frac{L|S|\log T}{\gapmin}+L^4\sum_{s\neq s_L}\sum_{a\neq \pi^\star(s)} \frac{\log T}{\gap(s,a)}$ in the stochastic world. 
\end{theorem}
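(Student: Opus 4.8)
The plan is to instantiate the \ftrl template~\pref{eq:FTRL} with $\bar P = P$ (hence $\whatq_t = q_t$), the standard importance-weighted estimator $\hatl_t(s,a) = \ell_t(s,a)\Indt{s,a}/q_t(s,a)$, and a hybrid regularizer $\phi_t$ equal to the Tsallis entropy $-\frac1{\eta_t}\sum_{s,a}\sqrt{q(s,a)}$ (with $\eta_t \asymp 1/\sqrt t$) plus a small log-barrier term $\sum_{s,a}\log\frac1{q(s,a)}$ for stabilization. The analysis starts from the loss-shifting identity~\pref{eq:loss_shifting}–\pref{eq:loss_shift_function}: since $\inner{q,g_\tau} = -\widehat V_\tau(s_0)$ is constant over $\Omega(P)$ (\pref{lem:loss_shifting_invariant}, via the performance-difference lemma), the iterates are unchanged if we feed \ftrl the shifted losses $\hatl_\tau + g_\tau = \widehat Q_\tau - \widehat V_\tau$, and moreover $\sum_t \inner{q_t - q^{P,\optpi}, \hatl_t + g_t} = \sum_t \inner{q_t - q^{P,\optpi}, \hatl_t}$ because the $g_t$ part cancels on $\Omega(P)$. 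Applying the textbook \ftrl regret bound to this shifted sequence and taking expectations (using conditional unbiasedness of $\hatl_t$, so that $\E[\sum_t\inner{q_t-q^{P,\optpi},\hatl_t}] = \Reg_T(\optpi)$) yields
\[
\Reg_T(\optpi) \;\le\; \underbrace{\E\big[\phi_{T+1}(q^{P,\optpi}) - \textstyle\min_q \phi_1(q)\big]}_{\textsc{penalty}} \;+\; \underbrace{\E\Big[\sum_t \tfrac12 \big\|\widehat Q_t - \widehat V_t\big\|^2_{(\nabla^2\phi_t(\xi_t))^{-1}}\Big]}_{\textsc{stability}}
\]
up to lower-order contributions from the varying learning rate.

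The adversarial claim is the easy direction. Since $\sum_{s,a}q(s,a) = L$, Cauchy--Schwarz bounds the Tsallis part of \textsc{penalty} by $\tfrac1{\eta_{T+1}}\sqrt{L|S||A|} = \order(\sqrt{L|S||A|T})$, while the log-barrier part is $\order(\polylog T)$ because \ftrl with a log-barrier keeps every coordinate $\gtrsim 1/\poly(T)$. For \textsc{stability}, diagonality of $\nabla^2\phi_t$ gives the coordinatewise inverse-Hessian bound $\lesssim \min\{\eta_t q_t(s,a)^{3/2},\, q_t(s,a)^2\}$; bounding the advantage functions and taking conditional expectations (the importance weights contracting along the single realized trajectory per layer) produces a per-round bound of the form $\sum_{s\ne s_L}\sum_a \sqrt{q_t(s,a)/t}$ up to constants, which sums to $\order(\sqrt{L|S||A|T})$. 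Hence $\Reg_T(\optpi) = \order(\sqrt{L|S||A|T})$ unconditionally.

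For the stochastic claim, the crux is to sharpen \textsc{stability} to the self-bounding form~\pref{eq:known_self_bounding_regret_bounding}, i.e. to restrict the sum to suboptimal actions. This is exactly what loss-shifting buys: at the optimal action, $\widehat Q_t(s,\pi^\star(s)) - \widehat V_t(s) = \sum_{a\ne\pi^\star(s)}\pi_t(a|s)\big(\widehat Q_t(s,\pi^\star(s)) - \widehat Q_t(s,a)\big)$, so the $a=\pi^\star(s)$ term of \textsc{stability} is driven by $\pi_t(a|s)$ for $a\ne\pi^\star(s)$ and becomes negligible as $\pi_t\to\pi^\star$; the possibly large magnitude of each gap $\widehat Q_t(s,\pi^\star(s)) - \widehat Q_t(s,a)$ is precisely what the $q_t(s,a)^2$ (log-barrier) branch of the inverse-Hessian bound absorbs. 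Carrying this through gives $\Reg_T(\optpi) \lesssim \E\big[\sum_t\sum_{s\ne s_L}\sum_{a\ne\pi^\star(s)}\sqrt{q_t(s,a)/t}\big] + \order\big(\tfrac{L|S|\log T}{\gapmin}\big)$, the last term being the price of the log-barrier stabilization, which cannot be localized to suboptimal actions as sharply. I would then invoke the self-bounding argument of~\pref{eq:self_bounding_argument}: AM--GM followed by Cauchy--Schwarz together with condition~\pref{eq:loss_condition} bound the first term by $c\sqrt{U(\Reg_T(\optpi)+C)}$ with $U = \tfrac{L|S|\log T}{\gapmin} + L^4\sum_{s\ne s_L}\sum_{a\ne\pi^\star(s)}\tfrac{\log T}{\gap(s,a)}$ (the $L^4$ arising from the crude bound on the advantage magnitude and the trajectory length). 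Solving the resulting self-referential inequality $\Reg_T(\optpi) \le c\sqrt{U(\Reg_T(\optpi)+C)} + \order(U)$ for $\Reg_T(\optpi)$ gives $\Reg_T(\optpi) = \order(U + \sqrt{UC})$, and $\Reg_T(\pi^\star) \le \Reg_T(\optpi)$ is immediate since $\optpi$ maximizes $\Reg_T(\cdot)$.

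The main obstacle is the sharpened stability bound~\pref{eq:known_self_bounding_regret_bounding}. One must (i) confirm that after the shift the direct calculation on the diagonal-Hessian regularizer really collapses the $a=\pi^\star(s)$ contribution instead of leaving it $\Omega(1)$ (which is what fails without shifting, and is why~\citep{jin2020simultaneously} needed a non-diagonal regularizer), (ii) tune the log-barrier weight so it dominates the unbounded part of $(\widehat Q_t - \widehat V_t)^2$ without degrading the $\sqrt{L|S||A|T}$ adversarial rate, and (iii) push the conditional-expectation and variance bookkeeping through with the correct powers of $L$ and the correct per-pair gaps, so as to land on the stated $L^4$ and $\tfrac{L|S|}{\gapmin}$ factors. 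Everything else — the penalty estimate, the adversarial stability estimate, and the generic self-bounding reduction — is standard once this is in place.
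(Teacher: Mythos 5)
Your proposal has the right architecture (Tsallis entropy plus log-barrier, loss-shifting via \pref{lem:loss_shifting_invariant}, penalty/stability split, self-bounding reduction) and the right algebraic identity at the optimal action, but the mechanism you propose for the collapse at $a=\pi^\star(s)$ does not work, and the role you assign to the log-barrier is not the one the argument actually uses.

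Concretely: you claim that $\widehat Q_t(s,\pi^\star(s))-\widehat V_t(s)$ ``becomes negligible as $\pi_t\to\pi^\star$'' and that the $q^2$ (log-barrier) branch of the inverse Hessian absorbs whatever remains. Neither is right. Since $\widehat Q_t(s,a)$ involves importance weights of order $1/q_t(s,a)$, one has $\pi_t(a|s)\,\widehat Q_t(s,a) = \Theta(1/q_t(s))$ no matter how small $\pi_t(a|s)$ is, so the pointwise size of $\widehat Q_t(s,\pi^\star(s))-\widehat V_t(s)$ does not shrink as $\pi_t\to\pi^\star$. And the log-barrier branch $q_t(s,a)^2/\beta$ of the inverse Hessian, multiplied against $(\widehat Q_t-\widehat V_t)^2$, produces an $\order(L^2/\beta)$ per-pair per-round contribution with no $\eta_t$ factor, so the sum over $t$ is $\Theta(T)$, far too large. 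What actually produces the collapse is a second-moment (not first-moment) computation: \pref{lem:loss_shifting_loss_estimator} shows $\E_t\big[(\widehat Q_t(s,a)-\widehat V_t(s))^2\big]\le \frac{2L^2\,(1-\pi_t(a|s))}{q_t(s,a)}$, so that multiplying by the Tsallis inverse-Hessian entry $\eta_t\, q_t(s,a)^{3/2}$ yields $2L^2\eta_t\sqrt{q_t(s,a)}\,(1-\pi_t(a|s))$, and it is the $(1-\pi_t(a|s))$ factor — coming from the variance calculation, not the pointwise magnitude — that kills the optimal action. This is precisely the step you flagged in point~(i) as needing verification, and the specific route you sketched for it is a dead end.

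Two further misattributions. First, the log-barrier never enters the inverse-Hessian stability bound at all; its job (as in \citep[Lemma~5]{jin2020simultaneously}) is to keep the iterates multiplicatively stable so that the local-norm estimate with the Tsallis Hessian alone is valid, and it contributes only an additive $\order(L|S||A|\log T)$. Second, the $\frac{L|S|\log T}{\gapmin}$ piece of $U$ does not come from the log-barrier; it comes from the Tsallis \emph{penalty}: decomposing $\phi_H(\opt)-\phi_H(q_t)$ (see \pref{eq:penalty_bound}, via \citep[Lemma~6]{jin2020simultaneously}) leaves, besides the part supported on suboptimal actions, a cross term of size $2\sqrt{L|S|\sum_{s,a\ne\pi(s)}(q_t(s,a)+\opt(s,a))}$ that can only be traded against $\gapmin$ rather than per-pair gaps. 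Once you replace your stability mechanism with the variance bound of \pref{lem:loss_shifting_loss_estimator} and insert the penalty decomposition, the rest of your outline (self-bounding via \pref{eq:self_bounding_argument}, then solving the resulting quadratic inequality) goes through.
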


Our bound for the stochastic world is even better than~\citep{jin2020simultaneously}
(their $U$ has an extra $|A|$ factor in the first term and an extra $L$ factor in the second term).
By setting the parameter $\gamma$ differently, one can also improve $L^4$ to $L^3$, matching the best existing result from~\citep{simc2019} for the i.i.d. setting with $C=0$ (this would worsen the adversarial bound though).
Besides this improvement, we emphasize again that the most important achievement of this approach is that it significantly simplifies the analysis, making the extension to the unknown transition setting possible.

\section{Main Algorithms and Results}\label{sec:algorithms}

We are now ready to introduce our main algorithms and results for the unknown transition case, with either full-information or bandit feedback.
The complete pseudocode is shown in \pref{alg:bobw_framework},
which is built with two main components: a new framework to deal with unknown transitions and adversarial losses (important for incorporating our loss-shifting technique), and special regularizers for \ftrl.
We explain these two components in detail below.

\paragraph{A new framework for unknown transitions and adversarial losses}
When the transition is unknown, a common practice (which we also follow) is to maintain an empirical transition along with a shrinking confidence set of the true transition, usually updated in some doubling epoch schedule.
More specifically, a new epoch is started whenever the total number of visits to some state-action pair is doubled (compared to the beginning of this epoch), thus resulting in at most $\order\rbr{|S||A|\log T}$ epochs.
We denote by $i(t)$ the epoch index to which episode $t$ belongs. 
At the beginning of each epoch $i$, we calculate the empirical transition $\bar{P}_i$ (fixed through this epoch) as: 
\begin{equation}
\label{eq:empirical_mean_transition_def} 
\bar{P}_i(s'|s,a) = \frac{ m_i(s,a,s')  }{m_i(s,a)}, \quad \forall (s,a,s') \in S_k \times A \times S_{k+1}, \; k = 0,\ldots L-1, 
\end{equation}
where $m_i(s,a)$ and $m_i(s,a,s')$ are the total number of visits to $(s,a)$ and $(s,a,s')$ respectively prior to epoch $i$.\footnote{When $m_i(s,a) = 0$, we simply let $\bar{P}_i(\cdot|s,a)$ be an arbitrary distribution.}
The confidence set of the true transition for this epoch is then defined as
\begin{equation*}
\calP_i = \cbr{ \widehat{P}: \abr{ \widehat{P}(s'|s,a) - \bar{P}_i(s'|s,a) } \leq B_i(s,a,s'), \; \forall (s,a,s')\in S_k \times A \times S_{k+1}, k < L },
\end{equation*} 
where $B_i$ is Bernstein-style confidence width (taken from~\cite{jin2019learning}):
\begin{equation}
\label{eq:confidence_width_def} 
B_i(s,a,s') = \min\cbr{  2 \sqrt{\frac{\bar{P}_i(s'|s,a)\ln \rbr{\frac{T|S||A|}{\delta}}}{m_{i}(s,a)}} + \frac{14\ln\rbr{\frac{T|S||A|}{\delta}}}{3m_{i}(s,a)}, \; 1}
\end{equation}
for some confidence parameter $\delta \in (0,1)$.
As~\citep[Lemma~2]{jin2019learning} shows, the true transition $P$ is contained in the confidence set $\calP_i$ for all epoch $i$ with probably at least $1-4\delta$.

When dealing with adversarial losses, prior works~\citep{rosenberg19a, rosenberg2019online, jin2019learning, lee2020bias} perform \ftrl (or a similar algorithm called Online Mirror Descent) over the set of all plausible occupancy measures $\Omega(\calP_i) = \{q \in \Omega(\widehat{P}): \widehat{P} \in \calP_i\}$ during epoch $i$, which can be seen as a form of optimism and encourages exploration.
This framework, however, does not allow us to apply the loss-shifting trick discussed in \pref{sec:loss_shifting} --- indeed, our key shifting function \pref{eq:loss_shift_function} is defined in terms of some fixed transition $\bar{P}$, and the required invariant condition on $\inner{q, g_\tau}$ only holds for $q \in \Omega(\bar{P})$ but not $q \in \Omega(\calP_i)$.

Inspired by this observation, we propose the following new approach.
First, to directly fix the issue mentioned above, for each epoch $i$, we run a new instance of \ftrl simply over $\Omega(\bar{P}_i)$.
This is implemented by keeping track of the epoch starting time $t_i$ and only using the cumulative loss $\sum_{\tau=t_i}^{t-1}\hatl_\tau$ in the \ftrl update (\pref{eq:epoch_FTRL}).
Therefore, in each epoch, we are pretending to deal with a known transition problem, making the same loss-shifting technique discussed in \pref{sec:loss_shifting} applicable.

However, this removes the critical optimism in the algorithm and does not admit enough exploration.
To fix this, our second modification is to feed \ftrl with optimistic losses constructed by adding some (negative) bonus term, an idea often used in the stochastic setting.
More specifically, we subtract $L \cdot B_i(s,a)$ from the loss for each $(s,a)$ pair, where $B_i(s,a) = \min\big\{1, \sum_{s' \in S_{k(s)+1}} B_i(s,a,s')\big\}$; see \pref{eq:adjusted_loss}.
In the full-information setting, this means using $\hatl_t(s,a) = \ell_t(s,a) - L \cdot B_{i}(s,a)$.  
In the bandit setting, note that the importance-weighted estimator discussed in \pref{sec:loss_shifting} is no longer applicable since the transition is unknown (making $q_t$ also unknown), and~\citep{jin2019learning} proposes to use $\frac{\ell_t(s,a) \cdot \Indt{s,a}}{u_t(s,a)}$ instead, where $\Indt{s,a}$ is again the indicator of whether $(s,a)$ is visited during episode $t$, and $u_t(s,a)$ is the so-called upper occupancy measure defined as
\begin{equation}\label{eq:UOB}
u_t(s,a) = \max_{\widehat{P} \in \calP_{i(t)}} q^{\widehat{P}, \pi_t}(s,a)
\end{equation}
and can be efficiently computed via the \textsc{Comp-UOB} procedure of~\citep{jin2019learning}.
Our final adjusted loss estimator is then $\hatl_t(s,a) = \frac{\ell_t(s,a) \cdot \Indt{s,a} }{u_t(s,a)}  - L \cdot B_{i}(s,a)$.
In our analysis, we show that these adjusted loss estimators indeed make sure that we only underestimate the loss of each policy, which encourages exploration.

With this new framework, it is not difficult to show $\sqrt{T}$-regret in the adversarial world using many standard choices of the regularizer $\phi_t$ (which recovers the results of~\citep{rosenberg19a, jin2019learning} with a different approach).
To further ensure polylogarithmic regret in the stochastic world, however, we need some carefully designed regularizers discussed next.

\DontPrintSemicolon 
\setcounter{AlgoLine}{0}
\begin{savenotes}
\begin{algorithm}[t]
	\caption{Best-of-both-worlds for Episodic MDPs with Unknown Transition}
	\label{alg:bobw_framework}
	
	\textbf{Input:} confidence parameter $\delta$.
	
	\textbf{Initialize:} epoch index $i=1$ and epoch starting time $t_i = 1$.
	
	\textbf{Initialize:} $\forall (s,a,s')$, set counters $m_1(s,a) = m_1(s,a,s') = m_0(s,a) = m_0(s,a,s') = 0$.
	
	\textbf{Initialize:} empirical transition $\bar{P}_1$ and confidence width $B_1$ based on \pref{eq:empirical_mean_transition_def} and \pref{eq:confidence_width_def}. 
		
	\For{$t=1,\ldots,T$}{
		Let $\phi_t$ be \pref{eq:Shannon} for full-information feedback or \pref{eq:Tsallis} for bandit feedback, and compute
	   \begin{equation}\label{eq:epoch_FTRL}
	    \widehat{q}_t = \argmin_{q\in \Omega\rbr{\bar{P}_i}}  \inner{q,  \sum_{\tau=t_i}^{t-1}\hatl_\tau} + \phi_t(q).
	    \end{equation}
         
		Compute policy $\pi_t$ from $\widehat{q}_t$ such that
		$\pi_t(a|s) \propto \widehat{q}_t(s,a)$.\footnote{If $\sum_{b\in A} \widehat{q}_t(s,b)=0$, we let $\pi_t$ to be the uniform distribution.}
		
		Execute policy $\pi_t$ and obtain trajectory $(s_k^{t}, a_k^{t})$ for $k = 0, \ldots, L-1$.
		
		Construct adjusted loss estimator $\hatl_t$ such that
		\begin{equation}\label{eq:adjusted_loss}
		\hatl_t(s,a) = \begin{cases}
		\ell_t(s,a) - L \cdot B_{i}(s,a), &\text{for full-information feedback,} \\
		\frac{\ell_t(s,a) \cdot \Indt{s,a} }{u_t(s,a)}  - L \cdot B_{i}(s,a), &\text{for bandit feedback,}
		\end{cases}
		\end{equation}
		where $B_i(s,a) = \min\big\{1, \sum_{s' \in S_{k(s)+1}} B_i(s,a,s')\big\}$, $\Indt{s,a} = \Ind{\exists k, (s,a)=(s_k^{t}, a_k^{t})}$, 
		and $u_t$ is the upper occupancy measure defined in \pref{eq:UOB}.
		
		Increment counters: for each $k<L$, 
		$m_i(s_k^t,a_k^t,s_{k+1}^t) \overset{+}{\leftarrow} 1, \; m_i(s_k^t,a_k^t) \overset{+}{\leftarrow} 1$.\footnote{We use $x \overset{+}{\leftarrow} y$ as a shorthand for the increment operation $x \leftarrow x + y$.}  \\
		
		\If(\myComment{entering a new epoch}){$\exists k, \  m_i(s_k^t, a_k^t) \geq \max\{1, 2m_{i-1}(s_k^t,a_k^t)\}$}
		{
                Increment epoch index $i \overset{+}{\leftarrow} 1$ and set new epoch starting time $t_i = t+1$.  
		
		 Initialize new counters: $\forall (s,a,s')$,  
		$m_i(s,a,s') = m_{i-1}(s,a,s') , m_i(s,a) = m_{i-1}(s,a)$.
		
                Update empirical transition $\bar{P}_i$ and confidence width $B_i$ based on \pref{eq:empirical_mean_transition_def} and \pref{eq:confidence_width_def}. 
}
	}
\end{algorithm}
\end{savenotes}

\paragraph{Special regularizers for \ftrl}
Due to the new structure of our algorithm which uses a fixed transition $\bar{P}_i$ during epoch $i$, the design of the regularizers is basically the same as in the known transition case.
Specifically, in the bandit case, we use the same Tsallis entropy regularizer:
\begin{equation}\label{eq:Tsallis}
\phi_t(q) =  -\frac{1}{\eta_t} \sum_{s \neq s_L} \sum_{a\in A} \sqrt{ q(s,a) } + \beta  \sum_{s\neq s_L} \sum_{a\in A} \ln \frac{1}{q(s,a)},
\end{equation}
where $\eta_t = \nicefrac{1}{\sqrt{t - t_{i(t)}+1}}$ and $\beta = 128L^4$.
As discussed in \pref{sec:loss_shifting}, the small amount of log-barrier in the second part of \pref{eq:Tsallis} is used to stabilize the algorithm, similarly to~\citep{jin2020simultaneously}.

In the full-information case, while we can still use \pref{eq:Tsallis} since the bandit setting is only more difficult, this leads to extra dependence on some parameters.
Instead, we use the following Shannon entropy regularizer:
\begin{equation}\label{eq:Shannon}
\phi_t(q) =  \frac{1}{\eta_t} \sum_{s\neq s_L} \sum_{a\in A} q(s,a) \cdot \ln q(s,a).
\end{equation}
Although this is a standard choice for the full-information setting, the tuning of the learning rate $\eta_t$ requires some careful thoughts.
In the special case of MDPs with one layer (known as the expert problem~\citep{freund1997decision}), it has been shown that choosing $\eta_t$ to be of order $1/\sqrt{t}$ ensures best-of-both-worlds~\citep{mourtada2019optimality, amir2020prediction}.
However, in our general case, due to the use of the loss-shifting trick, we need to use the following data-dependent tuning (with $i$ denoting $i(t)$ for simplicity): $\eta_{t} =  \sqrt{\frac{L\ln(|S||A|)}{64L^5\ln(|S||A|) + M_t }}$ where
\begin{equation*}
M_t = \sum_{\tau=t_{i}}^{t-1}\min\cbr{  \sum_{s\neq s_L}\sum_{a\in A} \widehat{q}_\tau(s,a) \hatl_\tau(s,a)^2, \sum_{s\neq s_L}\sum_{a\in A} \widehat{q}_\tau(s,a) \rbr{\widehat{Q}_\tau(s,a) - \widehat{V}_\tau(s)}^2} ,
\end{equation*}
and similar to the discussion in \pref{sec:loss_shifting}, $\widehat{Q}_\tau$ and $\widehat{V}_\tau$ are the state-action and state value functions with respect to the transition $\bar{P}_{i}$, the adjusted loss function $\hatl_\tau$, and the policy $\pi_\tau$, that is:
$\widehat{Q}_\tau(s,a) = \hatl_\tau(s,a) + \E_{s' \sim \bar{P}_{i}(\cdot|s,a)}[\widehat{V}_\tau(s')] $ and $\widehat{V}_\tau(s) = \E_{a \sim \pi_\tau(\cdot|s)}[\widehat{Q}_\tau(s,a)]$ (with $\widehat{V}_\tau(s_L) = 0$).
This particular tuning makes sure that \ftrl enjoys some adaptive regret bound with a self-bounding property akin to \pref{eq:known_self_bounding_regret_bounding},
which is again the key to ensure polylogarithmic regret in the stochastic world.
This concludes all the algorithm design;
see \pref{alg:bobw_framework} again for the complete pseudocode.

\subsection{Main Best-of-both-worlds Results}\label{sec:results}
We now present our main best-of-both-worlds results.
As mentioned, proving $\sqrt{T}$-regret in the adversarial world is relatively straightforward.
However, proving polylogarithmic regret bounds for the stochastic world is much more challenging due to the transition estimation error, which is usually of order $\sqrt{T}$.
Fortunately, we are able to develop a new analysis that upper bounds some transition estimation related terms by the regret itself, establishing a self-bounding property again. 
We defer the proof sketch to \pref{sec:analysis}, and state the main results in the following theorems.\footnote{%
For simplicity, for bounds in the stochastic world, we omit some $\otil(1)$ terms that are independent of the gap function, but they can be found in the full proof.
}

\begin{theorem}\label{thm:main_full_info}
In the full-information setting, \pref{alg:bobw_framework} with $\delta = \frac{1}{T^2}$ guarantees $\Reg_T(\optpi) = \otil\rbr{L|S|\sqrt{|A|T}}$ always, and simultaneously $\Reg_T(\pi^\star) = \order\rbr{U + \sqrt{UC}}$ 
under Condition~\eqref{eq:loss_condition}, where
$
	U = \order\Big(\frac{\rbr{L^6|S|^2+L^5|S||A|\log(|S||A|)}\log T}{\gapmin} + \sum_{s \neq s_L} \sum_{ a\neq \pi^{\star}(s)} \frac{L^6|S|\log T}{\gap(s,a)}\Big).
$
\end{theorem}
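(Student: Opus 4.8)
The plan is to split into the adversarial bound and the stochastic bound, treating them with a unified FTRL regret analysis that is then specialized. \textbf{Step 1 (per-epoch FTRL regret).} First I would fix an epoch $i$ and analyze the FTRL instance of~\eqref{eq:epoch_FTRL} run over $\Omega(\bar P_i)$ with the Shannon-entropy regularizer~\eqref{eq:Shannon} and the adjusted losses $\hatl_\tau = \ell_\tau - L\cdot B_i$. Invoking the loss-shifting identity~\eqref{eq:loss_shifting} with the shifting function~\eqref{eq:loss_shift_function}, I would replace $\hatl_\tau$ by the advantage $\widehat{Q}_\tau(s,a)-\widehat{V}_\tau(s)$ inside the stability term, so that a standard FTRL analysis gives, for any comparator $q\in\Omega(\bar P_i)$,
\[
\sum_{\tau\in \text{epoch }i}\inner{\widehat q_\tau - q,\hatl_\tau}\ \lesssim\ \frac{\log(|S||A|)}{\eta_{\text{end}}} + \sum_{\tau\in\text{epoch }i}\eta_\tau \min\Bigl\{\textstyle\sum_{s,a}\widehat q_\tau(s,a)\hatl_\tau(s,a)^2,\ \sum_{s,a}\widehat q_\tau(s,a)(\widehat Q_\tau(s,a)-\widehat V_\tau(s))^2\Bigr\},
\]
the minimum coming from the freedom to analyze either loss version; this is exactly what the data-dependent tuning $\eta_t=\sqrt{L\ln(|S||A|)/(64L^5\ln(|S||A|)+M_t)}$ is designed to balance, yielding a bound of order $\sqrt{L^5\ln(|S||A|) + M_{\text{epoch}}}\cdot\sqrt{\ln(|S||A|)}$ per epoch, plus the usual $O(\beta\cdot\text{(negative log-barrier)})$ correction. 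Summing over the $O(|S||A|\log T)$ epochs via Cauchy–Schwarz gives the FTRL contribution to the regret.

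\textbf{Step 2 (bias from optimistic losses and transition error).} Next I would account for the gap between $\inner{q_t-q^{P,\pi},\ell_t}$ (true regret) and $\inner{\widehat q_t - q,\hatl_t}$ (what FTRL controls). This involves three pieces: (i) the difference between $q_t=q^{P,\pi_t}$ and $\widehat q_t=q^{\bar P_{i(t)},\pi_t}$, controlled on the event $P\in\calP_i$ by the standard transition-difference lemma (\citep{jin2019learning}), costing $\sum_t\sum_{s,a}q_t(s,a)\cdot O(L\cdot B_{i(t)}(s,a))$-type terms; (ii) the bonus terms $L\cdot B_i$ added to the losses, which by construction make $\inner{\widehat q_t,\hatl_t}\le \inner{\widehat q_t,\ell_t}$ and crucially ensure $\inner{q^{\bar P_{i},\pi},\hatl_t}\le \inner{q^{P,\pi},\ell_t}$ for every policy $\pi$ (underestimation / optimism); (iii) collecting all bonus terms $\sum_t L\cdot B_{i(t)}(s,a)$ which, using the visit counts and the epoch doubling schedule, telescopes to $\otil(L|S|\sqrt{|A|T})$ in the worst case. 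Combining Steps 1--2 and using $M_{\text{epoch}}\le \otil(L^2)\cdot(\text{episodes})$ in the full-information case gives $\Reg_T(\optpi)=\otil(L|S|\sqrt{|A|T})$ unconditionally (after adding the $O(\delta T \cdot L)$ failure contribution with $\delta=1/T^2$).

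\textbf{Step 3 (self-bounding in the stochastic world).} For the stochastic bound I would combine the adaptive FTRL bound with the novel regret decomposition announced in \pref{sec:analysis}, inspired by~\citep{simc2019}. The target is to show every term in the decomposition is bounded by $\tfrac1{2z}(\Reg_T(\pi^\star)+C)$ plus polylogarithmic terms, for a free parameter $z$. Concretely: (a) the advantage-variance term $\sum_t\eta_t\sum_{s,a}\widehat q_t(s,a)(\widehat Q_t-\widehat V_t)^2$ should, after relating $\widehat Q_t-\widehat V_t$ to the gaps and using $\eta_t\asymp 1/\sqrt{t-t_{i(t)}+1}$, reduce to a $\sum_t\sum_{s\ne s_L}\sum_{a\ne\pi^\star(s)}\sqrt{q_t(s,a)/(t-t_{i(t)}+1)}$-style bound, to which the AM–GM self-bounding argument~\eqref{eq:self_bounding_argument} applies; (b) the accumulated bonus/transition-error terms $\sum_t\sum_{s,a}q_t(s,a)\cdot L\cdot B_{i(t)}(s,a)$ must be bounded using that the confidence width $B_i(s,a)$ shrinks like $1/\sqrt{m_i(s,a)}$ together with the key idea of exploiting the \emph{underestimation of $\pi^\star$} (a normally-discarded nonpositive term) to convert $\sqrt{m_i}$-growth into $\log T$-growth plus a fraction of the regret; (c) the log-barrier correction contributes the $L^6|S|^2\log T/\gapmin$-type term. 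Summing, rearranging, and optimizing over $z$ gives $\Reg_T(\pi^\star)=O(U+\sqrt{UC})$ with the stated $U$.

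\textbf{Main obstacle.} The hard part will be Step~3(b): controlling the transition-estimation/bonus terms by a fraction of $\Reg_T(\pi^\star)$ rather than by $\otil(\sqrt T)$. The difficulty is that $B_{i(t)}(s,a)$ only decays with $m_{i(t)}(s,a)$, the number of visits to $(s,a)$, which for suboptimal $(s,a)$ is itself only $\otil(\log T/\gap(s,a)^2)$ under Condition~\eqref{eq:loss_condition} — so one must carefully interleave the counting argument with the self-bounding inequality, and additionally handle optimal-action state-action pairs (where $m_i$ can be $\Theta(T)$ but the contribution should still be absorbed, using the underestimation-of-$\pi^\star$ term and the Bernstein form of $B_i$ which carries a $\sqrt{\bar P_i(s'|s,a)}$ factor). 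Getting the right polynomial-in-$L$ and $|S|$ dependence out of this telescoping is the delicate technical core; everything else is a fairly standard (if lengthy) FTRL computation.
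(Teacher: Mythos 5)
Your three-step outline matches the paper's overall strategy --- regret split into $\textsc{Err}_1 + \textsc{EstReg} + \textsc{Err}_2$, a per-epoch Shannon-entropy FTRL analysis under the loss-shifting trick, a transition-error/bonus accounting for the adversarial bound, and a self-bounding argument for the stochastic bound --- and you correctly identify Step~3(b) as the crux. However, Step~3(c) is simply wrong: in the full-information setting the regularizer~\eqref{eq:Shannon} is pure Shannon entropy with no log-barrier component (the log-barrier only appears in~\eqref{eq:Tsallis} for the bandit case). The $L^6|S|^2\log T/\gapmin$-type term in $U$ has nothing to do with a log-barrier correction; it comes entirely from the transition-estimation terms $\textsc{ErrOpt}$ and $\textsc{OccDiff}$ via $\selfterm_2$ and $\selfterm_3$ in the self-bounding lemma, and the $L^5|S||A|\log(|S||A|)/\gapmin$ term is the one contributed by $\textsc{EstReg}$.

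More substantially, Step~3(b) is too vague to carry the argument: ``interleave the counting argument with the self-bounding inequality'' and ``use the underestimation-of-$\pi^\star$ term'' does not explain the actual mechanism, which is the specific \emph{joint} algebraic identity for $\textsc{Err}_1 + \textsc{Err}_2$ (\pref{col:general_perf_decomp}) in terms of the surplus function $\widehat E_t^{\pi^\star}(s,a) = \ell_t(s,a) + \sum_{s'} P(s'|s,a)\widehat V_t^{\pi^\star}(s') - \widehat Q_t^{\pi^\star}(s,a)$ and the auxiliary occupancy measure $q_t^\star(s,a)$ (probability of following $\pi^\star$ up to $(s,a)$). This identity is what rewrites the suboptimal-action error with a factor $q_t(s,a)$ (hence self-boundable by $\gap(s,a)$) and rewrites the optimal-action error with a factor $q_t(s,a)-q_t^\star(s,a)$ (hence self-boundable by $\gapmin/L$ via \pref{lem:perf_diff_lower_bound_gapmin}); without constructing this decomposition, your sketch in (b) does not by itself escape the $\sqrt{T}$ barrier on the $\sum_t \sum_{s,a} q_t(s,a) B_{i(t)}(s,a)$ terms. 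So the missing idea is the exact form of the decomposition, and the factual error is attributing part of $U$ to a log-barrier that is not present in the full-information algorithm.
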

\begin{theorem}\label{thm:main_bandit}
In the bandit feedback setting,  \pref{alg:bobw_framework} with $\delta = \frac{1}{T^3}$ guarantees $\Reg_T(\optpi) = \otil\rbr{(L+\sqrt{|A|})|S|\sqrt{|A|T}}$ always, and simultaneously $\Reg_T(\pi^\star) = \order\rbr{U + \sqrt{UC}}$ under Condition~\eqref{eq:loss_condition}, where
$
	U = \order\Big(\frac{\rbr{L^6|S|^2+L^3|S|^2|A|}\log^2 T}{\gapmin} + \sum_{s \neq s_L} \sum_{ a\neq \pi^{\star}(s)} \frac{\rbr{L^6|S| + L^4|S||A|}\log^2 T}{\gap(s,a)}\Big).
$
\end{theorem}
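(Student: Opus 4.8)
The plan is to establish a single \emph{adaptive} regret bound from which both halves of the theorem follow, closely paralleling the argument behind \pref{thm:known_transition_main} but carrying an extra transition-estimation term that must eventually be absorbed into the regret itself. Fix a comparator $\pi$ (take $\pi=\optpi$ for the adversarial claim and $\pi=\pi^\star$ for the stochastic one). Since \pref{alg:bobw_framework} plays $\widehat q_t = q^{\bar P_{i(t)},\pi_t}$ on the \emph{empirical} polytope $\Omega(\bar P_{i(t)})$ and feeds the bonus-adjusted estimator $\hatl_t$ of \pref{eq:adjusted_loss}, I would first split $\Reg_T(\pi)=\E\sbr{\sum_t\inner{q_t-q^{P,\pi},\ell_t}}$ into three pieces: (a) a \emph{transition-shift} term, comparing $q_t=q^{P,\pi_t}$ with $\widehat q_t$ and $q^{P,\pi}$ with $q^{\bar P_{i(t)},\pi}$; (b) a \emph{loss-estimation} term $\E\sbr{\sum_t\inner{\widehat q_t-q^{\bar P_{i(t)},\pi},\hatl_t-\ell_t}}$, where the $-L\cdot B_i(s,a)$ bonus is designed precisely so that $\hatl_t$ never overestimates $\ell_t$ in expectation, so that in the relevant direction this piece is dominated by $\sum_{t,s,a}q^{\bar P_{i(t)},\pi}(s,a)\,L\,B_{i(t)}(s,a)$ up to the extra variance injected by the $1/u_t(s,a)$ weighting; and (c) the \ftrl regret $\E\sbr{\sum_t\inner{\widehat q_t-q^{\bar P_{i(t)},\pi},\hatl_t}}$. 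Pieces (a) and (b) are controlled using the guarantee that $P\in\calP_i$ for every epoch $i$ with probability $1-4\delta$ (so $\delta=T^{-3}$ costs only $\otil(1)$), the Bernstein widths \pref{eq:confidence_width_def}, and standard occupancy-measure difference lemmas; I denote their total by $\textsc{TransErr}$.

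For piece (c) the engine is the loss-shifting identity of \pref{sec:loss_shifting}: since $\inner{q,g_\tau}=-\widehat V_\tau(s_0)$ is constant over $q\in\Omega(\bar P_{i(t)})$ by \pref{lem:loss_shifting_invariant}, running \ftrl on $\hatl_\tau$ is identical to running it on the advantage losses $\hatl_\tau+g_\tau=\widehat Q_\tau-\widehat V_\tau$, whose coordinates may be as large as $\order(L)$ but whose $\widehat q_\tau$-weighted second moment is of the same order as in the known-transition analysis. I then run the textbook \ftrl stability analysis for the Tsallis-entropy-plus-log-barrier regularizer \pref{eq:Tsallis}: the $\beta\sum_{s,a}\ln(1/q)$ component forces a multiplicative stability of the iterates even against these large shifted losses, so the stability term is of the form $\order\!\big(\eta_t\sum_{s,a}\widehat q_t(s,a)^{3/2}(\widehat Q_t(s,a)-\widehat V_t(s))^2\big)$ and, with $\eta_t=1/\sqrt{t-t_{i(t)}+1}$ resetting each epoch, telescopes within each epoch into $\sum_{t,s,a}\sqrt{\widehat q_t(s,a)/(t-t_{i(t)}+1)}$, plus the log-barrier penalty and the cost of the $\order(|S||A|\log T)$ epoch restarts. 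Converting $\widehat q_t$ back to $q_t$ through (a), and using the same reasoning as in \pref{thm:known_transition_main} to pass from the full sum over actions to the one over suboptimal actions, produces a master bound $\Reg_T(\pi)\lesssim\E\sbr{\sum_t\sum_{s\neq s_L}\sum_{a\neq\pi^\star(s)}\sqrt{q_t(s,a)/(t-t_{i(t)}+1)}}+\textsc{TransErr}+\otil(\poly(L,|S|,|A|))$, the direct analogue of \pref{eq:known_self_bounding_regret_bounding}.

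Both halves now follow. For the adversarial world ($\pi=\optpi$), bounding the leading sum by Cauchy--Schwarz over all $(s,a)$ and bounding $\textsc{TransErr}$ by the standard Bernstein argument of~\citep{jin2019learning} both give $\otil$ of the target rate, so $\Reg_T(\optpi)=\otil((L+\sqrt{|A|})|S|\sqrt{|A|T})$. For the stochastic world ($\pi=\pi^\star$), I apply the AM-GM self-bounding step \pref{eq:self_bounding_argument} to the leading sum, and --- this is the crux --- show, via a regret decomposition inspired by~\citep{simc2019}, that \emph{every} component of $\textsc{TransErr}$ is itself at most $\tfrac14\Reg_T(\pi^\star)+\polylog(T)$. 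The key idea is to retain the (normally discarded) amount by which the optimistic estimator \emph{underestimates} the value of $\pi^\star$ and charge the transition error against it, exploiting that the expected number of visits to suboptimal state-action pairs --- which drives both the widths $B_i$ and the $1/u_t$ variance --- is exactly what Condition~\eqref{eq:loss_condition} penalizes through $\gap(s,a)$. Rearranging and optimizing the free constant $z$ then yields $\Reg_T(\pi^\star)=\order(U+\sqrt{UC})$, and tracking how the $L,|S|,|A|$ factors propagate through the Bernstein widths (the $L^6|S|^2$ term), the $1/u_t$ loss-estimator variance (the $L^3|S|^2|A|$ term), and the $\order(|S||A|\log T)$ epoch structure (contributing the extra $\log T$) gives the stated $U$. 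The main obstacle is exactly this self-bounding of $\textsc{TransErr}$: unlike in the known-transition case it is nominally $\Theta(\sqrt{T})$, and converting it into a small fraction of the regret requires the delicate counting argument tying visits of suboptimal pairs back to the gap-weighted sum in~\eqref{eq:loss_condition}.
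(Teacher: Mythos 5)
Your proposal is essentially the paper's approach: your three-way split $(a)+(b)+(c)$ is algebraically identical to the paper's value-function decomposition $\textsc{Err}_1 + \textsc{EstReg} + \textsc{Err}_2$ in \pref{eq:regret_basic_decomp}, your piece $(c)$ is $\textsc{EstReg}$, and your $\textsc{TransErr}$ is $\textsc{Err}_1 + \textsc{Err}_2$; the loss-shifting trick applied per epoch on $\Omega(\bar P_{i(t)})$, the Tsallis-plus-log-barrier regularizer with learning rate reset, the Bernstein widths, optimism, and the charge-against-underestimation self-bounding are all the moves the paper makes.

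Two technical points you gloss over are worth flagging because they are not cosmetic. First, the stability term does \emph{not} telescope cleanly into $\sum_t \sqrt{\widehat q_t/(t-t_{i(t)}+1)}$: because the shifted loss $\widehat Q_t(s,a) - \widehat V_t(s)$ itself contains the bonus $-L B_{i(t)}$, its $\widehat q_t^{3/2}$-weighted square splits (as in \pref{lem:shifted_stability}) into a term whose conditional expectation has the expected $L^2\sum_{a\neq\pi(s)}\sqrt{\widehat q_t(s,a)}$ form \emph{plus} an extra $\order\big(L^4|A|\sum_{s,a}\widehat q_t(s,a) B_{i(t)}(s,a)^2\big)$ contribution that does not self-bound the same way and needs its own control (\pref{lem:bobw_tasallis_bound_extra} shows it is $\polylog T$, but this has to be argued). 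Second, ``TransErr is at most $\tfrac14\Reg_T(\pi^\star)+\polylog$'' is not proved by a single counting argument; the paper's joint decomposition $\textsc{Err}_1+\textsc{Err}_2 = \textsc{ErrSub}+\textsc{ErrOpt}+\textsc{OccDiff}+\textsc{Bias}$ (\pref{col:general_perf_decomp}) and the six self-bounding functionals $\selfterm_1$--$\selfterm_6$ of \pref{def:self_bounding_terms} are exactly the machinery that turns your informal idea into a proof: $\textsc{Bias}$ is the retained underestimation (nonpositive under $\calA$), and the other three are each shown to fit a $\selfterm_j$ template that satisfies the AM--GM self-bounding inequality in \pref{lem:main_text_self_bounding}. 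Your sketch correctly identifies the destination but does not exhibit this decomposition, which is where the real work is.
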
 

While our bounds have some extra dependence on the parameters $L$, $|S|$, and $|A|$ compared to the best existing bounds in each of the two worlds,
we emphasize that our algorithm is the first to be able to adapt to these two worlds simultaneously and achieve $\otil(\sqrt{T})$ and $\order(\polylog(T))$ regret respectively.
In fact, with some extra twists (such as treating differently the state-action pairs that are visited often enough and those that are not), we can improve the dependence on these parameters,
but we omit these details since they make the algorithms much more complicated.

Also, while~\citep{jin2020simultaneously} is able to obtain $\order(\log T)$ regret for the stronger benchmark $\Reg_T(\optpi)$ under Condition~\eqref{eq:loss_condition} and known transition (same as our \pref{thm:known_transition_main}),
here we only achieve so for $\Reg_T(\pi^\star)$ due to some technical difficulty (see \pref{sec:analysis}).
However, recall that for the most interesting i.i.d. case, one simply has $\Reg_T(\pi^\star) = \Reg_T(\optpi)$ as discussed in \pref{sec:prelim};
even for the corrupted i.i.d. case, since $\Reg_T(\optpi)$ is at most $C + \Reg_T(\pi^\star)$, our algorithms ensure $\Reg_T(\optpi) = \order(U + C)$ (note $\sqrt{UC} \leq  U + C$).
Therefore, our bounds on $\Reg_T(\pi^\star)$ are meaningful and strong.

\section{Analysis Sketch}\label{sec:analysis}

In this section, we provide a proof sketch for the full-information setting (which is simpler but enough to illustrate our key ideas).
The complete proofs can be found in \pref{app:bobw_unknown_transition_fullinfo} (full-information) and \pref{app:bobw_unknown_transition_bandit} (bandit).
We start with the following straightforward regret decomposition:
\begin{equation}
\begin{aligned}
\Reg_T(\pi) = 
\E\Bigg[\underbrace{\sum_{t=1}^{T} V^{\pi_t}_t(s_0) - \widehat{V}^{\pi_t}_t(s_0)}_{\textsc{Err}_1 } + \underbrace{\sum_{t=1}^{T} \widehat{V}^{\pi_t}_t(s_0) - \widehat{V}^{\pi}_t(s_0) }_{\textsc{EstReg} } + \underbrace{\sum_{t=1}^{T}\widehat{V}^{\pi}_t(s_0) - V^{\pi}_t(s_0) }_{\textsc{Err}_2}\Bigg]
\end{aligned}
\label{eq:regret_basic_decomp}
\end{equation}
for an arbitrary benchmark $\pi$, where $V_t^\pi$ is the state value function associated with the true transition $P$, the true loss $\ell_t$, and policy $\pi$, while $\widehat{V}_t^\pi$ is the state value function associated with the empirical transition $\bar{P}_{i(t)}$, the adjusted loss $\hatl_t$, and policy $\pi$.
Define the corresponding state-action value functions $Q_t^\pi$ and $\widehat{Q}_t^\pi$ similarly
(our earlier notations $\widehat{V}_t$ and $\widehat{Q}_t$ are thus shorthands for $\widehat{V}_t^{\pi_t}$ and $\widehat{Q}_t^{\pi_t}$). 

In the adversarial world, we bound each of the three terms in \pref{eq:regret_basic_decomp} as follows (see \pref{prop:bobw_fullinfo_adv_lem} for details).
First, $\E\sbr{\textsc{Err}_1}$ measures the estimation error of the loss of the learner's policy $\pi_t$, which can be bounded by $\otil(L|S|\sqrt{|A|T})$ following the analysis of~\cite{jin2019learning}.
Second, as mentioned, our adjusted losses are optimistic in the sense that it underestimates the loss of all policies (with high probability), making $\E\sbr{\textsc{Err}_2}$ an $\order\rbr{ 1 }$ term only.
Finally, $\E\sbr{\textsc{EstReg}}$ is the regret measured with $\bar{P}_{i(t)}$ and $\hatl_t$, which is controlled by the \ftrl procedure and of order $\otil(L\sqrt{|S||A| T})$.
Put together, this proves the $\otil(L|S|\sqrt{|A|T})$ regret shown in \pref{thm:main_full_info}.

In the stochastic world, we fix the benchmark $\pi = \pi^\star$. 
To obtain polylogarithmic regret, an important observation is that we now have to make use of the potentially negative term $\textsc{Err}_2$ instead of simply bounding it by $\order\rbr{1}$ (in expectation).
Specifically, inspired by~\citep{simc2019}, 
we propose a new decomposition on $\textsc{Err}_1$ and $\textsc{Err}_2$ \textit{jointly} as follows (see \pref{app:general_decomp_lem}):
$\textsc{Err}_1 + \textsc{Err}_2 = \textsc{ErrSub} + \textsc{ErrOpt} + \textsc{OccDiff} + \textsc{Bias}$. Here,
\begin{itemize}[leftmargin=2em]
  \setlength\itemsep{0.3em}
\item 
$\textsc{ErrSub} = \sum_{t=1}^{T}\sum_{s\neq s_L} \sum_{a \neq \pi^{\star}(s)} q_t(s,a)\widehat{E}^{\pi^{\star}}_t(s,a)$
measures some estimation error contributed by the suboptimal actions,
where $\widehat{E}^{\pi^{\star}}_t(s,a)=\ell_t(s,a) + \E_{s'\sim P(\cdot|s,a)}\big[\widehat{V}^{\pi^{\star}}_t(s')\big]  - \widehat{Q}^{\pi^{\star}}_t(s,a)$ is a ``surplus'' function (a term taken from~\citep{simc2019});

\item
$\textsc{ErrOpt} = \sum_{t=1}^{T}\sum_{s\neq s_L} \sum_{a = \pi^{\star}(s) } \rbr{ q_t(s,a) - q_t^\star(s,a) } \widehat{E}^{\pi^{\star}}_t(s,a)$
measures some estimation error contributed by the optimal action,
where $q_t^\star(s,a)$ is the probability of visiting a trajectory of the form $(s_0, \pi^\star(s_0)), (s_1, \pi^\star(s_1)), \ldots, (s_{k(s)-1}, \pi^\star(s_{k(s)-1})), (s,a)$ when
executing policy $\pi_t$;

\item
$\textsc{OccDiff} = \sum_{t=1}^{T}\sum_{s\neq s_L} \sum_{a \in A} \rbr{ q_t(s,a) - \widehat{q}_t(s,a)} \rbr{\widehat{Q}^{\pi^{\star}}_t(s,a) -\widehat{V}^{\pi^{\star}}_t(s)}$ measures the occupancy measure difference between $q_t$ and $\widehat{q}_t$;

\item
$\textsc{Bias} = \sum_{t=1}^{T}\sum_{s\neq s_L} \sum_{a\neq \pi^{\star}(s)}  q^\star_t(s,a) \rbr{\widehat{V}^{\pi^{\star}}_t(s) - V^{\pi^\star}_t(s)}$ measures some estimation error for $\pi^\star$, which, similar to $\textsc{Err}_2$, is of order $\order(1)$ in expectation due to optimism. 
\end{itemize}

The next key step is to show that the terms $\textsc{ErrSub}, \textsc{ErrOpt}, \textsc{OccDiff}$, and $\textsc{EstReg}$ can all be upper bounded by some quantities that admit a certain self-bounding property similarly to the right-hand side of \pref{eq:known_self_bounding_regret_bounding}.
We identify four such quantities and present them using functions $\selfterm_1$, $\selfterm_2$, $\selfterm_3$, and $\selfterm_4$, whose definitions are deferred to \pref{app:self_bounding_terms} due to space limit.
Combining these bounds for each term, we obtain the following important lemma.

\begin{lemma}\label{lem:bobw_fullinfo_stoc_self_bound_terms_lem}
	With $\delta = \frac{1}{T^2}$, \pref{alg:bobw_framework} ensures that $\Reg_T(\pi^\star)$ is at most $\order(L^4 |S|^3|A|^2  \ln^2 T)$ plus:
	\begin{align*} 
	\E\Bigg[ \order\Bigg( \underbrace{\selfterm_1\rbr{ L^4|S| \ln T}}_{\text{from $\textsc{ErrSub}$}}  + \underbrace{\selfterm_2\rbr{ L^4|S| \ln T  } }_{\text{from $\textsc{ErrOpt}$}} +   \underbrace{\selfterm_3\rbr{ L^4 \ln T }}_{\text{from $\textsc{OccDiff}$}}  +  \underbrace{\selfterm_4\rbr{ L^5 |S||A| \ln T \ln(|S||A|) } }_{\text{from $\textsc{EstReg}$} }  \Bigg)\Bigg].
	\end{align*}
\end{lemma}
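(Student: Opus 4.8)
The plan is to establish \pref{lem:bobw_fullinfo_stoc_self_bound_terms_lem} by chaining together the two regret decompositions stated in \pref{sec:analysis} and then invoking one per-term bound for each of the four ``self-bounding'' pieces. Concretely, start from \pref{eq:regret_basic_decomp} with $\pi = \pi^\star$, which writes $\Reg_T(\pi^\star) = \E[\textsc{Err}_1 + \textsc{EstReg} + \textsc{Err}_2]$. Next substitute the joint decomposition $\textsc{Err}_1 + \textsc{Err}_2 = \textsc{ErrSub} + \textsc{ErrOpt} + \textsc{OccDiff} + \textsc{Bias}$ (proved in \pref{app:general_decomp_lem}), so that
\[
\Reg_T(\pi^\star) = \E\big[\textsc{ErrSub} + \textsc{ErrOpt} + \textsc{OccDiff} + \textsc{EstReg} + \textsc{Bias}\big].
\]
Then I would dispatch the five terms as follows: first, $\E[\textsc{Bias}] = \order(1)$ (absorbed into the additive $\order(L^4|S|^3|A|^2\ln^2 T)$) because the adjusted losses are optimistic and never overestimate $V^{\pi^\star}_t$ on the high-probability event $P \in \calP_{i}$ for all $i$ (which holds with probability $\geq 1 - 4\delta$ by \citep[Lemma~2]{jin2019learning}, and on the complement the whole regret is trivially $\order(T\delta L)=\order(1)$ with $\delta = T^{-2}$); second, invoke the four lemmas (one per term, to be proved in \pref{app:self_bounding_terms}) that bound $\textsc{ErrSub} \leq \selfterm_1(L^4|S|\ln T) + \order(\cdots)$, $\textsc{ErrOpt} \leq \selfterm_2(L^4|S|\ln T) + \order(\cdots)$, $\textsc{OccDiff} \leq \selfterm_3(L^4\ln T) + \order(\cdots)$, and $\textsc{EstReg} \leq \selfterm_4(L^5|S||A|\ln T\ln(|S||A|)) + \order(\cdots)$, collecting all the lower-order additive remainders into $\order(L^4|S|^3|A|^2\ln^2 T)$.

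For the per-term bounds the key mechanisms are: for $\textsc{ErrSub}$ and $\textsc{ErrOpt}$, the surplus function $\widehat{E}^{\pi^\star}_t(s,a)$ is controlled by the transition confidence width $B_{i(t)}(s,a,\cdot)$ via a Bernstein-type estimate (surplus is nonzero only through the gap between $P(\cdot|s,a)$ and $\bar{P}_{i(t)}(\cdot|s,a)$, times the value function which is $\order(L)$), and then a standard ``$\sqrt{\text{visits}}$'' pigeonhole over epochs converts $\sum_t q_t(s,a) B_{i(t)}(s,a,s')$-type sums into $\sqrt{(\text{cumulative }q_t(s,a))\cdot\polylog}$, which is exactly the shape of $\selfterm_1,\selfterm_2$; for $\textsc{OccDiff}$, the difference $q_t - \widehat{q}_t$ comes purely from $P$ vs.\ $\bar{P}_{i(t)}$ and is again bounded layer-by-layer by confidence widths, while $\widehat{Q}^{\pi^\star}_t - \widehat{V}^{\pi^\star}_t$ is $\order(L)$; for $\textsc{EstReg}$, this is the genuine \ftrl regret over $\Omega(\bar{P}_{i(t)})$ with the Shannon-entropy regularizer \pref{eq:Shannon}, and here the loss-shifting trick of \pref{sec:loss_shifting} enters — one rewrites $\textsc{EstReg}$ in terms of the advantage functions $\widehat{Q}_t - \widehat{V}_t$, applies the standard FTRL stability/penalty bound with the data-dependent $\eta_t$ tuned via $M_t$, which yields precisely an adaptive bound of the $\sqrt{\sum_t \sum_{s,a} \widehat{q}_t(s,a)(\cdots)^2}$ form that defines $\selfterm_4$. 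I would also need to restart the FTRL analysis at each epoch boundary and pay an $\order(L)$ reset cost per epoch, times $\order(|S||A|\log T)$ epochs — this lands in the additive $\polylog$ term.

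I expect the main obstacle to be the $\textsc{EstReg}$ analysis under the epoch-based FTRL, specifically verifying that the particular learning-rate tuning $\eta_t = \sqrt{L\ln(|S||A|)/(64L^5\ln(|S||A|) + M_t)}$ delivers the clean self-bounding shape $\selfterm_4$ while simultaneously (i) keeping the log-barrier-free Shannon regularizer stable despite the large magnitude of the shifted losses $\widehat{Q}_t - \widehat{V}_t$ (this is why $M_t$ takes the \emph{minimum} of the raw-loss second moment and the advantage second moment), (ii) handling the fact that $M_t$ is computed from the \emph{empirical} transition $\bar{P}_{i(t)}$ and $\pi_t$, so the advantage functions referenced in $M_t$ are the algorithm's own internal quantities and not the ones appearing in the regret against $P$, and (iii) ensuring the learning rate, which only increases \emph{within} an epoch, is correctly re-initialized across epochs without incurring a $\sqrt{T}$ penalty. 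A secondary delicate point is making sure the surplus-function bounds for $\textsc{ErrSub}$ and $\textsc{ErrOpt}$ use the \emph{Bernstein}-style (variance-aware) confidence width rather than the cruder Hoeffding one — the Hoeffding version would only give $\polylog\cdot\sqrt{T}$ rather than something self-bounded against $\Reg_T(\pi^\star)$ — and carefully separating the contribution of the optimal action (the $q_t - q_t^\star$ telescoping structure inherited from the performance-difference identity) so that $\textsc{ErrOpt}$ is genuinely of the self-bounding type rather than merely $\otil(\sqrt{T})$.
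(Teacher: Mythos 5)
Your high-level route matches the paper's: decompose $\Reg_T(\pi^\star)$ via \pref{eq:regret_basic_decomp}, split $\textsc{Err}_1+\textsc{Err}_2$ using \pref{col:general_perf_decomp}, kill $\textsc{Bias}$ by optimism, and dispatch $\textsc{ErrSub}$, $\textsc{ErrOpt}$, $\textsc{OccDiff}$, $\textsc{EstReg}$ one-by-one. But three of your claims conflate or skip steps in a way that would not let you land on the exact statement.

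First, you describe $\selfterm_1,\selfterm_2$ as ``$\sqrt{(\text{cumulative }q_t(s,a))\cdot\polylog}$''-shaped, obtained by a pigeonhole over epochs. That is not their form. By \pref{def:self_bounding_terms}, $\selfterm_1(J)=\sum_t\sum_{s,a\neq\pi^\star(s)} q_t(s,a)\sqrt{J/\max\{m_{i(t)}(s,a),1\}}$ — the sum over $t$ remains, and no pigeonhole occurs here. The pigeonhole/doubling argument you describe is exactly what converts $\selfterm_1$ into a gap-dependent term in \pref{lem:self_bounding_term_1}, i.e.\ it belongs to \pref{lem:main_text_self_bounding}, not to the present lemma. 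For this lemma, $\textsc{ErrSub}$ drops directly into $\selfterm_1$-form once you bound the surplus $\widehat{E}^{\pi^\star}_t(s,a)$ by the confidence width. Relatedly, the factor you plug into $\selfterm_1$ is $L^4|S|\ln T$, and that $L^4$ is $(\text{range of }\widehat{V}^{\pi^\star}_t)^2 = (L^2)^2$; your estimate of the value function as $\order(L)$ misses that the adjusted loss carries a bonus of magnitude $L\cdot B_i(s,a)$, inflating $|\widehat{V}^{\pi^\star}_t|$ to $\order(L^2)$, and so would give only $\selfterm_1(L^2|S|\ln T)$.

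Second, and more consequentially: you assert that the FTRL analysis over $\Omega(\bar P_{i})$ ``yields precisely an adaptive bound of the $\sqrt{\sum_t\sum_{s,a}\widehat q_t(s,a)(\cdots)^2}$ form that defines $\selfterm_4$.'' But $\selfterm_4(J)=\sqrt{J\sum_t\sum_{s,a\neq\pi^\star(s)} q_t(s,a)}$ is defined in terms of the \emph{true} occupancy $q_t$, whereas \pref{lem:bobw_fullinfo_shannon_reg} delivers $\sqrt{L^5\ln(|S||A|)\cdot\sum_t\sum_{s,a\neq\pi^\star(s)}\widehat q_t(s,a)}$ in terms of the \emph{algorithm's internal} occupancy $\widehat q_t$. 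Converting $\widehat q_t$ to $q_t$ is a real step: the paper uses $\sqrt{x}\le\sqrt{y}+\sqrt{|x-y|}$ and then controls $\sum_t\sum_{s,a\neq\pi^\star(s)}|\widehat q_t(s,a)-q_t(s,a)|$ via the residual-term machinery of \pref{lem:residual_term_property}, which produces an additional $\selfterm_3(\cdot)$ contribution that is absorbed into the $\textsc{OccDiff}$-labelled piece. Your proposal as written would leave $\textsc{EstReg}$ bounded only in terms of $\widehat q_t$, which has no self-bounding property against $\Reg_T(\pi^\star)$ and cannot be fed into Condition \eqref{eq:loss_condition}.

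The remaining mechanics you identify — Bernstein widths for the surplus, the necessity of the min in $M_t$, per-epoch learning-rate reset, $N=\order(|S||A|\log T)$ epochs — are all the right ingredients and match the paper's argument. Fix the shape of $\selfterm_1,\selfterm_2$, the $L^2$ vs.\ $L$ value-function range, and especially the $\widehat q_t\to q_t$ transfer in $\textsc{EstReg}$, and the proposal becomes a faithful outline.
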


Finally, as mentioned, each of the $\selfterm_1$, $\selfterm_2$, $\selfterm_3$, and $\selfterm_4$ functions can be shown to admit the following self-bounding property,
such that similarly to what we argue in \pref{eq:self_bounding_argument}, picking the optimal values of $\alpha$ and $\beta$ and rearranging leads to the polylogarithmic regret bound shown in \pref{thm:main_full_info}.

\begin{lemma}[Self-bounding property] \label{lem:main_text_self_bounding}
	Under Condition~\eqref{eq:loss_condition}, we have for any $\alpha,\beta \in (0,1)$,
	\begin{align*}
	\E\sbr{{\selfterm_1}(\cons)} & \leq \alpha\cdot \rbr{ \Reg_T(\pi^\star) +  C} + \textstyle \order\rbr{ \frac{1}{\alpha} \cdot  \sum_{s\neq s_L} \sum_{a \neq \pi^{\star}(s)} \frac{\cons}{\gap(s,a)} }, \\ 
	\E\sbr{{\selfterm_2}(\cons)} & \leq \beta \cdot \rbr{ \Reg_T(\pi^\star) +  C} + \textstyle \order\rbr{ \frac{1}{\beta} \cdot \frac{L|S| \cons}{\gapmin} }, \\ 
	\E\sbr{{\selfterm_3}(\cons)} & \leq \rbr{ \alpha + \beta } \cdot \rbr{ \Reg_T(\pi^\star)  +  C} + \textstyle \order\rbr{ \frac{1}{\alpha} \cdot \sum_{s\neq s_L} \sum_{a \neq \pi^{\star}(s)} \frac{L^2|S|\cons}{\gap(s,a)}} + \order\rbr{\frac{1}{\beta} \cdot \frac{L^2|S|^2\cons}{\gapmin} }, \\ 
	\E\sbr{{\selfterm_4}(\cons)} & \leq \beta\cdot \rbr{ \Reg_T(\pi^\star) +  C} + \textstyle\order\rbr{ \frac{1}{\beta} \cdot \frac{\cons}{\gapmin} }.
	\end{align*} 
\end{lemma}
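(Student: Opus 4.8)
The plan is to establish each of the four inequalities via the same two-step recipe: (i) show a deterministic (or high-probability) upper bound on the corresponding self-bounding function $\selfterm_j(\cons)$ in terms of a weighted sum of visitation probabilities $q_t(s,a)$ over suboptimal actions (for $\selfterm_1$, $\selfterm_3$) or over \emph{all} actions at suboptimal states (for $\selfterm_2$, $\selfterm_4$), divided by $\sqrt{t}$ or $\sqrt{t-t_{i(t)}+1}$; and (ii) apply the AM--GM split exactly as in~\pref{eq:self_bounding_argument}, trading the $\sqrt{q_t(s,a)/t}$-type terms against $q_t(s,a)\gap(s,a)$ plus $\frac{\log T}{\gap(s,a)}$ terms, and then invoke Condition~\eqref{eq:loss_condition} to replace $\E[\sum_{t}\sum_{s\neq s_L}\sum_{a\neq\pi^\star(s)}q_t(s,a)\gap(s,a)]$ by $\Reg_T(\pi^\star)+C$. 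The free parameter in the AM--GM step (called $z$ in~\eqref{eq:self_bounding_argument}) is what becomes $\alpha$ or $\beta$ in the statement, so the bookkeeping is: each appearance of a sum $\sum_{a\neq\pi^\star(s)}\sqrt{q_t(s,a)/t}$ over the suboptimal actions contributes an $\alpha\cdot(\Reg_T(\pi^\star)+C)$ term and an $\frac1\alpha\sum_{s,a}\frac{\cons}{\gap(s,a)}$ term; each appearance of a sum $\sum_{a\in A}\sqrt{q_t(s,a)/t}$ \emph{at a suboptimal state} $s$ contributes a $\beta\cdot(\Reg_T(\pi^\star)+C)$ term and an $\frac1\beta\cdot\frac{L|S|\cons}{\gapmin}$ term, using that a suboptimal state is visited only when some earlier suboptimal action was taken (so $\sum_a q_t(s,a)\le \sum_{a'\neq\pi^\star(\cdot)}q_t(\cdot,a')$ along the prefix, and $\gap\ge\gapmin$). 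The structural reason $\selfterm_3$ gets \emph{both} an $\alpha$ and a $\beta$ term is that $\textsc{OccDiff}$ mixes $q_t-\widehat q_t$ over all $(s,a)$: part of it is controlled by suboptimal-action visitation (giving $\alpha$, with an extra $L^2|S|$ from the transition-error bound) and part by suboptimal-state visitation (giving $\beta$, with an extra $L^2|S|^2$).

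The key technical inputs I would use are: the definitions of $\selfterm_1,\dots,\selfterm_4$ from~\pref{app:self_bounding_terms} together with the bounds relating them to transition estimation errors $B_i(s,a,s')$; the fact (from~\citep{jin2019learning}, Lemma~2) that $P\in\calP_i$ for all $i$ with probability $\ge 1-4\delta$, so that on this event $u_t(s,a)\ge q_t(s,a)$ and the Bernstein widths $B_i$ are controlled; and a counting argument that the cumulative inverse-counts $\sum_t \frac{q_t(s,a)}{\max\{1,m_{i(t)}(s,a)\}}$ and $\sum_t\frac{1}{m_{i(t)}(s,a)}$ are only $\otil(\log T)$ (this is what converts the $\sqrt{1/m_i}$ Bernstein terms into the $\sqrt{q_t(s,a)/t}$ form after a standard "doubling-epoch + $m_i\gtrsim$ cumulative visits" argument). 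For $\selfterm_4$ (coming from $\textsc{EstReg}$), the relevant ingredient is the adaptive \ftrl regret bound for the Shannon-entropy regularizer with the data-dependent learning rate $\eta_t=\sqrt{L\ln(|S||A|)/(64L^5\ln(|S||A|)+M_t)}$; feeding in the definition of $M_t$ (which is a min of two second-moment quantities, one in terms of $\widehat q_\tau\hatl_\tau^2$ and one in terms of $\widehat q_\tau(\widehat Q_\tau-\widehat V_\tau)^2$) and using the loss-shifting identity $\inner{q,g_\tau}=-\widehat V_\tau(s_0)$ from~\pref{lem:loss_shifting_invariant}, one gets $\textsc{EstReg}\lesssim \sqrt{L^5\ln(|S||A|)\cdot\sum_\tau(\text{second moment})}$, and the second moment telescopes/bounds into $\sum_t\sum_{s\neq s_L}\sum_{a}q_t(s,a)$-type quantities that again admit the $\beta$-split — here everything lives at suboptimal states, so only a $\beta$-term appears, and the extra factor is just $\frac{1}{\gapmin}$.

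I would carry out the steps in the order: (1) recall the definitions and the deterministic upper bounds $\selfterm_j(\cons)\le \order(\sum_t\sum_{s}\sum_{a}(\cdots)\sqrt{q_t(s,a)/t}\cdot\cons^{1/2})$ — actually it is cleaner to absorb $\cons$ so that $\selfterm_j(\cons)\le \order(\sum\cdots\sqrt{\cons\, q_t(s,a)/t})$; (2) condition on the good event $P\in\bigcap_i\calP_i$ and absorb the $\order(\delta T\cdot\text{poly})=\otil(1)$ failure contribution into the additive $\order(L^4|S|^3|A|^2\ln^2 T)$ slack already present in~\pref{lem:bobw_fullinfo_stoc_self_bound_terms_lem}; (3) for each $j$, apply AM--GM with parameter $\alpha$ (over suboptimal actions) and/or $\beta$ (over suboptimal states), obtaining the stated right-hand sides; (4) collect. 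The main obstacle I anticipate is \emph{not} any single inequality but getting the parameter-dependent constants exactly right in step (3) — in particular, correctly tracking why $\selfterm_2$ and $\selfterm_4$ only need $\beta$ while $\selfterm_1$ only needs $\alpha$ and $\selfterm_3$ needs both, and making sure the "suboptimal state $\Rightarrow$ earlier suboptimal action" reduction does not lose more than the claimed $L|S|$ (resp. $L^2|S|^2$) factors. A secondary subtlety is that $\textsc{ErrOpt}$ involves the \emph{signed} difference $q_t(s,a)-q_t^\star(s,a)$ at the optimal action, which is not obviously nonnegative; handling this requires bounding $|q_t(s,a)-q_t^\star(s,a)|$ at an optimal-action pair $(s,\pi^\star(s))$ by the probability that $\pi_t$ deviated from $\pi^\star$ somewhere on the prefix to $s$, i.e. again by a sum of suboptimal-action visitation probabilities along the path — this is where the $\selfterm_2$ bound secretly reduces to a suboptimal-action sum and hence gets the $\alpha$-type... wait, the statement says $\selfterm_2$ gets a $\beta$-term; so the reduction must instead route through suboptimal-\emph{state} visitation of the deviation point, which is the delicate combinatorial point I would need to nail down carefully.
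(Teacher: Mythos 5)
There is a genuine gap, and it starts at step (i) of your recipe. You propose to first convert each $\selfterm_j$ into a sum of $\sqrt{q_t(s,a)/t}$-type quantities and then run the AM--GM argument of \pref{eq:self_bounding_argument} verbatim. But the self-bounding terms $\selfterm_1,\selfterm_2,\selfterm_3$ are count-based: they involve $\sqrt{\cons/\max\{m_{i(t)}(s,a),1\}}$, and $m_{i(t)}(s,a)$ bears no uniform relation to $t$ for rarely-visited pairs, so there is no valid conversion to the time-based $\sqrt{1/t}$ form. The paper instead subtracts $\alpha\gap(s,a)$ (respectively $\beta\gapmin/L$) inside the sum over $t$ and exploits the doubling schedule directly: for each fixed $(s,a)$, $\E[\sum_t q_t(s,a) f(m_{i(t)}(s,a))] = \E[\sum_i (m_{i+1}(s,a)-m_i(s,a)) f(\max\{m_i(s,a),1\})]$, and once the count exceeds $\cons/(\alpha\gap(s,a))^2$ the bracketed term turns nonpositive, so the positive contribution is bounded by an integral $\int_0^{\order(\cons/\alpha^2\gap^2)}\sqrt{\cons/x}\,dx = \order(\cons/(\alpha\gap(s,a)))$. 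This count-based telescope, not a reduction to $\sqrt{q_t/t}$, is the workhorse.

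Your discussion of $\selfterm_2$ also reveals a real confusion. First, $\selfterm_2$ sums over the \emph{optimal} action $a=\pi^\star(s)$ (not over ``all actions at suboptimal states''). Second, the sign question you flag as delicate is a non-issue: $q_t^\star(s,a)$ restricts to trajectories that follow $\pi^\star$ all the way to $(s,a)$, which is a sub-event of reaching $(s,a)$, so $q_t(s,a)\ge q_t^\star(s,a)$ holds pointwise by definition. The genuinely useful reduction is the paper's \pref{lem:perf_diff_lower_bound_gapmin}: for each layer $k$, $\sum_{s\in S_k}\sum_{a=\pi^\star(s)}(q_t(s,a)-q_t^\star(s,a)) \le \Pr[\exists\,\tau\le k:\ a_\tau\neq\pi^\star(s_\tau)]$, which by a union bound over layers is at most $\sum_{s\neq s_L}\sum_{a\neq\pi^\star(s)}q_t(s,a)$; summing over the $L$ layers and multiplying by $\gapmin\le\gap(s,a)$ then feeds into Condition~\eqref{eq:loss_condition} with an extra $L$ factor. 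This is where the $L$ in the $L|S|\cons/\gapmin$ term comes from; the $|S|$ comes from summing the per-$(s,\pi^\star(s))$ integral bound over states. Finally, for $\selfterm_3$ the $\alpha$-versus-$\beta$ split in the paper is a clean case distinction on whether the intermediate action $v$ at the triple $(u,v,w)$ satisfies $v\neq\pi^\star(u)$ (which reduces to $\selfterm_1(L^2|S|\cons)$ after bounding the conditional occupancy sum by $L$) or $v=\pi^\star(u)$ (handled by the $\beta\gapmin/L$ subtraction and the same count-based integral), rather than the ``suboptimal-action vs.~suboptimal-state'' framing you describe. Your treatment of $\selfterm_4$ by AM--GM is fine — that one really is a single $\sqrt{\cdot}$ and matches the paper.
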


We emphasize again that the proposed joint decomposition on $\textsc{Err}_1 + \textsc{Err}_2$ plays a crucial rule in this analysis and addresses the key challenge on how to bound the transition estimation error by something better than $\sqrt{T}$.
We also point out that in this analysis, only $\textsc{EstReg}$ is related to the \ftrl procedure, while the other three terms are purely based on our new framework to handle unknown transition. 
In fact, the reason that we can only derive a $\polylog(T)$ bound on $\Reg_T(\pi^\star)$ but not directly on $\Reg_T(\optpi)$ is also due to these three terms --- they can be related to the right-hand side of Condition~\eqref{eq:loss_condition} only when we use the benchmark $\pi = \pi^\star$ but not when $\pi = \optpi$.
This is not the case for $\textsc{EstReg}$, which is the reason why~\citet{jin2020simultaneously} are able to derive a bound on $\Reg_T(\optpi)$ directly when the transition is known.
Whether this issue can be addressed is left as a future direction.

\section{Conclusions}
In this work, we propose an algorithm for learning episodic MDPs which achieves favorable regret guarantees simultaneously in the stochastic and adversarial worlds with unknown transition. 
We start from the known transition setting and propose a loss-shifting trick for $\ftrl$ applied to MDPs, which simplifies the method of \cite{jin2020simultaneously} and improves their results. 
Then, we design a new framework to extend our known transition algorithm to the unknown transition case, which is critical for the application of the loss-shifting trick. 
Finally, we develop a novel analysis which carefully upper bounds the transition estimation error by (a fraction of) the regret itself plus a gap-dependent poly-logarithmic term in the stochastic setting, resulting in our final best-of-both-worlds result.

Besides the open questions discussed earlier (such as improving our bounds in \pref{thm:main_full_info} and \pref{thm:main_bandit}), one other key future direction is to remove the assumption that there exists a unique optimal action for each state, which appears to be challenging despite the recent progress for the bandit case~\citep{ito2021parameter}, since the occupancy measure computed from \pref{eq:epoch_FTRL} has a very complicated structure.
Another interesting direction would be to extend the sub-optimality gap function to other fine-grained gap functions, such as that of \cite{dann2021beyond}. 

\begin{ack}
HL is supported by NSF Award IIS-1943607 and a Google Faculty Research Award. 
LH is  supported  in  part  by  the  Technology  and Innovation  Major  Project  of  the  Ministry  of  Science  and Technology  of  China  under  Grants 2020AAA0108400  and 2020AAA0108403.
We thank Max Simchowitz for many helpful discussions, and the anonymous reviewers for their valuable feedback and suggestions. 
\end{ack}

\bibliography{ref}
\bibliographystyle{plainnat}

\appendix
\newpage
\tableofcontents
\newpage 

\paragraph{An important convention}
Note that the value of $m_i(s,a)$ is changing in the algorithm.
For the entire analysis, we see $m_i(s,a)$ as its initial value, which is the number of visits to $(s,a)$ from epoch $1$ to epoch $i-1$.
In this sense, if we let $N$ be the total number of epochs, then $m_{N+1}(s,a)$ is naturally defined as the total number of visits to $(s,a)$ within $T$ episodes.

\section{Best of Both Worlds for MDPs with Known Transition} 
\label{app:bobw_known_transition}


In this section, we show how to extend the loss-shifting technique to MDPs with known transition and obtain best-of-both-worlds results.

\subsection{Loss-shifting Technique} 
\label{app:loss_shifting_technique}

First of all, we introduce a general invariant condition with a fixed transition in \pref{lem:loss_shifting_invariant}
\begin{lemma}
	\label{lem:loss_shifting_invariant}
	Fix the transition function $P$. For any policy $\pi$ and loss function $\mathring \ell: S\times A \rightarrow \fR$, define invariant function $g \in S \times A \rightarrow \fR$ as:
	\begin{equation}
	g^{P, \pi,\mathring{\ell} }(s,a) \triangleq \rbr{ Q^{P, \pi,\mathring{\ell} }(s,a) - V^{P, \pi,\mathring{\ell} }(s) - {\mathring  \ell}(s,a)}, \label{eq:invariant_func}
	\end{equation}
	where $Q^{P, \pi,\mathring{\ell} }$ and $V^{P, \pi,\mathring{\ell} }$ are state-action value and state value functions associated with $\mathring{\ell}$ and the fixed policy $\pi$. 
	Then, it holds for any policy $\pi'$ that 
	\[
	\inner{q^{P, \pi'}, g^{P, \pi,\mathring{\ell} }} \triangleq \sum_{s\neq s_L} \sum_{a\in A} q^{P, \pi'}(s,a) \cdot g^{P, \pi,\mathring{\ell} }(s,a)  =  - V^{P, \pi,\mathring  \ell }(s_0)
	\]
	where $V^{P, \pi,\mathring  \ell }(s_0)$ only depends on $\pi$ and $\mathring  \ell$ (but not $\pi'$). 
\end{lemma}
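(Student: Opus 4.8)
The plan is to prove this via the performance difference lemma, which is exactly the classical identity that expresses the difference in values of two policies as a weighted sum of advantages. First I would recall that for any policy $\pi'$ and transition $P$, the occupancy measure $q^{P,\pi'}$ satisfies the standard Bellman flow constraints: $\sum_{a} q^{P,\pi'}(s,a) = \sum_{s'' \in S_{k(s)-1}} \sum_{a} q^{P,\pi'}(s'',a) P(s \mid s'', a)$ for every non-initial state $s$, together with $\sum_{a} q^{P,\pi'}(s_0,a) = 1$. These are precisely the constraints defining the polytope $\Omega(P)$, so anything I prove will hold for all $q \in \Omega(P)$.

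The key computation is to expand $\inner{q^{P,\pi'}, g^{P,\pi,\mathring\ell}}$ using the definition $g^{P,\pi,\mathring\ell}(s,a) = Q^{P,\pi,\mathring\ell}(s,a) - V^{P,\pi,\mathring\ell}(s) - \mathring\ell(s,a)$. The term $-\mathring\ell(s,a)$ cancels against the $\mathring\ell(s,a)$ hidden inside $Q^{P,\pi,\mathring\ell}(s,a) = \mathring\ell(s,a) + \E_{s' \sim P(\cdot\mid s,a)}[V^{P,\pi,\mathring\ell}(s')]$. Hence $g^{P,\pi,\mathring\ell}(s,a) = \E_{s'\sim P(\cdot\mid s,a)}[V^{P,\pi,\mathring\ell}(s')] - V^{P,\pi,\mathring\ell}(s)$. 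Now I would substitute this into the sum and split it into two pieces: $\sum_{s\neq s_L}\sum_a q^{P,\pi'}(s,a)\, \E_{s'\sim P(\cdot\mid s,a)}[V^{P,\pi,\mathring\ell}(s')]$ minus $\sum_{s\neq s_L}\sum_a q^{P,\pi'}(s,a)\, V^{P,\pi,\mathring\ell}(s)$. In the first piece, grouping by the successor state $s'$ and using the flow constraint turns $\sum_{s,a} q^{P,\pi'}(s,a) P(s'\mid s,a)$ into $\sum_a q^{P,\pi'}(s',a)$ for every $s'$ that is not the initial state (and for $s' = s_L$ we simply use $V^{P,\pi,\mathring\ell}(s_L) = 0$). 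In the second piece, $\sum_a q^{P,\pi'}(s,a) = \sum_a q^{P,\pi'}(s,a)$ is just the marginal visitation probability of state $s$. So the two sums are telescoping: the first equals $\sum_{s' \neq s_0, s' \neq s_L} \big(\sum_a q^{P,\pi'}(s',a)\big) V^{P,\pi,\mathring\ell}(s')$ (the layered structure makes every intermediate layer appear exactly once on each side), and the second equals $\sum_{s \neq s_L}\big(\sum_a q^{P,\pi'}(s,a)\big) V^{P,\pi,\mathring\ell}(s)$. Subtracting, all intermediate-layer terms cancel and we are left with $-\big(\sum_a q^{P,\pi'}(s_0,a)\big) V^{P,\pi,\mathring\ell}(s_0) = -V^{P,\pi,\mathring\ell}(s_0)$, using $\sum_a q^{P,\pi'}(s_0,a) = 1$.

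The main obstacle, such as it is, is purely bookkeeping: being careful about the layer indices so that the telescoping is clean, handling the boundary layers $S_0$ and $S_L$ correctly (the initial state contributes the surviving term, the terminal state contributes nothing because $V^{P,\pi,\mathring\ell}(s_L)=0$), and making sure the flow constraints are invoked in the right direction. There is no analytic difficulty; the identity is essentially the performance difference lemma written in occupancy-measure language, and the fact that $V^{P,\pi,\mathring\ell}(s_0)$ depends only on $\pi$ and $\mathring\ell$ is immediate from its definition as the value of policy $\pi$ under loss $\mathring\ell$, with no reference to $\pi'$. I would present the telescoping step as the one equation worth writing out in full and leave the rest to the reader.
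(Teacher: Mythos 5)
Your proof is correct, but it takes a somewhat different route from the paper's. The paper's proof invokes the performance difference lemma of Kakade (cited as Theorem 5.2.1 of Kakade's thesis) as a black box to write $V^{\pi'}(s_0) - V^{\pi}(s_0) = \sum_{s,a} q^{\pi'}(s,a)\bigl(Q^{\pi}(s,a) - V^{\pi}(s)\bigr)$, then uses the separate identity $V^{\pi'}(s_0) = \sum_{s,a} q^{\pi'}(s,a)\,\mathring\ell(s,a)$, and subtracts the two to isolate $-V^{\pi}(s_0)$. You instead observe that the $\mathring\ell(s,a)$ in $g$ cancels against the immediate-loss term inside $Q^{\pi}(s,a)$, so that $g(s,a) = \E_{s'\sim P(\cdot\mid s,a)}[V^{\pi}(s')] - V^{\pi}(s)$, and then prove the identity directly from the Bellman flow constraints on occupancy measures via a telescoping sum across layers. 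The two arguments are organized around the same underlying structure, but yours is more self-contained: it does not need the performance difference lemma as an external citation, and it makes explicit which property of $\Omega(P)$ (the flow constraints) actually drives the invariance — which is arguably the more illuminating framing given that the lemma is about invariance over $q \in \Omega(P)$. The paper's version is shorter and leans on a standard reference; yours trades a citation for one extra telescoping computation. Both are valid.
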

\begin{proof} 
For notational convenience, we drop the superscripts for fixed transition $P$ and loss function $\mathring{\ell}$. 
By the standard performance difference lemma~\citep[Theorem~5.2.1]{kakade2003sample}, it holds for any policy $\pi'$ that 
\begin{equation}
V^{\pi' }(s_0) - V^{\pi }(s_0)  = \sum_{s\neq s_L} \sum_{a\in A}  q^{\pi'}(s,a) \rbr{ Q^{\pi}(s,a) - V^{\pi}(s)   }. \label{eq:loss_shfifting_eq1} 
\end{equation}

On the other hand, it also holds that 
\begin{equation}
V^{\pi'}(s_0)  = \sum_{s\neq s_L} \sum_{a \in A}  q^{\pi'}(s,a) \mathring{\ell}(s,a). \label{eq:loss_shfifting_eq2} 
\end{equation}

Therefore, subtracting $V^{\pi' }(s_0)$ from \pref{eq:loss_shfifting_eq1} yields that 
\[
- V^{\pi}(s_0) = \sum_{s \neq s_L}\sum_{a \in A}  q^{\pi'}(s,a) \rbr{ Q^{\pi}(s,a) - V^{\pi}(s) - \mathring{\ell}(s,a)  }
\]
which completes the proof after putting back the superscripts for $P$ and $\mathring{\ell}$. 
\end{proof}

As discussed in \pref{sec:loss_shifting}, the invariant function $g^{P, \pi,\mathring{\ell} }$ defined in \pref{eq:invariant_func} allows us to treat \ftrl as dealing with a hypothesized loss sequence, as restated below.

\begin{corollary}
	\label{col:invariant_with_ftrl}
	Consider the selected occupancy measure $\whatq_t$ via \ftrl with respect to a regularizer $\phi_t(\cdot)$ and loss sequence  $\{\hatl_\tau\}_{\tau < t}$ (on the decision set $\Omega(\bar{P})$), then it holds that 
	\begin{align*}
	\whatq_t = \argmin_{q \in \Omega(\bar{P})} \inner{q, \sum_{\tau < t} \hatl_\tau} + \phi_t(q) = \argmin_{q \in \Omega(\bar{P})} \inner{q, \sum_{\tau < t} (\hatl_\tau + g_\tau)} + \phi_t(q).
	\end{align*}
	for any invariant function sequence $\{g_\tau\}_{\tau < t}$ which are constructed with hypothesized losses $\{\mathring{\ell}_\tau\}_{\tau < t}$ and policies $\{\pi'_\tau\}_{\tau < t}$. 
\end{corollary}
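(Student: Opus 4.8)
The statement to prove is Corollary~\ref{col:invariant_with_ftrl}, which claims that adding the invariant function sequence $\{g_\tau\}_{\tau<t}$ to the losses does not change the \ftrl iterate.

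\textbf{Approach.} The plan is to reduce the corollary directly to \pref{lem:loss_shifting_invariant}. The key observation is that the \ftrl objective only depends on the loss sequence through the linear term $\inner{q, \sum_{\tau<t}\hatl_\tau}$, so adding any perturbation that is \emph{constant over the feasible set} $\Omega(\bar P)$ changes the objective only by an additive constant (independent of $q$), hence does not change the argmin.

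\textbf{Key steps.} First I would fix an arbitrary $q \in \Omega(\bar P)$, so $q = q^{\bar P, \pi'}$ for some stochastic policy $\pi'$. Second, for each $\tau < t$, apply \pref{lem:loss_shifting_invariant} with transition $\bar P$, policy $\pi'_\tau$ (the policy used to build $g_\tau$), hypothesized loss $\mathring\ell_\tau$, and the ``test'' policy $\pi'$: this gives $\inner{q^{\bar P,\pi'}, g_\tau} = -V^{\bar P, \pi'_\tau, \mathring\ell_\tau}(s_0)$, a scalar that does not depend on $\pi'$ (equivalently, not on $q$). Third, sum over $\tau < t$ to obtain $\inner{q, \sum_{\tau<t} g_\tau} = -\sum_{\tau<t} V^{\bar P, \pi'_\tau, \mathring\ell_\tau}(s_0) =: c$, a constant independent of $q \in \Omega(\bar P)$. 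Fourth, conclude that for every $q \in \Omega(\bar P)$,
\[
\inner{q, \textstyle\sum_{\tau<t}(\hatl_\tau + g_\tau)} + \phi_t(q) = \inner{q, \textstyle\sum_{\tau<t}\hatl_\tau} + \phi_t(q) + c,
\]
so the two minimization problems have identical objectives up to the additive constant $c$, and therefore the same minimizer $\whatq_t$. This is exactly \pref{eq:loss_shifting} in the main text, now justified rigorously.

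\textbf{Main obstacle.} There is essentially no hard step here: the entire content is packaged in \pref{lem:loss_shifting_invariant}, whose proof (already given via the performance difference lemma) is the real work. The only point requiring a little care is making sure the invariance genuinely holds for \emph{every} $q$ in the decision set $\Omega(\bar P)$ and not merely for the iterates $\whatq_\tau$ — this is why the lemma is stated for an arbitrary policy $\pi'$, and why $\Omega(\bar P) = \{q^{\bar P,\pi} : \pi \text{ stochastic}\}$ being exactly the set of occupancy measures of the \emph{same} fixed transition $\bar P$ is essential (the argument would break if the feasible set were $\Omega(\calP_i)$ instead, as noted in \pref{sec:algorithms}). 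I would also remark that the corollary does not require $g_\tau$ to be known to the learner, since it only asserts an identity between two optimization problems.
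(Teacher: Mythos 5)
Your proof is correct and takes essentially the same approach as the paper: apply \pref{lem:loss_shifting_invariant} to each $g_\tau$ to show $\inner{q, \sum_{\tau<t} g_\tau} = -\sum_{\tau<t} V^{\bar P, \pi'_\tau, \mathring\ell_\tau}(s_0)$ is a constant over $\Omega(\bar P)$, then conclude the two objectives differ by an additive constant. The paper's own two-line proof is exactly this argument, stated more tersely.
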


\begin{proof} By \pref{lem:loss_shifting_invariant}, one can verify that
\begin{align*}
\inner{q, \sum_{\tau < t} g_\tau} = - \sum_{\tau < t} V^{\bar{P},\pi'_\tau,\mathring{\ell}_\tau}(s_0)
\end{align*}
for any occupancy measure $q \in \Omega(\bar{P})$. 
Therefore, this term does not affect the optimization.
\end{proof}

Then, we consider the ``loss-shifting function'' defined in \pref{eq:loss_shift_function}, that is, constructing $g_t$ via the  loss estimator $\hatl_t$ and the policy $\pi_t$ selected at episode $t$. Importantly, in the known transition setting where $\whatq_t = q_t$, $\hatl_t$ is inverse propensity weighted estimator, in other words, $\hatl_t(s,a) = \nicefrac{\Indt{s,a} \ell_t(s,a)}{q_t(s,a)}$. More specifically, we have 
\begin{equation*}
	g_t(s,a) = \whatQ_t(s,a) -  \whatV_t(s) - \hatl_t(s,a),
\end{equation*}
where 
\[
\whatQ_t(s,a) = \hatl_t(s,a) + \sum_{s' \in S_{k(s)+1}} \bar{P}(s'|s,a)  \whatV_t(s), \quad
\whatV_t(s) = \sum_{a\in A} \pi_t(a|s)  \whatQ_t(s,a)
\]
(with $\whatV_t(s_L) = 0$).
Below we show several useful properties, which are key to achieve the best-of-both-worlds guarantee in the known transition setting. 
\begin{lemma}
	\label{lem:loss_shifting_loss_estimator} With $\bar{P} = P$ being the true transition function (therefore, $\whatq_t = q_t$),  we have
	\begin{itemize}
		\item $q_t(s,a) \whatQ_t(s,a) \leq L$,
		\item $q_t(s) \whatV_t(s) \leq L$, 
		\item $\E_t\sbr{ \rbr{ \whatQ_t(s,a) - \whatV_t(s) }^2 } \leq \frac{2L^2\rbr{ 1 - \pi_t(a|s)}}{q_t(s,a)},$
	\end{itemize} 
	for all state-action pairs $(s,a)$ (where $\E_t$ denotes the conditional expectation given everything before episode $t$).
\end{lemma}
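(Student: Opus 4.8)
The first two inequalities are pathwise (they hold for every realization of episode $t$), whereas the third is obtained after taking the conditional expectation $\E_t$.

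For the first two, recall $\bar P=P$, $\whatq_t=q_t$, and $\hatl_t$ is the importance-weighted estimator, so $\hatl_t(s',a')=\ell_t(s',a')/q_t(s',a')$ when $(s',a')$ lies on the realized trajectory $(s_{k'}^t,a_{k'}^t)_{k'<L}$ of episode $t$ and $\hatl_t(s',a')=0$ otherwise. Unrolling the Bellman recursion for $\whatQ_t$ and using that exactly one state-action pair is visited per layer,
\[
\whatQ_t(s,a)=\sum_{k'=k(s)}^{L-1}\rho^{s,a}(s_{k'}^t,a_{k'}^t)\,\frac{\ell_t(s_{k'}^t,a_{k'}^t)}{q_t(s_{k'}^t,a_{k'}^t)},
\]
where $\rho^{s,a}(s',a')\defeq\Pr_{\pi_t,P}\sbr{\text{the path that starts by taking action $a$ at state $s$ visits $(s',a')$}}$, a point mass at $(s,a)$ on layer $k(s)$. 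Because ``visit $(s,a)$ and later $(s',a')$'' has probability $q_t(s,a)\,\rho^{s,a}(s',a')$ and is contained in ``visit $(s',a')$'', we have $q_t(s',a')\ge q_t(s,a)\,\rho^{s,a}(s',a')$; with $\ell_t\le1$ this bounds each summand by $1/q_t(s,a)$, so $\whatQ_t(s,a)\le(L-k(s))/q_t(s,a)\le L/q_t(s,a)$. The bound on $\whatV_t$ is identical, replacing $\rho^{s,a}$ by $\bar\rho^{s}\defeq\sum_b\pi_t(b|s)\rho^{s,b}$ and $q_t(s,a)$ by $q_t(s)=\sum_a q_t(s,a)$ (using $q_t(s',a')\ge q_t(s)\,\bar\rho^{s}(s',a')$).

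For the third inequality, applying the same unrolling to $\whatQ_t(s,a)$ and to $\whatV_t(s)=\sum_b\pi_t(b|s)\whatQ_t(s,b)$ gives the clean identity
\[
\whatQ_t(s,a)-\whatV_t(s)=\sum_{k'=k(s)}^{L-1}\bigl(\rho^{s,a}-\bar\rho^{s}\bigr)(s_{k'}^t,a_{k'}^t)\,\frac{\ell_t(s_{k'}^t,a_{k'}^t)}{q_t(s_{k'}^t,a_{k'}^t)}.
\]
Cauchy--Schwarz over the $\le L-k(s)$ layers together with $\ell_t\le1$ bounds the square of the right-hand side by $(L-k(s))\sum_{k'}(\rho^{s,a}-\bar\rho^{s})(s_{k'}^t,a_{k'}^t)^2/q_t(s_{k'}^t,a_{k'}^t)^2$; taking $\E_t$ and using that the layer-$k'$ pair of the trajectory has marginal law $q_t(\cdot,\cdot)$, each term becomes $\sum_{(s',a'):k(s')=k'}(\rho^{s,a}-\bar\rho^{s})(s',a')^2/q_t(s',a')$. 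The crux is to show this is at most $2(1-\pi_t(a|s))/q_t(s,a)$: write $\rho^{s,a}-\bar\rho^{s}=(1-\pi_t(a|s))\rho^{s,a}-\sum_{b\ne a}\pi_t(b|s)\rho^{s,b}$, bound $(\rho^{s,a}-\bar\rho^{s})^2\le|\rho^{s,a}-\bar\rho^{s}|\cdot\bigl((1-\pi_t(a|s))\rho^{s,a}+\sum_{b\ne a}\pi_t(b|s)\rho^{s,b}\bigr)$, divide each of the two nonnegative parts by $q_t(s',a')$ using $q_t(s',a')\ge q_t(s,a)\rho^{s,a}(s',a')$ and $q_t(s',a')\ge q_t(s)\sum_{b\ne a}\pi_t(b|s)\rho^{s,b}(s',a')$ (which combine to $1/q_t(s,a)$ since $q_t(s,a)=q_t(s)\pi_t(a|s)$), and sum over $(s',a')$ with $\nbr{\rho^{s,a}-\bar\rho^{s}}_1\le(1-\pi_t(a|s))\nbr{\rho^{s,a}}_1+\nbr{\textstyle\sum_{b\ne a}\pi_t(b|s)\rho^{s,b}}_1=2(1-\pi_t(a|s))$. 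Summing the $\le L-k(s)$ layers then yields $\E_t[(\whatQ_t(s,a)-\whatV_t(s))^2]\le(L-k(s))^2\cdot\frac{2(1-\pi_t(a|s))}{q_t(s,a)}\le\frac{2L^2(1-\pi_t(a|s))}{q_t(s,a)}$.

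The main obstacle is the third bound, and within it, resisting the natural but lossy move of peeling $\whatV_t(s')=\sum_{a'}\pi_t(a'|s')\whatQ_t(s',a')$ off one layer at a time and controlling each $\whatQ_t(s',a')$ in isolation: that replaces the denominator $q_t(s,a)$ by $q_t(s')$ or $q_t(s',a')$ and loses a spurious $|A|$ (or $|S|$) factor. The trajectory-unrolling identity keeps everything anchored at $q_t(s,a)$, and the $1-\pi_t(a|s)$ factor comes out of the $\ell_1$-contraction of the reach-from-$(s,a)$ distribution against the policy-averaged reach-from-$s$ distribution $\bar\rho^{s}$ (which carries a $\pi_t(a|s)$-fraction of $\rho^{s,a}$). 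The rest --- the exact constants and the degenerate layer $k'=k(s)$ where $\rho^{s,a}$ is a point mass --- is routine bookkeeping.
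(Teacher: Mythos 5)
Your proof is correct and yields the same bound $2L^2(1-\pi_t(a|s))/q_t(s,a)$. It shares the paper's essential scaffolding: unroll $\whatQ_t(s,a)$ along the realized trajectory, exploit that exactly one pair is visited per layer, apply Cauchy--Schwarz over the $\le L$ layers, and bound the ratio $q_t(s',a'|s,a)/q_t(s',a')\le 1/q_t(s,a)$ (ditto for the policy-averaged reach distribution against $1/q_t(s)$). Where you diverge is in how the third inequality is packaged. The paper first applies $(x-y)^2\le 2x^2+2y^2$ with $x=(1-\pi_t(a|s))\whatQ_t(s,a)$ and $y=\sum_{b\ne a}\pi_t(b|s)\whatQ_t(s,b)$, then bounds $\E_t[\whatQ_t(s,a)^2]\le L^2/q_t(s,a)$ and $\E_t[y^2]\le L^2(1-\pi_t(a|s))/q_t(s)$ separately, recombining via $(1-\pi)^2/q_t(s,a)+(1-\pi)/q_t(s)=(1-\pi)/q_t(s,a)$. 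You instead keep the signed difference $\rho^{s,a}-\bar\rho^s$ intact throughout, apply Cauchy--Schwarz once, and then control each layer's contribution with $x^2\le|x|(u+v)$ and an $\ell_1$-mass accounting $\nbr{\rho^{s,a}-\bar\rho^s}_1\le 2(1-\pi_t(a|s))$. Both routes are valid; yours is marginally more compact in that it avoids two parallel second-moment computations, while the paper's version is perhaps easier to verify step-by-step since each half is a plain variance bound. One small clarification: your remark about the ``natural but lossy move'' of peeling one layer of $\whatV_t$ at a time is a fair warning against a naive recursion, but it is not the path the paper takes either --- they also go directly to the full trajectory unrolling, so no $|A|$ or $|S|$ factor is ever at risk in the original proof.
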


\begin{proof}
Denote by $q_t(s',a'|s,a)$ the probability of visiting $(s',a')$ after taking action $a$ at state $s$ and following $\pi_t$ afterwards.
Then we have $\whatQ_t(s,a) = \sum_{k=k(s)}^{L-1} \sum_{s'\in S_k} \sum_{a'\in A} q_t(s',a'|s,a)\hatl_t(s',a')$.
Therefore, plugging in the definition of $\hatl_t(s,a)$, we verify the following:
\begin{align*}
q_t(s,a) \whatQ_t(s,a) &= \sum_{k=k(s)}^{L-1} \sum_{s'\in S_k} \sum_{a'\in A}\frac{q_t(s,a) q_t(s',a'|s,a)} {q_t(s',a')} \Indt{s',a'} \ell_t(s',a') \\
&\leq \sum_{k=k(s)}^{L-1} \sum_{s'\in S_k} \sum_{a'\in A}\Indt{s',a'} \leq L,
\end{align*}
where the inequality is by $q_t(s,a) q_t(s',a'|s,a) \leq q_t(s',a')$ and $\ell_t(s',a') \in [0,1]$.
This also proves $q_t(s) \whatV_t(s) \leq L$ using the definition of $\whatV_t(s)$.

To prove the last statement, we first note that 
\begin{equation}
\E_t\sbr{ \rbr{ \whatQ_t(s,a) - \whatV_t(s) }^2 } \leq 2 \E_t\sbr{ \rbr{ 1 - \pi_t(a|s)}^2 \whatQ_t(s,a)^2 + \rbr{\sum_{b\neq a} \pi_t(b|s) \whatQ_t(s,b)  }^2 } \label{eq:loss_shifting_loss_estimator_decomp}
\end{equation}
by the fact $\rbr{x-y}^2 \leq 2x^2 + 2y^2$ for all $x,y \in \fR$.  

For the first term in \pref{eq:loss_shifting_loss_estimator_decomp}, we have:
\begin{align*}
\E_t\sbr{ \whatQ_t(s,a)^2 } & = \E_t\sbr{ \rbr{\sum_{k=k(s)}^{L-1} \sum_{s'\in S_k} \sum_{a'\in A}\frac{q_t(s',a'|s,a)} {q_t(s',a')} \Indt{s',a'} \ell_t(s',a') }^2 } \\
& \leq L \cdot \E_t\sbr{ \sum_{k=k(s)}^{L-1} \rbr{ \sum_{s'\in S_k} \sum_{a'\in A}\frac{q_t(s',a'|s,a)} {q_t(s',a')} \Indt{s',a'} \ell_t(s',a') }^2 } \\
& \leq L \cdot \E_t\sbr{ \sum_{k=k(s)}^{L-1}  \sum_{s'\in S_k} \sum_{a'\in A} \frac{q_t(s',a'|s,a)^2}{q_t(s',a')^2} \Indt{s',a'}  } \\ 
& = L \cdot  \sum_{k=k(s)}^{L-1}  \sum_{s'\in S_k} \sum_{a'\in A} \frac{q_t(s',a'|s,a)^2 }{q_t(s',a')}    \\ 
& = \frac{L}{q_t(s,a)} \cdot  \sum_{k=k(s)}^{L-1}  \sum_{s'\in S_k} \sum_{a'\in A} \frac{q_t(s,a) q_t(s',a'|s,a)}{q_t(s',a')} \cdot q_t(s',a'|s,a)  \\
& \leq \frac{L}{q_t(s,a)} \cdot  \sum_{k=k(s)}^{L-1}  \sum_{s'\in S_k} \sum_{a'\in A}  q_t(s',a'|s,a)  \leq \frac{L^2}{q_t(s,a)},
\end{align*}
where the second line uses the Cauchy-Schwartz inequality; the third line follows from the fact $\Indt{s,a} \Indt{s',a'} = 0$ for all $(s,a),(s',a') \in S_k \times A$ such that $(s,a) \neq (s',a')$;
the fourth line uses $\E_t[\Indt{s',a'}] = q_t(s',a')$;
and the last line follows from the fact $q_t(s,a) q_t(s',a'|s,a) \leq q_t(s',a')$. 

Repeating the similar arguments, we bound the second term as 
\begin{align*}
& \E_t\sbr{ \rbr{ \sum_{b\neq a} \pi_t(b|s) \whatQ_t(s,b)}^2 } 
 = \E_t\sbr{ \rbr{\sum_{k=k(s)}^{L-1} \sum_{s'\in S_k} \sum_{a'\in A}  \rbr{ \sum_{b\neq a} \pi_t(b|s)  q_t(s',a'|s,b) }  \hatl_t(s',a') }^2 } \\
&\leq L \cdot \E_t\sbr{ \sum_{k=k(s)}^{L-1} \rbr{\sum_{s'\in S_k} \sum_{a'\in A}  \rbr{ \sum_{b\neq a} \pi_t(b|s)  q_t(s',a'|s,b) }  \hatl_t(s',a') }^2 } \tag{Cauchy-Schwarz inequality} \\
&\leq L \cdot \E_t\sbr{ \sum_{k=k(s)}^{L-1} \sum_{s'\in S_k} \sum_{a'\in A}  \rbr{ \sum_{b\neq a} \pi_t(b|s)  q_t(s',a'|s,b) }^2 \frac{\Indt{s',a'}}{q_t(s',a')^2}   } \tag{$\Indt{s,a} \Indt{s',a'} = 0$ for $(s,a) \neq (s',a')$}\\
& = L \cdot \sum_{k=k(s)}^{L-1} \sum_{s'\in S_k} \sum_{a'\in A}  \rbr{ \frac{\sum_{b\neq a} \pi_t(b|s) q_t(s',a'|s,b) }{q_t(s',a')}} \cdot \rbr{ \sum_{b\neq a} \pi_t(b|s) \cdot q_t(s',a'|s,b) } \\
& = \frac{L}{q_t(s)} \cdot \sum_{k=k(s)}^{L-1} \sum_{s'\in S_k} \sum_{a'\in A}  \rbr{ \frac{\sum_{b\neq a} q_t(s,b) q_t(s',a'|s,b) }{q_t(s',a')}} \cdot \rbr{ \sum_{b\neq a} \pi_t(b|s) \cdot q_t(s',a'|s,b) } \\
&\leq \frac{L}{q_t(s)} \cdot \sum_{k=k(s)}^{L-1} \sum_{s'\in S_k} \sum_{a'\in A}  \rbr{ \sum_{b\neq a} \pi_t(b|s) \cdot q_t(s',a'|s,b) } \\
& = \frac{L}{q_t(s)} \cdot  \sum_{b\neq a} \pi_t(b|s) \cdot \rbr{ \sum_{k=k(s)}^{L-1}\rbr{ \sum_{s'\in S_k} \sum_{a'\in A}   q_t(s',a'|s,b) } }\\
& \leq \frac{L^2}{q_t(s)} \cdot  \sum_{b\neq a} \pi_t(b|s) = \frac{L^2\rbr{ 1 - \pi_t(a|s) } }{q_t(s)}.
\end{align*}


Plugging these bounds into \pref{eq:loss_shifting_loss_estimator_decomp} concludes the proof:
\begin{align*}
\E_t\sbr{ \rbr{ \whatQ_t(s,a) - \whatV_t(s) }^2 } & \leq 2L^2\rbr{\frac{\rbr{ 1 - \pi_t(a|s)}^2}{q_t(s,a)} + \frac{1-\pi_t(a|s)}{q_t(s) } } \\
& = 2L^2\rbr{ 1 - \pi_t(a|s)}\rbr{\frac{1 - \pi_t(a|s)}{q_t(s,a)} + \frac{1}{q_t(s) } } = \frac{2L^2\rbr{ 1 - \pi_t(a|s)}}{q_t(s,a)}.
\end{align*}
\end{proof}

\subsection{Known Transition and Full-information Feedback: $\ftrl$ with Shannon Entropy}
Although not mentioned in the main text, in this section, we discuss a simple application of the loss-shifting technique: achieving the best-of-both-worlds in the full-information feedback setting with known transition via the $\ftrl$ framework with the Shannon entropy regularizer.
Some of the lemmas in this section are useful for proving similar results for the unknown transition case in \pref{app:bobw_unknown_transition_fullinfo}.

Therefore, the specific state-action and state value functions defined in \pref{lem:loss_shifting_loss_estimator} are now constructed based on the received loss vector $\ell_t$, instead of the loss estimator $\hatl_t$. In other words, the loss-shifting function $g_t$ is defined as $g_t(s,a) = \whatQ(s,a) - \whatV(s) - \ell_t(s,a)$ where 
\begin{equation}
\whatQ_t(s,a) = \ell_t(s,a) + \sum_{s' \in S_{k(s)+1}} P(s'|s,a)  \whatV_t(s), \quad \whatV_t(s) = \sum_{a\in A} \pi_t(a|s)  \whatQ_t(s,a). \label{eq:loss_shifting_function_fullinfo_known}
\end{equation}

Our goal is to show that, using an adaptive time-varying learning rate schedule, $\ftrl$ with Shannon entropy is able to attain a self-bounding regret guarantee with full-information feedback. This idea will be further discussed in \pref{app:bobw_unknown_transition_fullinfo}  to address the unknown transition setting.

In particular, the algorithm uses following regularizer for episode $t$:
\begin{equation}
\phi_t(q) = \frac{1}{\eta_t} \sum_{s \neq s_L}\sum_{a\in A} q(s,a) \ln q(s,a) = \frac{1}{\eta_t} \phi(q), \label{eq:shannon_reg_def}
\end{equation}
where the adaptive learning rate $\eta_t$ is defined as $\eta_t= \sqrt{ \frac{ L \ln (|S||A|) }{ M_{t-1} + 64L^3\ln(|S||A|)  } }$ with
\begin{align*}
M_t = \sum_{\tau =1}^{t} \min\cbr{  \sum_{s \neq s_L} \sum_{a \in A} q_\tau(s,a) \rbr{ \whatQ_\tau(s,a) - \whatV_\tau(s)  }^2, \sum_{s \neq s_L} \sum_{a \in A} q_\tau(s,a) \ell_\tau(s,a)^2 }.
\end{align*}
The pseudocode of our algorithm is presented in \pref{alg:known_transition_shannon_entropy_algoritm}. 

\begin{algorithm}[t]
	\caption{Best-of-both-worlds for MDPs with Known Transition and Full-information Feedback}
	\label{alg:known_transition_shannon_entropy_algoritm}
	\begin{algorithmic}
		\FOR{$t=1$ {\bfseries to} $T$}
		\STATE Compute $q_t = \argmin_{q\in\Omega(P)} \big\langle q, \sum_{\tau < t} \ell_\tau \big\rangle + \phi_t(q)$ where $\phi_t(q)$ is defined in \pref{eq:shannon_reg_def}.
		\STATE Execute policy $\pi_t$ where $\pi_t(a | s) = q_t(s,a)/q_t(s)$.
		\STATE Observe the entire loss function $\ell_t$. 
		\ENDFOR
	\end{algorithmic}
\end{algorithm}

In the known transition setting, we assume the loss functions satisfy a more general condition compared to Condition~\eqref{eq:loss_condition}:  there exists a deterministic policy $\pi^\star: S \rightarrow A$, a gap function $\gap: S \times A \rightarrow \fR_{+}$ and a constant $C>0$ such that 
\begin{equation}
\Reg_T(\optpi) \geq \E\sbr{ \sum_{t=1}^{T} \sum_{s \neq s_L}\sum_{a \neq \pi^\star(s)} q_t(s,a) \gap(s,a) - C}. \label{eq:loss_condition_known_transition}
\end{equation}
Note that this is only weaker than Condition~\eqref{eq:loss_condition} since $\Reg_T(\optpi) \geq \Reg_T(\pi^\star)$.

Then, we show that \pref{alg:known_transition_shannon_entropy_algoritm} ensures a worst-case guarantee $\Reg_T(\optpi) = \otil(L\sqrt{T})$, and simultaneously an adaptive regret bound which further leads to logarithmic regret under Condition~\eqref{eq:loss_condition_known_transition} (\pref{col:known_transition_shannon_entropy_main}). Importantly, the worst-case regret bound matches the lower bound of learning MDPs with known transition and full-information feedback~\citep{zimin2013}. 
\begin{theorem} 
	\label{thm:known_transition_shannon_entropy_main}
	\pref{alg:known_transition_shannon_entropy_algoritm} ensures
	that $\Reg_T(\optpi)$ is bounded by
	\begin{equation}\label{eq:known_shannon_main}
	\order\rbr{  \sqrt{ \min\cbr{L^2T, L^3\E\sbr{\sum_{t=1}^{T} \sum_{s \neq s_L} \sum_{a\neq \pi(s)} q_t(s,a)}} \ln(|S||A|) } + L^2\ln(|S||A|)  } 
	\end{equation}	
	for any mapping $\pi: S \rightarrow A$. 
\end{theorem}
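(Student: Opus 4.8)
The plan is to run the standard adaptive \ftrl potential argument on top of the loss-shifting trick of \pref{col:invariant_with_ftrl}. Here the invariant function of \pref{lem:loss_shifting_invariant} is $g_\tau(s,a)=\whatQ_\tau(s,a)-\whatV_\tau(s)-\ell_\tau(s,a)$ with $\whatQ_\tau,\whatV_\tau$ as in \pref{eq:loss_shifting_function_fullinfo_known}, and it satisfies $\inner{q,g_\tau}=-\whatV_\tau(s_0)$ for every $q\in\Omega(P)$. Hence \ftrl run on $\{\ell_\tau\}$ produces the same iterates as \ftrl run on the ``advantage'' sequence $\{\whatQ_\tau-\whatV_\tau\}$, and, crucially, for any $q_t,q_{t+1}\in\Omega(P)$ the per-round stability quantity satisfies $\inner{q_t-q_{t+1},\ell_t}=\inner{q_t-q_{t+1},\whatQ_t-\whatV_t}$. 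This single identity is what lets the stability term be bounded by $\eta_t$ times the \emph{smaller} of $\sum_{s,a}q_t(s,a)\ell_t(s,a)^2$ and $\sum_{s,a}q_t(s,a)(\whatQ_t(s,a)-\whatV_t(s))^2$, i.e.\ exactly by $\eta_t(M_t-M_{t-1})$.

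First I would establish, by the standard \ftrl analysis for $\phi_t=\tfrac1{\eta_t}\phi$ with $\phi(q)=\sum_{s\neq s_L}\sum_a q(s,a)\ln q(s,a)$, that for any policy $\pi$,
\[
\sum_{t=1}^{T}\inner{q_t-q^{P,\pi},\ell_t}\;\le\;\order\!\rbr{\frac{L\ln(|S||A|)}{\eta_{T+1}}+\sum_{t=1}^{T}\eta_t\,(M_t-M_{t-1})}.
\]
Two points make this work. First, the layered structure forces $\sum_{s\neq s_L}\sum_a q(s,a)=L$ on $\Omega(P)$, so (after the harmless shift making $\phi$ nonnegative on $\Omega(P)$) the range of $\phi$ is $\order(L\ln(|S||A|))$, which, together with the monotone learning-rate schedule, gives the penalty $\order(L\ln(|S||A|)/\eta_{T+1})$. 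Second, the constant $64L^3\ln(|S||A|)$ inside $\eta_t$ is chosen precisely so that $\eta_t\le\tfrac1{8L}$ for all $t$; since $\abr{\whatQ_t(s,a)-\whatV_t(s)}\le L$ and $\ell_t\in[0,1]$, this gives $\eta_t\nbr{\whatQ_t-\whatV_t}_\infty\le\tfrac18$ and $\eta_t\nbr{\ell_t}_\infty\le\tfrac18$, so the local-norm stability bound holds for \emph{both} views of the loss, and via the identity above the per-round stability is $\order(\eta_t)$ times the $\min$, i.e.\ $\order(\eta_t(M_t-M_{t-1}))$.

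Plugging in $\eta_t=\sqrt{L\ln(|S||A|)/(M_{t-1}+64L^3\ln(|S||A|))}$, using the standard self-confident-tuning inequality $\sum_t (M_t-M_{t-1})/\sqrt{M_{t-1}+a}=\order(\sqrt{M_T+a})$ (its hypothesis holds because the increments $M_t-M_{t-1}\le L$), together with $\sqrt{L\ln(|S||A|)\cdot 64L^3\ln(|S||A|)}=8L^2\ln(|S||A|)$ and Jensen's inequality, yields $\Reg_T(\optpi)=\order(\sqrt{L\ln(|S||A|)\,\E[M_T]}+L^2\ln(|S||A|))$. It then remains to bound $\E[M_T]$, and since $M_T=\sum_t\min\{A_t,B_t\}\le\min\{\sum_tA_t,\sum_tB_t\}$ with $A_t=\sum_{s,a}q_t(s,a)\ell_t(s,a)^2$ and $B_t=\sum_{s,a}q_t(s,a)(\whatQ_t(s,a)-\whatV_t(s))^2$, two bounds suffice. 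The first is trivial: $A_t\le\sum_{s,a}q_t(s,a)=L$, so $M_T\le LT$. For the second, fix any mapping $\pi:S\to A$ and split each summand of $B_t$ on whether $a=\pi(s)$. For $a\neq\pi(s)$, $\abr{\whatQ_t-\whatV_t}\le L$ gives a contribution at most $L^2\sum_{s\neq s_L}\sum_{a\neq\pi(s)}q_t(s,a)$. For $a=\pi(s)$, write $\whatV_t(s)=\sum_b\pi_t(b|s)\whatQ_t(s,b)$ so that $\whatQ_t(s,\pi(s))-\whatV_t(s)=\sum_{b\neq\pi(s)}\pi_t(b|s)(\whatQ_t(s,\pi(s))-\whatQ_t(s,b))$, whence $\abr{\whatQ_t(s,\pi(s))-\whatV_t(s)}\le L(1-\pi_t(\pi(s)|s))$ and therefore $q_t(s,\pi(s))(\whatQ_t(s,\pi(s))-\whatV_t(s))^2\le L^2 q_t(s)(1-\pi_t(\pi(s)|s))=L^2\sum_{b\neq\pi(s)}q_t(s,b)$. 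Summing over $s$ and adding the two parts, $B_t\le 2L^2\sum_{s\neq s_L}\sum_{a\neq\pi(s)}q_t(s,a)$, hence $\E[M_T]\le\min\{LT,\ 2L^2\,\E[\sum_{t}\sum_{s\neq s_L}\sum_{a\neq\pi(s)}q_t(s,a)]\}$; substituting this and noting that the left-hand side $\Reg_T(\optpi)$ does not depend on $\pi$ gives exactly \pref{eq:known_shannon_main}.

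The main obstacle is showing that the stability term is governed by the self-bounding quantity $\sum_{s\neq s_L}\sum_{a\neq\pi(s)}q_t(s,a)$, which is what produces the second (data-dependent) branch of the $\min$ in \pref{eq:known_shannon_main} — the raw loss vector only delivers the $\sqrt{L^2T}$-type branch. It is the loss-shifting trick, feeding \ftrl the advantage function $\whatQ_t-\whatV_t$ whose value on the ``optimal'' action $\pi(s)$ shrinks like $1-\pi_t(\pi(s)|s)$ as $\pi_t$ concentrates on $\pi$, that converts the stability contribution into that quantity; this is the content of the $a=\pi(s)$ case above. A secondary delicate point, and the reason the log-barrier-free Shannon regularizer already suffices here, is that the data-dependent schedule automatically keeps $\eta_t\le 1/(8L)$, so the local-norm stability analysis is valid simultaneously for the advantage losses and the raw losses, justifying the $\min$ in the definition of $M_t$.
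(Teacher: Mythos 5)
Your argument matches the paper's proof essentially step for step: it invokes the loss-shifting invariance (\pref{lem:loss_shifting_invariant}, \pref{col:invariant_with_ftrl}) to justify taking the minimum of the two per-round stability bounds, applies the Shannon-entropy \ftrl regret bound (\pref{lem:shannon_entropy_reg_bound}) to both views of the loss, and closes with the same self-confident tuning of $\eta_t$ followed by the bound $\sum_{s,a}q_t(s,a)\rbr{\whatQ_t(s,a)-\whatV_t(s)}^2=\order\rbr{L^2\sum_{s}\sum_{a\neq\pi(s)}q_t(s,a)}$. The one cosmetic difference is that you bound this quantity by $2L^2\sum_{s}\sum_{a\neq\pi(s)}q_t(s,a)$ via a direct split on $a=\pi(s)$ versus $a\neq\pi(s)$, while the paper first applies $(x-y)^2\le 2(x^2+y^2)$ to get a uniform $4L^2(1-\pi_t(a|s))$ bound and then sums, arriving at $8L^2\sum_{s}\sum_{a\neq\pi(s)}q_t(s,a)$ — a slightly looser constant but the same $\order(\cdot)$ conclusion.
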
 
\begin{proof} 
Due to the invariant property (that $\inner{q, g_t}$ is independent of $q \in \Omega(P)$),
we can apply \pref{lem:shannon_entropy_reg_bound} with $\hatl_t$ being either $\ell_t$ or $\ell_t + g_t$ for any $t$ --- note that the condition $\eta_t\hatl_t(s,a) \geq -1$ is always satisfied since $\ell_t(s,a) \in [0,1]$ and $\whatQ_t(s,a) - \whatV_t(s) \in [-L,L]$.
 Therefore, we have for any $u\in \Omega(P)$,
\begin{align*}
\sum_{t=1}^T\inner{q_t - u, \ell_t} & \leq 
 \frac{ L\ln(|S||A|)}{\eta_{T+1}} \\
 +  & \sum_{t=1}^{T} \eta_t \min\cbr{ \sum_{s \neq s_L} \sum_{a\in A} q_t(s,a) \rbr{ \whatQ_t(s,a) - \whatV_t(s)  }^2,  \sum_{s \neq s_L} \sum_{a \in A} q_t(s,a) \ell_t(s,a)^2}  \\
& =  \frac{ L\ln(|S||A|)}{\eta_{T+1}}  + \sum_{t=1}^{T} \eta_t \rbr{ M_t - M_{t-1}}, \tag{definition of $M_t$} \\
& = \frac{ L\ln(|S||A|)}{\eta_{T+1}}  + \sum_{t=1}^{T} \eta_t \rbr{ \sqrt{M_t} + \sqrt{M_{t-1}}}\rbr{\sqrt{M_t} - \sqrt{M_{t-1}}}, \\
& \leq \frac{ L\ln(|S||A|)}{\eta_{T+1}}  + 2\sum_{t=1}^{T} \eta_t \sqrt{M_{t-1} + L}\rbr{\sqrt{M_t} - \sqrt{M_{t-1}}}. \tag{$M_t \leq M_{t-1} + L$}
\end{align*}
Further plugging in the definition of $\eta_t$ and taking expectation, we arrive at
\begin{align*} 
\Reg_T(\optpi)
&\leq \E\sbr{\frac{ L\ln(|S||A|)}{\eta_{T+1}}  + 2\sqrt{  L \ln (|S||A|) } \sum_{t=1}^T \rbr{\sqrt{M_t} - \sqrt{M_{t-1}}}} \\
&= \E\sbr{\sqrt{ L \ln (|S||A|)\rbr{M_T + 64 L^3 \ln |S||A|}} + 2\sqrt{L M_T \ln(|S||A|) }} \\
&= \order\rbr{\sqrt{L \E\sbr{M_T} \ln(|S||A|)} + L^2\ln(|S||A| )  }.
\end{align*}

It remains to bound $M_T$. First, we note that 
\begin{align*}
M_T & = \sum_{t=1}^{T}\min\cbr{  \sum_{s \neq s_L} \sum_{a \in A} q_t(s,a) \rbr{ \whatQ_t(s,a) - \whatV_t(s)  }^2,  \sum_{s \neq s_L} \sum_{a \in A} q_t(s,a) \ell_t(s,a)^2 }\\
	& \leq \min\cbr{  \sum_{t=1}^{T}\sum_{s \neq s_L} \sum_{a \in A} q_t(s,a) \rbr{ \whatQ_t(s,a) - \whatV_t(s)  }^2,  \sum_{t=1}^{T}\sum_{s \neq s_L} \sum_{a \in A} q_t(s,a) \ell_t(s,a)^2 }\\
	& \leq \min\cbr{  \sum_{t=1}^{T}\sum_{s \neq s_L} \sum_{a \in A} q_t(s,a) \rbr{ \whatQ_t(s,a) - \whatV_t(s)  }^2, L T}.
\end{align*}
where the second line follows from the fact $\min\cbr{a,b} + \min\cbr{c,d}\leq \min\cbr{a+c,b+d}$, and the third line uses the property $0\leq \ell_t(s,a) \leq 1$ for all state-action pairs $(s,a)$. 

On the other hand, we have 
\begin{align}
\rbr{ \whatQ_t(s,a) - \whatV_t(s) }^2 & \leq 2 \sbr{ \rbr{ 1 - \pi_t(a|s) }^2 \whatQ_t(s,a)^2  + \rbr{ \sum_{b\neq a} \pi_t(b|s) \whatQ_t(s,b) }^2 } \nonumber\\
& \leq 2L^2 \cdot \sbr{ \rbr{ 1 - \pi_t(a|s) }^2 + \rbr{ 1 - \pi_t(a|s)}^2} \nonumber\\
& \leq 4L^2 \rbr{ 1 - \pi_t(a|s) }, \label{eq:step1}
\end{align}
where we use the facts $(a-b)^2 \leq 2(a^2+b^2)$ and $0\leq \whatQ_t(s,a)\leq L$ for all state-action pairs $(s,a)$. 
Therefore, we have for any mapping $\pi : S \rightarrow A$,
\begin{align}
&\sum_{t=1}^{T} \sum_{s\neq s_L} \sum_{a\in A} q_t(s,a) \rbr{ \whatQ_t(s,a) - \whatV_t(s) }^2 \nonumber\\
&  \leq 4L^2 \cdot \sum_{t=1}^{T} \sum_{s\neq s_L} \sum_{a\in A} q_t(s,a) \rbr{ 1 - \pi_t(a|s) } \nonumber\\ 
& \leq 4L^2 \cdot \sum_{t=1}^{T} \sum_{s\neq s_L} \rbr{ q_t(s) \cdot \rbr{ 1 - \pi_t(\pi(s)|s) } + \sum_{a\neq \pi(s)} q_t(s,a)  } \nonumber\\
& = 8L^2 \cdot \sum_{t=1}^{T} \sum_{s\neq s_L} \sum_{a\neq \pi(s)} q_t(s,a), 
\label{eq:step2}
\end{align}
which finishes the proof.
\end{proof}

\begin{corollary} \label{col:known_transition_shannon_entropy_main}
Suppose Condition~\eqref{eq:loss_condition_known_transition} holds. \pref{alg:known_transition_shannon_entropy_algoritm} guarantees that: 
\[
\Reg_T(\optpi) = \order\rbr{ U + \sqrt{CU}}, \text{ where } U = \frac{L^3\ln(|S||A|)}{\gapmin} .
\]
\end{corollary}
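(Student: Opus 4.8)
The plan is to combine the adaptive (data-dependent) regret bound of \pref{thm:known_transition_shannon_entropy_main} with the self-bounding argument sketched around \pref{eq:self_bounding_argument}, now carried out for the benchmark $\optpi$. First I would instantiate \pref{thm:known_transition_shannon_entropy_main} with the deterministic policy $\pi=\pi^\star$ coming from Condition~\eqref{eq:loss_condition_known_transition}, and abbreviate $Q \defeq \E\sbr{\sum_{t=1}^{T}\sum_{s\neq s_L}\sum_{a\neq\pi^\star(s)} q_t(s,a)}$. Keeping only the second branch of the $\min$ inside the square root, the theorem gives $\Reg_T(\optpi) \le \order\rbr{\sqrt{L^3 Q\ln(|S||A|)} + L^2\ln(|S||A|)}$.

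Next I would feed in Condition~\eqref{eq:loss_condition_known_transition} itself: since $\gap(s,a)\ge\gapmin$ for every suboptimal pair $(s,a)$, it yields $\Reg_T(\optpi) \ge \gapmin\cdot Q - C$, hence $Q \le \frac{\Reg_T(\optpi)+C}{\gapmin}$. Substituting this back, and noting that $L^2\ln(|S||A|)\le \frac{L^3\ln(|S||A|)}{\gapmin}=U$ because $\gapmin\le L$ (a gap is at most the horizon, as losses lie in $[0,1]$), I obtain a self-referential inequality of the form $\Reg_T(\optpi)\le c\rbr{\sqrt{U\rbr{\Reg_T(\optpi)+C}}+U}$ for an absolute constant $c$.

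Finally I would solve this inequality in the standard way. Writing $R=\Reg_T(\optpi)$, split $c\sqrt{U(R+C)}\le c\sqrt{UR}+c\sqrt{UC}$ and apply AM--GM to $c\sqrt{UR}\le \tfrac12 R + \tfrac{c^2}{2}U$; rearranging gives $\tfrac12 R \le \tfrac{c^2}{2}U + c\sqrt{UC} + cU$, i.e. $R=\order(U+\sqrt{UC})$, which is exactly the claimed bound with $U=\frac{L^3\ln(|S||A|)}{\gapmin}$.

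I expect no serious obstacle here: the argument is a direct application of an already-proved theorem together with the condition. The only points needing (minor) care are choosing the benchmark $\pi^\star$ (not $\optpi$) inside \pref{thm:known_transition_shannon_entropy_main} so that Condition~\eqref{eq:loss_condition_known_transition} can be used to turn $Q$ into $\Reg_T(\optpi)+C$, verifying $\gapmin\le L$ so that the additive $L^2\ln(|S||A|)$ term is absorbed into $U$, and performing the quadratic rearrangement that moves the $\sqrt{U\,\Reg_T(\optpi)}$ term to the left-hand side.
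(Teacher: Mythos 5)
Your proposal is correct and takes essentially the same route as the paper: instantiate \pref{thm:known_transition_shannon_entropy_main} with the mapping $\pi^\star$ from Condition~\eqref{eq:loss_condition_known_transition}, bound $Q$ by $(\Reg_T(\optpi)+C)/\gapmin$ via the condition, and solve the resulting self-referential inequality by AM--GM and rearrangement. The paper parameterizes the AM--GM split with a free variable $z>1$ and optimizes at the end, whereas you split $\sqrt{U(R+C)}\le\sqrt{UR}+\sqrt{UC}$ and absorb $\sqrt{UR}$ via a fixed-ratio AM--GM; these are algebraically equivalent (the paper also notes $\gapmin\le 1$ rather than your $\gapmin\le L$ to absorb the additive $L^2\ln(|S||A|)$ term, a harmless normalization difference).
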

\begin{proof} By \pref{thm:known_transition_shannon_entropy_main}, $\Reg_T(\optpi)$ is bounded by 
\[
\kappa \cdot \rbr{  \sqrt{ L^3\ln(|S||A|) \cdot \E\sbr{ \sum_{t=1}^{T} \sum_{s \neq s_L} \sum_{a\neq \pi^\star(s)} q_t(s,a)  }   } + L^2\ln(|S||A|)     } 
\]
where $\kappa\geq 1$ is a universal constant, and $\pi^\star$ is the mapping specified in Condition~\eqref{eq:loss_condition_known_transition}. 

For any $z >1$, $\Reg_T(\optpi)$ is bounded by
\begin{align*}
& \kappa \sqrt{ L^3\ln(|S||A|) \cdot \E\sbr{ \sum_{t=1}^{T} \sum_{s \neq s_L} \sum_{a\neq \pi^\star(s)} q_t(s,a)  }   } + \kappa L^2\ln(|S||A|) \\
& = \sqrt{ \frac{z\kappa^2  L^3\ln(|S||A|) }{2\gapmin } \cdot \rbr{ \frac{2}{z} \cdot \E\sbr{ \sum_{t=1}^{T} \sum_{s \neq s_L} \sum_{a\neq \pi^\star(s)} q_t(s,a)\gapmin  }} } + \kappa L^2\ln(|S||A|) \\
& \leq \frac{ \Reg_T(\optpi) + C }{z}+ \frac{z\kappa^2  L^3\ln(|S||A|)}{4\gapmin} + \kappa L^2\ln(|S||A|)  \\
& \leq \frac{ \Reg_T(\optpi) + C }{z} + z \cdot 2\kappa^2 U,
\end{align*}
where the third line uses the AM-GM inequality and \pref{eq:loss_condition_known_transition}, and the last line uses the shorthand $U$ and the facts $\kappa,z >1$ and $\gapmin \leq 1$. 

Therefore, by defining $x= z - 1 > 0$, we can rearrange and arrive at 
\begin{align*}
\Reg_T(\optpi) & \leq \frac{C }{z-1}+ \frac{z^2}{z-1} \cdot 2\kappa^2 U   \\
& =\frac{C }{x}+ \frac{(x+1)^2}{x} \cdot \rbr{2\kappa^2 U }  \\
& = \frac{1}{x} \cdot \rbr{ C +  2\kappa^2 U }  + x \cdot \rbr{ 2\kappa^2 U} + 4\kappa^2 U,
\end{align*}
where we replace all $z$'s in the second line. Picking the optimal $x = \sqrt{\frac{C +  2\kappa^2 U}{2\kappa^2 U}}$ gives 
\begin{align*}
\Reg_T(\optpi)  & \leq 2 \sqrt{\rbr{ C +  2\kappa^2 U }  \cdot  \rbr{ 2\kappa^2 U}   } + 4\kappa^2 U\\
  & \leq 8\kappa^2 U + 2\sqrt{2}\kappa \cdot \sqrt{ C U} \\
  & = \order\rbr{  U + \sqrt{UC}}, 
\end{align*}
where the second line follows from the fact $\sqrt{x+y} \leq \sqrt{x} + \sqrt{y}$. 
\end{proof}

\begin{lemma} \label{lem:shannon_entropy_reg_bound}
Suppose $q_t = \argmin_{q\in\Omega(P)} \big\langle q, \sum_{\tau < t} \hatl_\tau \big\rangle + \phi_t(q)$, where $\phi_t(q) = \frac{1}{\eta_t}\phi(q)$ for some $\eta_t > 0$, $\phi(q) =  \sum_{s\neq s_L}\sum_{a\in A} q(s,a)\ln q(s,a)$, and $\eta_t \hatl_t(s,a) \geq -1$ holds for all $t$ and $(s,a)$.
Then
\begin{equation*}
\sum_{t=1}^T\inner{q_t - u, \hatl_t} \leq \frac{L\ln(|S||A|)}{\eta_{T+1}}  + \sum_{t=1}^{T} \eta_t \cdot \sum_{s\neq s_L} \sum_{a \in A} q_t(s,a) \hatl_t(s,a)^2,
\end{equation*}	
holds for any $u \in \Omega(P)$.
\end{lemma}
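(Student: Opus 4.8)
The plan is to invoke the standard FTRL-with-changing-regularizer analysis and then control the resulting stability term using the specific structure of the Shannon (negative) entropy $\phi$. First I would write the textbook bound for FTRL with time-varying regularizers $\phi_t = \frac{1}{\eta_t}\phi$: for any comparator $u \in \Omega(P)$,
\[
\sum_{t=1}^T \inner{q_t - u, \hatl_t} \;\le\; \sum_{t=1}^T \Bigl( \inner{q_t - q_{t+1}, \hatl_t} - D_{\phi_t}(q_{t+1}, q_t) \Bigr) \;+\; \sum_{t=1}^T \Bigl( \phi_{t+1}(u) - \phi_t(u) \Bigr) \;+\; \phi_1(u) - \phi_1(q_1),
\]
where $D_{\phi_t}$ is the Bregman divergence of $\phi_t$. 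The penalty (second and third) terms telescope: since $\eta_t$ is nonincreasing and $\phi$ is bounded on $\Omega(P)$ by its range, $\sum_t (\phi_{t+1}(u) - \phi_t(u)) + \phi_1(u) - \phi_1(q_1) \le \frac{1}{\eta_{T+1}} \bigl(\max_{q}\phi(q) - \min_q \phi(q)\bigr)$. Here $\phi(q) = \sum_{s\neq s_L}\sum_a q(s,a)\ln q(s,a)$ ranges over $[-L\ln(|S||A|), 0]$ because $\sum_{s\neq s_L} q(s) = L$ and entropy on each layer is at most $\ln(|S||A|)$; hence this contributes at most $\frac{L\ln(|S||A|)}{\eta_{T+1}}$, matching the first term in the claim.

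The main work is the stability term $\inner{q_t - q_{t+1}, \hatl_t} - D_{\phi_t}(q_{t+1}, q_t)$. I would bound each summand by $\eta_t \sum_{s\neq s_L}\sum_a q_t(s,a)\hatl_t(s,a)^2$. The standard route is: by the local norm / Taylor argument, $\inner{q_t - q_{t+1}, \hatl_t} - D_{\phi_t}(q_{t+1},q_t) \le \frac{1}{2}\|\hatl_t\|_{(\nabla^2\phi_t(\xi))^{-1}}^2$ for some $\xi$ on the segment $[q_{t+1},q_t]$, and $\nabla^2\phi_t$ is diagonal with entries $\frac{1}{\eta_t\, q(s,a)}$. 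The subtlety is that one needs a lower bound on $\xi(s,a)$ in terms of $q_t(s,a)$; this is exactly where the hypothesis $\eta_t\hatl_t(s,a)\ge -1$ enters — it guarantees the multiplicative-update step does not decrease any coordinate by more than a constant factor, so $\xi(s,a) \ge c\, q_t(s,a)$ (effectively $\xi(s,a) \ge \tfrac12 q_t(s,a)$), giving $\frac12\|\hatl_t\|^2_{(\nabla^2\phi_t(\xi))^{-1}} \le \eta_t \sum_{s,a} q_t(s,a)\hatl_t(s,a)^2$. Cleanly, I would instead use the well-known closed-form/one-step inequality for entropic (exponential-weights-type) updates on a polytope: for multiplicative updates with $\eta_t\hatl_t(s,a)\ge -1$, $\inner{q_t - q_{t+1},\hatl_t} - D_{\phi_t}(q_{t+1},q_t) \le \eta_t\sum_{s,a} q_t(s,a)\hatl_t(s,a)^2$, which is the MDP/occupancy-measure analogue of the classical Hedge stability bound (the $\ln(1+x)\ge x - x^2$ for $x\ge -1$ estimate does the job coordinatewise since $D_\phi$ for the entropy dominates the per-coordinate KL on the layer-$L$ "simplex of total mass $L$").

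Putting the two pieces together yields exactly
\[
\sum_{t=1}^T \inner{q_t - u, \hatl_t} \;\le\; \frac{L\ln(|S||A|)}{\eta_{T+1}} \;+\; \sum_{t=1}^T \eta_t \sum_{s\neq s_L}\sum_{a\in A} q_t(s,a)\,\hatl_t(s,a)^2,
\]
as claimed. I expect the main obstacle to be the stability estimate: making rigorous that $\eta_t\hatl_t(s,a)\ge -1$ (rather than the more common $\ge 0$) still suffices to keep the Bregman/second-order remainder controlled, i.e. that the intermediate point along the update stays within a constant factor of $q_t$ coordinatewise even when some shifted losses are negative. Everything else — the telescoping of the penalty terms and the range bound on $\phi$ over $\Omega(P)$ — is routine.
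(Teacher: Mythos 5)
Your high-level plan (stability plus penalty) matches the paper, but two of your intermediate steps are not quite right, and one of them is exactly the "main obstacle" you flagged.

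First, the FTRL decomposition you wrote down for the penalty is missing a term. The correct bound (which the paper derives via the potential $\Phi_t = \min_{q\in\Omega(P)}\inner{q,\sum_{\tau<t}\hatl_\tau}+\phi_t(q)$) is
\[
\sum_{t=1}^T \inner{q_t-u,\hatl_t} \le \sum_{t=1}^T\bigl(\inner{q_t-q_{t+1},\hatl_t}-D_{\phi_t}(q_{t+1},q_t)\bigr) + \phi_{T+1}(u) - \phi_1(q_1) - \sum_{t=1}^T\bigl(\phi_{t+1}(q_{t+1})-\phi_t(q_{t+1})\bigr).
\]
Your telescoped penalty $\phi_{T+1}(u)-\phi_1(q_1)$ omits the last sum, and here that sum cannot be dropped: since $\phi\le 0$ and $\eta_t$ is nonincreasing, $\phi_{t+1}(q_{t+1})-\phi_t(q_{t+1}) = \bigl(\tfrac{1}{\eta_{t+1}}-\tfrac{1}{\eta_t}\bigr)\phi(q_{t+1}) \le 0$, so the omitted term is nonnegative and your claimed intermediate inequality is actually sharper than what FTRL gives. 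The fix is easy and is what the paper does: bound $-\phi(q_{t+1}) \le L\ln(|S||A|)$ coordinate-layer-wise so each extra piece is at most $L\ln(|S||A|)\bigl(\tfrac{1}{\eta_{t+1}}-\tfrac{1}{\eta_t}\bigr)$, which telescopes together with $-\phi_1(q_1)/\eta_1 \le L\ln(|S||A|)/\eta_1$ to give exactly $L\ln(|S||A|)/\eta_{T+1}$. The final number you wrote is right; the derivation is not.

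Second, your first route for the stability term -- the Taylor/local-norm argument with an intermediate point $\xi$ on the segment $[q_{t+1},q_t]$ -- does not go through as described, and this is the genuine gap, not just a "subtlety to be made rigorous." The issue is that $q_{t+1}$ is the \emph{constrained} minimizer over $\Omega(P)$, not a coordinatewise multiplicative update of $q_t$; the hypothesis $\eta_t\hatl_t(s,a)\ge -1$ controls the size of the \emph{unconstrained} exponential step, but says nothing directly about how much the projection onto $\Omega(P)$ can shrink a coordinate, so there is no a priori reason that $\xi(s,a) \ge c\, q_t(s,a)$. The paper avoids this entirely by the relaxation step (which is the one thing your "alternative" route should make explicit): bound
\[
\inner{q_t - q_{t+1},\hatl_t} - D_{\phi_t}(q_{t+1},q_t) \le \max_{q\in\fR_{+}^{S\times A}}\Bigl(\inner{q_t - q,\hatl_t} - D_{\phi_t}(q,q_t)\Bigr),
\]
where the maximum over the positive orthant is attained at the closed-form point $\wtilq_t(s,a) = q_t(s,a)\exp(-\eta_t\hatl_t(s,a))$. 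Via the stationarity condition $\nabla\phi_t(q_t) - \nabla\phi_t(\wtilq_t) = \hatl_t$, the right-hand side equals $D_{\phi_t}(q_t,\wtilq_t)$, and coordinatewise this is $\tfrac{1}{\eta_t}\,q_t(s,a)\bigl(\eta_t\hatl_t(s,a) - 1 + e^{-\eta_t\hatl_t(s,a)}\bigr)$. Now the hypothesis $\eta_t\hatl_t(s,a)\ge -1$ is used exactly once, via the scalar inequality $y - 1 + e^{-y} \le y^2$ for $y > -1$, to get $\eta_t\sum_{s,a}q_t(s,a)\hatl_t(s,a)^2$. So the resolution of the obstacle you identified is not to argue about $\xi$ being comparable to $q_t$, but to compare against the unconstrained multiplicative update instead of $q_{t+1}$ and then reduce to a one-dimensional inequality.
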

\begin{proof} 
Let $\Phi_t = \min_{q\in \Omega(P)} \inner{ q, \sum_{\tau=1}^{t-1} \hatl_\tau   } + \phi_t(q)$ and $D_F(u,v)$ being the Bregman divergence with convex function $F$, that is, $D_F(u,v) = F(u) - F(v) - \inner{ u - v, \nabla F(v) }$. 

Then, we have
\begin{align*}
\Phi_t & = \inner{q_t,  \sum_{\tau=1}^{t-1} \hatl_\tau   }  + \phi_t(q_t) \\
& = \inner{q_{t+1} ,  \sum_{\tau=1}^{t-1} \hatl_\tau   } + \phi_t(q_{t+1}) - \rbr{  \inner{q_{t+1}  - q_t ,  \sum_{\tau=1}^{t-1} \hatl_\tau   }  + \phi_t(q_{t+1}) -  \phi_t(q_{t}) } \\
& \leq \inner{q_{t+1} ,  \sum_{\tau=1}^{t-1} \hatl_\tau   } + \phi_t(q_{t+1}) - \rbr{ -  \inner{q_{t+1}  - q_t , \nabla \phi_t(q_t)    }  + \phi_t(q_{t+1}) -  \phi_t(q_{t}) } \\
& = \inner{q_{t+1} ,  \sum_{\tau=1}^{t-1} \hatl_\tau   } + \phi_t(q_{t+1}) - D_{\phi_t}(q_{t+1}, q_t )\\
& = \Phi_{t+1} - \inner{q_{t+1}, \hatl_t} -  \rbr{ \phi_{t+1}(q_{t+1})  - \phi_{t}(q_{t+1}) } -  D_{\phi_t}(q_{t+1}, q_t ),
\end{align*}
where the third line follows from the first order optimality condition of $q_t$, that is, $\inner{q_{t+1} - q_t, \nabla \phi_t(q_t) + \sum_{\tau=1}^{t-1} \hatl_\tau  } \geq 0$. 

Taking the summation over all episodes gives 
\begin{align*}
\Phi_1 & = \Phi_{T+1} - \sum_{t=1}^{T} \inner{q_{t+1}, \hatl_t}  - \sum_{t=1}^{T} \rbr{ \phi_{t+1}(q_{t+1})  - \phi_{t}(q_{t+1}) } - \sum_{t=1}^{T}  D_{\phi_t}(q_{t+1}, q_t ).
\end{align*}
Therefore, we have 
\begin{align*}
&\sum_{t=1}^{T} \inner{ q_t - u, \hatl_t  } \\
& = \sum_{t=1}^{T} \inner{ q_t - u, \hatl_t  }  + \Phi_{T+1} - \Phi_{1} -  \sum_{t=1}^{T} \inner{q_{t+1}, \hatl_t}  - \sum_{t=1}^{T} \rbr{ \phi_{t+1}(q_{t+1})  - \phi_{t}(q_{t+1}) } - \sum_{t=1}^{T}  D_{\phi_t}(q_{t+1}, q_t ) \\
& = \sum_{t=1}^{T} \rbr{ \inner{ q_t - q_{t+1}, \hatl_t  } - D_{\phi_t}(q_{t+1}, q_t ) } -  \sum_{t=1}^{T}  \inner{ u, \hatl_t  } +  \Phi_{T+1}  - \Phi_{1}   - \sum_{t=1}^{T} \rbr{ \phi_{t+1}(q_{t+1})  - \phi_{t}(q_{t+1}) } \\
& \leq \underbrace{\sum_{t=1}^{T} \rbr{ \inner{ q_t - q_{t+1}, \hatl_t  } - D_{\phi_t}(q_{t+1}, q_t ) } }_{\textsc{Stability}} + \underbrace{\phi_{T+1}(u) - \phi_1(q_1)  - \sum_{t=1}^{T} \rbr{ \phi_{t+1}(q_{t+1})  - \phi_{t}(q_{t+1}) } }_{\textsc{Penalty}}
\end{align*}
where the last line follows from the optimality condition $\Phi_{T+1} \leq \sum_{t=1}^{T} \inner{u,  \hatl_t } + \phi_{T+1}(u)$.

To bound the stability term, we first consider relaxing the constraint and taking the maximum as: 
\begin{align*}
\inner{ q_t - q_{t+1}, \hatl_t  } - D_{\phi_t}(q_{t+1}, q_t )  \leq \max_{q\in \fR_{+}^{S \times A} }  \inner{ q_t - q, \hatl_t  } - D_{\phi_t}(q, q_t ).
\end{align*}
Denote by $\wtilq_t$ the maximizer of the right hand side. Setting the gradient to zero yields the equality $ \nabla \phi_t(q_t) - \nabla \phi_t(\wtilq_t) = \hatl_t$. By direction calculation, one can verify that  $\wtilq_t(s,a) = q_t(s,a) \cdot \exp\rbr{ - \eta_t \cdot \hatl_t(s,a) }$ for all state-action pairs, and the following inequality that 
\begin{align*}
\inner{ q_t - q_{t+1}, \hatl_t  } - D_{\phi_t}(q_{t+1}, q_t ) & \leq  \inner{ q_t  - \wtilq_t , \hatl_t  } - D_{\phi_t}(\wtilq_t, q_t ) \\
& = \inner{ q_t  - \wtilq_t , \hatl_t  } - \phi_t(\wtilq_t) + \phi_t(q_t) - \inner{\wtilq_t - q_t, \nabla \phi_t(q_t) } \\
& =  D_{\phi_t}( q_t, \wtilq_t )
\end{align*}
where the second equality uses the equality $ \nabla \phi_t(q_t) - \nabla \phi_t(\wtilq_t) = \hatl_t$.

Moreover, the term  $D_{\phi_t}( q_t, \wtilq_t )$ can be bounded as: 
\begin{align*}
D_{\phi_t}( q_t, \wtilq_t ) & = \frac{1}{\eta_t} \sum_{s\neq s_L} \sum_{a \in A} \rbr{  q_t(s,a) \ln \rbr{ \frac{q_t(s,a)}{\wtilq_t(s,a)} }  -  q_t(s,a) + \wtilq_t(s,a)} \\
& = \frac{1}{\eta_t} \sum_{s\neq s_L} \sum_{a \in A}  q_t(s,a) \cdot \rbr{   \eta_t \hatl_t(s,a)   -  1 +  \exp\rbr{ - \eta_t \cdot \hatl_t(s,a) } } \\
& \leq \eta_t \sum_{s\neq s_L} \sum_{a \in A} q_t(s,a) \hatl_t(s,a)^2 
\end{align*}
where the last inequality follows from the facts $y - 1 + e^{-y} \leq y^2$ for $y > -1$ and $\eta_t \cdot \hatl_t(s,a) \geq - 1$ for all sate-action pairs. 

On the other hand, the penalty term is at most
\[
\phi_{T+1}(u) - \phi_1(q_1)  - \sum_{t=1}^{T} \rbr{ \phi_{t+1}(q_{t+1})  - \phi_{t}(q_{t+1}) } 
\leq  - \frac{\phi(q_1)}{\eta_1} -\sum_{t=1}^{T}\rbr{ \frac{1}{\eta_{t+1}} - \frac{1}{\eta_{t}} } \phi(q_t),
\]
since $\phi(u)\leq 0$. Moreover, note that for any valid occupancy measure $q$, it holds that 
\begin{align*}
\phi(q) & = \sum_{k = 0}^{L-1} \sum_{s \in S_k } \sum_{a\in A} q(s,a) 
 \geq  - \sum_{k=0}^{L-1}  \ln(|S_k||A|) \geq - L\ln(|S||A|).
\end{align*}

Therefore, the penalty term is bounded by 
\begin{align*}
& - \frac{\phi(q_1)}{\eta_1} -\sum_{t=1}^{T}\rbr{ \frac{1}{\eta_{t+1}} - \frac{1}{\eta_{t}} } \phi(q_t) \\
& \leq L\ln(|S||A|) \cdot \rbr{ \frac{1}{\eta_1} + \sum_{t=1}^{T}\rbr{ \frac{1}{\eta_{t+1}} - \frac{1}{\eta_{t}} } } = \frac{L\ln(|S||A|)}{\eta_{T+1}}.
\end{align*}

Finally, combining the bounds for the stability and penalty terms finishes the proof. 
\end{proof}

\subsection{Known Transition and Bandit Feedback: $\ftrl$ with Tsallis Entropy}
\label{sec:known_transition_tsallis_entropy_algoritm}

In this section, we consider the bandit feedback setting with known transition.
We use the following hybrid regularizer with learning rate $\eta_t = \nicefrac{\gamma}{\sqrt{t}}$ for episode $t$: 
\begin{equation}
\phi_t(q) = \frac{\phi_{H}(q)}{\eta_t} + \underbrace{\paralog \sum_{s \neq s_L}\sum_{a\in A} \log \frac{1}{q(s,a)}}_{= \phi_{L}(q) },
\label{eq:hybrid_tsallis_reg_def}
\end{equation}
where $\phi_{L}$ is a fixed log-barrier regularizer, and  $\phi_{H}(q)$ is the $\nicefrac{1}{2}$-Tsallis entorpy:
\[
\phi_{H}(q) = - \sum_{s \neq s_L}\sum_{a \in A} \sqrt{q(s,a)}.
\]
 We present the pseudocode of our  algorithm in \pref{alg:known_transition_tsallis_entropy_algoritm}, and show the ensured guarantees in \pref{thm:known_transition_tsallis_entropy_main}, which is a more detailed version of \pref{thm:known_transition_main}. 
In particular, the adaptive regret bound~\pref{eq:self_bounding_tsallis_entropy}  is a strict improvement of~\citep[Theorem~1]{jin2020simultaneously} and leads to the best-of-both-worlds guarantee automatically.
 We emphasize that the key to achieve such a guarantees  is the loss-shifting function defined in \pref{eq:loss_shift_function}.

\begin{algorithm}[t]
	\caption{Best-of-both-worlds for MDPs with Known Transition and Bandit Feedback}
	\label{alg:known_transition_tsallis_entropy_algoritm}
	\begin{algorithmic}
		\FOR{$t=1$ {\bfseries to} $T$}
		\STATE compute $q_t = \argmin_{q\in\Omega} \big\langle q,  \sum_{\tau < t} \hatl_\tau \big\rangle + \phi_t(q)$ where $\phi_t(q)$ is defined in \pref{eq:hybrid_tsallis_reg_def}.
		\STATE execute policy $\pi_t$ where $\pi_t(a | s) = q_t(s,a)/q_t(s)$.
		\STATE observe $(s_0, a_0, \ell_t(s_0, a_0)), \ldots, (s_{L-1}, a_{L-1}, \ell_t(s_{L-1}, a_{L-1}))$.
		\STATE construct estimator $\hatl_t$ such that:  $\forall (s,a), \hatl_t(s,a) = \frac{\ell_t(s,a)}{q_t(s,a)}\Ind{s_{k(s)}=s,a_{k(s)}= a}$.
		\ENDFOR
	\end{algorithmic}
\end{algorithm}

\begin{theorem}
	\label{thm:known_transition_tsallis_entropy_main}
	With $\paralog=64L$ and $\gamma = 1$, \pref{alg:known_transition_tsallis_entropy_algoritm} ensures
	that $\Reg_T(\optpi)$ is bounded by
	\begin{equation}\label{eq:self_bounding_tsallis_entropy}
	\sum_{t=1}^T \otil\rbr{\min\cbr{
			\E\sbr{B \sum_{s\neq s_L}\sum_{a\neq \pi(s)}\sqrt{\frac{q_t(s,a)}{t}} + D\sqrt{\sum_{s\neq s_L}\sum_{a\neq \pi(s)}\frac{q_t(s,a)+\opt(s,a)}{t}}},
			\sqrt{\frac{L|S||A|}{t}}
		}
	}
	\end{equation}	
	for any mapping $\pi: S \rightarrow A$, where $\cnt = L^2$ and $D = \sqrt{L|S|}$. Therefore, the regret of \pref{alg:known_transition_tsallis_entropy_algoritm} is always bounded as $\Reg_T(\optpi) = \otil\rbr{ \sqrt{L|S||A|T} }$.
	Moreover,
	under Condition~\eqref{eq:loss_condition_known_transition}, $\Reg_T(\optpi)$ is bounded by $\order\rbr{ U + \sqrt{UC} }$ where 
	$
	U = \frac{L|S| \log T}{\gapmin} + \sum_{s \neq s_L}\sum_{a\neq \pi^\star(a)}\frac{L^4 \log T }{\gap(s,a)} + L|S||A|\log T. 
	$
\end{theorem}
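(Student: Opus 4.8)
The plan is to first establish the adaptive bound~\eqref{eq:self_bounding_tsallis_entropy} and then read off the two corollaries from it. Since $\bar{P}=P$ here, the \ftrl iterates satisfy $\whatq_t=q_t$, and taking the benchmark $u=\opt\in\Omega(P)$ together with the unbiasedness of the importance-weighted estimator gives $\Reg_T(\optpi)=\E\sbr{\sum_t\inner{q_t-\opt,\hatl_t}}$. I would bound $\sum_t\inner{q_t-u,\hatl_t}$ by the usual $\textsc{Stability}+\textsc{Penalty}$ decomposition (exactly as in the proof of \pref{lem:shannon_entropy_reg_bound}, but with the hybrid regularizer~\eqref{eq:hybrid_tsallis_reg_def}). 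The crucial point is that by \pref{col:invariant_with_ftrl} the iterates are unchanged if we replace $\hatl_\tau$ by the shifted loss $\hatl_\tau+g_\tau=\whatQ_\tau-\whatV_\tau$ defined in~\eqref{eq:loss_shift_function}; hence the same decomposition can be run \emph{twice} — once with the raw losses $\hatl_\tau$ and once with the advantage losses $\whatQ_\tau-\whatV_\tau$ — and taking the smaller of the two per-episode bounds is precisely the $\min\{\cdots\}$ structure of~\eqref{eq:self_bounding_tsallis_entropy}.

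For the stability term, the key simplification over~\citep{jin2020simultaneously} is that the Hessian of~\eqref{eq:hybrid_tsallis_reg_def} is \emph{diagonal}, with $(s,a)$-entry $\frac{1}{4\eta_t}q(s,a)^{-3/2}+\paralog\,q(s,a)^{-2}$. The log-barrier component with $\paralog=64L$ forces consecutive iterates to be multiplicatively close, so the stability term is controlled by the local norm at $q_t$, i.e.\ $\textsc{Stability}\lesssim\sum_t\eta_t\sum_{s,a}q_t(s,a)^{3/2}\wtill_t(s,a)^2$ where $\wtill_t$ is either $\hatl_t$ or $\whatQ_t-\whatV_t$. In the raw case, $\E_t[\hatl_t(s,a)^2]=\ell_t(s,a)^2/q_t(s,a)\le 1/q_t(s,a)$ gives $\sum_t\eta_t\sum_{s,a}\sqrt{q_t(s,a)}\le\sum_t\eta_t\sqrt{L|S||A|}$, i.e.\ the $\sqrt{L|S||A|/t}$ branch (using $\gamma=1$). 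In the shifted case, the third bullet of \pref{lem:loss_shifting_loss_estimator} gives $\E_t[(\whatQ_t(s,a)-\whatV_t(s))^2]\le 2L^2(1-\pi_t(a|s))/q_t(s,a)$, hence a bound of the form $L^2\sum_t\eta_t\sum_{s,a}\sqrt{q_t(s,a)}(1-\pi_t(a|s))$; splitting the sum into $a=\pi(s)$ and $a\neq\pi(s)$, bounding the optimal-action part via $\sqrt{q_t(s,\pi(s))}\,(1-\pi_t(\pi(s)|s))\le\sqrt{\sum_{b\neq\pi(s)}q_t(s,b)}$ and then applying Cauchy--Schwarz over states, reproduces the $\cnt\sum_{s,a\neq\pi(s)}\sqrt{q_t(s,a)/t}$ and $D\sqrt{\sum_{s,a\neq\pi(s)}q_t(s,a)/t}$ terms of~\eqref{eq:self_bounding_tsallis_entropy} with $\cnt=L^2$, $D=\sqrt{L|S|}$.

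For the penalty, I would split~\eqref{eq:hybrid_tsallis_reg_def} into its Tsallis and log-barrier parts. The log-barrier part is handled by competing against a perturbed comparator (mixing $u$ with $q_1$ or with the uniform-policy occupancy measure), which costs only $\order(\paralog|S||A|\log T)=\order(L|S||A|\log T)$ plus a negligible change in the linear loss. For the Tsallis part, the time-varying-rate telescoping produces $\sum_t(\tfrac{1}{\eta_{t+1}}-\tfrac{1}{\eta_t})(-\phi_H(q_{t+1}))$; combining this with the negative boundary term $\tfrac{1}{\eta_{T+1}}\phi_H(u)$ and matching the comparator coordinates cancels the contribution of the $a=\pi(s)$ coordinates, leaving a sum over $a\neq\pi(s)$ that (together with the $\opt(s,a)$ tracking the comparator's suboptimal mass) yields the remaining terms of~\eqref{eq:self_bounding_tsallis_entropy}; keeping instead the naive bound $\tfrac{1}{\eta_{T+1}}\sqrt{L|S||A|}$ gives the worst-case branch. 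Summing the per-episode minimum over $t$ yields $\Reg_T(\optpi)=\otil(\sqrt{L|S||A|T})$. For the stochastic guarantee, instantiate~\eqref{eq:self_bounding_tsallis_entropy} with $\pi=\pi^\star$ (so the $\opt$ terms on $\pi^\star$-suboptimal actions are handled alongside the self-bounding step — vanishing in the i.i.d.\ case and otherwise feeding into the $\sqrt{UC}$ term), apply the AM--GM argument of~\eqref{eq:self_bounding_argument} together with Condition~\eqref{eq:loss_condition_known_transition} so that each term takes the form $\sqrt{(\Reg_T(\optpi)+C)\cdot V}$ for the appropriate gap-weighted $V$, and rearrange; the $\cnt=L^2$ piece then contributes $L^4\sum_{s,a\neq\pi^\star(s)}\frac{\log T}{\gap(s,a)}$, the $D=\sqrt{L|S|}$ piece contributes $\frac{L|S|\log T}{\gapmin}$, and the log-barrier penalty contributes the additive $L|S||A|\log T$, giving $\Reg_T(\optpi)=\order(U+\sqrt{UC})$.

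The main obstacle is the stability-to-adaptive conversion described above: it requires simultaneously exploiting (i) the diagonal Hessian of the hybrid regularizer, (ii) the log-barrier with $\paralog=64L$ to guarantee multiplicative stability so that the local norm at $q_t$ suffices, and (iii) the loss-shifting identity fed into the advantage-variance bound of \pref{lem:loss_shifting_loss_estimator} — it is exactly this combination that was unavailable to~\citep{jin2020simultaneously} and forced their non-diagonal analysis. A secondary subtlety is the penalty's cancellation of the optimal-action coordinates, which is what converts an unavoidable-looking $\sqrt{T}$ penalty into a self-boundable expression, and the careful tracking of the comparator's suboptimal mass through the $\opt$ term so that it is absorbed into $\sqrt{UC}$ rather than contributing a spurious $\sqrt{T}$.
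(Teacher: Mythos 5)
Your high-level strategy matches the paper's: invoke the invariance of the \ftrl iterate under the loss-shift, bound the stability term both with the raw losses and with the advantage losses and take the min, bound the penalty separately, and finish with the self-bounding argument. The two places where your route actually diverges (and one of them has a real error) are in how you handle the $a=\pi(s)$ part of the shifted stability term, and in where the $D\sqrt{\sum(q_t+\opt)/t}$ term comes from.

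First, the $\opt(s,a)$ content of the $D$ term cannot come from the stability: the stability bound is a variance bound that knows nothing about the comparator $\opt$. In the paper it comes entirely from the \emph{penalty}, via \citep[Lemma~6]{jin2020simultaneously}, which bounds $\phi_H(\opt)-\phi_H(q_t)$ by $\sum_{s,a\neq\pi(s)}\sqrt{q_t(s,a)} + 2\sqrt{|S|L\sum_{s,a\neq\pi(s)}(q_t(s,a)+\opt(s,a))}$. That is where both the $\opt$ dependence and the coefficient $\sqrt{L|S|}$ originate. In your account the $D$ term is produced by the stability and the $\opt$ is only vaguely attached to "the comparator's suboptimal mass" inside the penalty; that does not recover the structure of \pref{eq:self_bounding_tsallis_entropy}.

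Second, your Cauchy--Schwarz step on the optimal-action coordinates of the shifted stability sum is both unnecessary and lossy. You bound
\[
\sum_s \sqrt{q_t(s,\pi(s))}\bigl(1-\pi_t(\pi(s)|s)\bigr)\;\le\;\sum_s\sqrt{\sum_{b\neq\pi(s)}q_t(s,b)}\;\le\;\sqrt{|S|\sum_{s,a\neq\pi(s)}q_t(s,a)},
\]
and then attach it to the $L^2$ prefactor from the advantage variance. That produces a coefficient $L^2\sqrt{|S|}$ on the $\sqrt{\sum q_t/t}$ term, which is $L^{3/2}$ larger than the claimed $D=\sqrt{L|S|}$; propagated through the self-bounding argument this inflates $\frac{L|S|\log T}{\gapmin}$ to $\frac{L^4|S|\log T}{\gapmin}$, so the stated $U$ is not recovered. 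The paper avoids this by never introducing a square root over states: it uses $\sqrt{q_t(s)}\,\pi_t(a|s)\le\sqrt{q_t(s,a)}$ (valid because $\pi_t(a|s)\le\sqrt{\pi_t(a|s)}$ and $q_t(s,a)=q_t(s)\pi_t(a|s)$), so
\[
\sum_s \sqrt{q_t(s)}\sum_{a\neq\pi(s)}\pi_t(a|s) \;\le\; \sum_{s}\sum_{a\neq\pi(s)}\sqrt{q_t(s,a)},
\]
and the entire shifted stability is bounded by $4L^2\sum_{s,a\neq\pi(s)}\sqrt{q_t(s,a)}$, i.e.\ \emph{only} the $\cnt$ branch with $\cnt=L^2$. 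This is the inequality you want in place of your Cauchy--Schwarz step, and once you use it the $D$ term is cleanly isolated in the penalty where the $\opt(s,a)$ dependence actually lives.
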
 

\begin{proof}
By~\citep[Lemma~5]{jin2020simultaneously}, with a sufficiently large log-barrier component (in particular, $\paralog  = 64L$ suffices), the regret can be decomposed and  bounded as:
\begin{equation*}
\begin{split}
\E\sbr{ \sum_{t=1}^{T} \inner{ q_t - \opt, \hatl_t}  } & \leq  \underbrace{\sum_{t=1}^{T}\rbr{\frac{1}{\eta_t} - \frac{1}{\eta_{t-1}}}\E\sbr{\phi_{H}(\opt) -\phi_{H}(q_t)}  }_{\textsc{Penalty}} + \underbrace{8\sum_{t=1}^{T}\eta_t\E\bigg[\norm{\hatl_t}^2_{\nabla^{-2}\phi(q_t)}\bigg] }_{\textsc{Stability}}  \\ & \; \;+ \order\rbr{L|S||A|\log T}.
\end{split}
\end{equation*}
where $\opt$ is the occupancy measure of an deterministic optimal policy $\optpi: S \rightarrow A$. Moreover, with the help of \pref{col:invariant_with_ftrl}, we can in fact bound $\Reg_T(\optpi)$ as 
\begin{equation}
\begin{split}
\Reg_T(\optpi) & \leq  \underbrace{\sum_{t=1}^{T}\rbr{\frac{1}{\eta_t} - \frac{1}{\eta_{t-1}}}\E\sbr{\phi_{H}(\opt) -\phi_{H}(q_t)}}_{\textsc{Penalty}} + \order\rbr{L|S||A|\log T} \\
&  \\ & \; \; +\underbrace{8\sum_{t=1}^{T}\eta_t \E\Bigg[\min\cbr{  \E_t\bigg[\norm{\hatl_t}^2_{\nabla^{-2}\phi(q_t)} \bigg],  \E_t\bigg[ \norm{\hatl_t + g_t}^2_{\nabla^{-2}\phi(q_t)} \bigg] } \Bigg]  }_{\textsc{Stability}} .
\end{split}
\label{eq:bobw_known_bandit_reg_decomp}
\end{equation}
where $g_t$ is the specific loss-shifting function  defined in \pref{eq:loss_shift_function}. This is again because adding the loss-shifting function $g_t$ does not influence the outcomes of $\ftrl$ and thus in the analysis, one can decide whether to add $g_t$ or not for episode $t$ in hindsight to establish a tighter adaptive regret bound. 

Before analyzing the stability term, we point out that $\phi_{H}(\opt) -\phi_{H}(q_t)$ can be bounded as
\begin{align}
\rbr{\phi_H(\opt) -\phi_H(q_t)} & \leq  \sum_{s \neq s_L} \sum_{a \neq \pi(s) } \sqrt{q_t(s,a)} +  2\sqrt{|S|L \sum_{s \neq s_L}\sum_{a\neq \pi(s)} q_t(s,a) + \opt(s,a)   }   \label{eq:penalty_bound}
\end{align}
for any mapping $\pi: S \rightarrow A$ according to~\citep[Lemma~6]{jin2020simultaneously} (take $\alpha$ in their lemma to be $0$). On the other hand, we also have $\phi_H(\opt) -\phi_H(q_t) \leq - \phi_H(q_t) \leq \sqrt{L|S||A|}$ by  the Cauchy-Schwarz inequality. Combining these two cases and the fact $\frac{1}{\eta_{t}} - \frac{1}{\eta_{t-1}} = \frac{1}{\gamma} \cdot \rbr{ \sqrt{t} - \sqrt{t-1} }  \leq \frac{1}{\gamma} \cdot \frac{1}{\sqrt{t}}$, the penalty term is bounded by 
\[
\frac{1}{\gamma}\sum_{t=1}^T \E\sbr{\min\cbr{
\sqrt{\frac{L|S||A|}{t}}, \rbr{\sum_{s \neq s_L}\sum_{a\neq \pi(s)}\sqrt{\frac{q_t(s,a)}{t}}}
+ 2\sqrt{|S|L\sum_{s \neq s_L}\sum_{a\neq \pi(s)} \frac{q_t(s,a) + \opt(s,a)}{t}}
}}
\]


We now bound the stability term. By direct calculation, we have
\begin{align}
\E_t\sbr{\norm{\hatl_t + g_t}^2_{\nabla^{-2}\phi(q_t)}} &  = \sum_{s \neq s_L}\sum_{a\in A} q_t(s,a)^{\nicefrac{3}{2}} \E_t\sbr{\rbr{\hatl_t(s,a) + g_t(s,a) }^2} \notag \\ 
& \leq 2L^2 \sum_{s \neq s_L}\sum_{a\in A}  \sqrt{q_t(s,a)} \cdot \rbr{1-\pi_t(a|s)},  \label{eq:hybrid_tsallis_stab_main}
\end{align}
where the second line applies the properties of the loss-shifting function in \pref{lem:loss_shifting_loss_estimator}. 

For any mapping $\pi: S\rightarrow A$, we can further bound  \pref{eq:hybrid_tsallis_stab_main} as 
\begin{align*}
& 2L^2 \sum_{s \neq s_L}\sum_{a\in A}  \sqrt{q_t(s,a)} \cdot \rbr{1-\pi_t(a|s)} \\
& \leq 2L^2 \sum_{s \neq s_L} \sum_{a\neq \pi(s) } \sqrt{q_t(s,a)} + 2L^2 \sum_{s \neq s_L} \sqrt{q_t(s)} \cdot \rbr{ \sum_{a \neq \pi(s)} \pi_t(a|s)} \\
& \leq 4L^2 \sum_{s \neq s_L} \sum_{a\neq \pi(s) } \sqrt{q_t(s,a)}, 
\end{align*}
where the third line follows from the fact $x \leq \sqrt{x}$ for $x\in [0,1]$. 

Therefore, for any mapping $\pi : S\rightarrow A$, the stability term is bounded by
\begin{equation}
\sum_{t=1}^{T} 8\eta_t \E\sbr{\norm{\hatl_t + g_t }^2_{\nabla^{-2}\phi(q_t)}}  \leq 32 L^2 \cdot \sum_{t=1}^{T}\eta_t \E\sbr{\sum_{s \neq s_L}\sum_{a\neq \pi(s)}\sqrt{q_t(s,a)}}. \label{eq:stability_bound}
\end{equation}

On the other hand, without the loss-shifting function, the stability term is simultaneously bounded as
\begin{align*}
&\sum_{t=1}^{T} 8\eta_t \E\sbr{\norm{\hatl_t }^2_{\nabla^{-2}\phi(q_t)}} = \sum_{t=1}^{T} 8\eta_t  \E\sbr{ \sum_{s \neq s_L}\sum_{a\in A}  q_t(s,a)^{\nicefrac{3}{2}} \cdot \hatl_t(s,a)^2  } \\
& \leq \sum_{t=1}^{T} 8\eta_t  \E\sbr{ \sum_{s \neq s_L}\sum_{a\in A}  \sqrt{q_t(s,a)} }
\leq  \sum_{t=1}^{T} 8\eta_t \sqrt{L|S||A|}. \tag{Cauchy-Schwarz inequality}
\end{align*}

Plugging \pref{eq:penalty_bound} and \pref{eq:stability_bound} into the \pref{eq:bobw_known_bandit_reg_decomp} shows that \pref{alg:known_transition_tsallis_entropy_algoritm} ensures the following self-bounding regret bound for $\Reg_T(\optpi)$:
\begin{equation}
\begin{split}
&\frac{1}{\gamma}\sum_{t=1}^T \E\sbr{\min\cbr{
\sqrt{\frac{L|S||A|}{t}}, \rbr{\sum_{s \neq s_L}\sum_{a\neq \pi(s)}\sqrt{\frac{q_t(s,a)}{t}}}
+ 2\sqrt{|S|L\sum_{s \neq s_L}\sum_{a\neq \pi(s)} \frac{q_t(s,a) + \opt(s,a)}{t}}
}} \\
&32 \gamma \cdot \sum_{t=1}^{T} \E\sbr{ \min\cbr{ \sqrt{\frac{L|S||A|}{t}} , L^2 \sum_{s \neq s_L}\sum_{a\neq \pi(s)}\sqrt{\frac{q_t(s,a)}{t}}}} 
 + \order\rbr{L|S||A|\log T}
\end{split}
\label{eq:hybrid_tsallis_reg}
\end{equation}
for any mapping $\pi: S\rightarrow A$. 
Picking $\gamma = 1$ and using $\min\cbr{a,b} + \min\cbr{c,d} \leq \min\cbr{a+c,b+d}$ proves \pref{eq:self_bounding_tsallis_entropy}.

The (optimal) worst-case bound $\Reg_T(\optpi)  = \otil(\sqrt{L|S||A|T})$ can be obtained by using the second argument of the min operator in \pref{eq:self_bounding_tsallis_entropy},
while the logarithmic regret bound under Condition~\eqref{eq:loss_condition_known_transition} is obtained by using the first argument of the min operator and the exact same reasoning as in~\citep[Appendix A.1]{jin2020simultaneously}.
\end{proof} 

We point out that with a different choice $\gamma = \nicefrac{1}{L}$, \pref{alg:known_transition_tsallis_entropy_algoritm} achieves a regret bound of $\Reg_T(\optpi)  = \order\rbr{V + \sqrt{VC}}$ under Condition~\eqref{eq:loss_condition_known_transition}, where 
\[
V = \frac{L^3|S| \log T}{\gapmin} + \sum_{s \neq s_L}\sum_{a\neq \pi^\star(a)}\frac{L^2 \log T }{\gap(s,a)} + L|S||A|\log T  
\]
which matches the best existing regret bound in \cite{simc2019}. 
This choice of $\gamma$ worsens the worst-case bound though.


\newpage
\section{Best of Both Worlds for MDPs with Unknown Transition and Full Information}
\label{app:bobw_unknown_transition_fullinfo}
In this part, we will prove the best of both worlds results for the full-information setting. We present the bound for the adversarial world in \pref{prop:bobw_fullinfo_adv_lem}, and that for the stochastic world in \pref{prop:bobw_fullinfo_stoc_lem} (part of which is a restatement of \pref{lem:bobw_fullinfo_stoc_self_bound_terms_lem}).
Together, they prove \pref{thm:main_full_info}. 

\begin{proposition} \label{prop:bobw_fullinfo_adv_lem} Consider the decomposition  $\Reg_T(\optpi) = \E\sbr{ \textsc{Err}_1  + \textsc{EstReg}  + \textsc{Err}_2}$ stated in \pref{eq:regret_basic_decomp}. Then, with $\delta = \frac{1}{T^2}$ \pref{alg:bobw_framework} ensures:
\begin{itemize} 
		\item $\E\sbr{ \textsc{Err}_1} = \otil\rbr{ L|S|\sqrt{|A|T }+ L^3|S|^3|A|}$, 
		\item $\E\sbr{\textsc{Err}_2 } = \otil\rbr{ 1 }$, 
		\item $\E\sbr{\textsc{EstReg}} = \otil\rbr{  L\sqrt{|S||A| T } + L^2|S|^2|A|^{\frac{3}{2}} + L^3|S||A| }$.  
\end{itemize} 
\end{proposition}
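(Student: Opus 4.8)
The plan is to rewrite the three terms of the decomposition \pref{eq:regret_basic_decomp} in occupancy‑measure form and bound each separately, working throughout on the high‑probability event $\calE = \cbr{P \in \calP_i \text{ for all epochs } i}$, which by \citep[Lemma~2]{jin2019learning} satisfies $\P\sbr{\calE^c} \leq 4\delta$. Using $V^{\pi_t}_t(s_0) = \inner{q_t, \ell_t}$ and $\widehat{V}^{\pi}_t(s_0) = \inner{q^{\bar{P}_{i(t)},\pi}, \hatl_t} = \inner{q^{\bar{P}_{i(t)},\pi}, \ell_t} - L\inner{q^{\bar{P}_{i(t)},\pi}, B_{i(t)}}$ (recall \pref{eq:adjusted_loss}), one gets $\textsc{Err}_1 = \sum_t \rbr{\inner{q_t - \widehat{q}_t, \ell_t} + L\inner{\widehat{q}_t, B_{i(t)}}}$, $\textsc{EstReg} = \sum_t \inner{\widehat{q}_t - q^{\bar{P}_{i(t)},\optpi}, \hatl_t}$, and $\textsc{Err}_2 = \sum_t \rbr{\inner{q^{\bar{P}_{i(t)},\optpi} - q^{P,\optpi}, \ell_t} - L\inner{q^{\bar{P}_{i(t)},\optpi}, B_{i(t)}}}$. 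Off the event $\calE$ every per‑episode summand is $\order(L^2)$ in absolute value (since $\hatl_t(s,a)\in[-L,1]$ makes every relevant value function $\order(L^2)$), so the contribution of $\calE^c$ to each expectation is $\order(\delta T L^2) = \otil(1)$ with $\delta = 1/T^2$; it then suffices to reason on $\calE$.

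\emph{Term $\textsc{Err}_2$.} The plan here is to invoke the standard value‑difference (simulation) lemma --- provable by the same layer‑by‑layer unrolling as \pref{lem:loss_shifting_invariant} --- which for any fixed policy $\pi$ and loss $\ell$ gives $V^{\bar{P}_{i(t)},\pi,\ell}(s_0) - V^{P,\pi,\ell}(s_0) = \sum_{s,a} q^{\bar{P}_{i(t)},\pi}(s,a) \sum_{s'}\rbr{\bar{P}_{i(t)}(s'|s,a) - P(s'|s,a)}V^{P,\pi,\ell}(s')$. On $\calE$, using $\abr{\bar{P}_{i(t)}(s'|s,a) - P(s'|s,a)} \leq B_{i(t)}(s,a,s')$, $\sum_{s'}\abr{\bar{P}_{i(t)}-P}(s'|s,a)\leq 2$, and (after re‑centering $V^{P,\pi,\ell}$ by $L/2$) $\abr{V^{P,\pi,\ell}(s')-L/2}\leq L/2$, one checks $\abr{\sum_{s'}\rbr{\bar{P}_{i(t)}-P}(s'|s,a)V^{P,\pi,\ell}(s')}\leq L\cdot B_{i(t)}(s,a)$ (recall $B_{i(t)}(s,a)=\min\cbr{1,\sum_{s'}B_{i(t)}(s,a,s')}$). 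Taking $\pi=\optpi,\ \ell=\ell_t$ shows $\inner{q^{\bar{P}_{i(t)},\optpi}-q^{P,\optpi},\ell_t}\leq L\inner{q^{\bar{P}_{i(t)},\optpi},B_{i(t)}}$, so every summand of $\textsc{Err}_2$ is nonpositive on $\calE$; with the bad‑event bound this gives $\E\sbr{\textsc{Err}_2}=\otil(1)$.

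\emph{Term $\textsc{Err}_1$.} The same value‑difference identity with $\pi=\pi_t$ and the roles of $P,\bar{P}_{i(t)}$ swapped gives $\inner{q_t-\widehat{q}_t,\ell_t}\leq L\inner{q_t,B_{i(t)}}$ on $\calE$, hence $\textsc{Err}_1\leq L\sum_t\rbr{\inner{q_t,B_{i(t)}}+\inner{\widehat{q}_t,B_{i(t)}}}$. These are precisely the confidence‑width sums that appear in the analysis of \citep{jin2019learning}: a Cauchy--Schwarz over $s'$ turns $\sum_{s'}B_{i(t)}(s,a,s')$ (see \pref{eq:confidence_width_def}) into $\otil\rbr{\sqrt{|S_{k+1}|/m_{i(t)}(s,a)}+|S_{k+1}|/m_{i(t)}(s,a)}$; the doubling epoch schedule plus a Freedman‑type concentration of the indicators $\Indt{s,a}$ around $q_t(s,a)$ gives $\sum_t q_t(s,a)/\sqrt{m_{i(t)}(s,a)}=\otil\rbr{\sqrt{m_{N+1}(s,a)}}$; and a layerwise Cauchy--Schwarz over $(s,a)$, using $\sum_{s\in S_k,a}m_{N+1}(s,a)=T$ and $\sum_k\sqrt{|S_k||S_{k+1}|}\leq|S|$, yields $\sum_t\inner{q_t,B_{i(t)}}=\otil(|S|\sqrt{|A|T})$ plus an $\otil(|S|^2|A|)$ remainder; the $\widehat{q}_t$ version follows identically after also bounding $\sum_t\nbr{\widehat{q}_t-q_t}_1$ (another confidence‑width sum). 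Multiplying by $L$ reproduces $\E\sbr{\textsc{Err}_1}=\otil\rbr{L|S|\sqrt{|A|T}+L^3|S|^3|A|}$.

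\emph{Term $\textsc{EstReg}$ (the main obstacle).} Since $\bar{P}_i$ is fixed within epoch $i$ and \pref{alg:bobw_framework} restarts \ftrl at each epoch boundary (using $\sum_{\tau=t_i}^{t-1}\hatl_\tau$ and a reset learning rate), $\textsc{EstReg}=\sum_i\sum_{t\in\text{epoch }i}\inner{\widehat{q}_t-u^{(i)},\hatl_t}$ with the \emph{fixed} comparator $u^{(i)}=q^{\bar{P}_i,\optpi}\in\Omega(\bar{P}_i)$. The plan is to apply \pref{lem:shannon_entropy_reg_bound} within each epoch (the condition $\eta_t\hatl_t(s,a)\geq-1$ holds since $\abr{\hatl_t(s,a)}\leq L$ and $\eta_t\leq 1/(8L^2\sqrt{\ln(|S||A|)})$), and --- crucially --- to invoke \pref{col:invariant_with_ftrl} to replace $\hatl_\tau$ by the shifted loss $\hatl_\tau+g_\tau$ (with $g_\tau$ the invariant function for $\bar{P}_i$) inside the stability term, so that the per‑round stability becomes $\min\cbr{\sum_{s,a}\widehat{q}_\tau(s,a)\hatl_\tau(s,a)^2,\ \sum_{s,a}\widehat{q}_\tau(s,a)(\widehat{Q}_\tau(s,a)-\widehat{V}_\tau(s))^2}=M_\tau-M_{\tau-1}$, exactly the quantity driving the learning rate in \pref{alg:bobw_framework}. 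Then, mimicking the computation in the proof of \pref{thm:known_transition_shannon_entropy_main}, the regret in epoch $i$ is $\otil\big(\sqrt{L(M^{(i)}+L^5\ln(|S||A|))}\big)$ with $M^{(i)}$ the final value of $M$ in that epoch; from the first branch of the min, $\hatl_\tau(s,a)^2\leq 2\ell_\tau(s,a)^2+2L^2B_{i(t)}(s,a)^2\leq 2\ell_\tau(s,a)^2+2L^2B_{i(t)}(s,a)$, so $\sum_i M^{(i)}\leq 2LT+2L^2\sum_t\inner{\widehat{q}_t,B_{i(t)}}$, which the $\textsc{Err}_1$ analysis bounds by $2LT+\otil(L^3|S|\sqrt{|A|T})$. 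A final Cauchy--Schwarz over the $N=\otil(|S||A|)$ epochs, $\sum_i\sqrt{L M^{(i)}}\leq\sqrt{L N\sum_i M^{(i)}}$, turns the $2LT$ part into the leading term $\otil\rbr{L\sqrt{|S||A|T}}$, while the bonus part of $\sum_i M^{(i)}$ together with the $N\sqrt{L^5\ln(|S||A|)}$ epoch overhead yield the lower‑order terms $\otil\rbr{L^2|S|^2|A|^{3/2}+L^3|S||A|}$ after routine simplification (AM--GM disposes of residual $T^{1/4}$‑type factors). Summing the three bounds gives the adversarial guarantee of \pref{thm:main_full_info}. The step I expect to be genuinely delicate is this $\textsc{EstReg}$ argument --- correctly accounting for the per‑epoch learning‑rate resets, bounding $\sum_i M^{(i)}$ by exploiting the loss‑shifting structure, and reconciling the planning occupancy measures $\widehat{q}_t$ with the realized visit counts in the confidence‑width sums; the other two terms are comparatively mechanical once the simulation lemma and the $\otil(1)$ optimism bound are in place.
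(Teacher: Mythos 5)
Your proposal is correct and follows essentially the same strategy as the paper's proof: condition on the confidence event $\calA$, bound $\textsc{Err}_1$ via bonus sums $\sum_t\inner{q_t,B_{i(t)}}$ and $\sum_t\inner{\whatq_t-q_t,B_{i(t)}}$, derive $\textsc{Err}_2\le 0$ from the optimistic construction of $\hatl_t$, and apply \pref{lem:shannon_entropy_reg_bound} per epoch (restarted by the doubling schedule) with the loss-shifting invariance of \pref{col:invariant_with_ftrl}, then Cauchy--Schwarz over $N=\otil(|S||A|)$ epochs using the bound on $\sum_t\sum_{s,a}\whatq_t(s,a)\hatl_t(s,a)^2$.

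One genuine (if mild) deviation: where you prove the optimism ($\textsc{Err}_2\le 0$ on $\calA$) and control $\inner{q_t-\whatq_t,\ell_t}$ via the simulation (value-difference) lemma with the re-centering trick $\abr{V-L/2}\le L/2$, the paper instead proves $\whatQ_t^{\pi}\le Q_t^{\pi}$ by backward induction (\pref{lem:full_info_optimism}) and bounds $\inner{q_t-\whatq_t,\ell_t}\le\sum_{s,a}\abr{q_t-\whatq_t}$ directly, then feeds the $\ell_1$-difference through the residual-term machinery of \pref{def:residual_terms}--\pref{lem:residual_term_property}. Both routes are valid and give the same leading term $\otil(L|S|\sqrt{|A|T})$; the paper's residual-term route is what actually produces the specific lower-order polynomial terms appearing in the stated bound, and you also need that machinery anyway to control $\sum_t\inner{\whatq_t-q_t,B_{i(t)}}$.

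Two minor nits for your own bookkeeping: the learning-rate bound from $\eta_t=\sqrt{L\ln(|S||A|)/(64L^5\ln(|S||A|)+M_t)}$ is $\eta_t\le 1/(8L^2)$, not $1/(8L^2\sqrt{\ln(|S||A|)})$; your conclusion $\eta_t\hatl_t(s,a)\ge -1$ still holds since $\abr{\hatl_t}\le L$, and the same check is needed for the shifted loss with range $[-L^2,L^2]$, giving $\eta_t\cdot L^2\le 1/8$. Second, when you write $\sum_iM^{(i)}\le 2LT+\otil(L^3|S|\sqrt{|A|T})$, you have dropped the $\otil(L^4|S|^3|A|^2)$ contribution from $L^2\sum_t\inner{\whatq_t-q_t,B_{i(t)}}$; this is what generates the $\otil(L^2|S|^2|A|^{3/2})$ term after the Cauchy--Schwarz over epochs, so you should carry it explicitly rather than absorb it into ``routine simplification.''
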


\begin{proposition}\label{prop:bobw_fullinfo_stoc_lem}
	With $\delta = \frac{1}{T^2}$, \pref{alg:bobw_framework} ensures that  $\Reg_T(\pi^\star)$  is bounded as  
	\begin{align*} 
	& \order\rbr{\E\Bigg[ \underbrace{\selfterm_1\rbr{ L^4|S| \ln T}}_{\textsc{ErrSub} }  + \underbrace{\selfterm_2\rbr{ L^4|S| \ln T  } }_{\textsc{ErrOpt} } +   \underbrace{\selfterm_3\rbr{ L^4 \ln T }}_{\textsc{OccDiff} }  +  \underbrace{\selfterm_4\rbr{ L^5 |S||A| \ln T \ln(|S||A|) } } \Bigg]  }\\
	& \quad + \order\rbr{L^4 |S|^3 |A|^2  \ln^2 T  },
	\end{align*}
	where $\selfterm_1$-$\selfterm_4$ are defined in \pref{def:self_bounding_terms}.
	Under Condition~\eqref{eq:loss_condition}, this bound implies $\Reg_T(\pi^\star) = \order\rbr{U + \sqrt{UC}  + V }$ where
	\[
	U = \frac{\rbr{L^6|S|^2+L^5|S||A|\log(|S||A|)}\log T}{\gapmin} + \sum_{s \neq s_L} \sum_{ a\neq \pi^{\star}(s)} \frac{L^6|S|\log T}{\gap(s,a)}, \quad V = L^4 |S|^3 |A|^2  \ln^2 T.
	\]
\end{proposition}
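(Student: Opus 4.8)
The plan is to chain together the two-level regret decomposition set up in \pref{sec:analysis}, bound each resulting piece by one of the self-bounding quantities $\selfterm_1,\dots,\selfterm_4$ of \pref{def:self_bounding_terms} (plus lower-order residuals), and then, for the stochastic guarantee, feed these bounds into the self-bounding property of \pref{lem:main_text_self_bounding} and finish with the same rearrangement used in \pref{col:known_transition_shannon_entropy_main}.

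\emph{First part.} Fix $\pi=\pi^\star$ in \pref{eq:regret_basic_decomp}, so $\Reg_T(\pi^\star)=\E[\textsc{Err}_1+\textsc{EstReg}+\textsc{Err}_2]$. Unlike in the adversarial analysis (\pref{prop:bobw_fullinfo_adv_lem}), I would not bound $\textsc{Err}_1$ by $\otil(\sqrt T)$ and $\textsc{Err}_2$ by $O(1)$; instead I apply the joint decomposition $\textsc{Err}_1+\textsc{Err}_2=\textsc{ErrSub}+\textsc{ErrOpt}+\textsc{OccDiff}+\textsc{Bias}$ (established in \pref{app:general_decomp_lem}) to expose a self-bounding structure, and then bound the five pieces separately. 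For $\textsc{ErrSub}$ and $\textsc{ErrOpt}$, work on the event $P\in\calP_{i(t)}$ (probability $1-O(\delta)$ by \citep[Lemma~2]{jin2019learning}), bound the surplus $\widehat{E}^{\pi^\star}_t(s,a)$ by $L$ times the Bernstein width $B_{i(t)}(s,a)$, and sum the resulting $\tfrac1{m_i(s,a)}$-type terms over $t$ and the $O(|S||A|\log T)$ epochs, obtaining $\selfterm_1(L^4|S|\ln T)$ and $\selfterm_2(L^4|S|\ln T)$ plus residuals. For $\textsc{OccDiff}$, since $q_t=q^{P,\pi_t}$ and $\widehat{q}_t=q^{\bar P_{i(t)},\pi_t}$ differ only through the transition, a standard occupancy-measure-difference bound (again via $\calP_{i(t)}$ and $B_{i(t)}$) turns it into $\selfterm_3(L^4\ln T)$ plus residuals. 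For $\textsc{EstReg}$, which is precisely the regret the epoch-wise \ftrl update controls against $\bar P_{i(t)}$ and $\hatl_t$, apply \pref{lem:shannon_entropy_reg_bound} inside each epoch and use \pref{col:invariant_with_ftrl} to replace $\hatl_\tau$ by the advantage $\hatl_\tau+g_\tau$ whenever that is tighter; the stability term then reads $\sum_\tau\eta_\tau\sum_{s,a}\widehat{q}_\tau(s,a)(\widehat{Q}_\tau(s,a)-\widehat{V}_\tau(s))^2$, which the $\bar P_{i(t)}$-analog of \pref{lem:loss_shifting_loss_estimator} bounds by $O(L^2\sum_\tau\sum_{s,a}\widehat{q}_\tau(s,a)(1-\pi_\tau(a|s)))$; with the data-dependent tuning of $\eta_t$ and telescoping over epochs this gives $\selfterm_4(L^5|S||A|\ln T\ln(|S||A|))$ plus a burn-in residual. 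Finally $\textsc{Bias}$ is nonpositive on $P\in\calP_{i(t)}$ because the negative bonus $L\cdot B_i$ forces $\widehat{V}^{\pi^\star}_t\le V^{\pi^\star}_t$, and on the complementary event (probability $O(1/T^2)$) it is at most $O(TL)$, hence $O(1)$ in expectation. Collecting all lower-order, burn-in, and failure-probability residuals into $O(L^4|S|^3|A|^2\ln^2 T)$ yields the first display (which is exactly \pref{lem:bobw_fullinfo_stoc_self_bound_terms_lem}).

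\emph{Second part.} Under Condition~\eqref{eq:loss_condition}, apply \pref{lem:main_text_self_bounding} to the four $\selfterm$-terms with the arguments above. Summing, the coefficient of $\Reg_T(\pi^\star)+C$ is $2\alpha+3\beta$, the $\tfrac1\alpha$-contributions add to $O\!\big(\tfrac1\alpha\sum_{s\neq s_L}\sum_{a\neq\pi^\star(s)}\tfrac{L^6|S|\log T}{\gap(s,a)}\big)$, and the $\tfrac1\beta$-contributions to $O\!\big(\tfrac1\beta\cdot\tfrac{(L^6|S|^2+L^5|S||A|\log(|S||A|))\log T}{\gapmin}\big)$ --- i.e. to $O(U/\alpha)$ and $O(U/\beta)$. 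Taking $\alpha=\beta=\lambda$ with $\lambda$ small, absorbing the $O(\lambda)\cdot\Reg_T(\pi^\star)$ term on the left, and running the same rearrangement as in \pref{col:known_transition_shannon_entropy_main} (optimizing $\lambda$) gives $\Reg_T(\pi^\star)=O(U+\sqrt{UC}+V)$ with $V=L^4|S|^3|A|^2\ln^2 T$.

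\emph{Main obstacle.} The second-part rearrangement and the $\textsc{Bias}$ bound are routine. The technical core is twofold: first, making the loss-shifting trick --- which needs a fixed transition --- cooperate with the epoch schedule, so that the advantage-variance estimate (the $\bar P_{i(t)}$-version of \pref{lem:loss_shifting_loss_estimator}) can be applied epoch by epoch and then summed without destroying the $(1-\pi_\tau(a|s))$ self-bounding structure; and second, showing that the genuinely transition-dependent pieces $\textsc{ErrSub}$, $\textsc{ErrOpt}$, $\textsc{OccDiff}$, each naively of order $\Theta(\sqrt T)$, collapse to the $\selfterm$-quantities --- this is exactly where the joint treatment of $\textsc{Err}_1$ and $\textsc{Err}_2$ (hence the appearance of $q_t^\star$ and the surplus $\widehat{E}^{\pi^\star}_t$, and the resulting cancellation against the previously discarded underestimation term) is indispensable.
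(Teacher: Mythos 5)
Your proposal is correct and follows essentially the same route as the paper: fix $\pi=\pi^\star$, use the joint decomposition $\textsc{Err}_1+\textsc{Err}_2=\textsc{ErrSub}+\textsc{ErrOpt}+\textsc{OccDiff}+\textsc{Bias}$ from \pref{col:general_perf_decomp}, bound each piece (plus $\textsc{EstReg}$ via the epoch-wise Shannon-entropy \ftrl analysis of \pref{lem:shannon_entropy_reg_bound} with loss-shifting) by one of $\selfterm_1,\dots,\selfterm_4$ plus residuals, exploit optimism to discard $\textsc{Bias}$, and then invoke the self-bounding lemmas with $\alpha=\beta$ and rearrange exactly as in \pref{col:known_transition_shannon_entropy_main}. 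The only small omissions are bookkeeping details that the paper tracks (e.g., the extra $\selfterm_3(\ln\pcons)$ contribution from $|\whatq_t-q_t|$ inside $\textsc{EstReg}$, and restricting $\textsc{OccDiff}$ to $a\neq\pi^\star(s)$ using $\whatQ^{\pi^\star}_t(s,\pi^\star(s))=\whatV^{\pi^\star}_t(s)$), but these do not change the argument.
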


Before diving into the proof details, we first give formal definitions of several notations mentioned in \pref{sec:algorithms} and \pref{sec:analysis} for the full-information setting.
Through out this paper, we denote by $\calA$ the event that $P\in \calP_i$ for all  $i$,
which happens with probability at least $1-4\delta$ based on~\citep[Lemma~2]{jin2019learning}.
 We denote by $N$ the total number of epochs, and set $t_{N+1} = T+1$ for convenience (recall that $t_i$ is the first episode for epoch $i$). 

Then, recall the $\whatQ_t^\pi$ and  $\whatV_t^\pi$ defined in \pref{sec:analysis}, that is, the state-action and state value functions associated with the empirical transition $\bar{P}_{i(t)}$ and the adjusted loss $\hatl_t$, formally defined as:
\begin{equation}\label{eq:full_info_est_QV_def}
\begin{aligned}
\whatQ_t^\pi(s,a) = \hatl_t(s,a) + \sum_{s' \in S_{k(s)+1}} \bar{P}_{i(t)}(s'|s,a) \whatV_t(s'), \quad \whatV_t^\pi(s) = \sum_{a \in A} \pi(a|s) \whatQ_t^\pi(s,a),
\end{aligned}
\end{equation}
and $\whatQ_t^\pi(s_L, a) = 0$ for all $a$.
Also recall that the notation $\whatQ_t$ and $\whatV_t$ used in the loss-shifting function are shorthands for $\whatQ_t^{\pi_t}$ and $\whatV_t^{\pi_t}$.
Similarly, the true state-action and state value functions of episode $t$ are defined as: 
\begin{equation}\label{eq:full_info_true_QV_def}
\begin{aligned}
Q_t^\pi(s,a) = \ell_t(s,a) + \sum_{s' \in S_{k(s)+1}} P(s'|s,a) V_t(s'), \quad V_t^\pi(s) = \sum_{a \in A} \pi(a|s) Q_t^\pi(s,a),
\end{aligned}
\end{equation}
with $Q_t^\pi(s_L, a) = 0$ for all $a$.
For notational convenience, we let $\pcons = \frac{T|S||A|}{\delta}$ and assume that $\delta \in \rbr{0,1}$, and denote by $T_k$ the set of transition tuples at layer $k$, that is, $T_k = \cbr{ (s,a,s') \in S_k \times A \times S_{k+1}}$. 

\subsection{Optimism of Adjusted losses and Other Lemmas} 
First, we show that the adjusted loss $\hatl_t$ defined in \pref{eq:adjusted_loss} ensures the optimism of the estimated state-action and state value functions as stated in \pref{lem:full_info_optimism}. As discussed in \pref{sec:analysis}, this certain kind of optimism ensures that $\E\sbr{\textsc{Err}_2}$ is bounded by a constant with a sufficiently small confidence parameter $\delta$.  
\begin{lemma}\label{lem:full_info_optimism} Using the notations in \pref{eq:full_info_est_QV_def} and \pref{eq:full_info_true_QV_def} and conditioning on the event $\calA$, we have
	\[
	\whatQ_t^{\pi}(s,a)\leq Q_t^\pi(s,a), \forall (s,a)\in S\times A, t \in [T].
	\]
\end{lemma}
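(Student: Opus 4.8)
The plan is to prove the bound by backward induction on the layer index $k = L, L-1, \ldots, 0$, establishing the slightly stronger pair of statements that, on the event $\calA$, both $\whatV_t^\pi(s) \le V_t^\pi(s)$ for all $s \in S_k$ and $\whatQ_t^\pi(s,a) \le Q_t^\pi(s,a)$ for all $s \in S_k$, $a \in A$ hold. The base case $k = L$ is immediate because $\whatV_t^\pi(s_L) = V_t^\pi(s_L) = 0$ and $\whatQ_t^\pi(s_L,a) = Q_t^\pi(s_L,a) = 0$ by convention. The lemma statement is then just the $Q$-part of this claim over all layers, the case $s = s_L$ being the trivial equality.

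For the inductive step, I would fix $k < L$, abbreviate $i = i(t)$, and assume $\whatV_t^\pi(s') \le V_t^\pi(s')$ for every $s' \in S_{k+1}$. Subtracting the definitions in \pref{eq:full_info_est_QV_def} and \pref{eq:full_info_true_QV_def} and plugging in $\hatl_t(s,a) = \ell_t(s,a) - L\cdot B_i(s,a)$ from \pref{eq:adjusted_loss} gives, for $s\in S_k$ and any $a$,
\begin{align*}
\whatQ_t^\pi(s,a) - Q_t^\pi(s,a) &= -L\cdot B_i(s,a) + \sum_{s'\in S_{k+1}} \bar{P}_i(s'|s,a)\rbr{\whatV_t^\pi(s') - V_t^\pi(s')} \\
&\qquad\qquad + \sum_{s'\in S_{k+1}}\rbr{\bar{P}_i(s'|s,a) - P(s'|s,a)} V_t^\pi(s').
\end{align*}
The first sum is $\le 0$ by the induction hypothesis together with $\bar{P}_i \ge 0$, so it suffices to show the second (transition-error) sum is at most $L\cdot B_i(s,a)$; this cancels the bonus, forcing $\whatQ_t^\pi(s,a) \le Q_t^\pi(s,a)$, and averaging over $a$ against $\pi(\cdot|s)$ then yields $\whatV_t^\pi(s) \le V_t^\pi(s)$ and closes the induction.

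The one step that needs care is bounding that transition-error term. I would use $0 \le V_t^\pi(s') \le L$ (it is an expected sum of at most $L$ per-step losses, each in $[0,1]$) and, crucially, the fact that $\sum_{s'}(\bar{P}_i(s'|s,a) - P(s'|s,a)) = 0$ to recenter before applying the triangle inequality:
\[
\abr{\sum_{s'\in S_{k+1}}\!\rbr{\bar{P}_i(s'|s,a) - P(s'|s,a)} V_t^\pi(s')} = \abr{\sum_{s'\in S_{k+1}}\!\rbr{\bar{P}_i(s'|s,a) - P(s'|s,a)}\rbr{V_t^\pi(s') - \tfrac{L}{2}}} \le \tfrac{L}{2}\sum_{s'\in S_{k+1}} \abr{\bar{P}_i(s'|s,a) - P(s'|s,a)}.
\]
On $\calA$ each summand is at most $B_i(s,a,s')$ (this is the defining constraint of $\calP_i$), and trivially $\sum_{s'}\abr{\bar{P}_i - P} \le 2$, so the right-hand side is at most $\tfrac{L}{2}\min\{2,\,2\sum_{s'} B_i(s,a,s')\} = L\cdot B_i(s,a)$ by the definition of $B_i(s,a) = \min\{1,\sum_{s'}B_i(s,a,s')\}$ used in \pref{alg:bobw_framework}. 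This is exactly the cancellation required. I expect the only genuinely delicate point to be this recentering: it is what makes the error bound match the bonus $L\cdot B_i(s,a)$ even after $B_i(s,a)$ is clipped at $1$ — in particular in the degenerate case $m_i(s,a)=0$, where $B_i(s,a)=1$ and the membership $P\in\calP_i$ holds vacuously at $(s,a)$. Everything else is routine bookkeeping.
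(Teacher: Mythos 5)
Your proof is correct and follows the same backward-induction structure as the paper, with the same decomposition of $\whatQ_t^\pi - Q_t^\pi$ into a signed sum over the next layer plus a transition-error term to be cancelled against the bonus $L\cdot B_i(s,a)$. The one place where you and the paper diverge is in establishing the key cancellation $\sum_{s'}(\bar P_i - P)V_t^\pi(s') \le L\cdot B_i(s,a)$: you recenter $V_t^\pi(s')$ around $L/2$ and use the fact that $\bar P_i(\cdot|s,a)$ and $P(\cdot|s,a)$ have equal mass so the triangle inequality gives a factor $L/2$ rather than $L$, which matches the clipped bonus even when $B_i(s,a)=1$. The paper instead splits into two cases according to which branch of the $\min$ in $B_i(s,a)$ is active; in the clipped case ($B_i(s,a)=1$) it avoids the triangle inequality entirely and uses the one-sided bound $\sum_{s'}(\bar P_i - P)V_t^\pi \le \sum_{s'}\bar P_i V_t^\pi \le L$ (dropping the nonpositive $-P V_t^\pi$ term), while in the unclipped case it uses $|\bar P_i - P|\le B_i$ and $V_t^\pi\le L$ directly. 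Both routes are elementary and close the gap exactly; your recentering is arguably cleaner (no case split, gives a two-sided bound), whereas the paper's one-sided argument needs no recentering but requires noticing the sign structure. Your observation that the recentering is precisely what keeps the degenerate $m_i(s,a)=0$ case from breaking the two-sided version is accurate and worth having in mind.

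One small imprecision that does not affect correctness: you write the bound as $\tfrac{L}{2}\min\{2,\,2\sum_{s'}B_i(s,a,s')\}$, but the two estimates you have in hand give $\sum_{s'}|\bar P_i - P| \le \min\{2,\sum_{s'}B_i(s,a,s')\}$, which is the tighter $\tfrac{L}{2}\min\{2,\sum_{s'}B_i(s,a,s')\}$ after multiplying by $L/2$. Your version is valid because $\min\{2,x\} \le \min\{2,2x\} = 2\min\{1,x\}$ for $x\ge 0$, and the relaxed form is what equals $L\cdot B_i(s,a)$ on the nose; but stating the intermediate step as $\min\{2,\sum_{s'}B_i\}$ and then relaxing would make the logic more transparent.
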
 
\begin{proof} We prove this result via a backward induction from layer $L$ to layer $0$. 
	
	\textbf{Base case:} for $s_L$, $\whatQ_t^{\pi}(s,a) = Q_t^\pi(s,a) = 0$ holds always. 
	
	\textbf{Induction step:} Suppose $\whatQ_t^{\pi}(s,a)\leq Q_t^\pi(s,a)$ holds for all the states $s$ with $k(s) > h$. Then, for any state $s$ in layer $h$, we have 
	\begin{align*} 
	\whatQ_t^{\pi}(s,a) & = \ell_t(s,a) + \sum_{s' \in S_{k(s)+1}} \bar{P}_{i(t)}(s'|s,a) \whatV_t^{\pi}(s') - L \cdot B_t(s,a) \\
	& \leq  \ell_t(s,a) + \sum_{s' \in S_{k(s)+1}} \bar{P}_{i(t)}(s'|s,a) V_t^{\pi}(s') - L \cdot B_t(s,a) \tag{Induction  hypothesis} \\
	& \leq \ell_t(s,a) + \sum_{s' \in S_{k(s)+1}} P(s'|s,a) V_t^{\pi}(s') \\
	& \quad + \sum_{s' \in S_{k(s)+1}} \rbr{ \bar{P}_{i(t)}(s'|s,a)  - P(s'|s,a) } V_t^{\pi}(s') - L\cdot B_{i(t)}(s,a) \\
	& = Q_t^\pi(s,a) + \sum_{s' \in S_{k(s)+1}} \rbr{ \bar{P}_{i(t)}(s'|s,a)  - P(s'|s,a) } V_t^{\pi}(s') - L\cdot B_{i(t)}(s,a) 
	\end{align*}
	where the first line follows from the definition of $\hatl_t$.
	
	Clearly, when $B_{i(t)}(s,a) = 1$, we have 
	\begin{align*}
	 \sum_{s' \in S_{k(s)+1}} \rbr{ \bar{P}_{i(t)}(s'|s,a)  - P(s'|s,a) } V_t^{\pi}(s') - L\cdot B_{i(t)}(s,a)  \leq \sum_{s' \in S_{k(s)+1}} \bar{P}_{i(t)}(s'|s,a) \cdot L - L = 0 
	\end{align*}
	where the inequality follows from the fact $0\leq V_t^{\pi}(s') \leq L $. 
	
	On the other hand, when $\sum_{s' \in S_{k(s)+1}}B_{i(t)}(s,a,s') = B_{i(t)}(s,a)$, we have 
	\begin{align*}
	& \sum_{s' \in S_{k(s)+1}} \rbr{ \bar{P}_{i(t)}(s'|s,a)  - P(s'|s,a) } V_t^{\pi}(s') - L\cdot B_{i(t)}(s,a)  \\
	& \leq \sum_{s' \in S_{k(s)+1}}  B_{i(t)}(s,a,s') \cdot L - L\cdot B_{i(t)}(s,a) = 0
	\end{align*}
	where the second line uses the definition of event $\calA$. 
	 
	 Combining these two cases shows that $\whatQ_t^{\pi}(s,a) \leq Q_t^\pi(s,a)$ holds for all state-action pairs $(s,a)$ at layer $h$, finishing the induction.
%
\end{proof}

Next, we analyze the estimated regret suffered within one epoch. With slightly abuse of notation, we denote by $\EReg_i(\pi)$ the difference between the total loss suffered within epoch $i$ and that of the fixed policy $\pi$ with respect to the empirical transition $\bar{P}_i$ and the adjusted losses within epoch $i$, that is, 
\begin{equation}\label{eq:epoch_est_reg_def}
\EReg_i(\pi) = \E\sbr{ \sum_{t=t_i}^{t_{i+1}-1} \inner{ q^{\bar{P}_i, \pi_t } - q^{\bar{P}_i, \pi }, \hatl_t}} = \E\sbr{ \sum_{t=t_i}^{t_{i+1}-1} \inner{ \whatq_t - q^{\bar{P}_i, \pi }, \hatl_t}}.
\end{equation}
In addition, we let $\EReg_i = \max_{\pi} \EReg_i(\pi)$ be the maximum regret suffered within epoch $i$. 

\begin{lemma}\label{lem:bobw_fullinfo_shannon_reg} For full-information feedback, \pref{alg:bobw_framework} ensures that $\EReg_i$ is bounded by $\order\rbr{ L^3\ln(|S||A|) }$ plus:  
	\begin{equation}\label{eq:bobw_fullinfo_shannon_reg}
	\order\rbr{ \E\sbr{ \sqrt{  L\ln \rbr{|S||A|}\cdot \min\cbr{  L^4\sum_{t=t_i}^{t_{i+1}-1} \sum_{s \in S } \sum_{a \neq \pi(s) } \whatq_t(s,a), \sum_{t=t_i}^{t_{i+1}-1} \sum_{s \neq s_L } \sum_{a \in A} \whatq_t(s,a)\hatl_t(s,a)^2}  }} }. 
	\end{equation}
\end{lemma}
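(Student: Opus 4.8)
The plan is to run the generic \ftrl\ analysis of \pref{lem:shannon_entropy_reg_bound} \emph{within a single epoch} $i$ (its proof only uses that the decision set is a layered occupancy polytope, so it applies verbatim to $\Omega(\bar{P}_i)$ on the episode range $t\in\{t_i,\dots,t_{i+1}-1\}$), combined with the loss-shifting invariance, and then to convert the resulting bound into the stated form exactly as in the proof of \pref{thm:known_transition_shannon_entropy_main}. First I would invoke the invariance: since $\bar{P}_i$ is frozen during epoch $i$, \pref{lem:loss_shifting_invariant} applies with that transition, so the shifting function $g_\tau$ built from $\hatl_\tau$ and $\pi_\tau$ (\pref{eq:loss_shift_function}) satisfies $\inner{q,g_\tau}=-\whatV_\tau(s_0)$ for all $q\in\Omega(\bar{P}_i)$; hence by \pref{col:invariant_with_ftrl} the iterates $\whatq_t$ are unchanged if we replace $\hatl_\tau$ by $\hatl_\tau+g_\tau=\whatQ_\tau-\whatV_\tau$ on any subset of episodes, and moreover, because $\inner{\whatq_t,g_t}=\inner{q^{\bar{P}_i,\pi},g_t}$, we have $\EReg_i(\pi)=\E\!\left[\sum_t\inner{\whatq_t-q^{\bar{P}_i,\pi},\tilde\ell_t}\right]$ for \emph{any} mixed loss sequence $\tilde\ell_t\in\{\hatl_t,\hatl_t+g_t\}$ (chosen per episode). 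Applying \pref{lem:shannon_entropy_reg_bound} to $\tilde\ell_t$ and then choosing, per episode, whichever of the two quadratics is smaller gives, for every comparator and every mapping $\pi$,
\[
\EReg_i\ \le\ \frac{L\ln(|S||A|)}{\eta_{t_{i+1}}}\ +\ \sum_{t=t_i}^{t_{i+1}-1}\eta_t\,(M_{t+1}-M_t),
\]
where $M_{t+1}-M_t$ is exactly the per-episode $\min$ in the definition of $M_t$, and $\eta_{t_{i+1}}$ continues the same formula with the full-epoch statistic $M:=M_{t_{i+1}}$; the precondition $\eta_t\tilde\ell_t(s,a)\ge-1$ of \pref{lem:shannon_entropy_reg_bound} holds since $\eta_t\le 1/(8L^2)$ while $\hatl_t(s,a)\in[-L,1]$ and a backward induction on layers gives $|\whatQ_t(s,a)-\whatV_t(s)|\le 2L^2$.

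Next I would bound each term. For the penalty, $\tfrac{L\ln(|S||A|)}{\eta_{t_{i+1}}}=\sqrt{L\ln(|S||A|)\,(64L^5\ln(|S||A|)+M)}\le 8L^3\ln(|S||A|)+\sqrt{L\ln(|S||A|)\,M}$. For the stability sum, write $M_{t+1}-M_t=(\sqrt{M_{t+1}}+\sqrt{M_t})(\sqrt{M_{t+1}}-\sqrt{M_t})\le 2\sqrt{M_t+8L^5}\,(\sqrt{M_{t+1}}-\sqrt{M_t})$, using the per-step increment bound $M_{t+1}-M_t\le\sum_{s,a}\whatq_t(s,a)(\whatQ_t(s,a)-\whatV_t(s))^2\le 8L^5$ (justified below); since $8L^5\le 64L^5\ln(|S||A|)$ we get $\eta_t\sqrt{M_t+8L^5}\le\sqrt{L\ln(|S||A|)}$, so the sum telescopes to $2\sqrt{L\ln(|S||A|)\,M}$. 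Altogether $\EReg_i\le 8L^3\ln(|S||A|)+3\sqrt{L\ln(|S||A|)\,M}$, i.e.\ $\order(L^3\ln(|S||A|))$ plus $\order(\sqrt{L\ln(|S||A|)\,M})$.

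Finally I would control $M$. Using $\min\{a,b\}+\min\{c,d\}\le\min\{a+c,b+d\}$, $M\le\min\{\sum_t\sum_{s,a}\whatq_t\hatl_t^2,\ \sum_t\sum_{s,a}\whatq_t(\whatQ_t-\whatV_t)^2\}$. For the second branch, I would repeat the computation behind \pref{eq:step1}--\pref{eq:step2} but now with the range $|\whatQ_t(s,a)|\le L^2$ (the extra factor $L$ relative to the known-transition case comes from the $-L\cdot B_i$ bonuses in $\hatl_t$): this yields $(\whatQ_t(s,a)-\whatV_t(s))^2\le 4L^4(1-\pi_t(a|s))$ and hence $\sum_{s,a}\whatq_t(s,a)(\whatQ_t-\whatV_t)^2\le 8L^4\sum_{s}\sum_{a\ne\pi(s)}\whatq_t(s,a)$ for every mapping $\pi:S\to A$ (and, since $\sum_{s,a}\whatq_t(s,a)=L$, the claimed increment bound $8L^5$). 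Substituting this into the previous display, pulling everything into $\order(\cdot)$, and taking the expectation over internal randomness gives precisely \pref{eq:bobw_fullinfo_shannon_reg} plus the $\order(L^3\ln(|S||A|))$ additive term; the bound is uniform in the comparator, so it bounds $\EReg_i=\max_\pi\EReg_i(\pi)$.

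The step I expect to require the most care is the loss-shifting bookkeeping inside a single epoch: one must verify that the invariant $\inner{q,g_\tau}=-\whatV_\tau(s_0)$ really holds on $\Omega(\bar{P}_i)$ (which is exactly why the algorithm restarts \ftrl\ over $\Omega(\bar{P}_i)$ each epoch rather than over $\Omega(\calP_i)$), and that one may decide per episode, in hindsight, whether to analyze with $\hatl_t$ or with the advantage $\hatl_t+g_t$ — legitimate only because every such choice leaves both the iterates $\whatq_t$ and the value $\EReg_i$ unchanged. The companion range estimate $|\whatQ_t-\whatV_t|\le 2L^2$ and the propagation of the extra $L^2$ into the $L^4$ factor are the other delicate spots, but are otherwise a routine layer-by-layer induction mirroring the known-transition argument.
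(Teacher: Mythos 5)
Your proposal is correct and follows essentially the same route as the paper: restart the loss-shifting/\ftrl\ analysis of \pref{lem:shannon_entropy_reg_bound} on $\Omega(\bar P_i)$ within each epoch, verify the precondition $\eta_t\tilde\ell_t\ge -1$ (which holds since $\eta_{t_i}=1/(8L^2)$, $\hatl_t\in[-L,1]$, and $|\whatQ_t-\whatV_t|\le 2L^2$), telescope the stability sum against the data-dependent tuning, and finish with the advantage-squared bound $(\whatQ_t(s,a)-\whatV_t(s))^2\le 4L^4(1-\pi_t(a|s))$, i.e.\ the analogue of \pref{eq:step1}--\pref{eq:step2} with the range enlarged from $L$ to $L^2$ due to the $-L\cdot B_i$ bonus. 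The only nontrivial observation — that the per-episode choice between $\hatl_t$ and $\hatl_t+g_t$ is legitimate because the iterates and the quantity $\sum_t\inner{\whatq_t-q^{\bar P_i,\pi},\cdot}$ are unaffected — is exactly what the paper's \pref{col:invariant_with_ftrl} licenses, and you identify it correctly.
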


\begin{proof}
The proof follows the same steps as in that of \pref{thm:known_transition_shannon_entropy_main}.
Due to the invariant property, the loss of episode $t$ fed to \ftrl can be seen as either $\hatl_t(s,a)$ or $\whatQ_t(s,a) - \whatV_t(s,a)$.
By the definition of $\eta_t$, we have both $\eta_t \hatl_t(s,a) \geq -1$ and $\eta_t(\whatQ_t(s,a) - \whatV_t(s,a)) \geq -1$.
Therefore, we can apply \pref{lem:shannon_entropy_reg_bound} and bound $\EReg_i$ by
\begin{align*}
\E\sbr{\frac{ L\ln(|S||A|)}{\eta_{t_{i+1}}} 
 +  \sum_{t=t_i}^{t_{i+1}-1} \eta_t \min\cbr{ \sum_{s \neq s_L} \sum_{a\in A} \whatq_t(s,a) \rbr{ \whatQ_t(s,a) - \whatV_t(s)  }^2,  \sum_{s \neq s_L} \sum_{a \in A} \whatq_t(s,a) \ell_t(s,a)^2}}.
\end{align*}
The tuning of $\eta_t$ makes sure that the above is further bounded by $\order\rbr{ L^3\ln(|S||A|) }$ plus $\sqrt{L\ln \rbr{|S||A|}}$ multiplied with
\begin{align*}
\order\rbr{ \E\sbr{ \sqrt{\min\cbr{\sum_{t=t_i}^{t_{i+1}-1} \sum_{s \neq s_L} \sum_{a\in A} \whatq_t(s,a) \rbr{ \whatQ_t(s,a) - \whatV_t(s)  }^2,  \sum_{t=t_i}^{t_{i+1}-1}\sum_{s \neq s_L} \sum_{a \in A} \whatq_t(s,a) \ell_t(s,a)^2}  }} }; 
\end{align*}
see the beginning of the proof of \pref{thm:known_transition_shannon_entropy_main} for the same reasoning.
Finally, it remains to bound 
$\sum_{s\neq s_L} \sum_{a\in A} \whatq_t(s,a) \rbr{ \whatQ_t(s,a) - \whatV_t(s) }^2$
by $8L^4  \sum_{s\neq s_L} \sum_{a\neq \pi(s)} \whatq_t(s,a)$.
This is again by the same reasoning as \pref{eq:step1} and \pref{eq:step2}, except that $\whatQ_t(s,a)$ now has a range in $[-L^2, L^2]$ which explains the extra $L^2$ factor. 
\end{proof}

\subsection{Proof for the Adversarial World (\pref{prop:bobw_fullinfo_adv_lem})}
\label{app:bobw_fullinfo_adv_proof}
We analyze the regret based on the decomposition in \pref{eq:regret_basic_decomp} and consider bounding the terms $\E\sbr{\textsc{Err}_1}$, $\E\sbr{\textsc{Err}_2}$ and $\E\sbr{\textsc{EstReg}}$ separately. 

\paragraph{$\textsc{Err}_1$}
Following the similar idea of \cite{jin2019learning}, we decompose this term as:  
\begin{align*}
\textsc{Err}_1 & = \sum_{t=1}^{T} V^{\pi_t}_t(s_0) - \whatV^{\pi_t}_t(s_0) = \sum_{t=1}^{T} \inner{q_t, \ell_t } - \inner{ \whatq_t, \hatl_t } \\
&  = \sum_{t=1}^{T} \inner{q_t, \ell_t } - \inner{ \whatq_t, \ell_t } + L \cdot \sum_{t=1}^{T} \inner{\whatq_t, B_{i(t)} } \\
& =  \sum_{t=1}^{T} \inner{q_t, \ell_t } - \inner{ \whatq_t, \ell_t } + L \cdot \sum_{t=1}^{T} \inner{ q_t, B_{i(t)} } +  L \cdot \sum_{t=1}^{T} \inner{ \whatq_t - q_t, B_{i(t)} } \\
& \leq \sum_{t=1}^{T} \sum_{s \neq s_L} \sum_{a \in A} \abr{ q_t(s,a) - \whatq_t(s,a) } +  L \cdot \sum_{t=1}^{T} \inner{ q_t, B_{i(t)} } +  L \cdot \sum_{t=1}^{T} \inner{ \whatq_t - q_t, B_{i(t)} } 
\end{align*}
where the last line follows from the fact $0 \leq \ell_t(s,a) \leq 1$. According to this decomposition, we next consider bounding the expectation of these three terms separately. 

First, we focus on the second term:
\begin{align}
& \E\sbr{ L \cdot \sum_{t=1}^{T} \inner{ q_t, B_{i(t)} }  } \notag \\
& \leq L \cdot \E\sbr{ \sum_{k = 0}^{L-1}\sum_{s\in S_k} \sum_{a \in A}  \sum_{t=1}^{T} q_t(s,a) \rbr{ 2 \sqrt{ \frac{ |S_{k(s)+1}|\ln \pcons }{\max \cbr{m_{i}(s,a) , 1} }} + \frac{14|S_{k(s)+1}|\ln \pcons }{3\max \cbr{m_{i}(s,a) , 1} } }  } \notag \\
& = \order\rbr{  L \cdot \sum_{k=0}^{L-1} \rbr{ \sqrt{|S_k||S_{k+1}||A|T\ln \pcons } +   |S_{k(s)+1}||S_k||A|\rbr{ 2 + \ln T } \ln \pcons  } } \notag \\
& \leq \order\rbr{   L\sqrt{|A|T\ln \pcons }  \cdot \sum_{k=0}^{L-1} \rbr{  |S_k| + |S_{k+1}| } + L|S|^2|A|\ln^2 \pcons } \notag \\
& = \order\rbr{   L|S|\sqrt{|A|T\ln \pcons } + L|S|^2|A|\ln^2 \pcons   } \notag \\
&=\otil\rbr{   L|S|\sqrt{|A|T} + L|S|^2|A|} 
 \label{eq:bobw_fullinfo_adv_term2} 
\end{align}
where the second line follows \pref{lem:sa_conf_width_bound}, the third line follows from \pref{lem:aux_exp} and the fourth line applies AM-GM inequality. 

Then, for the first term, with the help from residual term $r_t$ defined in \pref{def:residual_terms},  we have 
\begin{align}
& \E\sbr{\sum_{t=1}^{T} \sum_{s \neq s_L} \sum_{a \in A} \abr{ q_t(s,a) - \whatq_t(s,a) } } \notag \\
& \leq \E\sbr{ 4\sum_{t=1}^{T} \sum_{s \neq s_L} \sum_{a \in A} \sum_{k=0}^{k(s)-1} \sum_{(u,v,w)\in T_k} q_t(u,v)  \sqrt{ \frac{ P(w|u,v) \ln \pcons }{ \max\cbr{m_{i(t)}(u,v)  ,1}} }q_t(s,a|w)+  \sum_{t=1}^{T} \sum_{s \neq s_L} \sum_{a \in A} r_t(s,a)  } \notag \\ 
& \leq \E\sbr{ 4L \cdot \sum_{t=1}^{T} \sum_{u \neq s_L} \sum_{v \in A} \sum_{ w \in S_{k(u)+1}} q_t(u,v)  \sqrt{ \frac{ P(w|u,v) \ln \pcons }{ \max\cbr{m_{i(t)}(u,v)  ,1}} }+  \sum_{t=1}^{T} \sum_{s \neq s_L} \sum_{a \in A} r_t(s,a)  } \notag \\ 
& \leq \E\sbr{ 4L \cdot \sum_{t=1}^{T} \sum_{u \neq s_L} \sum_{v \in A} q_t(u,v) \sqrt{ \frac{ \abr{S_{k(u)+1}}\ln \pcons }{ \max\cbr{m_{i(t)}(u,v)  ,1}} }+  \sum_{t=1}^{T} \sum_{s \neq s_L} \sum_{a \in A} r_t(s,a)  } \notag \\ 
& = \order\rbr{  L|S|\sqrt{|A|T\ln \pcons } + L^2|S|^3|A|^2\ln^2\pcons +\delta |S||A| T   } \notag \\
&=\otil\rbr{  L|S|\sqrt{|A|T} + L^2|S|^3|A|^2} \label{eq:bobw_fullinfo_adv_term1} 
\end{align}
where the second line uses the bound of $\abr{ q_t(s,a) - \whatq_t(s,a) }$ in \pref{lem:residual_term_property}; the third line follows from the fact  $\sum_{s \neq s_L}\sum_{a \in A}q_t(s,a|w) \leq L$; the forth line uses the Cauchy-Schwarz inequality; the fifth line follows the same argument in \pref{eq:bobw_fullinfo_adv_term2} and applies the expectation bound of residual terms in \pref{lem:residual_term_property};
and the last line plugs in the value of $\delta = 1/T^2$.

For the last term, using the bound of $\abr{\whatq_t(s,a) - q_t(s,a)}$ in \pref{lem:residual_term_property}, we arrive at
\begin{align}
& \E\sbr{  L \cdot \sum_{t=1}^{T} \inner{ \whatq_t - q_t, B_{i(t)} } } \notag \\
& \leq \E\sbr{  L \cdot \sum_{t=1}^{T} \sum_{s \neq s_L} \sum_{a\in A} B_{i(t)}(s,a) \cdot \rbr{ 4\sum_{k=0}^{k(s)-1} \sum_{(u,v,w)\in T_k} q_t(u,v)  \sqrt{ \frac{ P(w|u,v) \ln \pcons }{ \max\cbr{m_{i(t)}(u,v)  ,1}} }q_t(s,a|w) + r_t(s,a) }   } \notag \\
& \leq \E\sbr{  4L \cdot\sum_{t=1}^{T} \sum_{s \neq s_L} \sum_{a\in A}\sum_{s' \in S_{k(s)+1}} B_{i(t)}(s,a,s') \rbr{ \sum_{k=0}^{k(s)-1} \sum_{(u,v,w)\in T_k} q_t(u,v)  \sqrt{ \frac{ P(w|u,v) \ln \pcons }{ \max\cbr{m_{i(t)}(u,v)  ,1}} }q_t(s,a|w)  }   }  \notag \\
& \quad + \E\sbr{ L\cdot  \sum_{t=1}^{T} \sum_{s \neq s_L} \sum_{a\in A}  r_t(s,a) }, \notag 
\end{align}
where the last line follows from the fact $B_{i(t)}(s,a)\leq 1$. 

According to the definition of the residual term in \pref{def:residual_terms}, we have 
\begin{equation*}
r_t(s,a) \geq  \sum_{s' \in S_{k(s)+1}} B_{i(t)}(s,a,s') \cdot \rbr{\sum_{k=0}^{k(s)-1} \sum_{(u,v,w)\in T_k} q_t(u,v)  \sqrt{ \frac{ P(w|u,v) \ln \pcons }{ \max\cbr{m_{i(t)}(u,v)  ,1}} } }q_t(s,a|w)
\end{equation*}
(in particular, the second summand in the definition of $r_t(s,a)$ is an upper bound of the right-hand side above).
Therefore, we have $\E\sbr{  L \cdot \sum_{t=1}^{T} \inner{ \whatq_t - q_t, B_{i(t)} } }$ further bounded by 
\begin{align}
& \E\sbr{  \rbr{ 4L + L } \cdot \sum_{t=1}^{T} \sum_{s \neq s_L  }\sum_{a \in A} r_t(s,a) } \leq \order\rbr{   L^3|S|^3|A|^2\ln^2\pcons +\delta \cdot L|S||A| T   }
= \otil\rbr{   L^3|S|^3|A|^2}
 \label{eq:bobw_fullinfo_adv_term3} 
\end{align}
where the last inequality uses the expectation bound of residual terms in \pref{lem:residual_term_property}.

Combining all bounds yields 
\begin{align*}
\E\sbr{ \textsc{Err}_1 } = \otil\rbr{ L|S|\sqrt{|A|T} + L^3|S|^3|A|} .
\end{align*}

\paragraph{$\textsc{Err}_2$} According to \pref{lem:full_info_optimism}, \pref{lem:exp_high_prob_bound}, and the fact $\abr{\whatV_t^\pi(s)}\leq L^2$, we have 
\[
\E\sbr{ \textsc{Err}_2} = 
\E\sbr{\sum_{t=1}^{T}\widehat{V}^{\pi}_t(s_0) - V^{\pi}_t(s_0)} \leq  L^2T \Pr[\calA^c]\leq 4L^2T \delta = \otil(1).
\]

\paragraph{$\textsc{EstReg}$} By \pref{lem:bobw_fullinfo_shannon_reg},  we have $\E\sbr{ \textsc{EstReg}}$ bounded as 
\begin{align*}
& \E\sbr{\sum_{i = 1}^{N} \sum_{t=t_i}^{t_{i+1}-1}\inner{ \whatq_t -  q^{\bar{P}_i, \optpi} , \hatl_t}} \leq \E\sbr{\sum_{i = 1}^{N} \EReg_i } \\
& \leq \E\sbr{ \otil\rbr{ \sum_{i = 1}^{N} \sqrt{  L \sum_{t=t_i}^{t_{i+1}-1}   \sum_{s \in S } \sum_{a \in A } \whatq_t(s,a) \hatl_t(s,a)^2 } +  L^3} } \\
& \leq \otil\rbr{   \sqrt{ \E\sbr{  L|S||A|  \sum_{t=1}^{T}   \sum_{s \in S } \sum_{a \in A } \whatq_t(s,a)\hatl_t(s,a)^2 }} + L^3|S||A|   }
\end{align*} 
where the last line follows from the fact $N\leq 4 |S||A|\rbr{ \log T + 1}$ according to \pref{lem:bobw_N_bound} and uses 	Cauchy-Schwarz inequality.

Next, we continue to bound the following key term:
\begin{align*}
& \E\sbr{ \sum_{t=1}^{T}  \sum_{s \in S } \sum_{a \in A } \whatq_t(s,a)\hatl_t(s,a)^2}  \\ 
& = \E\sbr{ \sum_{t=1}^{T} \sum_{s \in S } \sum_{a \in A } \whatq_t(s,a)\rbr{\ell_t(s,a) - L\cdot B_{i(t)}(s,a)}^2}  \\ 
& \leq 2 \cdot \E\sbr{ \sum_{t=1}^{T} \sum_{s \in S } \sum_{a \in A } \whatq_t(s,a)\rbr{ \ell_t(s,a)^2 + L^2 \cdot B_{i(t)}(s,a)^2} } \\
& \leq 2 LT + 2 L^2  \cdot \E\sbr{ \sum_{t=1}^{T} \inner{ \whatq_t, B_{i(t)}} } \\
& =  2 LT + 2L  \cdot \rbr{  L \cdot \E\sbr{ \sum_{t=1}^{T} \inner{ \whatq_t - q_t, B_{i(t)}} }  + L \cdot \E\sbr{ \sum_{t=1}^{T} \inner{ q_t, B_{i(t)}} } }, 
\end{align*}
where the third line uses $\rbr{x+y}^2\leq 2\rbr{x^2 +y^2}$ and the fourth line uses $B_{i(t)}(s,a)\leq 1$. 
Moreover, in the previous analysis of the term $\textsc{Err}_1$, we bound the terms in the bracket with 
\begin{align*}
& \E\sbr{ L \cdot \sum_{t=1}^{T} \inner{ q_t, B_{i(t)}} } \leq  \otil\rbr{  L|S|\sqrt{|A|T } + L|S|^2|A|  },  \tag{from  \pref{eq:bobw_fullinfo_adv_term2}} \\
& \E\sbr{ L \cdot \sum_{t=1}^{T} \inner{ \whatq_t - q_t, B_{i(t)}}  } \leq  \otil\rbr{    L^3|S|^3|A|^2 }. \tag{from \pref{eq:bobw_fullinfo_adv_term3}}
\end{align*} 

Therefore, we have
\[
\E\sbr{ \sum_{t=1}^{T}  \sum_{s \in S } \sum_{a \in A } \whatq_t(s,a)\hatl_t(s,a)^2}
 = \otil\rbr{LT + L|S|\sqrt{|A|T } + L^3|S|^3|A|^2 }
 = \otil\rbr{LT + L^3|S|^3|A|^2 },
\]
which further proves
\[
\E[\textsc{EstReg}] = \otil\rbr{L\sqrt{|S||A|T} + L^2|S|^2|A|^{\frac{3}{2}} + L^3|S||A| }.
\]

\subsection{Proof for the Stochastic World (\pref{prop:bobw_fullinfo_stoc_lem})}
\label{app:bobw_fullinfo_stoc_proof}

As discussed in \pref{sec:analysis}, we decompose $\textsc{Err}_1 + \textsc{Err}_2$ as (see \pref{col:general_perf_decomp}):
\begin{align*}
\textsc{Err}_1 + \textsc{Err}_2 &  =  \sum_{t=1}^{T}\sum_{s \neq s_L} \sum_{a \neq \pi^{\star}(s)} q_t(s,a)\widehat{E}^{\pi^{\star}}_t(s,a)  \tag{\textsc{ErrSub}}\\  
& \quad + \sum_{t=1}^{T}\sum_{s \neq s_L} \sum_{a = \pi^{\star}(s) } \rbr{ q_t(s,a) - q_t^\star(s,a) } \widehat{E}^{\pi^{\star}}_t(s,a)  \tag{\textsc{ErrOpt}} \\
& \quad + \sum_{t=1}^{T}\sum_{s \neq s_L} \sum_{a \in A} \rbr{ q_t(s,a) - \whatq_t(s,a)} \rbr{\whatQ^{\pi^{\star}}_t(s,a) -\whatV^{\pi^{\star}}_t(s)} \tag{\textsc{OccDiff}}  \\
& \quad + \sum_{t=1}^{T}\sum_{s \neq s_L} \sum_{a\neq \pi^{\star}(s)}  q_t^\star(s,a) \rbr{\whatV^{\pi^{\star}}_t(s) - V^{\pi^\star}_t(s) } \tag{\textsc{Bias}} 
\end{align*}
where $\widehat{E}_t^\pi(s,a)$ is defined as: 
\[
\widehat{E}_t^\pi(s,a) = \ell_t(s,a) + \sum_{s' \in S_{k(s)+1}} P(s'|s,a) \whatV_t^\pi(s') - \whatQ_t^\pi(s,a).
\]
Then, we proceed to bound each of the five terms: $\textsc{ErrSub}$, $\textsc{ErrOpt}$, $\textsc{OccDiff}$, $\textsc{Bias}$, and $\textsc{EstReg}$. 

\paragraph{$\textsc{ErrSub}$} 
Conditioning on $\calA$, we know that 
\begin{align*}
\widehat{E}^{\pi^{\star}}_t(s,a) & = LB_{i(t)}(s,a) + \sum_{s' \in S_{k(s)+1}} \rbr{ P(s'|s,a) - \bar{P}_{i(t)}(s'|s,a) } \whatV_t^{\pi^\star}(s') \\
& \leq LB_{i(t)}(s,a) + L^2 \cdot \sum_{s' \in S_{k(s)+1}} B_{i(t)}(s,a,s') \\
& \leq 4L^2 \cdot \sum_{s' \in S_{k(s)+1}} \rbr{ \sqrt{ \frac{\bar{P}_{i(t)}(s'|s,a) \ln \pcons }{ \max\cbr{m_{i(t)}(s,a),1}     }  } + \frac{7 \ln\pcons }{ 3\max\cbr{m_{i(t)}(s,a),1}} } \\
& \leq 4L^2 \rbr{ \sqrt{ \frac{|S| \ln \pcons }{ \max\cbr{m_{i(t)}(s,a),1}     }  } + \frac{7|S| \ln\pcons }{ 3\max\cbr{m_{i(t)}(s,a),1}} }, 
\end{align*}
where the second line follows from the event $\calA$ and the fact $\abr{\whatV_t^\pi(s) }\leq L^2$, and the last line applies the Cauchy-Schwarz inequality. 

Therefore, under event $\calA$, $\textsc{ErrSub}$ can be bounded as:
\begin{align*}
\textsc{ErrSub} &  \leq \sum_{t=1}^{T}\sum_{s \neq s_L} \sum_{a \neq \pi^{\star}(s)} q_t(s,a) \cdot 4L^2 \rbr{ \sqrt{ \frac{|S| \ln \pcons }{ \max\cbr{m_{i(t)}(s,a),1}     }  } + \frac{7|S| \ln\pcons }{ 3\max\cbr{m_{i(t)}(s,a),1}} } \\
& \leq 4 \selfterm_1\rbr{ L^4|S| \ln \pcons } + \frac{28|S|L^2 \ln\pcons }{3}\sum_{t=1}^{T}\sum_{s \neq s_L}\sum_{a\in A}\frac{q_t(s,a)   }{ 3\max\cbr{m_{i(t)}(s,a),1}},
\end{align*}
where the second line follows from the definition of $\selfterm_1(\cdot)$ in \pref{def:self_bounding_terms}.

With the help of \pref{lem:exp_high_prob_bound} and the fact $\abr{\textsc{ErrSub}} \leq L^3 T$, we have 
\begin{align}
\E\sbr{ \textsc{ErrSub} } & \leq \order\rbr{ L^3T \delta + \E\sbr{\selfterm_1\rbr{ L^4|S| \ln \pcons }} }  + \E\sbr{  \frac{28|S|L^2 \ln \pcons }{3}\sum_{t=1}^{T}\sum_{s \neq s_L}\sum_{a\in A}\frac{q_t(s,a)   }{ 3\max\cbr{m_{i(t)}(s,a),1}}   } \notag \\
& =  \order\rbr{\E\sbr{\selfterm_1\rbr{ L^4|S| \ln \pcons }} + L^2|S|^2|A| \ln^2\pcons }, \label{eq:fullinfo_errsub_bound}
\end{align}
where the last line uses \pref{lem:aux_exp}.

\paragraph{$\textsc{ErrOpt}$} By the similar arguments above, we have $\textsc{ErrOpt}$ bounded by the following given event $\calA$: 
\begin{align*}
\textsc{ErrOpt} &  \leq \sum_{t=1}^{T}\sum_{s \neq s_L} \sum_{a = \pi^{\star}(s)} ( q_t(s,a)  - q_t^\star(s,a) )\cdot 4L^2 \rbr{ \sqrt{ \frac{|S| \ln \pcons }{ \max\cbr{m_{i(t)}(s,a),1}     }  } + \frac{7|S| \ln\pcons }{ 3\max\cbr{m_{i(t)}(s,a),1}} }.
\end{align*}

Using the definition of $\selfterm_2(\cdot)$ in \pref{def:self_bounding_terms} and \pref{lem:exp_high_prob_bound}, we have 
\begin{align}
\E\sbr{ \textsc{ErrOpt} } & \leq \order\rbr{ L^3T \delta + \E\sbr{\selfterm_2\rbr{ L^4|S| \ln \pcons }} }  + \E\sbr{  \frac{28|S|L^2 \ln \pcons }{3}\sum_{t=1}^{T}\sum_{s \neq s_L}\sum_{a\in A}\frac{q_t(s,a)   }{ 3\max\cbr{m_{i(t)}(s,a),1}}   } \notag \\
& =  \order\rbr{ \E\sbr{\selfterm_2\rbr{ L^4|S| \ln \pcons }} + L^2|S|^2|A| \ln^2\pcons  }. \label{eq:fullinfo_erropt_bound}
\end{align}

\paragraph{$\textsc{OccDiff}$}  First, we have 
\begin{align*}
\textsc{OccDiff} & = \sum_{t=1}^{T}\sum_{s \neq s_L} \sum_{a \in A} \rbr{ q_t(s,a) - \whatq_t(s,a)} \rbr{\whatQ^{\pi^{\star}}_t(s,a) -\whatV^{\pi^{\star}}_t(s)} \\
& = \sum_{t=1}^{T}\sum_{s \neq s_L} \sum_{a \neq \pi^\star(s) } \rbr{ q_t(s,a) - \whatq_t(s,a)} \rbr{\whatQ^{\pi^{\star}}_t(s,a) -\whatV^{\pi^{\star}}_t(s)}  \\
& \leq 2L^2 \sum_{t=1}^{T}\sum_{s \neq s_L} \sum_{a \neq \pi^{\star}(s)} \abr{ q_t(s,a) - \whatq_t(s,a)},
\end{align*} 
where the second line follows from the fact $\whatV^{\pi^{\star}}_t(s) = \whatQ^{\pi^{\star}}_t(s,a)$ for all state-action pairs $(s,a)$ satisfying $a = \pi^\star(s)$, and the last line uses the fact $\whatQ^{\pi^{\star}}_t(s,a) - \whatV^{\pi^{\star}}_t(s) \leq 2L^2$ for all state-action pairs.
With the help of the residual terms in \pref{def:residual_terms} and \pref{lem:residual_term_property},  we further bound $\textsc{OccDiff}$ as 
\begin{equation}
\begin{aligned}
& 2L^2 \sum_{t=1}^{T}\sum_{s \neq s_L} \sum_{a \neq \pi^{\star}(s)} \abr{q_t(s,a) - \whatq_t(s,a)} \\
& \leq 2L^2 \sum_{t=1}^{T}\sum_{s \neq s_L} \sum_{a \neq \pi^{\star}(s)} r_t(s,a)  \\
& \quad +  8L^2  \sum_{t=1}^{T}\sum_{s \neq s_L} \sum_{a \neq \pi^{\star}(s)} \sum_{k=0}^{k(s)-1} \sum_{(u,v,w)\in T_k} q_t(u,v)  \sqrt{ \frac{ P(w|u,v) \ln \pcons }{ \max\cbr{m_{i(t)}(u,v)  ,1}} }q_t(s,a|w) \\
&= \order\rbr{L^4|S|^3|A|^2 \ln^2 \pcons + L^2|S||A| T \cdot \delta  +\selfterm_3(L^4\ln \pcons)}
\end{aligned}
\label{eq:fullinfo_occdiff_step}
\end{equation}
where the last line is by the definition of $\selfterm_3(\cdot)$ in \pref{def:self_bounding_terms}.
Therefore, we conclude 
\begin{align}
\E\sbr{\textsc{OccDiff}} & \leq \order\rbr{ L^4|S|^3|A|^2 \ln^2 \pcons +  \E\sbr{ \selfterm_3(L^4\ln \pcons) }}. \label{eq:fullinfo_occdiff_bound}
\end{align}

\paragraph{$\textsc{Bias}$}  Conditioning on the event $\calA$,  $\textsc{Bias}$ is nonpositive due to \pref{lem:full_info_optimism}.
Then, by \pref{lem:exp_high_prob_bound}, we bound the expectation of $\textsc{Bias}$ by
\begin{equation}
\E\sbr{ \textsc{Bias} } \leq 0 + \E\sbr{ \Ind{\calA^c} } \cdot  L^3T=  \order\rbr{1} .   \label{eq:fullinfo_bias_bound}
\end{equation}

\paragraph{$\textsc{EstReg}$} By the analysis of estimated regret in \pref{lem:bobw_fullinfo_shannon_reg}, we have $\E\sbr{\textsc{EstReg}}$ bounded by 
(with $C_{\textsc{EstReg}} =  L^5 |S||A|\ln T\ln \rbr{|S||A|}$)
\begin{align*}
&  \order\rbr{ \E\sbr{ \sum_{i = 1}^{N} \sqrt{  L^5 \ln \rbr{|S||A|}\cdot \sum_{t=t_i}^{t_{i+1}-1}   \sum_{s \in S } \sum_{a \neq \pi^\star(s) } \whatq_t(s,a)  } +  L^3\ln (|S||A|)}  } \\
& \leq \order\rbr{   \E\sbr{ \sqrt{ C_{\textsc{EstReg}} \cdot \sum_{t=1}^{T}   \sum_{s \in S } \sum_{a \neq \pi^\star(s) } \whatq_t(s,a)  }} + L^3|S||A|\ln T \ln (|S||A|) } \\
& \leq \order\rbr{   \E\sbr{ \sqrt{ C_{\textsc{EstReg}} \cdot \sum_{t=1}^{T}   \sum_{s \in S } \sum_{a \neq \pi^\star(s) } q_t(s,a)  }} +  L^3|S||A|\ln T \ln (|S||A|)  } \\
& \quad + \order\rbr{ \E\sbr{ \sqrt{ C_{\textsc{EstReg}} \cdot \sum_{t=1}^{T}   \sum_{s \in S } \sum_{a \neq \pi^\star(s) } \abr{ \whatq_t(s,a) - q_t(s,a)}  }} } \\
& \leq \order\Biggrbr{ \E\sbr{ \selfterm_4(L^5 |S||A|\ln T\ln \rbr{|S||A|})  } +  L^5 |S||A|\ln T\ln \rbr{|S||A|} } \\
& \quad + \order\rbr{ \E\sbr{ \sum_{t=1}^{T}   \sum_{s \in S } \sum_{a \neq \pi^\star(s) } \abr{ \whatq_t(s,a) - q_t(s,a)}  } }
\end{align*} 
where the second line uses the Cauchy-Schwarz inequality and the fact $N\leq 4 |S||A|\rbr{ \log T + 1}$ according to \pref{lem:bobw_N_bound}; the third line uses the fact that $\sqrt{x} \leq \sqrt{y}+ \sqrt{\abr{x-y}}$ for $x,y>0$; the last line uses the definition of $\selfterm_4(\cdot)$ in \pref{def:self_bounding_terms} and the AM-GM inequality.

Note that in the analysis of $\textsc{OccDiff}$ (see \pref{eq:fullinfo_occdiff_step}), we have already shown that 
\begin{align}
\sum_{t=1}^{T}\sum_{s \neq s_L} \sum_{a \neq \pi^{\star}(s)} \abr{q_t(s,a) - \whatq_t(s,a)} 
= \order\rbr{ L^2|S|^3|A|^2 \ln^2 \pcons +  \E\sbr{ \selfterm_3(\ln \pcons) }}. 
\label{eq:q_difference}
\end{align}

Combining everything, we have $\E\sbr{\textsc{EstReg}}$  bounded by:
\begin{equation}
\begin{aligned}
\order\Bigrbr{  \E\sbr{ \selfterm_4(L^5 |S||A|\ln T\ln \rbr{|S||A|}) + \selfterm_3(\ln \pcons) } + L^3|S|^3|A|^2 \ln^2 \pcons }.
\end{aligned}
\label{eq:fullinfo_estreg_bound}
\end{equation}

Finally, combining everything we have shown that \pref{alg:bobw_framework} ensures the following regret bound for $\Reg_T(\pi^\star)$:
\begin{align*}
&\order\rbr{  \E\sbr{ \selfterm_1\rbr{ L^4|S| \ln\pcons  } } } \tag{from \pref{eq:fullinfo_errsub_bound} for  \textsc{ErrSub} }  \\
& \;\; +\order\rbr{ \E\sbr{ \selfterm_2\rbr{ L^4|S| \ln \pcons  } } }  \tag{from \pref{eq:fullinfo_erropt_bound} for \textsc{ErrOpt} }    \\
& \;\; + \order\rbr{ \E\sbr{  \selfterm_3\rbr{ L^4 \ln\pcons } } }  \tag{from \pref{eq:fullinfo_occdiff_bound} for \textsc{OccDiff}  }  \\
& \;\; +\order\rbr{  \E\sbr{ \selfterm_4\rbr{ L^5|S||A|  \ln \rbr{|S||A|} \ln T } } }   \tag{from \pref{eq:fullinfo_estreg_bound}  for \textsc{EstReg}  }   \\
& \;\; + \order\rbr{L^4 |S|^3 |A|^2  \ln^2\pcons }. 
\end{align*}

Now suppose that Condition~\eqref{eq:loss_condition} holds. For some universal constant $\kappa > 0$, $\Reg_T(\pi^\star)$ is bounded as 
\begin{align*}
\Reg_T(\pi^\star) & \leq  \kappa \cdot \rbr{ \E\sbr{ \selfterm_1\rbr{ L^4|S| \ln\pcons  } }  +  \E\sbr{ \selfterm_2\rbr{ L^4|S| \ln \pcons  } }  + \E\sbr{  \selfterm_3\rbr{ L^4 \ln\pcons } } } \\
& \;\;  + \kappa \cdot \rbr{ \E\sbr{ \selfterm_4\rbr{ L^5|S||A|  \ln \rbr{|S||A|} \ln T }    } } 
+ \kappa \cdot \rbr{L^4 |S|^3 |A|^2  \ln^2\pcons }.
\end{align*}

For any $z>0$, by \pref{lem:self_bounding_term_1}, \pref{lem:self_bounding_term_2}, \pref{lem:self_bounding_term_3} and \pref{lem:self_bounding_term_4} with $\alpha = \beta = \frac{1}{12z\kappa}$  we have 
\begin{align*}
\Reg_T(\pi^\star) & \leq \frac{\Reg_T(\pi^\star) + C}{z} \\
& \;\;  + 12z \cdot  \rbr{  \sum_{s \neq s_L} \sum_{ a\neq \pi^\star(s)} \frac{8\kappa^2}{\gap(s,a)} } \cdot \bigg(  L^4|S| \ln\pcons  +  L^6|S|  \ln\pcons \big)  \\
& \;\;  + 12z \cdot \rbr{ \frac{\kappa^2}{\gapmin} } \cdot \bigg( 8L^5|S|\ln \pcons  +  8L^6|S|^2  \ln\pcons + \frac{L^5|S||A|  \ln \rbr{|S||A|} \ln T}{4}    \bigg) \\
& \;\;  + \kappa \cdot \rbr{L^4 |S|^3 |A|^2  \ln^2\pcons} \\
&\leq  \frac{\Reg_T(\pi^\star) + C}{z} +  288z \kappa^2 \cdot U +  2\kappa \cdot V,
\end{align*}
where the last line uses the shorthands $U$ and $V$ defined in \pref{prop:bobw_fullinfo_stoc_lem}. 

Rearranging the terms arrive at:
\begin{align*}
\Reg_T(\pi^\star)  & \leq \frac{C}{z-1} + \frac{z^2}{z-1} \cdot 288 \kappa^2 U + \frac{z}{z-1} \cdot 2\kappa \cdot V \\
& = \frac{C}{x} + \frac{(x+1)^2}{x} \cdot 288 \kappa^2 U +  \frac{x+1}{x}  \cdot 2\kappa \cdot V  \\
& = \frac{1}{x} \cdot \rbr{ C + 288 \kappa^2 U  +  2\kappa \cdot V   }  + x \cdot  288 \kappa^2 U  + 2\kappa \cdot V + 576 \kappa^2 U
\end{align*}
where we replace all $z$'s by $x = z - 1 > 0$ in the second line. Finally, by selecting the optimal $x$ to balance the first two terms, we have 
\begin{align*}
\Reg_T(\pi^\star) & \leq 2 \sqrt{ \rbr{ C + 288 \kappa^2 U  +  2\kappa \cdot V   } \cdot 288 \kappa^2 U   } + 2\kappa V  + 576 \kappa^2 U \\
& = \order\rbr{ U + \sqrt{ UC } + V    },
\end{align*}
finishing the entire proof for \pref{prop:bobw_fullinfo_stoc_lem}.

\newpage
\section{Best of Both Worlds for MDPs with Unknown Transition and Bandit Feedback}
\label{app:bobw_unknown_transition_bandit}

In this section, we prove the best of both worlds results for the bandit setting with unknown transition. We present the bound for the adversarial world in \pref{prop:bobw_bandit_adv_prop}, and that for the stochastic world in \pref{prop:bobw_bandit_stoc_prop}.
Together, they prove \pref{thm:main_bandit}. 

\begin{proposition}\label{prop:bobw_bandit_adv_prop} With $\delta = \frac{1}{T^3}$, \pref{alg:bobw_framework} ensures 
	\begin{align*}
	\Reg_T(\optpi)= \otil\rbr{ \rbr{  L + \sqrt{A} } |S|\sqrt{|A|T} }.
	\end{align*}
\end{proposition}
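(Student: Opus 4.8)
The plan is to mimic the full‑information argument of \pref{prop:bobw_fullinfo_adv_lem}: I start from the decomposition $\Reg_T(\optpi) = \E[\textsc{Err}_1 + \textsc{EstReg} + \textsc{Err}_2]$ of \pref{eq:regret_basic_decomp} with benchmark $\pi=\optpi$ and bound the three terms separately. The only two changes relative to the full‑information case are: (i) the loss estimator in \pref{eq:adjusted_loss} is now importance‑weighted by the upper occupancy measure $u_t$ of \pref{eq:UOB} rather than exact, so everywhere one must trade between the three occupancy measures $q_t = q^{P,\pi_t}$, $\whatq_t = q^{\bar P_{i(t)},\pi_t}$, and $u_t$; and (ii) \ftrl now uses the Tsallis‑plus‑log‑barrier regularizer \pref{eq:Tsallis}, so $\textsc{EstReg}$ is controlled by the Tsallis‑entropy analysis of \pref{thm:known_transition_tsallis_entropy_main} (equivalently \citep[Lemma~5]{jin2020simultaneously}) applied epoch by epoch instead of the Shannon‑entropy one.

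For $\textsc{Err}_2$ I would argue as follows. On the event $\calA$ that $P\in\calP_i$ for every $i$ (probability at least $1-4\delta$ by \citep[Lemma~2]{jin2019learning}) we have $q_t(s,a)\le u_t(s,a)$, hence $\E_t[\hatl_t(s,a)] = \tfrac{q_t(s,a)}{u_t(s,a)}\ell_t(s,a) - L B_{i(t)}(s,a) \le \ell_t(s,a) - L B_{i(t)}(s,a)$; feeding these expected adjusted losses into the backward induction of \pref{lem:full_info_optimism} gives $\E_t[\whatV_t^{\optpi}(s_0)]\le V_t^{\optpi}(s_0)$, so $\textsc{Err}_2$ is nonpositive in expectation on $\calA$. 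On $\calA^c$ I use only the crude estimate $|\whatV_t^{\optpi}(s_0)| = |\langle q^{\bar P_{i(t)},\optpi},\hatl_t\rangle| \le L\,\|\hatl_t\|_\infty = \poly(T)$, using that the log‑barrier component $\beta=128L^4$ in \pref{eq:Tsallis} keeps every $\whatq_t(s,a)$, hence every $u_t(s,a)$, bounded below by $1/\poly(T)$. This is precisely why $\delta$ is taken to be $\tfrac1{T^3}$ here (rather than $\tfrac1{T^2}$ as in \pref{prop:bobw_fullinfo_adv_lem}): it makes $T\cdot\poly(T)\cdot\delta = \otil(1)$, so $\E[\textsc{Err}_2]=\otil(1)$.

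For $\textsc{Err}_1 = \sum_t\langle q_t,\ell_t\rangle - \langle\whatq_t,\hatl_t\rangle$, taking conditional expectations turns $\E[\textsc{Err}_1]$ into $\E\big[\sum_t\sum_{s,a} q_t(s,a)\big(1-\tfrac{\whatq_t(s,a)}{u_t(s,a)}\big)\ell_t(s,a) + L\sum_t\langle\whatq_t, B_{i(t)}\rangle\big]$; on $\calA$ the first summand is nonnegative and, using $\whatq_t\le u_t$, at most $\sum_t\sum_{s,a}\big((u_t(s,a)-q_t(s,a)) + |q_t(s,a)-\whatq_t(s,a)|\big)$, which the Bernstein confidence widths bound through the residual‑term machinery (\pref{def:residual_terms}, \pref{lem:residual_term_property}) by $\otil(L|S|\sqrt{|A|T})$ up to lower‑order terms, exactly as in \citep{jin2019learning} and in the $\textsc{Err}_1$ bound of \pref{prop:bobw_fullinfo_adv_lem}; the bonus term $L\sum_t\langle\whatq_t,B_{i(t)}\rangle$ I bound as in \pref{eq:bobw_fullinfo_adv_term2} after writing $\whatq_t\le q_t+|q_t-\whatq_t|$. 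For $\textsc{EstReg}=\sum_i\EReg_i$, within each epoch $i$ \pref{alg:bobw_framework} runs \ftrl over $\Omega(\bar P_i)$ with \pref{eq:Tsallis} and $\eta_t = 1/\sqrt{t-t_i+1}$, so the decomposition of \pref{thm:known_transition_tsallis_entropy_main} bounds $\EReg_i$ by a penalty term plus a stability term plus an $\otil(L|S||A|)$ term; I then bound the penalty via $-\phi_H(\whatq_t)\lesssim\sqrt{L|S||A|}$ and the stability via $\E_t[\hatl_t(s,a)^2]\lesssim \tfrac{q_t(s,a)}{u_t(s,a)^2} + L^2 B_{i(t)}(s,a)^2$ together with $\whatq_t\le u_t$ and $q_t\le u_t$ (the ordering lets one drop to $\sum_{s,a}\sqrt{\whatq_t(s,a)}$, as in \pref{lem:loss_shifting_loss_estimator}), and finally sum over the $N=\otil(|S||A|)$ epochs of \pref{lem:bobw_N_bound} using $\sum_i\sqrt{T_i}\le\sqrt{NT}$ and a per‑layer Cauchy–Schwarz step to obtain the $\otil(\sqrt{|A|}\,|S|\sqrt{|A|T})$ contribution. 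Adding the three bounds and absorbing the polynomial lower‑order additive terms into $\otil(\cdot)$ yields $\Reg_T(\optpi) = \otil((L+\sqrt{|A|})|S|\sqrt{|A|T})$.

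\textbf{Main obstacle.} I expect the bulk of the work to be the simultaneous, careful control of $q_t$, $\whatq_t$ and $u_t$: every appearance of $\hatl_t$ or $B_{i(t)}$ forces a trade between these three quantities via the Bernstein confidence widths and the residual‑term/epoch bookkeeping of \citep{jin2019learning}, and one must verify the orderings $\whatq_t(s,a)\le u_t(s,a)$ and (on $\calA$) $q_t(s,a)\le u_t(s,a)$ that keep the importance weights well behaved; getting the epoch‑sum factor $\sum_i\sqrt{T_i}\le\sqrt{NT}$ to land on the stated $(L+\sqrt{|A|})|S|\sqrt{|A|T}$ order is the one place calling for a slightly careful per‑layer estimate. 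In contrast, the \ftrl component is \pref{thm:known_transition_tsallis_entropy_main} rerun epoch by epoch, and the optimism argument for $\textsc{Err}_2$ is a verbatim re‑use of \pref{lem:full_info_optimism} on expected losses, so neither should present real difficulty — consistent with the paper's remark that the adversarial bound is relatively straightforward.
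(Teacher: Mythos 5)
Your decomposition, the treatment of $\textsc{Err}_1$, the optimism argument for $\textsc{Err}_2$ on the good event $\calA$, and the epoch-by-epoch Tsallis-entropy control of $\textsc{EstReg}$ all match the paper's proof of \pref{prop:bobw_bandit_adv_prop} closely; the paper introduces the auxiliary notation $\wtill_t = \E_t[\hatl_t]$ and the associated value functions $\wtilQ_t^\pi, \wtilV_t^\pi$ precisely to carry out the conditional-expectation step you describe, and \pref{lem:bobw_bandit_tsallis_reg}, \pref{lem:bobw_tasallis_bound_extra}, \pref{lem:bobw_lr_properties} encapsulate the per-epoch stability/penalty analysis and epoch bookkeeping you sketch.

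The one place you diverge is how you control the value-function range on the bad event $\calA^c$: you lower-bound $u_t(s,a)$ by $1/\poly(T)$ via the chain ``log-barrier lower-bounds $\whatq_t$, hence $u_t\ge\whatq_t$.'' This is subtler than it looks, because the FTRL minimizer's lower bound on $\whatq_t(s,a)$ coming from the log-barrier is only as strong as an upper bound on the magnitude of the cumulative adjusted losses $\sum_{\tau<t}\hatl_\tau(s,a)$, and bounding $|\hatl_\tau(s,a)|$ in turn requires a lower bound on $u_\tau(s,a)$ — so your argument is a bootstrap (induct forward over $\tau=t_i,t_i+1,\dots$, starting from the pure-regularizer minimizer) rather than a one-shot bound, and you should spell it out as such. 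The paper sidesteps this entirely with \pref{lem:lower_bound_uob}: it exhibits an explicit transition $\widehat P_{i(t)}$ in the confidence set $\calP_{i(t)}$ (a $\tfrac1t$-uniform mixture with $\bar P_{i(t)}$, which lies in $\calP_{i(t)}$ because $m_{i(t)}(s,a)\le t$ makes the Bernstein width $\ge 1/t$) whose occupancy measure is $\ge\tfrac{1}{|S|t}$ at every state, giving $u_t(s)\ge\tfrac{1}{|S|t}$ deterministically and unconditionally. That directly yields $|\hatl_t(s,a)|\lesssim|S|t/q_t(s,a)$ and $|\wtilQ_t^\pi|\lesssim L|S|t$ (\pref{col:bandit_hatl_bound}, \pref{lem:bound_on_wtill}, \pref{col:bandit_est_Q_bound}), which is exactly what the $\delta=1/T^3$ choice is calibrated against. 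Your route can be made to work but the paper's is cleaner and avoids the bootstrap; I'd recommend either adopting \pref{lem:lower_bound_uob} or explicitly writing out the forward induction if you keep the log-barrier argument.
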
 
\begin{proposition}
	Suppose Condition~\eqref{eq:loss_condition} holds. With $\delta = \frac{1}{T^3}$, \pref{alg:bobw_framework} ensures that $\Reg_T(\pi^\star)$ is bounded by $\order\rbr{U + \sqrt{CU} + V }$   
	where $V =  L^6|S|^3|A|^3 \ln^2 T$ and $U$ is defined as
	\[
	U =  \sum_{s \neq s_L} \sum_{ a\neq \pi^\star(s)} \sbr{ \frac{L^6|S|\ln T +  L^4|S||A|\ln^2 T}{\gap(s,a)}} + \sbr{\frac{L^6|S|^2 \ln T +  L^3|S|^2|A| \ln^2 T}{\gapmin}}.
	\]
	\label{prop:bobw_bandit_stoc_prop}
\end{proposition}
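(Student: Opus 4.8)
The plan is to mirror, term by term, the full-information argument behind \pref{prop:bobw_fullinfo_stoc_lem}, replacing the Shannon-entropy ingredients with their Tsallis-entropy/log-barrier counterparts and the full-information adjusted loss with the bandit estimator $\hatl_t(s,a) = \ell_t(s,a)\Indt{s,a}/u_t(s,a) - L\,B_{i(t)}(s,a)$. Fix $\pi = \pi^\star$ and start from \pref{eq:regret_basic_decomp}, $\Reg_T(\pi^\star) = \E[\textsc{Err}_1 + \textsc{EstReg} + \textsc{Err}_2]$, where $\whatV_t^\pi,\whatQ_t^\pi$ are now built from $\bar P_{i(t)}$ and the random $\hatl_t$. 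Then apply the \emph{same} joint decomposition $\textsc{Err}_1 + \textsc{Err}_2 = \textsc{ErrSub} + \textsc{ErrOpt} + \textsc{OccDiff} + \textsc{Bias}$ (an algebraic identity valid for any loss sequence, hence for $\hatl_t$) with the surplus function $\widehat E_t^{\pi^\star}(s,a) = \ell_t(s,a) + \sum_{s'}P(s'|s,a)\whatV_t^{\pi^\star}(s') - \whatQ_t^{\pi^\star}(s,a)$. The objective is to bound each of $\E[\textsc{ErrSub}]$, $\E[\textsc{ErrOpt}]$, $\E[\textsc{OccDiff}]$, $\E[\textsc{EstReg}]$ by an $\order(\selfterm_j(\cdot))$-type quantity (with arguments inflated by an extra $|A|$ and extra powers of $L$ relative to the full-information case), bound $\E[\textsc{Err}_2]$ and $\E[\textsc{Bias}]$ by $\otil(1)$, and then close the loop with the self-bounding property \pref{lem:main_text_self_bounding} (and its bandit counterparts).

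Three bandit-specific adaptations are required. First, a bandit version of the variance lemma \pref{lem:loss_shifting_loss_estimator}: on the good event $\calA$ one has $\whatq_t = q^{\bar P_{i(t)},\pi_t} \le u_t$ and $q_t \le u_t$ pointwise (since $\bar P_{i(t)},P \in \calP_{i(t)}$), and since $\whatq_t(s,a)\,\whatq_t(s',a'|s,a) \le \whatq_t(s',a') \le u_t(s',a')$ while $\E_t[\Indt{s',a'}] = q_t(s',a') \le u_t(s',a')$, essentially the same telescoping computation gives $\whatq_t(s,a)\whatQ_t(s,a) = \order(L)$, $\whatq_t(s)\whatV_t(s)=\order(L)$, and $\E_t[(\whatQ_t(s,a)-\whatV_t(s))^2] = \order(L^2(1-\pi_t(a|s))/\whatq_t(s,a))$ (up to harmless $u_t$-vs-$\whatq_t$ slack on $\calA$). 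Feeding these into the Tsallis-entropy \ftrl argument of \pref{thm:known_transition_tsallis_entropy_main}, run per epoch over $\Omega(\bar P_{i(t)})$ with the loss-shifting invariance of \pref{col:invariant_with_ftrl}, bounds the per-epoch estimated regret $\EReg_i$ by an $\selfterm$-type quantity involving $\sum_{t=t_i}^{t_{i+1}-1}\sum_s\sum_{a\neq\pi^\star(s)}\whatq_t(s,a)$ plus the log-barrier penalty $\order(L^2|S||A|\log T)$; summing over the $\otil(|S||A|)$ epochs via Cauchy--Schwarz and converting $\whatq_t$ to $q_t$ up to the transition-estimation residual (as in \pref{eq:q_difference}) yields $\E[\textsc{EstReg}] = \order(\selfterm_4(\cdot)) + \otil(\text{poly})$. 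Second, optimism in conditional expectation replaces the pointwise \pref{lem:full_info_optimism}: since $\E_t[\hatl_t(s,a)] = \tfrac{q_t(s,a)}{u_t(s,a)}\ell_t(s,a) - L\,B_{i(t)}(s,a) \le \ell_t(s,a) - L\,B_{i(t)}(s,a)$ on $\calA$, one gets $\E_t[\whatV_t^{\pi}(s_0)] \le V_t^{\pi}(s_0)$, so $\E[\textsc{Err}_2] = \otil(1)$ and likewise $\E[\textsc{Bias}] = \otil(1)$, the $\calA^c$ contribution (of magnitude polynomial in $T$ because of the $1/u_t$ factors) being killed by $\delta = 1/T^3$. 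Third, the importance-weighting variance and the upper-occupancy gap $u_t - q_t$ now enter $\textsc{Err}_1$, $\textsc{ErrSub}$, $\textsc{ErrOpt}$ and $\textsc{OccDiff}$; these are controlled exactly as in \citep{jin2019learning} using the Bernstein widths $B_i$ and the residual terms of \pref{def:residual_terms}/\pref{lem:residual_term_property}, and the extra contributions are again of self-bounding form (the $\sum_t\sum_{s,a\neq\pi^\star(s)}q_t(s,a)/u_t(s,a)$-type terms feed into $\selfterm_1,\selfterm_2$ with an extra $|A|$ factor), which is why $U$ acquires the additional $L^4|S||A|\log^2 T/\gap$ and $L^3|S|^2|A|\log^2 T/\gapmin$ pieces and $V$ becomes $L^6|S|^3|A|^3\ln^2 T$. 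The adversarial-world bounds from \pref{prop:bobw_bandit_adv_prop} are reused for the parts of $\textsc{Err}_1$ (and the corresponding pieces of $\textsc{EstReg}$) that are not charged to the gap.

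With $\textsc{ErrSub},\textsc{ErrOpt},\textsc{OccDiff},\textsc{EstReg}$ all bounded by $\order(\selfterm_1(\cons_1)+\selfterm_2(\cons_2)+\selfterm_3(\cons_3)+\selfterm_4(\cons_4))$ for the appropriate arguments and $\textsc{Err}_2,\textsc{Bias}=\otil(1)$, I would invoke the self-bounding property with $\alpha=\beta=\Theta(1/z)$, use Condition~\pref{eq:loss_condition} to absorb a $\le\tfrac12(\Reg_T(\pi^\star)+C)$ fraction into the left-hand side, and rearrange with $x=z-1>0$ to obtain $\Reg_T(\pi^\star) \le \tfrac1x(C+\Theta(U+V)) + x\cdot\Theta(U) + \Theta(U+V)$; optimizing $x=\sqrt{(C+\Theta(U+V))/\Theta(U)}$ and using $\sqrt{a+b}\le\sqrt a+\sqrt b$ gives $\Reg_T(\pi^\star)=\order(U+\sqrt{UC}+V)$.

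I expect the main obstacle to be the $\textsc{EstReg}$ term: the loss-shifting (advantage-function) trick must survive simultaneously (i) the epoch-wise empirical transition $\bar P_{i(t)}$ in place of the true $P$ and (ii) the importance-weighted losses normalized by the \emph{upper} occupancy measure $u_t$ rather than $q_t$. The delicate point is that the loss-shifting invariance \pref{col:invariant_with_ftrl} holds only on the decision set $\Omega(\bar P_{i(t)})$, so the "advantage" losses $\whatQ_t-\whatV_t$ must be analyzed with magnitude and conditional variance expressed through $\whatq_t$ and $u_t$, whereas the sampling randomness satisfies $\E_t[\Indt{}]=q_t$ under the true $P$; reconciling these requires threading the chain $\whatq_t\le u_t$ and $q_t\le u_t$ (valid only on $\calA$) through every estimate and checking that the resulting per-epoch bound, once summed over epochs and converted back to $q_t$, is genuinely charged to $\sum_{a\neq\pi^\star(s)}q_t(s,a)$ rather than to an uncontrolled $\sqrt T$. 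A secondary difficulty is verifying that all transition-estimation residuals (the UOB gap $u_t-q_t$, the Bernstein sums, the residual terms) entering $\textsc{ErrSub},\textsc{ErrOpt},\textsc{OccDiff}$ are truly of self-bounding form and not $\Theta(\sqrt T)$ --- precisely what the $\widehat E_t^{\pi^\star}$-based decomposition borrowed from \citep{simc2019} delivers, but each term must be re-derived with the bandit estimator's extra $1/u_t$ variance.
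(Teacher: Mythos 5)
Your proposal matches the paper's architecture: the same joint decomposition $\textsc{Err}_1+\textsc{Err}_2 = \textsc{ErrSub}+\textsc{ErrOpt}+\textsc{OccDiff}+\textsc{Bias}$, per-epoch Tsallis-entropy \ftrl\ with loss-shifting over $\Omega(\bar{P}_{i(t)})$, control of the UOB gap and Bernstein residuals, and a final self-bounding rearrangement with $\alpha=\beta=\Theta(1/z)$. You also correctly identify the two delicate spots — that $\whatq_t\le u_t$ and $q_t\le u_t$ only hold on $\calA$, and that the advantage-loss variance must be measured against $u_t$ while the sampling randomness is governed by $q_t$.

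One wrinkle you should spell out: you build $\whatQ_t^{\pi^\star},\whatV_t^{\pi^\star}$ from the \emph{random} bandit estimator $\hatl_t$, whereas the paper defines $\wtilQ_t^{\pi^\star},\wtilV_t^{\pi^\star}$ from the \emph{conditional mean} $\wtill_t(s,a)=\E_t[\hatl_t(s,a)]=\tfrac{q_t(s,a)}{u_t(s,a)}\ell_t(s,a)-L B_{i(t)}(s,a)$. The distinction matters quantitatively: on $\calA$ one has $|\wtill_t|\le 2L$ so $|\wtilQ_t^{\pi^\star}-\wtilV_t^{\pi^\star}|\le\order(L^2)$, which is exactly what the $\textsc{OccDiff}$ bound needs (multiplying $|q_t-\whatq_t|$ by $\order(L^2)$), whereas the raw $\hatl_t$ can reach $\order(|S|t)$ (\pref{col:bandit_hatl_bound}), so $\whatQ_t^{\pi^\star}$ is not pointwise controlled. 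You can repair your version by inserting a conditional-expectation step (since $q_t-\whatq_t$ is $\calF_{t-1}$-measurable and $\pi^\star$ is fixed, $\E_t[\whatQ_t^{\pi^\star}]=\wtilQ_t^{\pi^\star}$), but then you are effectively doing what the paper does from the start, so it is cleaner to work with $\wtill_t$ throughout. A second, smaller omission: the extra bandit-specific pieces from $(u_t-q_t)/q_t$ in $\textsc{ErrSub}/\textsc{ErrOpt}$ and from the learning-rate sums in $\textsc{EstReg}$ feed into two additional self-bounding functions $\selfterm_5,\selfterm_6$ (not just inflated versions of $\selfterm_1$--$\selfterm_4$), each with its own lemma; your phrase ``and its bandit counterparts'' covers this implicitly, but those lemmas need to be stated and proved, not just assumed.
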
 
The analysis is similar to that for the full-information setting, except that we need to handle some bias terms caused by the new loss estimators. 
To this end, we denote by $\wtill_t$ the conditional expectation of $\hatl_t$, that is 
\begin{equation}
\wtill_t(s,a) = \E_t\sbr{ \hatl_t(s,a)  } = \frac{q_t(s,a) }{ u_t(s,a) } \cdot \ell_t(s,a) - L \cdot B_{i(t)}(s,a). \label{eq:bandit_mean_adj_loss_def}
\end{equation}
Then we define the following:
\begin{definition} For any policy $\pi$, the estimated  state-action and state value functions associated with $\bar{P}_{i(t)}$ and loss function $\wtill_t$ are defined as: 
	\begin{equation}
	\label{eq:bandit_est_Q_def}
	\begin{split}
	\widetilde{Q}_t^{\pi}(s,a) & = \wtill_t(s,a) + \sum_{s' \in S_{k(s)+1} } \bar{P}_{i(t)}(s'|s,a) \widetilde{V}_t^{\pi}(s'), \quad \forall (s,a) \in( S -\cbr{ s_L }) \times A,  \\
	\widetilde{V}_t^{\pi}(s) & = \sum_{a \in A} \pi(a|s) \widetilde{Q}_t^{\pi}(s,a), \quad  \forall s \in S,  \\
	\widetilde{Q}_t^{\pi}(s_L,a) & = 0, \quad  \forall a \in A .
	\end{split}
	\end{equation}
\end{definition}

On the other hand, the true state-action and value functions are again defined as:
\begin{equation}
\label{eq:bandit_true_Q_def}
\begin{split}
Q_t^{\pi}(s,a) & = \ell_t(s,a) + \sum_{s' \in S_{k(s)+1}} P(s'|s,a) V_t^{\pi}(s'), \quad \forall (s,a) \in( S -\cbr{ s_L }) \times A,  \\
V_t^{\pi}(s) & = \sum_{a \in A} \pi(a|s) Q_t^{\pi}(s,a), \quad  \forall s \in S,  \\
Q_t^{\pi}(s_L,a) & = 0, \quad  \forall a \in A .
\end{split}
\end{equation}
where $P$ denotes the true transition function. 


Besides the definition of event $\calA$, we also define $\calA_i$ to be the event  $P \in \calP_i$. 
Importantly, the value of $\Ind{\calA_i}$ is only based on observations prior to epoch $i$.
For notational convenience, we again let $\pcons = \frac{T|S||A|}{\delta}$ and assume $\delta \in \rbr{0,1}$.

Similarly to the full-information setting, we decompose the regret against policy $\pi$, $\Reg(\pi) = \E\sbr{\sum_{t=1}^{T} V_t^{\pi_t}(s_0) - V_t^{\pi}(s_0) }$,  as 
\begin{align}
\E\Biggsbr{\underbrace{\sum_{t=1}^{T} V^{\pi_t}_t(s_0) - \widetilde{V}^{\pi_t}_t(s_0)}_{\textsc{Err}_1 }}  + \E\Biggsbr{\underbrace{\sum_{t=1}^{T} \widetilde{V}^{\pi_t}_t(s_0) - \widetilde{V}^{\pi}_t(s_0) }_{\textsc{EstReg} }}  + \E\Biggsbr{\underbrace{\sum_{t=1}^{T}\widetilde{V}^{\pi}_t(s_0) - V^{\pi}_t(s_0) }_{\textsc{Err}_2}}. \label{eq:bandit_decomposition}
\end{align}
Note that, the second term is exactly
\[
\E\sbr{\textsc{EstReg}} = \E\sbr{\sum_{t=1}^{T}  \inner{ q^{\bar{P}_{i(t)},\pi_t} - q^{\bar{P}_{i(t)},\pi},\wtill_t }  }
= \E\sbr{\sum_{t=1}^{T}  \inner{ q^{\bar{P}_{i(t)},\pi_t} - q^{\bar{P}_{i(t)},\pi },\hatl_t }},
\]
which is controlled by the \ftrl process.

\subsection{Auxiliary Lemmas} 
First, we show the following optimism lemma.


\begin{lemma} \label{lem:bandit_optimism} With the notations defined in \pref{eq:bandit_est_Q_def} and \pref{eq:bandit_true_Q_def}, the following holds conditioning on  event $\calA$:
	\[\widetilde{Q}_t^{\pi}(s,a)\leq Q_t^\pi(s,a), \forall (s,a)\in S\times A, t \in [T].\]
	Specifically, we have 
	\[\inner{q^{\bar{P}_{i(t)},\pi}, \wtill_t }= \widetilde{V}_t^{\pi}(s_0) \leq V_t^{\pi}(s_0) = \inner{q^{P,\pi}, \ell_t }.\]
\end{lemma}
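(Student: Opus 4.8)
The plan is to mirror the backward-induction argument of \pref{lem:full_info_optimism}, adding one ingredient to handle the bandit loss estimator. The key new observation is that, on the event $\calA$, we have $P \in \calP_{i(t)}$, so by the definition of the upper occupancy measure in \pref{eq:UOB},
\[
u_t(s,a) \;=\; \max_{\widehat{P} \in \calP_{i(t)}} q^{\widehat{P},\pi_t}(s,a) \;\geq\; q^{P,\pi_t}(s,a) \;=\; q_t(s,a), \qquad \forall (s,a).
\]
Hence $\nicefrac{q_t(s,a)}{u_t(s,a)} \leq 1$ (the degenerate case $u_t(s,a)=0$ forces $q_t(s,a)=0$, in which case $\wtill_t(s,a) = -L\cdot B_{i(t)}(s,a)$), and since $\ell_t(s,a)\in[0,1]$, the conditional-mean loss from \pref{eq:bandit_mean_adj_loss_def} satisfies the pointwise bound $\wtill_t(s,a) \leq \ell_t(s,a) - L\cdot B_{i(t)}(s,a)$; that is, $\wtill_t$ is dominated by exactly the full-information adjusted loss of \pref{eq:adjusted_loss}.

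With this in hand I would prove $\widetilde{Q}_t^{\pi}(s,a)\le Q_t^{\pi}(s,a)$ by backward induction on the layer index, from layer $L$ down to layer $0$. The base case at $s_L$ is immediate since both sides equal $0$. For the inductive step, assuming the bound for all states in layers above $h$, averaging over $a\sim\pi(\cdot|s')$ gives $\widetilde{V}_t^{\pi}(s')\le V_t^{\pi}(s')$ for each $s'\in S_{h+1}$, and since $\bar{P}_{i(t)}(s'|s,a)\ge 0$,
\[
\widetilde{Q}_t^{\pi}(s,a) = \wtill_t(s,a) + \sum_{s'\in S_{h+1}}\bar{P}_{i(t)}(s'|s,a)\widetilde{V}_t^{\pi}(s') \;\le\; \wtill_t(s,a) + \sum_{s'\in S_{h+1}}\bar{P}_{i(t)}(s'|s,a) V_t^{\pi}(s').
\]
Substituting $\wtill_t(s,a)\le \ell_t(s,a) - L\cdot B_{i(t)}(s,a)$ and writing $\bar{P}_{i(t)} = P + (\bar{P}_{i(t)}-P)$ turns the right-hand side into $Q_t^{\pi}(s,a) + \sum_{s'}(\bar{P}_{i(t)}(s'|s,a)-P(s'|s,a))V_t^{\pi}(s') - L\cdot B_{i(t)}(s,a)$. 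Showing the correction term is nonpositive is then verbatim the corresponding step in \pref{lem:full_info_optimism}: when $B_{i(t)}(s,a)=1$ one bounds $\sum_{s'}(\bar{P}_{i(t)}-P)V_t^{\pi}(s')\le L\sum_{s'}\bar{P}_{i(t)}(s'|s,a)=L$ via $0\le V_t^{\pi}(s')\le L$; when $B_{i(t)}(s,a)=\sum_{s'}B_{i(t)}(s,a,s')$ one uses $|\bar{P}_{i(t)}(s'|s,a)-P(s'|s,a)|\le B_{i(t)}(s,a,s')$ on $\calA$ together with $V_t^{\pi}(s')\le L$. Crucially, the quantity being bounded against the confidence width is the \emph{true} value function $V_t^{\pi}\in[0,L]$, so the argument is unaffected by $\widetilde{V}_t^{\pi}$ itself possibly having larger magnitude.

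For the "Specifically" clause I would apply the standard occupancy-measure/value-function identity twice: unrolling the recursions in \pref{eq:bandit_est_Q_def} gives $\widetilde{V}_t^{\pi}(s_0)=\inner{q^{\bar{P}_{i(t)},\pi},\wtill_t}$, and unrolling \pref{eq:bandit_true_Q_def} gives $V_t^{\pi}(s_0)=\inner{q^{P,\pi},\ell_t}$; the inequality $\widetilde{V}_t^{\pi}(s_0)\le V_t^{\pi}(s_0)$ is then just the layer-$0$ instance of the main claim, averaged over $a\sim\pi(\cdot|s_0)$ and using $S_0=\{s_0\}$. I do not expect a serious obstacle here: the only genuinely new point beyond \pref{lem:full_info_optimism} is the inequality $q_t(s,a)\le u_t(s,a)$ on $\calA$ and the minor bookkeeping around the $u_t(s,a)=0$ corner case, after which the proof is a transcription of the known-transition optimism argument with the true value function playing the role of the empirical one inside the transition-error estimate.
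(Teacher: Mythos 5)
Your proposal is correct and takes essentially the same route as the paper's proof: backward induction over layers, splitting $\bar{P}_{i(t)}$ into $P$ plus a bounded correction, and invoking $q_t(s,a)\le u_t(s,a)$ on $\calA$ to discharge the importance-weighted factor. The only cosmetic difference is that you convert $\wtill_t$ to the dominating full-information adjusted loss up front, whereas the paper carries the factor $\nicefrac{q_t}{u_t}$ through the induction and drops it at the last line; the substance and the key invariant being tracked are identical.
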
 
\begin{proof} We prove this result via a backward induction from layer $L$ to layer $0$. 
	
	\textbf{Base case:} for $s_L$, $\widetilde{Q}_t^{\pi}(s,a) = Q_t^\pi(s,a) = 0$ holds always. 
	
	\textbf{Induction step:} Suppose $\widetilde{Q}_t^{\pi}(s,a)\leq Q_t^\pi(s,a)$ holds for all states $s$ with $k(s) > h$. Then, for any state $s$ with $k(s) = h$, we have 
	\begin{align*} 
	\widetilde{Q}_t^{\pi}(s,a) & = \frac{q_t(s,a)}{u_t(s,a)} \cdot \ell_t(s,a)+ \sum_{s' \in S_{k(s)+1}} \bar{P}_{i(t)}(s'|s,a) \widetilde{V}_t^{\pi}(s') - L \cdot B_{i(t)}(s,a)&& (\text{\pref{eq:bandit_mean_adj_loss_def}}   ) \\
	& \leq \frac{q_t(s,a)}{u_t(s,a)} \cdot \ell_t(s,a) + \sum_{s' \in S_{k(s)+1}} \bar{P}_{i(t)}(s'|s,a) V_t^{\pi}(s') - L \cdot B_{i(t)}(s,a) && (\text{induction  hypothesis}) \\
	& \leq \frac{q_t(s,a)}{u_t(s,a)} \cdot \ell_t(s,a) + \sum_{s' \in S_{k(s)+1}} P(s'|s,a) V_t^{\pi}(s') \\
	& \quad + \sum_{s' \in S_{k(s)+1}} \rbr{ \bar{P}_{i(t)}(s'|s,a)  - P(s'|s,a) } V_t^{\pi}(s') - L\cdot B_{i(t)}(s,a) \\
	& \leq \frac{q_t(s,a)}{u_t(s,a)} \cdot \ell_t(s,a) + \sum_{s' \in S_{k(s)+1}} P(s'|s,a) V_t^{\pi}(s') \\
	& \leq  \ell_t(s,a) + \sum_{s' \in S_{k(s)+1}} P(s'|s,a) V^{\pi}(s') 
	= Q_t^\pi(s,a),
	\end{align*}
	where the forth step follows from the same arguments in \pref{lem:full_info_optimism}, and the last step holds since under event $\calA$, we have $q_t(s,a) \leq u_t(s,a)$ by the definition of $u_t$.
	This finishes the induction.

\end{proof}


Next, we provide a sequence of boundedness results, useful for regret analysis.
\begin{lemma}[Lower Bound of Upper Occupancy Bound]\label{lem:lower_bound_uob} \pref{alg:bobw_framework} ensures $u_t(s) \geq \frac{1}{|S|t}$ for all $t$ and $s$.
\end{lemma}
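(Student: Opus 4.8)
The plan is to exploit the fact that the empirical transition $\bar P_{i(t)}$ itself lies in the confidence set $\calP_{i(t)}$, so that the occupancy measure $q^{\bar P_{i(t)}, \pi_t}$ is a valid competitor in the maximization defining the upper occupancy bound. Hence $u_t(s) = \max_{\widehat P \in \calP_{i(t)}} q^{\widehat P,\pi_t}(s) \geq q^{\bar P_{i(t)},\pi_t}(s) = \widehat q_t(s)$, where $\widehat q_t$ is the occupancy measure actually chosen by \ftrl in \pref{eq:epoch_FTRL}. So it suffices to show $\widehat q_t(s) \geq \frac{1}{|S|t}$ for every state $s$. The key tool here is the log-barrier component in the regularizer \pref{eq:Tsallis}: the term $\beta \sum_{s\neq s_L}\sum_{a} \ln\frac{1}{q(s,a)}$ with $\beta = 128L^4$ forces the \ftrl iterate to keep every coordinate $\widehat q_t(s,a)$ bounded away from zero, and summing over $a$ recovers a lower bound on $\widehat q_t(s)$.

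Concretely, I would carry out the following steps. First, recall $u_t(s) = \sum_{a\in A} u_t(s,a)$ with $u_t(s,a) = \max_{\widehat P} q^{\widehat P,\pi_t}(s,a)$; by monotonicity of the max and the fact that one can choose a single $\widehat P$ maximizing the marginal at $s$ (or simply lower bound termwise), we get $u_t(s) \geq q^{\bar P_{i(t)},\pi_t}(s) = \widehat q_t(s)$. Second, lower bound $\widehat q_t(s)$ via the first-order optimality condition for the \ftrl objective: at the minimizer $\widehat q_t$, the gradient of $\phi_t$ — whose log-barrier part contributes $-\beta/\widehat q_t(s,a)$ to the $(s,a)$ coordinate — must be balanced against the cumulative (shifted) losses, which over a single epoch span at most $t - t_{i(t)} + 1 \leq t$ rounds, each loss bounded appropriately. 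A standard log-barrier argument (see e.g. the stability/penalty analysis analogous to \pref{lem:shannon_entropy_reg_bound}, or the classical log-barrier lower bounds used in \cite{jin2020simultaneously}) then gives $\widehat q_t(s,a) \geq \frac{\beta}{\text{(cumulative loss range)}} \gtrsim \frac{1}{t}$ up to constants; being slightly careful with the layered structure and the normalization, one obtains $\widehat q_t(s) = \sum_a \widehat q_t(s,a) \geq \frac{1}{|S| t}$. Third, chain the two bounds to conclude $u_t(s) \geq \widehat q_t(s) \geq \frac{1}{|S|t}$.

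The main obstacle I anticipate is getting the precise constant right in the log-barrier lower bound so that it cleanly yields $\frac{1}{|S|t}$ rather than some looser $\frac{\poly(1/L)}{|S|t}$ or $\frac{1}{\poly(|S|)t}$: this requires pinning down exactly what cumulative loss magnitude enters the optimality condition within an epoch (the shifted losses $\widehat Q_\tau - \widehat V_\tau$ can be as large as $L^2$ in magnitude, but over the epoch the relevant sum is controlled), and correctly accounting for the polytope constraints of $\Omega(\bar P_{i(t)})$ — in particular that $\sum_{a}\widehat q_t(s,a)$ equals the probability of reaching the layer $k(s)$, which telescopes down from $1$ at $s_0$. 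An alternative, cleaner route that avoids fiddly constants is to argue directly that the \ftrl minimizer cannot have $\widehat q_t(s,a)$ too small because decreasing a coordinate below $\frac{1}{|S|t}$ makes the log-barrier penalty increase faster than the linear loss term can decrease it over $t$ rounds; I would likely present whichever of these is shortest, deferring the routine inequality $y - 1 + e^{-y} \le y^2$-style estimates to a citation of the earlier log-barrier lemmas.
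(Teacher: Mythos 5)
Your proposal takes a genuinely different route from the paper, and unfortunately that route has serious gaps. The paper's proof never looks at the \ftrl iterate or the regularizer at all: it simply exhibits a concrete transition $\widehat P_{i(t)}(s'|s,a) = \frac{1}{t|S_{k(s)+1}|} + \frac{t-1}{t}\bar P_{i(t)}(s'|s,a)$, verifies that $|\widehat P_{i(t)}(s'|s,a) - \bar P_{i(t)}(s'|s,a)| \leq \frac{1}{t} \leq \frac{14\ln\pcons}{3\max\{m_{i(t)}(s,a),1\}}$ so $\widehat P_{i(t)}\in\calP_{i(t)}$, and then observes that under $\widehat P_{i(t)}$ every state $s$ at layer $k(s)$ is reached with probability at least $\frac{1}{|S_{k(s)}|t}\geq\frac{1}{|S|t}$ because each transition step mixes in at least $\frac{1}{t|S_{k(s)+1}|}$ uniform mass. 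Since $u_t(s) = \max_{\widehat P\in\calP_{i(t)}} q^{\widehat P,\pi_t}(s) \geq q^{\widehat P_{i(t)},\pi_t}(s)$, the lemma follows. Crucially, this argument holds for \emph{any} policy $\pi_t$; it requires nothing from \ftrl.

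Your log-barrier route faces three concrete problems you do not address. First, a circularity: to control how much the linear term in the \ftrl objective can decrease as a coordinate shrinks, you would need a bound on $\sum_{\tau=t_i}^{t-1}|\hatl_\tau(s,a)|$, but $\hatl_\tau$ is built from $1/u_\tau(s,a)$, which is bounded precisely by the lemma you are trying to prove (\pref{col:bandit_hatl_bound} invokes \pref{lem:lower_bound_uob}). You could try to break this by induction over $t$, but nothing in your write-up flags or handles this. Second, the KKT-style log-barrier lower bound you cite as ``standard'' is standard on the simplex; on $\Omega(\bar P_{i(t)})$ the flow-conservation constraints introduce Lagrange multipliers across layers, and a per-coordinate lower bound on $\whatq_t(s,a)$ no longer drops out cleanly. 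Third, even granting both of those, you would be proving $\widehat q_t(s)\geq\frac{1}{|S|t}$ and then using $u_t(s)\geq\widehat q_t(s)$ — but that last inequality requires $\bar P_{i(t)}\in\calP_{i(t)}$, which is true, and that's exactly the observation that, pushed a bit further, gives the paper's direct construction without any of the \ftrl machinery. I would recommend abandoning the log-barrier route and instead exhibiting a member of the confidence set that provably visits every state.
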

\begin{proof} We prove by constructing a special transition function $\widehat{P}_{i(t)}$ within the confidence set $\calP_{i(t)}$, which ensures $q^{\bar{P}_{i(t)}, \pi_t}(s) \geq \frac{1}{|S|t}$ for all state-action pairs. 
	Specifically, let $\widehat{P}_{i(t)}$ be such that
	\begin{align*}
	\widehat{P}_{i(t)}(s'|s,a) = \frac{1}{t} \cdot \frac{1}{|S_{k(s)+1}|}  + \frac{t-1}{t} \cdot \bar{P}_{i(t)}(s'|s,a), \quad \forall (s,a,s') \in T_k , k < L. 
	\end{align*}
	Clearly, $\widehat{P}_{i(t)}(\cdot |s,a)$ is a valid transition distribution over $S_{k(s)+1}$ for all state-action pairs. 
	Then, we prove that $\widehat{P}_{i(t)} \in \calP_i$ by 
	\begin{align*}
	\abr{  \widehat{P}_{i(t)}(s'|s,a) - \bar{P}_{i(t)}(s'|s,a) } = \frac{1}{t} \cdot \abr{  \bar{P}_{i(t)}(s'|s,a) -  \frac{1}{|S_{k(s)+1}|}   } \leq \frac{1}{t} \leq \frac{14\ln \rbr{\frac{T|S||A|}{\delta}}}{3\max\cbr{m_{i(t)(s,a)},1} }
	\end{align*} 
	where the last inequality follows from the fact that $m_{i(t)}(s,a) \leq t$. 
	
	Then, for any state $s\neq s_0$, we have by the definition of occupancy measures
	\begin{align*}
	q^{\widehat{P}_{i(t)}, \pi_t}(s) & = \sum_{s' \in S_{k(s)-1}} \sum_{a' \in A} q^{\widehat{P}_{i(t)}, \pi_t}(s', a') \cdot \widehat{P}_{i(t)}(s|s',a') \\
	& \geq \sum_{s' \in S_{k(s)-1}} \sum_{a' \in A} q^{\widehat{P}_{i(t)}, \pi_t}(s',a')  \cdot \frac{1}{ \abr{S_{k(s)}} t  } \\
	& =  \frac{1}{ \abr{S_{k(s)}} t  }  \geq \frac{1}{|S|t}
	\end{align*}
	 Clearly, for $s_0$ it holds that $q^{\widehat{P}_{i(t)}, \pi_t}(s_0) = 1 \geq \nicefrac{1}{|S|t}$, which finishes the proof. 
\end{proof}

\begin{corollary} \label{col:bandit_hatl_bound}  \pref{alg:bobw_framework} ensures that, the adjusted loss $\hatl_t$ defined in \pref{eq:adjusted_loss} for bandit-feedback is bounded as:
\begin{equation*}
\abr{ \hatl_t(s,a) } \leq L + \frac{\Indt{s,a}}{q_t(s,a)} \cdot |S|t.  
\end{equation*}
Also, we have 
\begin{equation*}
\E\sbr{ \left. \frac{\Indt{s,a}}{q_t(s,a)}  \right\rvert  \calA_{i(t)} } =  \E\sbr{ \left.  \frac{\Indt{s,a}}{q_t(s,a)}  \right\rvert  \calA_{i(t)}^c  } = 1.
\end{equation*} 
\end{corollary}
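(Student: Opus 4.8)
The plan is to treat the two assertions separately; both follow quickly from facts already in hand.

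\emph{Boundedness of $\hatl_t(s,a)$.} I would start from the definition in \pref{eq:adjusted_loss}, $\hatl_t(s,a) = \frac{\ell_t(s,a)\Indt{s,a}}{u_t(s,a)} - L\,B_{i(t)}(s,a)$. Since $\ell_t(s,a)\in[0,1]$ and $B_{i(t)}(s,a) = \min\{1,\sum_{s'}B_{i(t)}(s,a,s')\}\in[0,1]$, the bonus term lies in $[-L,0]$ and the importance-weighted term is nonnegative, so $-L \le \hatl_t(s,a) \le \Indt{s,a}/u_t(s,a)$. It then suffices to show $\Indt{s,a}/u_t(s,a) \le \frac{\Indt{s,a}}{q_t(s,a)}\,|S|t$, which is trivial when $(s,a)$ is not visited in episode $t$. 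When it is visited, $\pi_t(a|s)>0$ and $q^{P,\pi_t}(s)>0$, so both denominators are positive; moreover $u_t(s,a) = \max_{\widehat P\in\calP_{i(t)}} q^{\widehat P,\pi_t}(s)\,\pi_t(a|s) = u_t(s)\,\pi_t(a|s)$ since $\pi_t(a|s)$ does not depend on the transition being maximized over, and hence $u_t(s,a)\ge \pi_t(a|s)/(|S|t)$ by \pref{lem:lower_bound_uob}. Combined with $q_t(s,a) = q^{P,\pi_t}(s)\,\pi_t(a|s)\le\pi_t(a|s)$ this gives $1/u_t(s,a)\le |S|t/\pi_t(a|s)\le |S|t/q_t(s,a)$, which is exactly what is needed; the lower bound $\hatl_t(s,a)\ge -L$ is already stronger than $-L - \frac{\Indt{s,a}}{q_t(s,a)}|S|t$, so $|\hatl_t(s,a)|\le L + \frac{\Indt{s,a}}{q_t(s,a)}|S|t$.

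\emph{The conditional expectation.} Let $\mathcal{F}_{t-1}$ be the $\sigma$-field generated by everything before episode $t$. Then $\pi_t$, and therefore $q_t(s,a) = q^{P,\pi_t}(s,a)$, is $\mathcal{F}_{t-1}$-measurable, and conditionally on $\mathcal{F}_{t-1}$ the indicator $\Indt{s,a}$ is Bernoulli with mean $q_t(s,a)$, because it records whether the episode-$t$ trajectory generated from $P$ and $\pi_t$ passes through $(s,a)$. Hence $\E[\Indt{s,a}/q_t(s,a)\mid\mathcal{F}_{t-1}] = 1$ almost surely (with the usual convention that the ratio is $0$ on the null event $q_t(s,a)=0$, where $\Indt{s,a}=0$ a.s.). Since, as recorded just above the corollary, $\Ind{\calA_i}$ depends only on observations prior to epoch $i$, both $\calA_{i(t)}$ and $\calA_{i(t)}^c$ lie in $\mathcal{F}_{t_{i(t)}-1}\subseteq\mathcal{F}_{t-1}$; applying the tower rule — conditioning first on $\mathcal{F}_{t-1}$ and then on $\calA_{i(t)}$, respectively $\calA_{i(t)}^c$ — yields $\E[\Indt{s,a}/q_t(s,a)\mid\calA_{i(t)}] = \E[\Indt{s,a}/q_t(s,a)\mid\calA_{i(t)}^c] = 1$.

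I do not expect a genuine obstacle here: the corollary is essentially mechanical once two observations are made, namely the identity $u_t(s,a) = u_t(s)\,\pi_t(a|s)$ that lets \pref{lem:lower_bound_uob} (stated for states) be used at the state-action level, and the fact that $\calA_{i(t)}$ is measurable with respect to the pre-episode-$t$ information, which is precisely what makes the tower argument legitimate.
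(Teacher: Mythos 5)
Your proof is correct and follows essentially the same route as the paper's: bound the importance-weighted part of $\hatl_t$ by $\Indt{s,a}/u_t(s,a)$, reduce the state-action lower bound on $u_t$ to the state-level bound of \pref{lem:lower_bound_uob} via the factorization $u_t(s,a)=u_t(s)\pi_t(a|s)$, and handle the conditional expectation via the tower rule using that $\E_t[\Indt{s,a}]=q_t(s,a)$ and $\calA_{i(t)}$ is measurable with respect to the pre-episode-$t$ information. The only cosmetic difference is that you track signs tightly ($-L\le\hatl_t\le \Indt{s,a}/u_t(s,a)$) where the paper uses the slightly looser triangle inequality, and you make explicit the factorization and the measurability observation, both of which the paper leaves implicit.
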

\begin{proof} By \pref{lem:lower_bound_uob}, we have
\begin{align*}
\abr{ \hatl_t(s,a) } \leq \frac{\Indt{s,a} }{u_t(s) \cdot \pi_t(a|s)} + L \leq \frac{\Indt{s,a} }{q_t(s) \cdot \pi_t(a|s)} \cdot |S|t  + L  = L + \frac{\Indt{s,a}}{q_t(s,a)} \cdot |S|t,
\end{align*}
where the first inequality follows from $B_i(s,a)\leq 1$ and $\ell_t(s,a) \leq 1$, and the second inequality uses \pref{lem:lower_bound_uob} and the fact $q_t(s) \leq 1$. 

For the second statement, we have 
\begin{align*}
\E\sbr{ \left. \frac{\Indt{s,a}}{q_t(s,a)}  \right\rvert  \calA_{i(t)} }  = \E\Biggsbr{  \E_t\sbr{ \left. \frac{\Indt{s,a}}{q_t(s,a)}} \right\rvert  \calA_{i(t)} }  = \E\Bigsbr{ \left. 1  \right\rvert  \calA_{i(t)} } = 1, 
\end{align*}
By the same arguments we can prove $\E\sbr{ \left. \frac{\Indt{s,a}}{q_t(s,a)}  \right\rvert  \calA_{i(t)}^c } = 1$ as well.
\end{proof}

\begin{lemma} \label{lem:bound_on_wtill}
\pref{alg:bobw_framework} ensures that, the expected adjusted loss $\wtill_t$ defined in \pref{eq:bandit_mean_adj_loss_def} is bounded as: 
	\begin{equation*}
	\abr{ \wtill_t(s,a) } \leq L + |S|\cdot t \leq 2|S|\cdot t,  \quad \forall (s,a) \in S\times A, t \in [T].
	\end{equation*}
\end{lemma}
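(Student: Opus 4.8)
The plan is to derive this directly from the two facts already established for the bandit construction: the lower bound $u_t(s)\ge \frac{1}{|S|t}$ of \pref{lem:lower_bound_uob} and the pointwise bound on $\hatl_t$ in \pref{col:bandit_hatl_bound}. First I would unwind the definition \pref{eq:bandit_mean_adj_loss_def}, namely $\wtill_t(s,a) = \frac{q_t(s,a)}{u_t(s,a)}\,\ell_t(s,a) - L\cdot B_{i(t)}(s,a)$, so that, using $\ell_t(s,a)\in[0,1]$ and $B_{i(t)}(s,a)\in[0,1]$ together with the triangle inequality,
\[
\abr{\wtill_t(s,a)} \;\le\; \frac{q_t(s,a)}{u_t(s,a)} \;+\; L .
\]

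The only non-trivial point is to bound the ratio $\frac{q_t(s,a)}{u_t(s,a)}$ by $|S|t$. Here I would use that both $q^{P,\pi_t}$ and $q^{\widehat P,\pi_t}$ factor as (state occupancy)$\,\times\,\pi_t(a|s)$, with the policy $\pi_t$ not depending on the transition; hence $u_t(s,a) = \pi_t(a|s)\,u_t(s)$ and $q_t(s,a) = \pi_t(a|s)\,q_t(s)$, and therefore, whenever $\pi_t(a|s)>0$ (the degenerate case gives a zero contribution), $\frac{q_t(s,a)}{u_t(s,a)} = \frac{q_t(s)}{u_t(s)} \le \frac{1}{u_t(s)} \le |S|t$, the two inequalities being $q_t(s)\le 1$ and \pref{lem:lower_bound_uob} respectively. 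An equivalent route is to take the conditional expectation $\E_t[\cdot]$ of the bound $\abr{\hatl_t(s,a)} \le L + \frac{\Indt{s,a}}{q_t(s,a)}|S|t$ from \pref{col:bandit_hatl_bound}, using $\abr{\E_t[\hatl_t(s,a)]}\le \E_t[\abr{\hatl_t(s,a)}]$ and the identity $\E_t[\Indt{s,a}/q_t(s,a)]=1$ recorded there, which again yields $\abr{\wtill_t(s,a)} \le L + |S|t$.

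Finally, to get the cleaner bound $2|S|t$, I would note that the layered structure forces $|S| \ge L+1 > L$, so with $t\ge 1$ we get $L \le |S| \le |S|t$ and hence $L + |S|t \le 2|S|t$. I do not expect any real obstacle: the whole statement is a two-line consequence of \pref{lem:lower_bound_uob} (or \pref{col:bandit_hatl_bound}); the only thing needing a word of care is the convention for the degenerate case $\pi_t(a|s)=0$ (equivalently $q_t(s,a)=0$), where the importance-weighted term is read as $0$.
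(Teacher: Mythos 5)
Your proof is correct and follows essentially the same route as the paper's: bound $\abr{\wtill_t(s,a)}$ by $\tfrac{q_t(s,a)}{u_t(s,a)}+L$, reduce the ratio to $\tfrac{q_t(s)}{u_t(s)}\le\tfrac{1}{u_t(s)}\le|S|t$ via \pref{lem:lower_bound_uob}, and use $|S|\ge L$. You are a bit more careful than the paper in spelling out the factorization $u_t(s,a)=\pi_t(a|s)u_t(s)$ (and in flagging the degenerate $\pi_t(a|s)=0$ case), but these are details the paper implicitly relies on rather than a different argument.
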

\begin{proof} By \pref{eq:bandit_mean_adj_loss_def}, we know that  
	\begin{align*}
	\abr{ \wtill_t(s,a) } = \abr{  \frac{q_t(s,a)}{u_t(s,a)} \cdot \ell_t(s,a) - L \cdot B_{i(t)}(s,a) } \leq \frac{q_t(s)}{u_t(s)} + L  \leq L + |S|\cdot t 
	\end{align*}
	where the last inequality follows from \pref{lem:lower_bound_uob}. Combining with the fact $|S|\geq L$ finishes the proof. 
\end{proof}
\begin{corollary} \label{col:bandit_est_Q_bound}
	\pref{alg:bobw_framework} ensures that, the estimated state-action value functions defined in \pref{eq:bandit_est_Q_def} are bounded as:
	\begin{align*}
	\abr{ \widetilde{Q}_t^\pi(s,a) } \leq 2L|S|t, \quad \forall (s,a) \in S\times A, t \in [T].
	\end{align*} 
\end{corollary}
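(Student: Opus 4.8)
The plan is to unroll the Bellman recursion defining $\widetilde{Q}_t^\pi$ in \pref{eq:bandit_est_Q_def} and reduce everything to the pointwise bound on the expected adjusted loss already established in \pref{lem:bound_on_wtill}. Concretely, expanding the recursion $\widetilde{Q}_t^{\pi}(s,a) = \wtill_t(s,a) + \sum_{s'} \bar{P}_{i(t)}(s'|s,a)\widetilde{V}_t^{\pi}(s')$ layer by layer, one writes $\widetilde{Q}_t^{\pi}(s,a)$ as a sum over layers $k = k(s), \ldots, L-1$ of expectations of $\wtill_t(s',a')$ against the occupancy-type weights $q^{\bar{P}_{i(t)},\pi}(s',a'\mid s,a)$ (the probability of reaching $(s',a')$ from $(s,a)$ under $\bar{P}_{i(t)}$ and $\pi$). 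This is the same expansion used in the proof of \pref{lem:loss_shifting_loss_estimator}, now applied with the fixed transition $\bar{P}_{i(t)}$ and loss $\wtill_t$ instead of $P$ and the importance-weighted estimator.

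The key steps, in order: first, observe that for each fixed layer $k$ the weights $q^{\bar{P}_{i(t)},\pi}(s',a'\mid s,a)$ over $(s',a') \in S_k \times A$ sum to at most $1$ (they form a sub-probability distribution, since from $(s,a)$ the process reaches exactly one state in layer $k$ if $k \geq k(s)+1$, or is the singleton $(s,a)$ itself when $k = k(s)$). Second, apply \pref{lem:bound_on_wtill} to bound $|\wtill_t(s',a')| \leq 2|S|t$ uniformly. Third, sum over the at most $L$ layers to conclude $|\widetilde{Q}_t^{\pi}(s,a)| \leq L \cdot 2|S|t = 2L|S|t$. The base case $\widetilde{Q}_t^{\pi}(s_L,a) = 0$ is immediate from the definition.

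Alternatively — and perhaps cleaner to write — one can prove this by a backward induction on the layer index, exactly mirroring the structure of the induction in \pref{lem:bandit_optimism}: assuming $|\widetilde{Q}_t^\pi(s',a')| \leq 2L|S|t$ (or even the sharper $(L-k(s')) \cdot 2|S|t$) for all $s'$ in deeper layers, one has $|\widetilde{V}_t^\pi(s')| = |\sum_{a'} \pi(a'|s')\widetilde{Q}_t^\pi(s',a')| \leq 2L|S|t$, and then $|\widetilde{Q}_t^\pi(s,a)| \leq |\wtill_t(s,a)| + \sum_{s'}\bar{P}_{i(t)}(s'|s,a)|\widetilde{V}_t^\pi(s')| \leq 2|S|t + 2(L-1)|S|t \leq 2L|S|t$ using $|S| \geq L$ only if needed (here it is not). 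I would present the induction version since it keeps the bookkeeping trivial.

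There is no real obstacle here: the statement is a routine boundedness consequence of \pref{lem:bound_on_wtill} together with the fact that the MDP has horizon $L$, so the only thing to be careful about is that the recursion telescopes over exactly $L$ layers and that the transition weights in each layer sum to at most one. I expect the whole proof to be two or three lines.
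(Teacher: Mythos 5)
Your proposal is correct and takes essentially the same route as the paper, which simply cites \pref{lem:bound_on_wtill} together with the definition of $\widetilde{Q}_t^\pi$ (i.e.\ the $L$-layer unrolling with the pointwise bound $|\wtill_t(s,a)| \le 2|S|t$). One caveat if you were to write out your alternative backward induction: the flat hypothesis $|\widetilde{Q}_t^\pi(s',a')| \le 2L|S|t$ does not close the inductive step (it would give $2|S|t + 2L|S|t > 2L|S|t$), so the layer-dependent strengthening $(L-k(s'))\cdot 2|S|t$ you mention parenthetically is actually required there, whereas the non-inductive unrolling sidesteps this bookkeeping.
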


\begin{proof}
This is directly by \pref{lem:bound_on_wtill} and the definition of $\widetilde{Q}_t^\pi(s,a)$.
\end{proof}

Next, we analyze the estimated regret in each epoch.
Reloading the notation from the full-information setting,
we define
\begin{equation*}
\EReg_i(\pi) = \E\sbr{ \sum_{t=t_i}^{t_{i+1}-1} \inner{ q^{\bar{P}_i, \pi_t } - q^{\bar{P}_i, \pi }, \hatl_t}} = \E\sbr{ \sum_{t=t_i}^{t_{i+1}-1} \inner{ \whatq_t - q^{\bar{P}_i, \pi }, \hatl_t}}.
\end{equation*}

\begin{lemma}\label{lem:bobw_bandit_tsallis_reg}
 With $\beta = 128L^4$, for any epoch $i$, \pref{alg:bobw_framework} ensures 
	\begin{equation}
	\begin{aligned}
	\EReg_i(\pi) 
	& \leq \order\rbr{ \E\sbr{ \sum_{t=t_i}^{t_{i+1}-1} \eta_t \cdot \rbr{  \sqrt{L|S||A|} + L^2 \sum_{s \neq s_L} \sum_{a\in A} \whatq_t(s,a) \cdot B_{i(t)}(s,a)^2} }  } \\ 
	& \quad + \order\rbr{ L^4|S||A|\log T + \delta \cdot \E\sbr{ L|S|T\rbr{ t_{i+1} - t_i   } } },
	\end{aligned}
	\label{eq:bobw_bandit_tsallis_reg_worstcase}
	\end{equation}
	for any policy $\pi$, and simultaneously
	\begin{equation}
	\begin{aligned}
	\EReg_i(\pi)  & \leq \order\rbr{ \E\sbr{ \sqrt{L|S|}  \sum_{t=t_i}^{t_{i+1}-1} \eta_t \cdot \sqrt{\sum_{s  \neq s_L}\sum_{a\neq \pi(s)} \whatq_t(s,a)  } } } \\
	& \quad + \order\rbr{ L^2 \cdot \E\sbr{  \sum_{t=t_i}^{t_{i+1}-1} \eta_t \cdot \sum_{s \neq s_L} \sum_{a\neq \pi(s)} \sqrt{ \whatq_t(s,a) } } } \\
	& \quad + \order\rbr{L^4  |A| \cdot \E\sbr{  \sum_{t=t_i}^{t_{i+1}-1} \eta_t \cdot  \sum_{s \neq s_L } \sum_{a \in A}  \whatq_t(s,a) \cdot B_{i(t)}(s,a)^2    } } \\ 
	& \quad + \order\rbr{ L^4|S||A|\log T + \delta \cdot \E\sbr{ L|S|T\rbr{ t_{i+1} - t_i   } } },
	\end{aligned}
	\label{eq:bobw_bandit_tsallis_reg_adaptive}
	\end{equation}
	for any deterministic policy $\pi: S\rightarrow A$.
\end{lemma}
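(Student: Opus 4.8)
The plan is to replay the known-transition bandit analysis (the proof of \pref{thm:known_transition_tsallis_entropy_main}) \emph{epoch by epoch}: during epoch $i$, \pref{alg:bobw_framework} runs FTRL over the fixed polytope $\Omega(\bar{P}_i)$ with the epoch-local rate $\eta_t=1/\sqrt{t-t_i+1}$ and losses $\{\hatl_\tau\}_{\tau\geq t_i}$, which within the epoch is literally a known-transition problem with transition $\bar{P}_i$. I first split on the event $\calA_i=\{P\in\calP_i\}$ (whose complement has probability $\order(\delta)$ by~\citep[Lemma~2]{jin2019learning}): on $\calA_i^c$ I bound $\EReg_i(\pi)=\E[\sum_{t=t_i}^{t_{i+1}-1}\inner{\whatq_t-q^{\bar{P}_i,\pi},\wtill_t}]$ crudely using $\abr{\wtill_t(s,a)}\le 2|S|t$ (\pref{lem:bound_on_wtill}) and $\sum_{s,a}\abr{\whatq_t(s,a)-q^{\bar{P}_i,\pi}(s,a)}\le 2L$, producing the $\delta\cdot L|S|T(t_{i+1}-t_i)$ term. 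On $\calA_i$, the FTRL-with-log-barrier regret decomposition of~\citep[Lemma~5]{jin2020simultaneously}---valid because $\beta=128L^4$ is large enough to stabilize the iterates, as one checks $\whatq_t(s,a)\abr{\whatQ_t(s,a)-\whatV_t(s)}=\order(L^2)$ using $\whatq_t\le u_t$ and the bound $\whatq_t(s,a)\whatQ_t^{\mathrm{iw}}(s,a)\le L$ established below---gives $\EReg_i(\pi)\le\textsc{Penalty}_i+\textsc{Stability}_i+\order(L^4|S||A|\log T)$, where, by \pref{col:invariant_with_ftrl} and the invariance of the shifting function $g_t$ built from $(\hatl_t,\pi_t,\bar{P}_i)$, the stability term may be taken as $8\sum_t\eta_t\,\E[\min\{\E_t[\norm{\hatl_t}^2_{\nabla^{-2}\phi(\whatq_t)}],\E_t[\norm{\whatQ_t-\whatV_t}^2_{\nabla^{-2}\phi(\whatq_t)}]\}]$, exactly as in \pref{eq:bobw_known_bandit_reg_decomp}.

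For the penalty, which telescopes within the epoch to at most $\sum_t\eta_t(\phi_H(q^{\bar{P}_i,\pi})-\phi_H(\whatq_t))$ (using $\frac1{\eta_t}-\frac1{\eta_{t-1}}=\sqrt{t-t_i+1}-\sqrt{t-t_i}\le\eta_t$ and $1/\eta_{t_i-1}:=0$), I bound $\phi_H(q^{\bar{P}_i,\pi})-\phi_H(\whatq_t)\le-\phi_H(\whatq_t)\le\sqrt{L|S||A|}$ by Cauchy--Schwarz for an arbitrary $\pi$ (giving the $\sqrt{L|S||A|}$ term of~\pref{eq:bobw_bandit_tsallis_reg_worstcase}), and, for a deterministic $\pi:S\to A$ (where $q^{\bar{P}_i,\pi}(s,a)=0$ whenever $a\neq\pi(s)$), invoke \citep[Lemma~6]{jin2020simultaneously} with their $\alpha=0$ to get $\phi_H(q^{\bar{P}_i,\pi})-\phi_H(\whatq_t)\le\sum_{s,a\neq\pi(s)}\sqrt{\whatq_t(s,a)}+2\sqrt{|S|L\sum_{s,a\neq\pi(s)}\whatq_t(s,a)}$ (giving the first two terms of~\pref{eq:bobw_bandit_tsallis_reg_adaptive}).

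The crux---and the main obstacle---is the stability term, because the bandit estimator $\hatl_t(s,a)=\frac{\ell_t(s,a)\Indt{s,a}}{u_t(s,a)}-L\,B_i(s,a)$ uses the upper occupancy bound $u_t$ rather than $\whatq_t=q^{\bar{P}_i,\pi_t}$, so \pref{lem:loss_shifting_loss_estimator} does not apply verbatim. The three facts that rescue it are: (i) $\bar{P}_i\in\calP_i$, hence $\whatq_t(s,a)\le u_t(s,a)$ \emph{unconditionally}; (ii) on $\calA_i$, $q_t(s,a)\le u_t(s,a)$; and (iii) $\whatq_t(s,a)\,\whatq_t(s',a'\mid s,a)\le\whatq_t(s',a')\le u_t(s',a')$ for every descendant $(s',a')$ of $(s,a)$ under $(\bar{P}_i,\pi_t)$. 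Splitting $\hatl_t=\hatl_t^{\mathrm{iw}}+\hatl_t^{\mathrm{bon}}$ and likewise $\whatQ_t,\whatV_t$: for the importance-weighted part, the exact chain of arguments behind \pref{lem:loss_shifting_loss_estimator} (Cauchy--Schwarz over layers, ``at most one pair visited per layer'', $\E_t[\Indt{s',a'}]=q_t(s',a')$, then (ii)--(iii)) goes through with $\whatq_t$ in place of $q_t$ and $u_t$ in place of $q_t$ in the denominators, yielding $\whatq_t(s,a)\whatQ_t^{\mathrm{iw}}(s,a)\le L$ and $\E_t[(\whatQ_t^{\mathrm{iw}}(s,a)-\whatV_t^{\mathrm{iw}}(s))^2]\le\frac{2L^2(1-\pi_t(a|s))}{\whatq_t(s,a)}$, so that $\sum_{s,a}\whatq_t(s,a)^{3/2}\E_t[(\whatQ_t^{\mathrm{iw}}-\whatV_t^{\mathrm{iw}})^2]\le 4L^2\sum_{s,a\neq\pi(s)}\sqrt{\whatq_t(s,a)}$ exactly as in \pref{eq:hybrid_tsallis_stab_main}--\pref{eq:stability_bound} (the $L^2\sum_{s,a\neq\pi(s)}\sqrt{\whatq_t}$ term of~\pref{eq:bobw_bandit_tsallis_reg_adaptive}). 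For the $\calF_{t-1}$-measurable bonus part, unrolling gives $\abr{\whatQ_t^{\mathrm{bon}}(s,a)}=L\sum_{k\geq k(s)}\sum_{(s',a')}\whatq_t(s',a'\mid s,a)B_i(s',a')$, and two Cauchy--Schwarz steps together with $\sum_{(s,a):k(s)\le k(s')}\whatq_t(s,a)\whatq_t(s',a'\mid s,a)\le L\,\whatq_t(s',a')$ bound $\sum_{s,a}\whatq_t(s,a)(\whatQ_t^{\mathrm{bon}}(s,a)-\whatV_t^{\mathrm{bon}}(s))^2$ by $\order(L^4|A|)\cdot\sum_{s,a}\whatq_t(s,a)B_i(s,a)^2$, which (using $\whatq_t(s,a)^{3/2}\le\whatq_t(s,a)$) survives the $\nabla^{-2}\phi$ weighting and is the fourth term of~\pref{eq:bobw_bandit_tsallis_reg_adaptive}. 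For~\pref{eq:bobw_bandit_tsallis_reg_worstcase} I would \emph{not} loss-shift, and instead bound directly $\E_t[\hatl_t(s,a)^2]\le\frac{2q_t(s,a)}{u_t(s,a)^2}+2L^2B_i(s,a)^2\le\frac{2}{u_t(s,a)}+2L^2B_i(s,a)^2$, so $\whatq_t(s,a)^{3/2}\E_t[\hatl_t(s,a)^2]\le 2\sqrt{\whatq_t(s,a)}+2L^2\whatq_t(s,a)B_i(s,a)^2$ and $\sum_{s,a}\sqrt{\whatq_t(s,a)}\le\sqrt{L|S||A|}$ finishes it; the $\order(L^4|S||A|\log T)$ remainder is the usual log-barrier penalty with $\beta=128L^4$ over $\order(|S||A|)$ coordinates. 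I expect the bonus-term bookkeeping, and carefully tracking the conditioning on $\calA_i$ through the second-moment bounds (the only place where $q_t\le u_t$ is used), to be the fiddliest parts.
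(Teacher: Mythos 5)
Your proposal is correct and follows essentially the same route as the paper's proof: split on $\calA_i$, bound the complement crudely, invoke the FTRL-with-log-barrier regret decomposition of Jin, Huang, and Luo (2020) within each epoch using \pref{col:invariant_with_ftrl}, handle the penalty via Cauchy--Schwarz (worst-case) or their sparse-penalty lemma (adaptive), and control the shifted stability by the same three facts $\whatq_t\le u_t$, $q_t\le u_t$ on $\calA_i$, and $\whatq_t(s,a)\whatq_t(s',a'|s,a)\le\whatq_t(s',a')$. Your explicit importance-weighted/bonus split of $\hatl_t$ is exactly the internal $(x+y)^2\le 2x^2+2y^2$ decomposition inside the paper's \pref{lem:shifted_stability} and yields the same $L^2\sum_{s,a\neq\pi(s)}\sqrt{\whatq_t}$ and $L^4|A|\sum\whatq_t B_i^2$ terms.
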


\begin{proof} The proof is largely based on that of \pref{thm:known_transition_tsallis_entropy_main}, but with some careful treatments based one whether $\calA_i$ holds or not.
Let $q = q^{\bar{P}_{i}, \pi}$ be the occupancy measure we want to compete against.
When $\calA_i$ does not hold, we first derive the following naive bound on 
$\sum_{t=t_i}^{t_{i+1}-1} \inner{ \whatq_t - q ,\hatl_t  }$:
	\begin{align*}
	& \sum_{t=t_i}^{t_{i+1}-1} \inner{ \whatq_t - q, \hatl_t  } \leq \sum_{t=t_i}^{t_{i+1}-1} \sum_{s \neq s_L} \sum_{a\in A} \rbr{\whatq_t(s,a) + q(s,a)} \cdot  \abr{\hatl_t(s,a)}  \\
	& \leq \sum_{t=t_i}^{t_{i+1}-1} \sum_{s \neq s_L} \sum_{a\in A} \rbr{ \whatq_t(s,a) + q(s,a) }\cdot \rbr{ L +  \frac{\Indt{s,a} }{u_t(s,a) }\cdot |S|t } \tag{\pref{col:bandit_hatl_bound}} \\
	&\leq 2L^2 \cdot \rbr{ t_{i+1} - t_i   } + |S|T \cdot \sum_{t=t_i}^{t_{i+1}-1} \sum_{s \neq s_L} \sum_{a\in A} \rbr{ \whatq_t(s,a) + q(s,a) } \cdot \frac{\Indt{s,a} }{q_t(s,a) }.
	\end{align*} 
	Therefore, we have the conditional expectation $\E\sbr{ \left. \sum_{t=t_i}^{t_{i+1}-1}\inner{ \whatq_t - q, \hatl_t  } \right\rvert \calA_i^c}$ bounded by
	\begin{align*}
	& \E\sbr{ \left.  2L^2 \cdot \rbr{ t_{i+1} - t_i   } + |S|t \cdot \sum_{t=t_i}^{t_{i+1}-1} \sum_{s \neq s_L} \sum_{a\in A} \rbr{ \whatq_t(s,a) + q(s,a) } \cdot \frac{\Indt{s,a} }{q_t(s,a) } \right\rvert \calA_i^c} \\
	& \leq \E\sbr{ \left.  ( 2L^2 + 2L|S|T) \cdot \rbr{ t_{i+1} - t_i   }   \right\rvert \calA_i^c} \tag{\pref{col:bandit_hatl_bound}} \\
	& \leq \order\Bigrbr{  \E\sbr{ \left.  L|S|T \cdot \rbr{ t_{i+1} - t_i   } \right\rvert \calA_i^c}    }.
	\end{align*}

Next, we condition on event $\calA_i$.
In this case, by the same argument as \citep[Lemma~5]{jin2020simultaneously}
and also our loss-shifting technique, \pref{alg:bobw_framework} with $\beta = 128L^4$ ensures that 
	$\sum_{t=t_i}^{t_{i+1}-1} \inner{ \whatq_t - q, \hatl_t  }$ is bounded by 
	\begin{align}
	& \order\rbr{L^4|S||A|\log T} + 
	 \sum_{t=t_i+1}^{t_{i+1}-1}\rbr{ \frac{1}{\eta_t} - \frac{1}{\eta_{t-1}} } \rbr{ \phi_{H}(q) - \phi_H(\whatq_t)  } \nonumber \\
	&\quad + 8 \sum_{t=t_i}^{t_{i+1}-1} \eta_t \min\cbr{ \sum_{s \neq s_L} \sum_{a\in A} \whatq_t(s,a)^{\nicefrac{3}{2}} \rbr{ \widehat{Q}_t(s,a) - \widehat{V}_t(s)  }^2,   \sum_{s \neq s_L} \sum_{a\in A} \whatq_t(s,a)^{\nicefrac{3}{2}} \hatl_t(s,a)^2 }  \label{eq:reg_under_A}
	\end{align}
	where $\phi_{H}(q) = - \sum_{s \neq s_L}\sum_{a \in A} \sqrt{q(s,a)}$,
	and $\widehat{Q}_t$ and $\widehat{V}_t$ are state-action and state value functions associated with the loss estimator $\hatl_t$ and the empirical transition $\bar{P}_{i(t)}$:
	\[
	\widehat{Q}_t(s,a) = \hatl_t(s,a) + \sum_{s'\in S_{k(s)+1}} \bar{P}_{i(t)}(s'|s,a) \widehat{V}_t(s'), \quad \widehat{V}_t(s) = \sum_{a\in A} \pi_t(a|s) \widehat{Q}_t(s,a). 
	\]
	
Below, we discuss how to proceed from here to prove \pref{eq:bobw_bandit_tsallis_reg_worstcase} and \pref{eq:bobw_bandit_tsallis_reg_adaptive} respectively.

\paragraph{Proving \pref{eq:bobw_bandit_tsallis_reg_worstcase}}

In this case, we take the second argument of the min operator from \pref{eq:reg_under_A} and bound $\phi_{H}(q) - \phi_H(\whatq_t) \leq \sum_{s \neq s_L}\sum_{a \in A} \sqrt{\whatq_t(s,a)} $ trivially by $\sqrt{L|S||A|}$ using Cauchy-Schwarz inequality, leading to
	\begin{align*}
	&\sum_{t=t_i}^{t_{i+1}-1} \inner{ \whatq_t - q, \hatl_t  } \\
	& \leq  \order\rbr{L|S||A|\log T}+ \sqrt{L|S||A|} \cdot \sum_{t=t_i}^{t_{i+1}-1} \eta_t + 8 \sum_{t=t_i}^{t_{i+1}-1} \eta_t \cdot \sum_{s \neq s_L} \sum_{a\in A} \whatq_t(s,a)^{\nicefrac{3}{2}}\hatl_t(s,a)^2  \tag{$\frac{1}{\eta_t} - \frac{1}{\eta_{t-1}} \leq \eta_t$ since $\frac{1}{\eta_t} = \sqrt{t - t_i +1}$}\\
	& \leq \order\rbr{L|S||A|\log T}  + 2 \sqrt{L|S||A|} \cdot \sum_{t=t_i}^{t_{i+1}-1} \eta_t + 16 \sum_{t=t_i}^{t_{i+1}-1} \eta_t \cdot \sum_{s \neq s_L} \sum_{a\in A} \frac{\whatq_t(s,a)^{\nicefrac{3}{2}} \cdot \Indt{s,a} }{u_t(s,a)^2} \\
	& \quad + 16L^2 \sum_{t=t_i}^{t_{i+1}-1} \eta_t \cdot \sum_{s \neq s_L} \sum_{a\in A} \whatq_t(s,a)^{\nicefrac{3}{2}} \cdot B_{i(t)}(s,a)^2   \\
	& \leq \order\rbr{L|S||A|\log T}  + 2 \sqrt{L|S||A|} \cdot \sum_{t=t_i}^{t_{i+1}-1} \eta_t + 16 \sum_{t=t_i}^{t_{i+1}-1} \eta_t \cdot \sum_{s \neq s_L} \sum_{a\in A}  \frac{\sqrt{\whatq_t(s,a)} \cdot \Indt{s,a} }{q_t(s,a)}  \\
	& \quad + 16 L^2 \sum_{t=t_i}^{t_{i+1}-1} \eta_t \cdot \sum_{s \neq s_L} \sum_{a\in A} \whatq_t(s,a) \cdot B_{i(t)}(s,a)^2   
	\end{align*}
	where the second step follows from the definition of $\hatl_t$ in \pref{eq:adjusted_loss} and the last step follows from the fact $\whatq_t(s,a) \leq u_t(s,a)$ and $q_t(s,a) \leq u_t(s,a)$ since $\bar{P}_i, P \in \calP_i$ according to event $\calA_i$. 
	
	Therefore, by \pref{lem:exp_high_prob_bound_cond} we have for any policy $\pi$ that, 
	\begin{align*}
	\E\sbr{ \EReg_i(\pi) } & \leq  \E\sbr{2\sqrt{L|S||A|} \cdot \sum_{t=t_i}^{t_{i+1}-1} \eta_t + 16 \sum_{t=t_i}^{t_{i+1}-1} \eta_t \cdot \sum_{s \neq s_L} \sum_{a\in A} \frac{\sqrt{\whatq_t(s,a)} \cdot \Indt{s,a} }{q_t(s,a)}} \\
	& \quad + \E\sbr{ 16 L^2\sum_{t=t_i}^{t_{i+1}-1} \eta_t \cdot \sum_{s \neq s_L} \sum_{a\in A} \whatq_t(s,a) \cdot B_{i(t)}(s,a)^2 } \\    
	& \quad + \order\rbr{L^4|S||A|\log T + \delta \cdot \E\sbr{ L|S|T\rbr{ t_{i+1} - t_i   } }  } \\
	& \leq  \order\rbr{ \E\sbr{ \sqrt{L|S||A|} \cdot \sum_{t=t_i}^{t_{i+1}-1} \eta_t } + \E\sbr{ L^2\sum_{t=t_i}^{t_{i+1}-1} \eta_t \cdot \sum_{s \neq s_L} \sum_{a\in A} \whatq_t(s,a) \cdot B_{i(t)}(s,a)^2}  } \\ 
	& \quad + \order\rbr{ L^4|S||A|\log T + \delta \cdot \E\sbr{ L|S|T\rbr{ t_{i+1} - t_i   } } }
	\end{align*}
	where the second step takes the conditional expectation of $\Indt{s,a}$ and applies the Cauchy-Schwarz inequality to get $\sum_{s\neq s_L}\sum_{a\in A} \sqrt{\whatq_t(s,a)} \leq \sqrt{L|S||A|}$. This finishes the proof of  \pref{eq:bobw_bandit_tsallis_reg_worstcase}.

\paragraph{Proving \pref{eq:bobw_bandit_tsallis_reg_adaptive}}
	
	In this case, recall that $\pi$ is a deterministic policy, so that
	\begin{align*}
	  \phi_{H}(q) - \phi_H(\whatq_t)  
	&= \sum_{s\neq s_L} \sqrt{\whatq_t(s)}\rbr{\sum_{a\in A}\sqrt{\pi_t(a|s)} - 1} + \sum_{s\neq s_L}\rbr{\sqrt{\whatq_t(s)} - \sqrt{q(s)}}.
	\end{align*}
	Using~\citep[Lemma~16]{jin2020simultaneously} to bound the first term (take $\alpha$ in their lemma to be $0$), and~~\citep[Lemma~19]{jin2020simultaneously} to bound the second, we obtain
	\begin{align*}
	\phi_{H}(q) - \phi_H(\whatq_t)  
	=
	\sum_{s \neq s_L}\sum_{a\neq \pi(s)}\sqrt{\whatq_t(s,a)}
	+ \sqrt{L|S| \sum_{s  \neq s_L}\sum_{a\neq \pi(s)} \whatq_t(s,a)  }.
	\end{align*}
	
	Therefore, taking the first argument of the min operator from \pref{eq:reg_under_A} and using $\frac{1}{\eta_t} - \frac{1}{\eta_{t-1}} \leq \eta_t$ again,
	we arrive at 
	\begin{equation}
	\begin{aligned}
	\sum_{t=t_i}^{t_{i+1}-1} \inner{ \whatq_t - q, \hatl_t  }  & \leq \sqrt{L|S|}  \sum_{t=t_i}^{t_{i+1}-1} \eta_t \cdot \sqrt{\sum_{s  \neq s_L}\sum_{a\neq \pi(s)} \whatq_t(s,a)  } \\
	&\quad +  \sum_{t=t_i}^{t_{i+1}-1}  \eta_t \cdot \sum_{s \neq s_L}\sum_{a\neq \pi(s)}\sqrt{\whatq_t(s,a)}\\
	& \quad  + 8 \sum_{t=t_i}^{t_{i+1}-1} \eta_t \cdot \sum_{s \neq s_L} \sum_{a\in A} \whatq_t(s,a)^{\nicefrac{3}{2}}\rbr{ \widehat{Q}_t(s,a) - \widehat{V}_t(s)  }^2 \\
	& \quad + \order\rbr{L^4|S||A|\log T}. 
	\end{aligned}
	\label{eq:tsallis_adaptive_regret_decomp}
	\end{equation}
	
	Finally, we apply \pref{lem:shifted_stability} to bound the term $\sum_{s \neq s_L}\sum_{a \in A} \whatq_t(s,a)^{\nicefrac{3}{2}}\rbr{ \widehat{Q}_t(s,a) - \widehat{V}_t(s)  }^2$, and use \pref{lem:exp_high_prob_bound_cond} again to take expectation and arrive at \pref{eq:bobw_bandit_tsallis_reg_adaptive} (with the help of \pref{eq:O+W}).
\end{proof}
	
\begin{lemma}\label{lem:shifted_stability}
Under event $\calA$, we have for any $t$,
\begin{align*}
&\sum_{s \neq s_L}\sum_{a \in A} \whatq_t(s,a)^{\nicefrac{3}{2}}\rbr{ \widehat{Q}_t(s,a) - \widehat{V}_t(s)  }^2 \\
&\leq 
4 L^4 |A|  \sum_{s' \neq s_L } \sum_{a' \in A} \widehat{q}_t(s',a') \cdot B_{i(t)}(s',a')^2
 + \sum_{s \neq s_L}\sum_{a \in A} \sqrt{ \whatq_t(s,a) } \cdot \rbr{O_t(s,a) + W_t(s,a) }
\end{align*}
where 
\begin{align*}
O_t(s,a) &= 4 L  \cdot \rbr{ 1 - \pi_t(a|s) }   \sum_{k = k(s)}^{L-1} \sum_{s' \in S_k} \sum_{a' \in A} \widehat{q}_t(s',a'|s,a) \frac{\Indt{s',a'}}{ q_t(s',a')  }, \\
W_t(s,a) &= 4 L \cdot \sum_{b \neq a}\pi_t(b|s)\sum_{k = k(s)}^{L-1} \sum_{s' \in S_k} \sum_{a' \in A}  \widehat{q}_t(s',a'|s,b) \frac{\Indt{s',a'}}{q_t(s',a')}, \\
\end{align*}
and $\widehat{q}_t(s',a'|s,a)$ is the probability of visiting $(s',a')$ starting from $(s,a)$ under $\pi_t$ and $\bar{P}_{i(t)}$.
Moreover, we have 
\begin{equation}\label{eq:O+W}
\E_t\sbr{\sum_{s \neq s_L}\sum_{a \in A} \sqrt{ \whatq_t(s,a) } \cdot \rbr{O_t(s,a) + W_t(s,a) }} \leq 16 L^2 \sum_{s \neq s_L}\sum_{a \neq \pi(s)} \sqrt{\whatq_t(s,a)},
\end{equation}
for any mapping $\pi: S \rightarrow A$.
\end{lemma}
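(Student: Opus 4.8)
The plan is to reduce the claim to the known-transition computation of \pref{lem:loss_shifting_loss_estimator}, carried out over the empirical transition $\bar P_{i(t)}$ and with importance weights $u_t$ in place of $q_t$. First, split the adjusted loss as $\hatl_t = \hatl_t^{(1)} - L\cdot B_{i(t)}$ with $\hatl_t^{(1)}(s,a) = \ell_t(s,a)\Indt{s,a}/u_t(s,a)$, and correspondingly write $\widehat{Q}_t = \widehat{Q}_t^{(1)} - L\widehat{Q}_t^{(2)}$, $\widehat{V}_t = \widehat{V}_t^{(1)} - L\widehat{V}_t^{(2)}$, where the superscript-$(1)$ (resp.\ superscript-$(2)$) functions are the state-action and state value functions w.r.t.\ $\bar P_{i(t)}$, $\pi_t$, and loss $\hatl_t^{(1)}$ (resp.\ $B_{i(t)}$), so that $(\widehat{Q}_t(s,a)-\widehat{V}_t(s))^2 \le 2(\widehat{Q}_t^{(1)}(s,a)-\widehat{V}_t^{(1)}(s))^2 + 2L^2(\widehat{Q}_t^{(2)}(s,a)-\widehat{V}_t^{(2)}(s))^2$; throughout we may assume $\widehat{q}_t(s,a)>0$ since otherwise the $(s,a)$ term vanishes. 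For the bonus piece, $B_{i(t)}\in[0,1]$ gives $\widehat{Q}_t^{(2)},\widehat{V}_t^{(2)}\in[0,L]$; expanding $\widehat{Q}_t^{(2)}(s,a)=\sum_{k\ge k(s)}\sum_{s'\in S_k,a'}\widehat{q}_t(s',a'|s,a)B_{i(t)}(s',a')$, Cauchy--Schwarz over the $\le L$ layers and within each layer, and the occupancy identity $\sum_{(x,y)\in S_j\times A}\widehat{q}_t(x,y)\widehat{q}_t(s',a'|x,y)=\widehat{q}_t(s',a')$ (valid for any layer $j\le k(s')$) yield $\sum_{s,a}\widehat{q}_t(s,a)(\widehat{Q}_t^{(2)}(s,a))^2\le L^2\sum_{s',a'}\widehat{q}_t(s',a')B_{i(t)}(s',a')^2$, and the same for $\widehat{V}_t^{(2)}$ via Jensen; using $\widehat{q}_t(s,a)^{3/2}\le\widehat{q}_t(s,a)$ this piece is at most $8L^4\sum_{s',a'}\widehat{q}_t(s',a')B_{i(t)}(s',a')^2\le 4L^4|A|\sum_{s',a'}\widehat{q}_t(s',a')B_{i(t)}(s',a')^2$ (the case $|A|=1$ being trivial since then $\widehat{Q}_t\equiv\widehat{V}_t$).

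For the main piece I would follow \pref{eq:loss_shifting_loss_estimator_decomp}: write $\widehat{Q}_t^{(1)}(s,a)-\widehat{V}_t^{(1)}(s) = (1-\pi_t(a|s))\widehat{Q}_t^{(1)}(s,a) - \sum_{b\ne a}\pi_t(b|s)\widehat{Q}_t^{(1)}(s,b)$ and square via $(x-y)^2\le 2x^2+2y^2$. Expanding $\widehat{Q}_t^{(1)}(s,a)^2$ over layers, Cauchy--Schwarz over the layers, and $\Indt{s',a'}\Indt{s'',a''}=0$ within a layer reduces it to $L\sum_{k\ge k(s)}\sum_{s',a'}\widehat{q}_t(s',a'|s,a)^2\Indt{s',a'}/u_t(s',a')^2$. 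The crux is converting this $1/u_t^2$ weighting into a single $1/q_t$: since $\bar P_{i(t)}\in\calP_{i(t)}$ always and $P\in\calP_{i(t)}$ under $\calA$ we have $\widehat{q}_t(s',a')\le u_t(s',a')$ and $q_t(s',a')\le u_t(s',a')$, so $\widehat{q}_t(s,a)\widehat{q}_t(s',a'|s,a)\le\widehat{q}_t(s',a')$ gives $\widehat{q}_t(s',a'|s,a)^2/u_t(s',a')^2 \le \widehat{q}_t(s',a'|s,a)/(\widehat{q}_t(s,a)q_t(s',a'))$, hence $\widehat{Q}_t^{(1)}(s,a)^2\le\frac{L}{\widehat{q}_t(s,a)}\sum_{k\ge k(s)}\sum_{s',a'}\widehat{q}_t(s',a'|s,a)\Indt{s',a'}/q_t(s',a')$. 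Multiplying by $4\widehat{q}_t(s,a)^{3/2}(1-\pi_t(a|s))^2$ and using $(1-\pi_t(a|s))^2\le 1-\pi_t(a|s)$ gives exactly $\sqrt{\widehat{q}_t(s,a)}\,O_t(s,a)$. The cross term is handled the same way via $\sum_b\pi_t(b|s)\widehat{q}_t(s',a'|s,b)\le\widehat{q}_t(s',a')/\widehat{q}_t(s)\le u_t(s',a')/\widehat{q}_t(s)$, and, after multiplying by $4\widehat{q}_t(s,a)^{3/2}$ and using $\widehat{q}_t(s,a)^{3/2}/\widehat{q}_t(s)=\pi_t(a|s)\sqrt{\widehat{q}_t(s,a)}\le\sqrt{\widehat{q}_t(s,a)}$, yields $\sqrt{\widehat{q}_t(s,a)}\,W_t(s,a)$. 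Summing over $(s,a)$ and combining with the bonus piece proves the first inequality.

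For \pref{eq:O+W}: $\widehat{q}_t$, $q_t$, $\pi_t$, and $\widehat{q}_t(\cdot|\cdot)$ are all determined before episode $t$ and $\E_t[\Indt{s',a'}]=q_t(s',a')$, so $\E_t[O_t(s,a)] = 4L(1-\pi_t(a|s))\sum_{k\ge k(s)}\sum_{s'\in S_k,a'}\widehat{q}_t(s',a'|s,a) = 4L(1-\pi_t(a|s))(L-k(s))\le 4L^2(1-\pi_t(a|s))$, and likewise $\E_t[W_t(s,a)]\le 4L^2(1-\pi_t(a|s))$; hence $\E_t\sbr{\sum_{s,a}\sqrt{\widehat{q}_t(s,a)}(O_t(s,a)+W_t(s,a))}\le 8L^2\sum_{s,a}\sqrt{\widehat{q}_t(s,a)}(1-\pi_t(a|s))$. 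For any $\pi:S\to A$, split the inner sum at $a=\pi(s)$: each $a\ne\pi(s)$ term is $\le\sqrt{\widehat{q}_t(s,a)}$, and the $a=\pi(s)$ term is $\le\sqrt{\widehat{q}_t(s)(1-\pi_t(\pi(s)|s))}=\sqrt{\sum_{a\ne\pi(s)}\widehat{q}_t(s,a)}\le\sum_{a\ne\pi(s)}\sqrt{\widehat{q}_t(s,a)}$ (using $\sqrt{p}(1-p)\le\sqrt{1-p}$ and subadditivity of $\sqrt{\cdot}$), so the whole sum is $\le 2\sum_{s,a\ne\pi(s)}\sqrt{\widehat{q}_t(s,a)}$, giving the factor $16L^2$.

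The main obstacle is precisely this crux step: unlike the known-transition warm-up of \pref{lem:loss_shifting_loss_estimator}, where $\widehat{q}_t=q_t$ and one can bound $\E_t[(\widehat{Q}_t-\widehat{V}_t)^2]$ directly, here $\widehat{q}_t=q^{\bar P_{i(t)},\pi_t}\ne q_t$ and the estimator is weighted by $u_t$, so one must keep $O_t$ and $W_t$ as random quantities of the form $\Indt{}/q_t$ and carefully combine both confidence-set inclusions with the occupancy chain identities for $\widehat{q}_t(\cdot|\cdot)$ to strip one power of $u_t$ and replace the remaining one by $q_t$ (whose conditional mean is $1$). Everything else is a routine rerun of the known-transition calculation.
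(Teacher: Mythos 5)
Your proof is correct and follows essentially the same route as the paper's: decompose $(\widehat{Q}_t - \widehat{V}_t)^2$ via $(1-\pi_t)$ and the other actions, expand $\widehat{Q}_t^2$ over layers with Cauchy--Schwarz and the disjoint-indicator trick, convert one factor of $1/u_t$ to $1/q_t$ using $q_t\le u_t$ under $\calA$, and cancel the remaining $1/u_t$ against $\widehat{q}_t(s,a)\widehat{q}_t(s',a'|s,a)\le\widehat{q}_t(s',a')\le u_t(s',a')$. The only cosmetic deviations are that you pull out the bonus term as $\widehat{Q}_t = \widehat{Q}_t^{(1)} - L\widehat{Q}_t^{(2)}$ and square upfront rather than splitting inside the layer-wise expansion (yielding the tighter constant $8L^4$, which you then relax to $4L^4|A|$ by treating $|A|=1$ separately), and you give a slightly different but equally valid algebraic route for the final step of \pref{eq:O+W} via $\sqrt{p}(1-p)\le\sqrt{1-p}$; neither changes the substance.
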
	

\begin{proof}	
	
	First, $\rbr{\widehat{Q}_t(s,a) - \widehat{V}_t(s)}^2$ is bounded by
	\begin{align*}
	\rbr{ \widehat{Q}_t(s,a) - \widehat{V}_t(s) }^2 & = \rbr{ \rbr{ 1 - \pi_t(a|s) } \widehat{Q}_t(s,a)  - \rbr{ \sum_{b\neq a}  \pi_t(b|s) \widehat{Q}_t(s,b)   }      }^2 \\
	& \leq 2 \rbr{ 1 - \pi_t(a|s) }^2  \widehat{Q}_t(s,a)^2 + 2 \rbr{ \sum_{b\neq a} \pi_t(b|s)  \widehat{Q}_t(s,b)   }^2.
	\end{align*}
	
	Following the same idea of \pref{lem:loss_shifting_loss_estimator}, the first term can be bounded as 
	\begin{align}
	& \rbr{ 1 - \pi_t(a|s) }^2  \widehat{Q}_t(s,a)^2 \notag \\
	& = \rbr{ 1 - \pi_t(a|s) }^2 \rbr{  \sum_{k = k(s)}^{L-1} \sum_{s' \in S_k} \sum_{a' \in A}  \widehat{q}_t(s',a'|s,a) \hatl_t(s',a')  }^2  \notag \\
	& \leq 2 \rbr{ 1 - \pi_t(a|s) }^2  \rbr{   \sum_{k = k(s)}^{L-1} \sum_{s' \in S_k} \sum_{a' \in A}  \widehat{q}_t(s',a'|s,a) \frac{\Indt{s',a'}}{ u_t(s',a')  } \cdot \ell_t(s',a') }^2 \notag \\
	& \quad + 2 \rbr{ 1 - \pi_t(a|s) }^2  \rbr{   \sum_{k = k(s)}^{L-1} \sum_{s' \in S_k} \sum_{a' \in A}   \widehat{q}_t(s',a'|s,a) \cdot L \cdot B_{i(t)}(s',a') }^2 \notag \\
	& \leq 2 L \cdot \rbr{ 1 - \pi_t(a|s) }^2   \sum_{k = k(s)}^{L-1} \sum_{s' \in S_k} \sum_{a' \in A}  \widehat{q}_t(s',a'|s,a)^2 \cdot \frac{\Indt{s',a'}}{ u_t(s',a')^2  } \notag \\
	& \quad + 2L^{3} \rbr{ 1 - \pi_t(a|s) }^2 \sum_{k = k(s)}^{L-1} \sum_{s' \in S_k} \sum_{a' \in A}   \widehat{q}_t(s',a'|s,a)   \cdot B_{i(t)}(s',a')^2 \label{eq:bandit_self_boud_term1}
	\end{align}
	where the equality follows from the definition of $\widehat{Q}_t$; the first inequality uses the fact  $(x+y)^2 \leq 2(x^2 + y^2)$; the second inequality applies the Cauchy-Schwarz inequality with the facts $\Indt{s,a} \Indt{s',a'} = 0$ for $(s,a) \neq (s',a')$ and $\sum_{k = k(s)}^{L-1} \sum_{s' \in S_k} \sum_{a' \in A} \widehat{q}_t(s',a'|s,a) \leq L$.
	
	By the same arguments, the second term is bounded as 
	\begin{align}
	& \rbr{ \sum_{ b\neq a} \pi_t(b|s)  \widehat{Q}_t(s,b) }^2 \notag \\
	& = \rbr{ \sum_{k = k(s)}^{L-1} \sum_{s' \in S_k} \sum_{a' \in A}  \rbr{ \sum_{ b\neq a} \pi_t(b|s) \widehat{q}_t(s',a'|s,b)}   \hatl_t(s,a) }^2 \notag \\
	& \leq 2 L  \cdot \sum_{k = k(s)}^{L-1} \sum_{s' \in S_k} \sum_{a' \in A}  \rbr{ \sum_{b \neq a} \pi_t(b|s)\cdot\widehat{q}_t(s',a'|s,b)}^2 \cdot \frac{\Indt{s',a'}}{ u_t(s',a')^2  } \notag\\
	& \quad  + 2L^3 \sum_{k = k(s)}^{L-1} \sum_{s' \in S_k} \sum_{a' \in A}  \rbr{ \sum_{b \neq a} \pi_t(b|s)\cdot \widehat{q}_t(s',a'|s,b)}  \cdot B_{i(t)}(s',a')^2, \label{eq:bandit_self_boud_term2}
	\end{align}
	where in the last step we use $\sum_{k = k(s)}^{L-1} \sum_{s' \in S_k} \sum_{a' \in A}  \rbr{ \sum_{b \neq a} \pi_t(b|s)\cdot \widehat{q}_t(s',a'|s,b)} \leq L$ (after applying Cauchy-Schwarz).
	
	Combining \pref{eq:bandit_self_boud_term1} and \pref{eq:bandit_self_boud_term2}, 
	we show that $\whatq_t(s,a) \rbr{ \widehat{Q}_t(s,a) - \widehat{V}_t(s)  }^2$ can be bounded as 
	\begin{align*}
	& \whatq_t(s,a) \rbr{ \widehat{Q}_t(s,a) - \widehat{V}_t(s)  }^2 \\
	& \leq 4 L  \cdot \whatq_t(s,a) \rbr{ 1 - \pi_t(a|s) }^2   \sum_{k = k(s)}^{L-1} \sum_{s' \in S_k} \sum_{a' \in A}  \widehat{q}_t(s',a'|s,a)^2 \cdot \frac{\Indt{s',a'}}{ u_t(s',a')^2  }  \\
	& \quad + 4 L \cdot \whatq_t(s,a) \sum_{k = k(s)}^{L-1} \sum_{s' \in S_k} \sum_{a' \in A}  \rbr{ \sum_{b \neq a} \pi_t(b|s)\cdot\widehat{q}_t(s',a'|s,b)}^2 \cdot \frac{\Indt{s',a'}}{ u_t(s',a')^2  } \\
	& \quad + 4L^3\whatq_t(s,a)  \rbr{ 1 - \pi_t(a|s) }^2 \sum_{k = k(s)}^{L-1} \sum_{s' \in S_k} \sum_{a' \in A}   \widehat{q}_t(s',a'|s,a)  \cdot B_{i(t)}(s',a')^2 \\
	& \quad + 4L^3 \whatq_t(s,a)  \sum_{k = k(s)}^{L-1}  \sum_{s' \in S_k} \sum_{a' \in A}   \rbr{ \sum_{b \neq a} \pi_t(b|s)\cdot \widehat{q}_t(s',a'|s,b)} \cdot B_{i(t)}(s',a')^2.
	\end{align*}
	
	Moreover, we have the summation of the first two terms bounded as 
	\begin{align*}
	& 4 L  \cdot \whatq_t(s,a) \rbr{ 1 - \pi_t(a|s) }^2   \sum_{k = k(s)}^{L-1} \sum_{s' \in S_k} \sum_{a' \in A}  \widehat{q}_t(s',a'|s,a)^2 \cdot \frac{\Indt{s',a'}}{ u_t(s',a')^2  }  \\
	& \quad + 4 L \cdot \whatq_t(s,a) \sum_{k = k(s)}^{L-1} \sum_{s' \in S_k} \sum_{a' \in A}  \rbr{ \sum_{b \neq a} \pi_t(b|s)\cdot\widehat{q}_t(s',a'|s,b)}^2 \cdot \frac{\Indt{s',a'}}{ u_t(s',a')^2  } \\
	& \leq 4 L  \cdot \rbr{ 1 - \pi_t(a|s) }^2   \sum_{k = k(s)}^{L-1} \sum_{s' \in S_k} \sum_{a' \in A}  \frac{\whatq_t(s,a) \widehat{q}_t(s',a'|s,a)}{u_t(s',a')} \cdot \widehat{q}_t(s',a'|s,a) \frac{\Indt{s',a'}}{ q_t(s',a')  }  \\
	& \quad + 4 L \cdot \sum_{k = k(s)}^{L-1} \sum_{s' \in S_k} \sum_{a' \in A}  \frac{ \sum_{b \neq a} \whatq_t(s,b)\cdot\widehat{q}_t(s',a'|s,b)}{u_t(s',a')} \cdot \rbr{ \sum_{b \neq a}  \pi_t(b|s)\cdot\widehat{q}_t(s',a'|s,b) \frac{\Indt{s',a'}}{q_t(s',a')}} \\
	& \leq  O_t(s,a) + W_t(s,a)
	\end{align*}
	where we use $q_t(s',a') \leq u_t(s',a')$ due to event $\calA_{i}$ in the first step and $\sum_{a \in A}\whatq_t(s,a) \whatq_t(s',a'|s,a) \leq \whatq_t(s',a') \leq u_t(s',a')$ in the second step to bound the fractions by $1$. 
	

	On the other hand, the summation of the other two terms is  bounded as
	\begin{align*}
	& 4L^3\whatq_t(s,a)  \rbr{ 1 - \pi_t(a|s) }^2 \sum_{k = k(s)}^{L-1} \sum_{s' \in S_k} \sum_{a' \in A}   \widehat{q}_t(s',a'|s,a)  \cdot B_{i(t)}(s',a')^2 \\
	& \quad + 4L^3 \whatq_t(s,a)  \sum_{k = k(s)}^{L-1}  \sum_{s' \in S_k} \sum_{a' \in A}   \rbr{ \sum_{b \neq a} \pi_t(b|s)\cdot \widehat{q}_t(s',a'|s,b)} \cdot B_{i(t)}(s',a')^2 \\ 
	& \leq  4 L^3\whatq_t(s)\sum_{k = k(s)}^{L-1}  \sum_{s' \in S_k} \sum_{a' \in A}  \rbr{ \widehat{q}_t(s',a'|s,a) \pi_t(a|s) +  \sum_{b \neq a} \pi_t(b|s)\cdot \widehat{q}_t(s',a'|s,b)  } \cdot B_{i(t)}(s',a')^2 \\
	& = 4 L^3 \sum_{k = k(s)}^{L-1}  \sum_{s' \in S_k} \sum_{a' \in A}  \widehat{q}_t(s',a'|s) \widehat{q}_t(s) \cdot B_{i(t)}(s',a')^2.
	\end{align*}
	
	Note that, taking the summation of the last bound over all state-action pairs yields  
	\begin{align*}
	& 4 L^3 \sum_{s\neq s_L }\sum_{a \in A} \sum_{k = k(s)}^{L-1}  \sum_{s' \in S_k} \sum_{a' \in A}  \widehat{q}_t(s',a'|s) \widehat{q}_t(s) \cdot B_{i(t)}(s',a')^2 \\
	& =  4L^3|A| \sum_{s' \neq s_L } \sum_{a' \in A}  \rbr{  \sum_{k = 0}^{k(s') - 1} \sum_{s \in S_k} \widehat{q}_t(s',a'|s) \widehat{q}_t(s)  } \cdot B_{i(t)}(s',a')^2  \\
	& \leq 4L^4 |A| \sum_{s' \neq s_L } \sum_{a' \in A} \widehat{q}_t(s',a') \cdot B_{i(t)}(s',a')^2.
	\end{align*}	
	
	Therefore, combining everything, we have shown:
	\begin{align*}
	&  \sum_{s \neq s_L}\sum_{a\neq \pi(s)} \whatq_t(s,a)^{\nicefrac{3}{2}}\rbr{ \widehat{Q}_t(s,a) - \widehat{V}_t(s)  }^2  \\
	& \leq  4L^4 |A|  \sum_{s' \neq s_L } \sum_{a' \in A} \widehat{q}_t(s',a') \cdot B_{i(t)}(s',a')^2 + \sum_{s \neq s_L}\sum_{a\neq \pi(s)} \sqrt{ \whatq_t(s,a) } \cdot \rbr{O_t(s,a) + W_t(s,a) } , 
	\end{align*}
	proving the first statement of the lemma.

	To prove the second statement, we first show
	\begin{align*}
	\E_t\sbr{O_t(s,a) + W_t(s,a)}&= \  4 L \rbr{ 1 - \pi_t(a|s) }  \cdot \sum_{k = k(s)}^{L-1} \sum_{s' \in S_k} \sum_{a' \in A}  \widehat{q}_t(s',a'|s,a)    \\
	& \quad +   4 L \cdot \sum_{k = k(s)}^{L-1} \sum_{s' \in S_k} \sum_{a' \in A}  \rbr{ \sum_{b \neq a} \pi_t(b|s)\cdot\widehat{q}_t(s',a'|s,b)}   \\
	& = \  4 L \rbr{ 1 - \pi_t(a|s) }   \sum_{k = k(s)}^{L-1}  1   +   4 L \cdot \sum_{k = k(s)}^{L-1} \rbr{ 1 - \pi_t(a|s)}     \\
	& \leq   8L^2 \rbr{ 1 - \pi_t(a|s) },
	\end{align*}
   and therefore
   \begin{align*}
& \E_t\sbr{\sum_{s \neq s_L}\sum_{a \in A} \sqrt{ \whatq_t(s,a) } \cdot \rbr{O_t(s,a) + W_t(s,a) }}   \\
&\leq  8L^2 \sum_{s \neq s_L}\sum_{a \in A} \sqrt{ \whatq_t(s,a) }\rbr{ 1 - \pi_t(a|s) } \\
&\leq 8L^2 \sum_{s \neq s_L}\sum_{a \neq \pi(s)} \sqrt{ \whatq_t(s,a) }
+8L^2 \sum_{s \neq s_L} \sqrt{ \whatq_t(s) }\rbr{ 1 - \pi_t(\pi(s)|s) } \\
&\leq 16L^2 \sum_{s \neq s_L}\sum_{a \neq \pi(s)} \sqrt{ \whatq_t(s,a) },
   \end{align*}
which proves  \pref{eq:O+W}.
\end{proof} 


Note that both \pref{eq:bobw_bandit_tsallis_reg_worstcase} and \pref{eq:bobw_bandit_tsallis_reg_adaptive} contain a term related to $\sum_{s \neq s_L} \sum_{a \in A} \whatq_t(s,a) \cdot  B_{i(t)}(s,a)^2$.
Below, we show that when summed over $t$, this is only logarithmic in $T$.
\begin{lemma} \label{lem:bobw_tasallis_bound_extra} \pref{alg:bobw_framework} ensures the following: 
	\begin{equation}
	\E\sbr{ \sum_{t=1}^{T} \sum_{s \neq s_L} \sum_{a \in A} \whatq_t(s,a) \cdot  B_{i(t)}(s,a)^2   } = \order\rbr{ L^2|S|^3|A|^2 \ln^2 \pcons  + |S||A| T \cdot \delta }.
	\end{equation}
\end{lemma}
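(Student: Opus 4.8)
The plan is to compare the occupancy measure $\whatq_t = q^{\bar P_{i(t)},\pi_t}$ under the empirical transition against the true occupancy $q_t = q^{P,\pi_t}$. Writing $\whatq_t(s,a) \le q_t(s,a) + \abr{\whatq_t(s,a) - q_t(s,a)}$ and invoking \pref{lem:residual_term_property}, the discrepancy splits as $\abr{\whatq_t(s,a) - q_t(s,a)} \le \Delta_t(s,a) + r_t(s,a)$, where $\Delta_t(s,a) = 4\sum_{k=0}^{k(s)-1}\sum_{(u,v,w)\in T_k} q_t(u,v)\sqrt{\frac{P(w|u,v)\ln\pcons}{\max\cbr{m_{i(t)}(u,v),1}}}\,q_t(s,a|w)$ is the ``main'' term and $r_t$ is the residual from \pref{def:residual_terms}. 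The key observation is that after multiplying by $B_{i(t)}(s,a)^2 \le B_{i(t)}(s,a) \le 1$, the main term is absorbed by the residual: by the very definition of $r_t$ (exactly the inequality used to derive \pref{eq:bobw_fullinfo_adv_term3}), $r_t(s,a) \ge \rbr{\sum_{s'\in S_{k(s)+1}} B_{i(t)}(s,a,s')}\cdot\tfrac14\Delta_t(s,a) \ge B_{i(t)}(s,a)\cdot\tfrac14\Delta_t(s,a)$, using $B_{i(t)}(s,a) = \min\cbr{1,\sum_{s'}B_{i(t)}(s,a,s')} \le \sum_{s'}B_{i(t)}(s,a,s')$. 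Hence $\Delta_t(s,a)B_{i(t)}(s,a)^2 \le \Delta_t(s,a)B_{i(t)}(s,a) \le 4r_t(s,a)$, and since trivially $r_t(s,a)B_{i(t)}(s,a)^2 \le r_t(s,a)$, we get the pointwise bound $\whatq_t(s,a)B_{i(t)}(s,a)^2 \le q_t(s,a)B_{i(t)}(s,a)^2 + 5\,r_t(s,a)$.

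It then remains to bound the expectations of the two pieces. For the residual piece, \pref{lem:residual_term_property} already supplies $\E\sbr{\sum_{t=1}^{T}\sum_{s\neq s_L}\sum_a r_t(s,a)} = \order\rbr{L^2|S|^3|A|^2\ln^2\pcons + \delta|S||A|T}$ (this is the bound used in the proof of \pref{prop:bobw_fullinfo_adv_lem}), which is already of the desired order. For the clean piece, first bound the confidence width: by $(a+b)^2\le 2a^2+2b^2$, Cauchy--Schwarz ($\sum_{s'}\sqrt{\bar P_{i(t)}(s'|s,a)} \le \sqrt{|S_{k(s)+1}|}$), and the convention that $B_{i(t)}(s,a)=1$ when $m_{i(t)}(s,a)=0$, one obtains $B_{i(t)}(s,a)^2 \le \order\!\rbr{\frac{|S_{k(s)+1}|\ln\pcons}{\max\cbr{m_{i(t)}(s,a),1}} + \frac{|S_{k(s)+1}|^2\ln^2\pcons}{\max\cbr{m_{i(t)}(s,a),1}^2}}$. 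A counting argument of the same type as \pref{lem:aux_exp}, exploiting that the doubling schedule keeps $\max\cbr{m_{i(t)}(s,a),1}$ within a factor $2$ of the realized visit count to $(s,a)$ before episode $t$, gives $\E\sbr{\sum_t \frac{q_t(s,a)}{\max\cbr{m_{i(t)}(s,a),1}}} = \order(\log T)$ and $\E\sbr{\sum_t \frac{q_t(s,a)}{\max\cbr{m_{i(t)}(s,a),1}^2}} = \order(1)$ for each fixed $(s,a)$. Summing over $(s,a)$, using $\sum_{s\neq s_L}\sum_a |S_{k(s)+1}|^2 \le |S|^3|A|$ and $\log T \le \ln\pcons$, this yields $\E\sbr{\sum_t\sum_{s,a} q_t(s,a)B_{i(t)}(s,a)^2} = \order(|S|^3|A|\ln^2\pcons)$.

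Adding the two estimates and using $|S|^3|A|\ln^2\pcons \le L^2|S|^3|A|^2\ln^2\pcons$ proves the lemma. The main obstacle — the one genuinely new idea — is the pointwise absorption $\Delta_t(s,a)B_{i(t)}(s,a) \le 4r_t(s,a)$: it is precisely what prevents the occupancy-measure discrepancy from producing a $\sqrt{T}$-type contribution, which is what would happen if one naively bounded $B_{i(t)}(s,a)^2 \le 1$ and summed $\Delta_t$ directly (each $\sqrt{1/m_{i(t)}(u,v)}$ factor summing to order $\sqrt{T}$). Everything afterwards is routine, relying only on the Bernstein form of $B_i$ and on the epoch-counting lemma already used elsewhere in the analysis; one may, if desired, handle the low-probability event $\calA^c$ separately via the crude bound $\sum_t\sum_{s,a}\whatq_t(s,a)B_{i(t)}(s,a)^2 \le |S||A|T$, which is subsumed by the $|S||A|T\delta$ term.
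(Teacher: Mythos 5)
Your proof is correct and follows essentially the same route as the paper's: split $\whatq_t = q_t + (\whatq_t - q_t)$, control the discrepancy via \pref{lem:residual_term_property}, absorb the dominant $\sqrt{\ln\pcons/m}$ piece times $B_{i(t)}$ into the residual $r_t$ (the same observation the paper invokes when deriving \pref{eq:bobw_fullinfo_adv_term3}), and bound the clean $q_t B_{i(t)}^2$ piece via \pref{lem:sa_conf_width_bound} together with an \pref{lem:aux_exp}-style counting argument for the $1/m$ and $1/m^2$ sums. The only cosmetic difference is that you phrase the absorption step as a pointwise inequality $\Delta_t(s,a)B_{i(t)}(s,a)\le 4r_t(s,a)$ whereas the paper states it at the level of the full triple sum, but the underlying argument is identical.
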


\begin{proof} By \pref{lem:sa_conf_width_bound}, we know that 
	\begin{align*}
	B_i(s,a)^2 & \leq \rbr{ 2 \sqrt{ \frac{ |S_{k(s)+1}| \ln \pcons }{\max \cbr{m_{i}(s,a) , 1}} } + \frac{ 14|S_{k(s)+1}| \ln \pcons }{3\max \cbr{m_{i}(s,a) , 1}} }^2 \\
	& \leq \order\rbr{  \frac{ |S_{k(s)+1}| \ln \pcons }{\max \cbr{m_{i}(s,a) , 1}} + \frac{ |S_{k(s)+1}|^2 \ln^2\pcons }{\max \cbr{m_{i}(s,a) , 1}^2} }.
	\end{align*}
	Then, we have 
	\begin{align*}
	& \E\sbr{ \sum_{t=1}^{T} \sum_{s \neq s_L} \sum_{a \in A} \whatq_t(s,a) \cdot  B_{i(t)}(s,a)^2   }  \\
	& = \E\sbr{ \sum_{t=1}^{T} \sum_{s \neq s_L} \sum_{a \in A} \rbr{ \whatq_t(s,a) - q_t(s,a) }  \cdot  B_{i(t)}(s,a)^2   } +  \E\sbr{  \sum_{t=1}^{T} \sum_{s \neq s_L} \sum_{a \in A} q_t(s,a) \cdot  B_{i(t)}(s,a)^2   }  \\
	& \leq \E\sbr{   \sum_{t=1}^{T} \sum_{s \neq s_L} \sum_{a\in A} r_t(s,a)   } \\
	& \quad + \E\sbr{ 4 \sum_{t=1}^{T} \sum_{s \neq s_L} \sum_{a\in A} \sum_{k=0}^{k(s)-1} \sum_{(u,v,w)\in T_k} q_t(u,v)  \sqrt{ \frac{ P(w|u,v) \ln \rbr{ \frac{T|S||A|}{\delta}} }{ \max\cbr{m_{i(t)}(u,v)  ,1}} }q_t(s,a|w) \cdot B_{i(t)}(s,a)   } \\
	& \quad + \order\rbr{ \E\sbr{    \sum_{t=1}^{T} \sum_{s \neq s_L} \sum_{a \in A} q_t(s,a) \cdot \rbr{  \frac{ |S_{k(s)+1}| \ln \pcons }{\max \cbr{m_{i}(s,a) , 1}} + \frac{ |S_{k(s)+1}|^2 \ln^2\pcons }{\max \cbr{m_{i}(s,a) , 1}^2} }    }   } \\
	& \leq \order\rbr{  \E\sbr{  \sum_{t=1}^{T} \sum_{s \neq s_L} \sum_{a\in A} r_t(s,a)  } }\\
	& \quad + \order\rbr{ \E\sbr{    \sum_{t=1}^{T} \sum_{s \neq s_L} \sum_{a \in A} q_t(s,a) \cdot \rbr{  \frac{ |S_{k(s)+1}| \ln \pcons }{\max \cbr{m_{i}(s,a) , 1}} + \frac{ |S_{k(s)+1}|^2 \ln^2\pcons }{\max \cbr{m_{i}(s,a) , 1}^2} }    }   } 
	\end{align*}
	where the first inequality uses \pref{lem:residual_term_property} and $B_i(s,a)\in [0,1]$, and the last inequality follows from the observation that, the second term in the previous line is bounded by $\sum_{t=1}^{T} \sum_{s \neq s_L} \sum_{a\in A}  r_t(s,a)$ according to the definition of residual terms in \pref{def:residual_terms}. 
	
	Finally, applying \pref{lem:residual_term_property} and \pref{lem:aux_exp}, we have
	\begin{align*}
	& \E\sbr{ \sum_{t=1}^{T} \sum_{s \neq s_L} \sum_{a \in A} \whatq_t(s,a) \cdot  B_{i(t)}(s,a)^2   } \\
	& = \order\rbr{ L^2|S|^3|A|^2 \ln^2 \pcons  + |S||A| T \cdot \delta }  + \order\rbr{  \sum_{k = 0}^{L-1}\rbr{ \abr{ S_{k+1}}\abr{ S_{k}}|A| \ln T \ln \pcons  +  |S_{k(s)+1}|^2\abr{ S_{k}}|A| \ln^2\pcons }} \\
	& = \order\rbr{ L^2|S|^3|A|^2 \ln^2 \pcons  + |S||A| T \cdot \delta },
	\end{align*}
	which completes the proof.
\end{proof}

Finally, we provide a lemma regarding the learning rates.
\begin{lemma}[Learning Rates] \label{lem:bobw_lr_properties} According to the design of the learning rate $\eta_t = \frac{1}{\sqrt{t-t_{i(t)}+1}}$, the following inequalities hold:
	\begin{equation}
	\sum_{t=1}^{T} \eta_t^2  \leq \order\rbr{ |S||A|\log^2 T}, \label{eq:bobw_bandit_lr_1}
	\end{equation}
	\begin{equation}
	\sum_{t=1}^{T} \eta_t  \leq \order\rbr{ \sqrt{ |S||A|T \log T }}.  \label{eq:bobw_bandit_lr_2}
	\end{equation}
\end{lemma}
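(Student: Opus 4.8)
The plan is to exploit the epoch structure of the algorithm. The learning rate $\eta_t = 1/\sqrt{t-t_{i(t)}+1}$ is ``reset'' at the start of each epoch, so inside a single epoch it is exactly the textbook $1/\sqrt{\text{time-elapsed-in-epoch}}$ schedule. First I would split the time horizon according to the epoch index: for any function $f$, $\sum_{t=1}^{T} f(\eta_t) = \sum_{i=1}^{N} \sum_{t=t_i}^{t_{i+1}-1} f(\eta_t)$, where $N$ is the total number of epochs and $t_{N+1}=T+1$. The only nontrivial ingredient is the bound $N = \order\rbr{|S||A|\log T}$ from \pref{lem:bobw_N_bound}; everything else is elementary.

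For \pref{eq:bobw_bandit_lr_1}, within epoch $i$ the quantity $j := t - t_i + 1$ ranges over $1,2,\ldots,\ell_i$ where $\ell_i := t_{i+1}-t_i \le T$ is the epoch length, so $\sum_{t=t_i}^{t_{i+1}-1}\eta_t^2 = \sum_{j=1}^{\ell_i} \frac{1}{j} \le 1+\ln T$. Summing this over the $N$ epochs gives $\sum_{t=1}^{T}\eta_t^2 \le N(1+\ln T) = \order\rbr{|S||A|\log^2 T}$.

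For \pref{eq:bobw_bandit_lr_2}, I would first use the standard integral comparison $\sum_{j=1}^{\ell_i}\frac{1}{\sqrt{j}} \le 2\sqrt{\ell_i}$ to get $\sum_{t=t_i}^{t_{i+1}-1}\eta_t \le 2\sqrt{\ell_i}$, and then apply Cauchy--Schwarz across epochs together with $\sum_{i=1}^{N}\ell_i = T$: $\sum_{i=1}^{N} 2\sqrt{\ell_i} \le 2\sqrt{N}\,\sqrt{\textstyle\sum_{i=1}^{N}\ell_i} = 2\sqrt{NT} = \order\rbr{\sqrt{|S||A|T\log T}}$.

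There is no real obstacle here: the argument is a routine combination of the harmonic-sum estimate, the $\sum 1/\sqrt{j}\le 2\sqrt{\cdot}$ estimate, and Cauchy--Schwarz. The one place that truly relies on earlier work is the $\order\rbr{|S||A|\log T}$ bound on the number of epochs, which is precisely \pref{lem:bobw_N_bound} and may be invoked directly.
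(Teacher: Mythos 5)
Your proposal is correct and follows essentially the same route as the paper: decompose the sum by epoch, bound the within-epoch harmonic and inverse-square-root sums by $\order(\log T)$ and $2\sqrt{\text{epoch length}}$ respectively, invoke \pref{lem:bobw_N_bound} for $N = \order(|S||A|\log T)$, and finish \pref{eq:bobw_bandit_lr_2} via Cauchy--Schwarz across epochs. The only cosmetic difference is that the paper writes the per-epoch $\sum 1/j$ bound as an integral rather than $1+\ln T$, but the argument is identical.
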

\begin{proof} By direct calculation, we have 
	\begin{align*}
	\sum_{t=t_i}^{t_{i+1}-1} \eta_t^2= \sum_{n=1}^{t_{i+1} - t_i} \frac{1}{n} \leq 2\int_{1}^{t_{i+1} - t_i + 1} \frac{1}{x} dx = 2\ln\rbr{t_{i+1} - t_i + 1} \leq \order\rbr{ \log T}. 
	\end{align*}
	Combining the inequality with the fact that the total number of epochs $N$ is at most $4|S||A| \rbr{ \log T + 1}$ (\pref{lem:bobw_N_bound}) finishes the proof of \pref{eq:bobw_bandit_lr_1}.
	Following the similar idea, we have
	\begin{align*}
	\sum_{t=t_i}^{t_{i+1}-1} \eta_t = \sum_{n=1}^{t_{i+1} - t_i} \frac{1}{\sqrt{n}} \leq \int_{0}^{t_{i+1} - t_i} \frac{1}{\sqrt{x}} dx \leq 2\sqrt{t_{i+1} - t_i}. 
	\end{align*}
	Taking the summation over $N$ epochs and applying the Cauchy-Schwarz inequality yields \pref{eq:bobw_bandit_lr_2}.
\end{proof}

\subsection{Proof for the Adversarial World (\pref{prop:bobw_bandit_adv_prop})}
Recall the regret decomposition in \pref{eq:bandit_decomposition}:
\[
\E\Biggsbr{\underbrace{\sum_{t=1}^{T} V^{\pi_t}_t(s_0) - \widetilde{V}^{\pi_t}_t(s_0)}_{\textsc{Err}_1 }}  + \E\Biggsbr{\underbrace{\sum_{t=1}^{T} \widetilde{V}^{\pi_t}_t(s_0) - \widetilde{V}^{\pi}_t(s_0) }_{\textsc{EstReg} }}  + \E\Biggsbr{\underbrace{\sum_{t=1}^{T}\widetilde{V}^{\pi}_t(s_0) - V^{\pi}_t(s_0) }_{\textsc{Err}_2}}. 
\]
We bound each of them separately below. 

\paragraph{$\textsc{Err}_1$}
Similarly to the proof for the full-information feedback setting, we have 
\begin{align*}
\textsc{Err}_1 & = \sum_{t=1}^{T} \inner{q_t, \ell_t } - \inner{ \whatq_t,  \wtill_t} \\
& = \sum_{t=1}^{T}\sum_{s \neq s_L} \sum_{a \in A}  \frac{\ell_t(s,a)\whatq_t(s,a)}{u_t(s,a)} \cdot \rbr{u_t(s,a) - q_t(s,a)} + \sum_{t=1}^{T} \inner{q_t - \whatq_t, \ell_t } +   L \cdot \sum_{t=1}^{T} \inner{\whatq_t, B_{i(t)} }  
\end{align*}
where the last two terms have been shown to be at most $\otil\rbr{ L|S|\sqrt{|A|T } + L^3|S|^3|A|}$
 according to the analysis of $\textsc{Err}_1$ in \pref{app:bobw_fullinfo_adv_proof} (see \pref{eq:bobw_fullinfo_adv_term2},  \pref{eq:bobw_fullinfo_adv_term1}  and \pref{eq:bobw_fullinfo_adv_term3}).

Then, we bound the first term as 
\begin{align*}
& \E\sbr{ \sum_{t=1}^{T}\sum_{s \neq s_L} \sum_{a \in A}  \frac{\ell_t(s,a)\whatq_t(s,a)}{u_t(s,a)} \cdot \rbr{u_t(s,a) - q_t(s,a)} } \\
& \leq \E\sbr{ \sum_{t=1}^{T}\sum_{s \neq s_L} \sum_{a \in A} \abr{u_t(s,a) - q_t(s,a)} } \tag{$\whatq_t(s,a) \leq u_t(s,a)$} \\
& \leq \E\sbr{4 \sum_{t=1}^{T}  \sum_{s \neq s_L} \sum_{a \in A}  r_t(s,a) + 16 \sum_{t=1}^{T}  \sum_{s \neq s_L} \sum_{a \in A}\sum_{k=0}^{k(s)-1} \sum_{(u,v,w)\in T_k} q_t(u,v)  \sqrt{ \frac{ P(w|u,v) \ln \pcons }{ \max\cbr{m_{i(t)}(u,v)  ,1}} }q_t(s,a|w) } \tag{\pref{col:residual_term_property} }\\
& \leq \order\rbr{ L^2|S|^3|A|^2 \ln^2 \pcons  + |S||A| T \cdot \delta } + 4L \cdot \E\sbr{  \sum_{t=1}^{T} \sum_{u\neq s_L} \sum_{ v \in A} q_t(u,v) \sqrt{  \frac{ \abr{S_{k(u)+1} } \ln \pcons }{ \max\cbr{m_{i(t)}(u,v)  ,1}}  } } \tag{\pref{lem:residual_term_property} and Cauchy-Schwarz} \\
& \leq \order\rbr{ L^2|S|^3|A|^2 \ln^2 \pcons   + |S||A| T \cdot \delta + L \cdot \sum_{k=0}^{L-1} \sqrt{ \abr{S_{k}} \cdot \abr{S_{k+1} } |A| T\ln \pcons  } } \tag{\pref{lem:aux_exp}}\\
& =  \order\rbr{ L|S|\sqrt{|A|T\ln \pcons} +  L^2|S|^3|A|^2 \ln^2 \pcons   + |S||A| T \cdot \delta }.
\end{align*}

Combining the bounds together,  we have $\E\sbr{ \textsc{Err}_1}$ bounded by:
\[\E\sbr{ \textsc{Err}_1} = 
\otil\rbr{ L|S|\sqrt{|A|T } +  L^3|S|^3|A|^2 }.
\]

\paragraph{$\textsc{Err}_2$} Following the same idea of bounding $\textsc{Err}_2$, by \pref{lem:bandit_optimism} and \pref{lem:exp_high_prob_bound}, we have the expectation of $\textsc{Err}_2$ bounded as 
\[
\E\sbr{\textsc{Err}_2} \leq \delta \cdot 3L|S|T^2 + 0 = \order\rbr{ L|S|T^2 \cdot \delta } = \order(1). 
\]

\paragraph{$\textsc{EstReg}$} According to \pref{eq:bobw_bandit_tsallis_reg_worstcase} of \pref{lem:bobw_bandit_tsallis_reg}, we have  
\begin{align*}
& \EReg(\optpi) =  \E\sbr{ \sum_{t=1}^{T} \inner{ \whatq_t - q^{\bar{P}_{i(t)}, \optpi }, \hatl_t  } }  = \E\sbr{ \sum_{i=1}^{N} \EReg_i(\optpi) } \\
& \leq \order\rbr{  \E\sbr{ \sum_{i=1}^{N}\sum_{t=t_i}^{t_{i+1}-1} \eta_t  \sqrt{L|S||A|}} + \E\sbr{L^2\cdot \sum_{t=1}^{T} \sum_{s \neq s_L} \sum_{a\in A} \whatq_t(s,a) \cdot B_{i(t)}(s,a)^2}} \\
&\qquad + \order\rbr{L^4|S|^2|A|^2\ln^2 T + \delta L|S|T^2} \\
& \leq \otil\rbr{  \E\sbr{ \sum_{t=1}^{T}  \eta_t \cdot  \sqrt{L|S||A|}  }  + L^4|S|^3|A|^2 \ln^2 \pcons  } \tag{\pref{lem:bobw_tasallis_bound_extra}} \\
& \leq \otil\rbr{ |S||A| \sqrt{LT} + L^4|S|^3|A|^2 }. \tag{\pref{eq:bobw_bandit_lr_2}}
\end{align*}

Finally, we combine the bounds of $\textsc{Err}_1$, $\textsc{Err}_2$ and $\textsc{EstReg}$ as:
\begin{align*}
\Reg_T(\optpi) & = \otil\rbr{  L|S|\sqrt{|A|T} + |S||A| \sqrt{LT} + L^4|S|^3|A|^2 },
\end{align*}
finishing the proof.

\subsection{Proof for the Stochastic World (\pref{prop:bobw_bandit_stoc_prop})}
Similarly to the proof of \pref{prop:bobw_fullinfo_stoc_lem}, we decompose $\textsc{Err}_1$ and $\textsc{Err}_2$ jointly into four terms $\textsc{ErrSub}$, $\textsc{ErrOpt}$, $\textsc{OccDiff}$ and $\textsc{Bias}$: 
\begin{align*}
\textsc{Err}_1 + \textsc{Err}_2 &  =  \sum_{t=1}^{T}\sum_{s \neq s_L} \sum_{a \neq \pi^{\star}(s)} q_t(s,a)\widehat{E}^{\pi^{\star}}_t(s,a)  &&(\textsc{ErrSub})\\  
& \quad + \sum_{t=1}^{T}\sum_{s \neq s_L} \sum_{a = \pi^{\star}(s) } \rbr{ q_t(s,a) - q^{\star}_{t}(s,a) } \widehat{E}^{\pi^{\star}}_t(s,a)  &&(\textsc{ErrOpt}) \\
& \quad + \sum_{t=1}^{T}\sum_{s \neq s_L} \sum_{a \in A} \rbr{ q_t(s,a) - \widehat{q}_t(s,a)} \rbr{\widetilde{Q}^{\pi^{\star}}_t(s,a) -\widetilde{V}^{\pi^{\star}}_t(s)} &&(\textsc{OccDiff})  \\
& \quad + \sum_{t=1}^{T}\sum_{s \neq s_L} \sum_{a\neq \pi^{\star}(s)}  q^{\star}_{t}(s,a) \rbr{\widetilde{V}^{\pi^{\star}}_t(s) - V^{\pi^\star}_t(s) } && (\textsc{Bias}) 
\end{align*}
where $\widehat{E}^{\pi}_t$ is defined as 
\begin{align*}
\widehat{E}^{\pi}_t(s,a) =  \ell_t(s,a) + \sum_{s' \in S_{k(s)+1}} P(s'|s,a)\widetilde{V}^{\pi}_t(s') -  \widetilde{Q}^{\pi}_t(s,a).
\end{align*}


By the exact same reasoning as in the full-information setting (\pref{app:bobw_fullinfo_stoc_proof}), we have
$\E\sbr{\textsc{OccDiff}} = \order\rbr{ L^4|S|^3|A|^2 \ln^2 \pcons +  \E\sbr{ \selfterm_3(L^4\ln \pcons) }}$
and $\E\sbr{\textsc{Bias}} = \order(1)$,
but the first two terms $\textsc{ErrSub}$ and $\textsc{ErrOpt}$ are slightly different.
To see this, note that
under event $\calA$, we have
\begin{align*}
\widehat{E}^{\pi^{\star}}_t(s,a) & = \ell_t(s,a) - \wtill_t(s,a) + \sum_{s' \in S_{k(s)+1}} \rbr{ P(s'|s,a) - \bar{P}_{i(t)}(s'|s,a) } \widetilde{V}^{\pi^{\star}}_t(s') \\
& = \ell_t(s,a) \rbr{  1 - \frac{q_t(s,a) }{u_t(s,a) }} + L \cdot B_{i(t)}(s,a) + \sum_{s' \in S_{k(s)+1}} \rbr{ P(s'|s,a) - \bar{P}_{i(t)}(s'|s,a) } \widetilde{V}^{\pi^{\star}}_t(s')  \\
& \leq \frac{ u_t(s,a) - q_t(s,a) }{ q_t(s,a)  } + 2 L^2 \cdot B_{i(t)}(s,a)
\end{align*}
where the last line applies the definition of event $\calA$ and the fact $q_t(s,a) \leq u_t(s,a)$ given this event. 
Importantly, the second term has been studied and bounded in the proof of \pref{prop:bobw_fullinfo_stoc_lem} already,
so we only need to focus on the first term.
Before doing so, note that the range of  $\widehat{E}^{\pi}_t$ is $\order\rbr{L|S|t}$ based on \pref{col:bandit_est_Q_bound},
and thus the range of $\textsc{ErrSub}$ and $\textsc{ErrOpt}$ is $\order\rbr{L^2|S|T^2}$.
Therefore, we only need to add a term $\order\rbr{\delta \cdot L^2|S|T^2}$ to address the event $\calA^c$.

\paragraph{Extra term in $\textsc{ErrSub}$} 
According to previous analysis,
the extra term in $\textsc{ErrSub}$ is
\begin{align*}
&\sum_{t=1}^{T}\sum_{s \neq s_L} \sum_{a \neq \pi^{\star}(s)} q_t(s,a) \cdot \frac{ u_t(s,a) - q_t(s,a) }{ q_t(s,a)  } \leq \sum_{t=1}^{T}\sum_{s \neq s_L} \sum_{a \neq \pi^{\star}(s)} \abr{ u_t(s,a) - q_t(s,a) } \\
& \leq 4 \sum_{t=1}^{T}\sum_{s \neq s_L} \sum_{a \neq \pi^{\star}(s)} r_t(s,a) + 16 \sum_{t=1}^{T}\sum_{s \neq s_L} \sum_{a \neq \pi^{\star}(s)} \sum_{k=0}^{k(s)-1} \sum_{(u,v,w)\in T_k} q_t(u,v)  \sqrt{ \frac{ P(w|u,v) \ln \rbr{ \frac{T|S||A|}{\delta}} }{ \max\cbr{m_{i(t)}(u,v)  ,1}} }q_t(s,a|w) \tag{\pref{col:residual_term_property}} \\
&= 4 \sum_{t=1}^{T}\sum_{s \neq s_L} \sum_{a \neq \pi^{\star}(s)} r_t(s,a)+ 16 \selfterm_3(\ln \pcons) \tag{\pref{def:self_bounding_terms}} \\
&= 16 \selfterm_3(\ln \pcons) + \order\rbr{L^2S^3A^2\ln^2\pcons} \tag{\pref{lem:residual_term_property}}.
\end{align*}


Finally, using \pref{lem:exp_high_prob_bound_cond} and the bound on $\textsc{ErrSub}$ for the full-information setting, we have
\begin{equation*}
\E\sbr{\textsc{ErrSub}} = \order\rbr{ \selfterm_3(\ln \pcons) + \selfterm_1(L^4|S|\ln \pcons) +  L^2|S|^3|A|^2 \ln^2 \pcons }.
\end{equation*}

\paragraph{Extra term in $\textsc{ErrOpt}$} 
Similarly, we consider the extra term in $\textsc{ErrOpt}$: 
\begin{align*}
&\sum_{t=1}^{T}\sum_{s \neq s_L} \sum_{a = \pi^{\star}(s) }\rbr{ q_t(s,a) - q^{\star}_{t}(s,a) } \cdot \frac{ u_t(s,a) - q_t(s,a) }{ q_t(s,a)  } \\
&\leq 4 \sum_{t=1}^{T}\sum_{s \neq s_L} \sum_{a = \pi^{\star}(s) }\frac{ q_t(s,a) - q^{\star}_{t}(s,a) }{q_t(s,a)} r_t(s,a) \\
& \quad +  \sum_{t=1}^{T}\sum_{s \neq s_L} \sum_{a = \pi^{\star}(s) }\frac{ q_t(s,a) - q^{\star}_{t}(s,a) }{q_t(s,a)}\cdot \rbr{ 16 \sum_{u,v,w} q_t(u,v) \sqrt{ \frac{ P(w|u,v) \ln \rbr{\frac{T|S||A|}{\delta}} }{ \max\cbr{ m_{i(t)}(u,v),1} }   } q_t(s,a|w) } \tag{\pref{col:residual_term_property}} \\
& \leq 4 \sum_{t=1}^{T}\sum_{s \neq s_L} \sum_{a = \pi^{\star}(s) }  r_t(s,a) + 16 \selfterm_6(\ln \pcons) \tag{\pref{def:self_bounding_terms}} \\
&= 16 \selfterm_6(\ln \pcons) + \order\rbr{L^2S^3A^2\ln^2\pcons} \tag{\pref{lem:residual_term_property}}.
\end{align*}

Again, considering the term that appears in the full-information setting already, we have
\begin{equation*}
\E\sbr{\textsc{ErrOpt}} = \order\rbr{ \selfterm_6(\ln \pcons) + \selfterm_2(L^4|S|\ln \pcons) +  L^2|S|^3|A|^2 \ln^2 \pcons}.
\end{equation*}


It remains to bound $\textsc{EstReg}$ with terms that enjoy self-bounding properties.
\paragraph{Term $\textsc{EstReg}$} According to \pref{eq:bobw_bandit_tsallis_reg_adaptive} in \pref{lem:bobw_bandit_tsallis_reg},  taking the summation of all the epochs, we have the following bound for $\E\sbr{\textsc{EstReg}}$: 
\begin{align*}
& \order\rbr{ \E\sbr{ \sqrt{|S|L}  \sum_{i=1}^{N} \sum_{t=t_i}^{t_{i+1}-1} \eta_t \cdot \sqrt{\sum_{s  \neq s_L}\sum_{a\neq \pi^\star(s)} \whatq_t(s,a)  } }  +  L^2 \cdot \E\sbr{  \sum_{i=1}^{N} \sum_{t=t_i}^{t_{i+1}-1} \eta_t \cdot \sum_{s \neq s_L} \sum_{a\neq \pi^\star(s)} \sqrt{ \whatq_t(s,a) } } } \\
& \quad + \order\rbr{ \E\sbr{ L^4|A|  \sum_{i=1}^{N} \sum_{t=t_i}^{t_{i+1}-1} \sum_{s \neq s_L } \sum_{a \in A}  \whatq_t(s,a) \cdot B_{i(t)}(s,a)^2    } } \\
& \quad  + \order\rbr{ \delta \cdot \E\sbr{ L|S|T\sum_{i=1}^{N}\rbr{ t_{i+1} - t_i   } }  +   L^4|S|^2|A|^2\ln^2 \pcons} \\ 
& = \order\rbr{ \E\sbr{ \sqrt{|S|L}  \sum_{t=1}^{T} \eta_t \cdot \sqrt{\sum_{s  \neq s_L}\sum_{a\neq \pi^\star (s)} \whatq_t(s,a)  } } } + \order\rbr{ L^2 \cdot \E\sbr{  \sum_{t=1}^{T} \eta_t \cdot \sum_{s \neq s_L} \sum_{a\neq \pi^\star(s)} \sqrt{ \whatq_t(s,a) } } } \\
& \quad  + \order\rbr{ L^6|S|^3|A|^3 \ln^2 \pcons } 
\end{align*}
where the lase line applies \pref{lem:bobw_tasallis_bound_extra}. 

Then, for the first term, we have 
\begin{align*}
& \E\sbr{ \sqrt{|S|L}  \sum_{t=1}^{T} \eta_t \cdot \sqrt{\sum_{s  \neq s_L}\sum_{a\neq \pi^\star(s)} \whatq_t(s,a)  } } \\
& \leq \E\sbr{ \sqrt{|S|L}  \cdot \sqrt{ \sum_{t=1}^{T} \eta_t^2 } \cdot \sqrt{\sum_{t=1}^{T} \sum_{s  \neq s_L}\sum_{a\neq \pi^\star(s)} \whatq_t(s,a)  }   }  \\
& \leq \E\sbr{ \sqrt{4L|S|^2|A|\log^2 T}  \cdot \sqrt{\sum_{t=1}^{T} \sum_{s  \neq s_L}\sum_{a\neq \pi^\star(s)} \whatq_t(s,a)  } }    
\end{align*}
where the second line follows from the Cauchy-Schwarz inequality, and the third line applies \pref{eq:bobw_bandit_lr_1}. 

Then, we separate the term into two parts:
\begin{align*}
& \E\sbr{ \sqrt{4L|S|^2|A|\log^2 T}  \cdot \sqrt{\sum_{t=1}^{T} \sum_{s  \neq s_L}\sum_{a\neq \pi^\star(s)} q_t(s,a)  } } \\
& \quad + \E\sbr{ \sqrt{4L|S|^2|A|\log^2 T}  \cdot \sqrt{\sum_{t=1}^{T} \sum_{s  \neq s_L}\sum_{a\neq \pi^\star(s)} \abr{\whatq_t(s,a) - q_t(s,a)} } } \\
& \leq \E\sbr{2 \cdot \selfterm_4(L|S|^2|A|\log^2 T)} + \E\sbr{  \sum_{t=1}^{T} \sum_{s  \neq s_L}\sum_{a\neq \pi^\star(s)} \abr{\whatq_t(s,a) - q_t(s,a)}  } + 4|S|^2|A|L\log^2 T
\end{align*}
where second line follows from the fact $\sqrt{xy} \leq x + y$ for $x,y\geq0$. Note that, the second term above can be bounded by $\order\rbr{\selfterm_3\rbr{ \ln \pcons  }+ L^2|S|^3|A|^2 \ln^2 \pcons}$ just as in the full-information setting (see \pref{eq:q_difference}). Therefore, we have finished bounding the first term:
\begin{align*}
& \E\sbr{ \sqrt{|S|L}  \sum_{t=1}^{T} \eta_t \cdot \sqrt{\sum_{s  \neq s_L}\sum_{a\neq \pi^\star(s)} \whatq_t(s,a)  } }   \\
& = \order\Bigrbr{ \E\sbr{ \selfterm_4(L|S|^2|A|\log^2 T) + \selfterm_3\rbr{ \ln \pcons  } } +   L^2|S|^3|A|^2 \ln^2 \pcons  }.
\end{align*} 

On the other hand, the second term can be bounded similarly:
\begin{align*}
& L^2 \cdot \E\sbr{  \sum_{t=1}^{T} \eta_t \cdot \sum_{s \neq s_L} \sum_{a\neq \pi^\star (s)} \sqrt{ \whatq_t(s,a) } } \\
& \leq L^2 \cdot \E\sbr{  \sum_{s \neq s_L} \sum_{a\neq \pi^\star(s)}  \cdot \sqrt{ \sum_{t=1}^{T} \eta_t^2 } \cdot \sqrt{ \sum_{t=1}^{T} \whatq_t(s,a) } } \\
& \leq L^2 \sqrt{4|S||A|\log^2 T } \cdot \E\sbr{  \sum_{s \neq s_L} \sum_{a\neq \pi^\star(s)} \cdot \sqrt{ \sum_{t=1}^{T} \whatq_t(s,a) } } \\
& \leq \E\sbr{ 2 \cdot \selfterm_5(L^4|S||A|\log^2 T) } +  \E\sbr{  \sum_{t=1}^{T} \sum_{s  \neq s_L}\sum_{a\neq \pi^\star(s)} \abr{\whatq_t(s,a) - q_t(s,a)}  } +  L^4|S||A|\log^2 T \\
& = \order\Bigrbr{ \E\sbr{ \selfterm_5(L^4|S||A|\log^2 T) + \selfterm_3\rbr{ \ln \pcons  } } +   L^2|S|^3|A|^2 \ln^2 \pcons}.
\end{align*}

So we have the final bound on $\E\sbr{ \textsc{EstReg}}$:
\begin{align*}
\E\sbr{ \textsc{EstReg}} & = \order\Bigrbr{  \E\sbr{ \selfterm_4\rbr{ L|S|^2|A|\log^2 T} + \selfterm_5\rbr{L^4|S||A|\log^2 T} + \selfterm_3\rbr{ \ln \pcons  }  }  + L^6|S|^3|A|^3 \ln^2 \pcons } 
\end{align*} 

Finally, by combining the bounds of each term, we finally have 
\begin{align*}
\Reg_T(\pi^\star) & \leq \order \Big( \E\sbr{ \selfterm_1\rbr{ L^4|S| \ln T  } + \selfterm_3 \rbr{ \ln T } } && \rbr{ \text{from } \textsc{ErrSub} } \\
& \quad +\E\sbr{  \selfterm_2\rbr{ L^4|S| \ln T  } +  \selfterm_6 \rbr{ \ln T  } } && \rbr{ \text{from } \textsc{ErrOpt} }  \\
& \quad + \E\sbr{  \selfterm_3\rbr{ L^4 \ln T } }  && \rbr{ \text{from } \textsc{OccDiff} }  \\
& \quad + \E\sbr{ \selfterm_4\rbr{  L|S|^2|A| \ln^2 T  } + \selfterm_5\rbr{L^4|S||A|\ln^2 T} +  \selfterm_3\rbr{ \ln T } } && \rbr{ \text{from } \textsc{EstReg} } \\
& \quad +  L^6|S|^3|A|^3  \ln^2 T \Big). 
\end{align*}

When Condition~\eqref{eq:loss_condition} holds,
we apply similar self-bounding arguments to obtain a logarithmic regret bound.
Specifically,
for some universal constant $\kappa > 0$, we have 
\begin{align*}
\Reg_T(\pi^\star) & \leq \kappa \Biggrbr{\E\sbr{ \selfterm_1\rbr{ L^4|S| \ln T  } +  \selfterm_2\rbr{ L^4|S| \ln T  } + \selfterm_3\rbr{ L^4 \ln T } } } \\
& \quad +\kappa \Biggrbr{ \E\sbr{   \selfterm_4\rbr{  L|S|^2|A| \log^2 T  } + \selfterm_5\rbr{L^4|S||A|\log^2 T} + \selfterm_6 \rbr{ \ln T}  } }   \\
& \quad +  \kappa \Biggrbr{  L^6|S|^3|A|^3  \ln^2 \pcons }.  
\end{align*}
Then, for any $z>1$, by applying all the self-bounding lemmas (\pref{lem:self_bounding_term_1}-\pref{lem:self_bounding_term_6}) with $\alpha = \beta = \frac{1}{32z\kappa}$, we arrive at
\begin{align*}
\Reg_T(\pi^\star) & \leq \frac{1}{z} \cdot \rbr{ \Reg_T(\pi^\star)  + C } \\
& \quad + z\cdot \order\rbr{ \rbr{  \sum_{s \neq s_L} \sum_{ a\neq \pi^\star(s)} \frac{\kappa^2}{\gap(s,a)} } \cdot \Bigrbr{  L^4|S| \ln T  +  L^6|S| \ln T +  L^4|S||A|\log^2 T }} \\
& \quad + z\cdot \order\rbr{ \frac{\kappa^2}{\gapmin}  \cdot \Bigrbr{ L^5 |S|^2 \ln T  +  L^6|S|^2 \ln T +  L^3|S|^2|A|\ln T +  L|S|^2|A| \log^2 T     } }\\
& \quad + \kappa \cdot \rbr{L^6|S|^3|A|^3 \ln^2 T } \\
& \leq \frac{1}{z} \cdot \rbr{ \Reg_T(\pi^\star)  + C } + \kappa \cdot \rbr{L^6|S|^3|A|^3 \ln^2 T }  \\
& \quad + z\cdot \order\rbr{  \sum_{s \neq s_L} \sum_{ a\neq \pi^\star(s)} \frac{L^6|S|\ln T +  L^4|S||A|\log^2 T}{\gap(s,a)} + \frac{L^6|S|^2 \ln T +  L^3|S|^2|A| \log^2 T}{\gapmin}} \\
& \leq \frac{1}{z} \cdot \rbr{ \Reg_T(\pi^\star)  + C  } + z \cdot \kappa' U + \kappa \cdot V,
\end{align*}
where $\kappa'$ is a universal constant hidden in the $\order(\cdot)$ notation,
and $U$ and $V$ are defined in \pref{prop:bobw_bandit_stoc_prop}).
The last step is to rearrange and pick the optimal $z$, which is almost identical to that in the proof of \pref{prop:bobw_fullinfo_stoc_lem} and finally shows
$\Reg_T(\pi^\star) = \order\rbr{ U + \sqrt{ UC } + V    }$.
This completes the entire proof.



\newpage
\section{General Decomposition, Self-bounding Terms, and Supplementary Lemmas}
\label{app:all_supp_lemmas}
In this section, we provide details of our two key techniques: a general decomposition and self-bounding terms, as well as a set of supplementary Lemmas used throughout the analysis.

\subsection{General Decomposition Lemma}
\label{app:general_decomp_lem}

In this section, we consider measuring the performance difference between a policy $\pi$ and a mapping (deterministic policy) $\pi^\star$, that is, $V^\pi(s_0) - V^{\pi^\star}(s_0)$ where $Q$ and $V$ are the state-action and state value functions associated with some transition $P$ and some loss function $\ell$, that is, 
\[
Q^\pi(s,a) = \ell(s,a) + \sum_{s'\in S_{k(s)+1}} P(s'|s,a) V^\pi(s'), \quad V^\pi(s) = \sum_{a \in A} \pi(a|s) Q^\pi(s,a), 
\]
for all state-action pairs (with $V^\pi(s_L) = 0$). 
Moreover, for some estimated transition $\widehat{P}$ and estimated loss function $\hatl$, define similarly $\whatQ$ and $\whatV$ as the corresponding state-action and state value functions: 
\[
\whatQ^\pi(s,a) = \hatl(s,a) + \sum_{s'\in S_{k(s)+1}} \widehat{P}(s'|s,a) \whatV^\pi(s'), \quad \whatV^\pi(s) = \sum_{a \in A} \pi(a|s) \whatQ^\pi(s,a),
\]
for all state-action pairs (with $\whatV^\pi(s_L) = 0$). 

Again, we denote by $q^{\star}_{\pi}(s,a)$ the probability of visiting a trajectory of the form $(s_0, \pi^\star(s_0)), (s_1, \pi^\star(s_1)), \ldots, (s_{k(s)-1}, \pi^\star(s_{k(s)-1})), (s,a)$ when
executing policy $\pi$. In other words, $q^{\star}_{\pi}$ can be formally defined as 
\begin{align*}
q^{\star}_{\pi}(s,a) = \begin{cases}
\pi(a|s), & s = s_0,  \\
\pi(a|s) \cdot \rbr{ \sum_{s'\in S_{k(s)-1}} q^{\star}_{\pi}(s',  \pi^\star(s)) P(s|s',\pi^\star(s)) }, & \text{otherwise}. 
\end{cases}
\end{align*}
Note that our earlier notation $q_t^\star$ is thus a shorthand for $q_{\pi_t}^\star$. With slight abuse of notations, we define $q^{\star}_{\pi}(s) =\sum_{a\in A}q^{\star}_{\pi}(s,a)$.

Now, we present a general decomposition for $V^\pi(s_0) - V^{\pi^\star}(s_0)$.
\begin{lemma} \label{lem:general_perf_decomp} (General Performance Decomposition) For any policies $\pi$ and $u$, and a mapping (deterministic policy) $\pi^\star: S\rightarrow A$, we have
	\begin{align*}
	 V^\pi(s_0) - V^{\pi^\star}(s_0)  
	& =  \sum_{s\neq s_L} \sum_{a \neq \pi^{\star}(s)} q(s,a) {\widehat{E}^u }(s,a)  &&(\text{Error of Sub-opt actions})\\  
	& \quad + \sum_{s\neq s_L} \sum_{a = \pi^{\star}(s) } \rbr{ q(s,a) - q^{\star}_{\pi}(s,a) }  {\widehat{E}^u }(s,a)  
	&&(\text{Error of Opt actions}) \\
	& \quad +  \sum_{s\neq s_L} \sum_{a \in A} q(s,a) \rbr{\whatQ^{u}(s,a) -\whatV^{u}(s)} &&(\text{Policy Difference})  \\
	& \quad -  \sum_{s\neq s_L} \sum_{a = \pi^{\star}(s)} q^{\star}_{\pi}(s,a) \rbr{ \whatQ^{u}(s,a) - \whatV^{u}(s) } && (\text{Estimation Bias 1})  \\
	& \quad + \sum_{s\neq s_L} \sum_{a\neq \pi^{\star}(s)} q^{\star}_{\pi}(s,a) \rbr{\whatV^{u}(s) - V^{\pi^\star}(s) }, && (\text{Estimation Bias 2})
	\end{align*}
	where $q = q^{P,\pi}$ is the occupancy measure associated with transition $P$ and policy $\pi$, and ${\widehat{E}^\pi }$ is a surplus function with: 
\[
\widehat{E}^\pi (s,a) = \ell(s,a) + \sum_{s' \in S_{k(s)+1}} P(s'|s,a) \whatV^{\pi}(s') - \whatQ^{\pi}(s,a). 
\]
\end{lemma}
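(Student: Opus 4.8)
The plan is to prove the identity by a telescoping argument over layers, starting from the well-known performance difference lemma and then carefully introducing and cancelling the $q^\star_\pi$ terms. First I would apply the standard performance difference lemma (as in \pref{lem:loss_shifting_invariant}) with respect to the \emph{estimated} value functions, but against the true transition: more precisely, I would write
\[
V^\pi(s_0) - V^{\pi^\star}(s_0) = \sum_{s\neq s_L}\sum_{a\in A} q(s,a)\bigl(\widehat{Q}^{u}(s,a) - \widehat{V}^{u}(s)\bigr) + \Bigl(\text{correction terms}\Bigr),
\]
where the correction terms account for the fact that $\widehat{Q}^u, \widehat{V}^u$ are built from $\widehat{P},\hatl$ rather than $P,\ell$. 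Expanding $V^\pi(s_0)=\sum_{s,a} q(s,a)\ell(s,a)$ and $V^{\pi^\star}(s_0)$ via its recursive definition, and adding/subtracting $\sum_{s,a}q(s,a)\bigl(\ell(s,a)+\sum_{s'}P(s'|s,a)\widehat V^u(s')-\widehat Q^u(s,a)\bigr) = \sum_{s,a}q(s,a)\widehat E^u(s,a)$, gives the "Policy Difference" term plus a sum of surplus terms $\sum_{s,a}q(s,a)\widehat E^u(s,a)$.

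Second, I would split the surplus sum $\sum_{s\neq s_L}\sum_{a\in A}q(s,a)\widehat E^u(s,a)$ into the part over $a\neq\pi^\star(s)$ (which is already the "Error of Sub-opt actions" term) and the part over $a=\pi^\star(s)$. For the latter, I would add and subtract $q^\star_\pi(s,a)$ to turn $\sum_{s}q(s,\pi^\star(s))\widehat E^u(s,\pi^\star(s))$ into $\sum_s\bigl(q(s,\pi^\star(s))-q^\star_\pi(s,\pi^\star(s))\bigr)\widehat E^u(s,\pi^\star(s)) + \sum_s q^\star_\pi(s,\pi^\star(s))\widehat E^u(s,\pi^\star(s))$; the first piece is the "Error of Opt actions" term, and the second piece must be shown to combine with $V^{\pi^\star}(s_0)$ and the stray $\widehat Q^u-\widehat V^u$ terms to produce the two "Estimation Bias" terms. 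The key computation here is a second telescoping: using the recursive definition of $q^\star_\pi$ (which follows $\pi^\star$ after the initial segment) together with $\widehat E^u(s,\pi^\star(s)) = \ell(s,\pi^\star(s)) + \sum_{s'}P(s'|s,\pi^\star(s))\widehat V^u(s') - \widehat Q^u(s,\pi^\star(s))$ and $\widehat Q^u(s,\pi^\star(s)) = \hatl(s,\pi^\star(s)) + \sum_{s'}\widehat P(s'|s,\pi^\star(s))\widehat V^u(s')$, one recognizes $\widehat E^u(s,\pi^\star(s))$ along a $\pi^\star$-trajectory as the increment of a telescoping sum of $\widehat V^u - V^{\pi^\star}$-type differences. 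Summing $q^\star_\pi$-weighted increments layer by layer collapses to boundary contributions at $s_0$ (giving $\widehat V^u(s_0) - V^{\pi^\star}(s_0)$, which is absorbed into the left-hand side / Policy Difference term via $\widehat V^u(s_0)=\sum_a q^\star_\pi(s_0,a)\widehat Q^u(s_0,a)$) and at the "exit" points where the trajectory first deviates from $\pi^\star$ — but $q^\star_\pi$ by definition is supported on exactly-$\pi^\star$ prefixes, so the careful bookkeeping of where $q$ and $q^\star_\pi$ differ is what generates the $\sum_{a\neq\pi^\star(s)}q^\star_\pi(s,a)(\cdots)$ structure in "Estimation Bias 2" and the $\sum_{a=\pi^\star(s)}q^\star_\pi(s,a)(\cdots)$ structure in "Estimation Bias 1".

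The main obstacle I expect is precisely this second telescoping: correctly accounting for the mismatch between the occupancy measure $q=q^{P,\pi}$ (which follows $\pi$ throughout) and $q^\star_\pi$ (which follows $\pi$ for one step and $\pi^\star$ thereafter), so that the "leakage" at states where $\pi$ puts mass on suboptimal actions is routed into the Bias-2 term with the right sign, while the $\widehat Q^u - \widehat V^u$ discrepancy at optimal actions lands in Bias-1. A clean way to manage this is to define, for each layer $k$, the partial sum $\sum_{s\in S_k}q^\star_\pi(s)\bigl(\widehat V^u(s) - V^{\pi^\star}(s)\bigr)$ and show by induction on $k$ (going forward from $k=0$) that its increment from layer $k$ to $k+1$ equals exactly the layer-$k$ contributions of the Sub-opt/Opt error terms plus the Bias terms; the base case $k=0$ uses $q^\star_\pi(s_0)=1$ and $V^\pi(s_0) - V^{\pi^\star}(s_0)$ directly. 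Once the layerwise identity is established, summing over $k=0,\ldots,L-1$ and using $\widehat V^u(s_L)=V^{\pi^\star}(s_L)=0$ closes the telescope and yields the claimed decomposition. Everything else — expanding definitions, rearranging sums, adding and subtracting $q^\star_\pi$ — is routine.
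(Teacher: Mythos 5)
Your proposal is correct in its overall structure and represents a genuine, somewhat more direct alternative to the paper's argument. The paper obtains the identity by performing three telescoping expansions and combining them algebraically: it expands $V^\pi(s_0) - \whatV^u(s_0)$ once over the unconditional occupancy $q$ (Eq.~\eqref{eq:prf_general_decomp_bound_1}), again over the restricted measure $q^\star_\pi$ (Eq.~\eqref{eq:prf_general_decomp_bound_2} via \pref{lem:cond_occup_expand}), and separately expands $V^\pi(s_0) - V^{\pi^\star}(s_0)$ over $q^\star_\pi$ (Eq.~\eqref{eq:red_box_eq_3}), then cancels the common term $\sum_{s,a\neq\pi^\star(s)}q^\star_\pi(s,a)\bigl(Q^\pi(s,a)-\cdot\bigr)$ between the last two. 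You instead propose only two telescopings: one for $V^\pi(s_0)-\whatV^u(s_0)$ over $q$, and one for $\whatV^u(s_0)-V^{\pi^\star}(s_0)$ over $q^\star_\pi$. This second identity — that $G(s):=\whatV^u(s)-V^{\pi^\star}(s)$ satisfies a $\pi^\star$-restricted recursion whose ``$C(s)$'' is $-\pi(\pi^\star(s)|s)\widehat E^u(s,\pi^\star(s)) - \pi(\pi^\star(s)|s)\bigl(\whatQ^u(s,\pi^\star(s))-\whatV^u(s)\bigr) + \sum_{a\neq\pi^\star(s)}\pi(a|s)\,G(s)$ — is not explicitly present in the paper, but it checks out and, when fed through \pref{lem:cond_occup_expand}, produces exactly Bias~1, Bias~2, and the $q^\star_\pi$-weighted surplus that is then absorbed into the Error-of-Opt term after adding/subtracting $q^\star_\pi$. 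So your route is mathematically sound and trims one telescoping at the cost of a slightly more intricate $C(s)$.

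One genuine misstatement to fix: you describe $q^\star_\pi$ as ``follows $\pi$ for one step and $\pi^\star$ thereafter,'' which is backwards. From the paper's definition, $q^\star_\pi(s,a)$ is the probability, under $\pi$ and $P$, of the trajectory $(s_0,\pi^\star(s_0)),\ldots,(s_{k(s)-1},\pi^\star(s_{k(s)-1})),(s,a)$ --- i.e., the rollout matches $\pi^\star$'s actions at every layer \emph{before} $s$, still weighted by $\pi$'s probability of picking those actions, and then takes an arbitrary $a$ at $s$. Equivalently $q^\star_\pi(s,a)=\pi(a|s)q^\star_\pi(s)$ where $q^\star_\pi(s)$ obeys the $\pi^\star$-restricted recursion in \pref{lem:cond_occup_expand}. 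If you carry out your layer-by-layer induction with the wrong picture in mind, the factor $\pi(\pi^\star(s)|s)$ inside $D(s)$ will come out wrong and the telescope will not close; with the correct definition it works exactly as sketched. Also, the increment $G_k-G_{k+1}=\sum_{s\in S_k}q^\star_\pi(s)D(s)$ is \emph{not} literally ``the layer-$k$ contributions of the Sub-opt/Opt error terms plus the Bias terms'' as you wrote: it is the layer-$k$ Bias terms together with $-\sum_{s\in S_k}q^\star_\pi(s,\pi^\star(s))\widehat E^u(s,\pi^\star(s))$, and the Sub-opt/Opt terms (which involve $q$ and $q-q^\star_\pi$, not $q^\star_\pi$) only appear after you merge this with the first telescoping and split the $\sum q\,\widehat E^u$ sum. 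With these two clarifications, the plan closes cleanly.
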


Moreover, selecting the surrogate policy $u$ as the mapping $\pi^{\star}$ yields \pref{col:general_perf_decomp}, which is the key decomposition lemma used in our analysis.
\begin{corollary} \label{col:general_perf_decomp} Consider an arbitrary policy sequence $\{ \pi_t \}_{t=1}^{T}$, an arbitrary estimated transition sequence $\{ \widehat{P}_t \}_{t=1}^{T}$, and an arbitrary estimated loss sequence $\{ \hatl_t \}_{t=1}^{T}$. Then, we have
\begin{align*}
& \underbrace{\sum_{t=1}^{T} \rbr{ V^{\pi_t}(s_0) - \whatV_t^{\pi_t}(s_0) }}_{\textsc{Err}_1} + \underbrace{\rbr{ \sum_{t=1}^{T} \whatV_t^{\pi^\star}(s_0) - V^{\pi^\star}(s_0)} }_{\textsc{Err}_2} \\
& = \sum_{t=1}^{T}\sum_{s\neq s_L} \sum_{a \neq \pi^{\star}(s)} q_t(s,a) { \widehat{E}_t}^{\pi^{\star}}(s,a)  &&(\text{Error of Sub-opt actions})\\  
& \quad + \sum_{t=1}^{T}\sum_{s\neq s_L} \sum_{a = \pi^{\star}(s) } \rbr{ q_t(s,a) - q^{\star}_{t}(s,a) }  { \widehat{E}_t }^{\pi^{\star}}(s,a)  
&&(\text{Error of Opt actions}) \\
& \quad + \sum_{t=1}^{T} \sum_{s\neq s_L} \sum_{a \in A} \rbr{ q_t(s,a) - \whatq_t(s,a)} \rbr{\widehat{Q}_t^{\pi^{\star}}(s,a) -\widehat{V}_t^{\pi^{\star}}(s)} &&(\text{Occupancy Difference})  \\
& \quad + \sum_{t=1}^{T}\sum_{s\neq s_L} \sum_{a\neq \pi^{\star}(s)} q^{\star}_{t}(s,a)  \rbr{\widehat{V}_t^{\pi^{\star}}(s) - V_t^{\pi^\star}(s) }, && (\text{Estimation Bias}) 
\end{align*}
where $\widehat{q}_t = q^{\widehat{P}_t,\pi_t}$, $q_t = q^{P, \pi_t}$, $q_t^\star = q_{\pi_t}^\star$,  
$\whatQ_t^{\pi_t}$ and $\whatV_t^{\pi_t}$ are the state-action and state value functions associated with $\pi_t$, $\hatl_t$, and $\widehat{P}_t$,
and ${ \widehat{E} }^{\pi}_t$ is the surplus function defined as:
\[
\widehat{E}^{\pi}_t(s,a) =  \ell(s,a) + \sum_{s' \in S_{k(s)+1}} P(s'|s,a) \widehat{V}_t^{\pi}(s') - \widehat{Q}_t^{\pi}(s,a).
\]
\end{corollary}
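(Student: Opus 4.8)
The plan is to derive the corollary by specializing Lemma~\ref{lem:general_perf_decomp} to the comparator policy $u=\pi^\star$, summing over episodes, and reorganizing two of the resulting terms into the stated form. \textbf{Step 1 (instantiate the general decomposition).} For each $t$ I would apply Lemma~\ref{lem:general_perf_decomp} with $\pi=\pi_t$, $u=\pi^\star$, true transition $P$, and estimated value functions $\whatQ_t^{\pi^\star},\whatV_t^{\pi^\star}$ built from $(\widehat P_t,\hatl_t)$. Since $\pi^\star$ is a deterministic mapping, $\whatV_t^{\pi^\star}(s)=\whatQ_t^{\pi^\star}(s,\pi^\star(s))$, so the advantage $\whatQ_t^{\pi^\star}(s,a)-\whatV_t^{\pi^\star}(s)$ vanishes whenever $a=\pi^\star(s)$; hence the ``Estimation Bias 1'' term of the lemma is identically zero. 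What survives is the sub-optimal-action error $\sum_{s\neq s_L,a\neq\pi^\star(s)}q_t(s,a)\widehat{E}_t^{\pi^\star}(s,a)$, the optimal-action error $\sum_{s\neq s_L,a=\pi^\star(s)}(q_t(s,a)-q_t^\star(s,a))\widehat{E}_t^{\pi^\star}(s,a)$, the ``policy difference'' term $\sum_{s\neq s_L,a}q_t(s,a)(\whatQ_t^{\pi^\star}(s,a)-\whatV_t^{\pi^\star}(s))$, and the estimation-bias term $\sum_{s\neq s_L,a\neq\pi^\star(s)}q_t^\star(s,a)(\whatV_t^{\pi^\star}(s)-V_t^{\pi^\star}(s))$, together summing to $V^{\pi_t}(s_0)-V^{\pi^\star}(s_0)$ (here $q_t=q^{P,\pi_t}$ and $q_t^\star=q^\star_{\pi_t}$).

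\textbf{Step 2 (convert ``policy difference'' into ``occupancy difference'').} I would split the third term as
\[
\sum_{s\neq s_L,a} q_t(s,a)\big(\whatQ_t^{\pi^\star}(s,a)-\whatV_t^{\pi^\star}(s)\big)=\sum_{s\neq s_L,a}\big(q_t(s,a)-\whatq_t(s,a)\big)\big(\whatQ_t^{\pi^\star}(s,a)-\whatV_t^{\pi^\star}(s)\big)+\sum_{s\neq s_L,a}\whatq_t(s,a)\big(\whatQ_t^{\pi^\star}(s,a)-\whatV_t^{\pi^\star}(s)\big),
\]
with $\whatq_t=q^{\widehat P_t,\pi_t}$. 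The first sum on the right is exactly the ``Occupancy Difference'' term of the corollary. For the second sum I would invoke the performance difference lemma (\citep[Theorem~5.2.1]{kakade2003sample}) \emph{inside} the MDP with transition $\widehat P_t$ and loss $\hatl_t$, comparing $\pi_t$ against $\pi^\star$, which shows that this sum equals $\whatV_t^{\pi_t}(s_0)-\whatV_t^{\pi^\star}(s_0)$.

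\textbf{Step 3 (sum over $t$ and rearrange).} Summing Steps 1--2 over $t=1,\dots,T$ yields
\[
\sum_{t=1}^{T}\big(V^{\pi_t}(s_0)-V^{\pi^\star}(s_0)\big)=\textsc{ErrSub}+\textsc{ErrOpt}+\textsc{OccDiff}+\textsc{Bias}+\sum_{t=1}^{T}\big(\whatV_t^{\pi_t}(s_0)-\whatV_t^{\pi^\star}(s_0)\big),
\]
where the four named terms are exactly those in the corollary. Moving the last sum to the left-hand side and regrouping the state-value terms turns the left side into $\sum_t(V^{\pi_t}(s_0)-\whatV_t^{\pi_t}(s_0))+\sum_t(\whatV_t^{\pi^\star}(s_0)-V^{\pi^\star}(s_0))=\textsc{Err}_1+\textsc{Err}_2$, which is the asserted identity.

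\textbf{Main obstacle.} The derivation is essentially bookkeeping once Lemma~\ref{lem:general_perf_decomp} is in hand; the one delicate point is keeping straight which occupancy measure appears where — the true $q_t=q^{P,\pi_t}$ in the error and policy-difference terms versus the empirical $\whatq_t=q^{\widehat P_t,\pi_t}$ introduced in Step 2 — and checking that the performance-difference rewrite in Step 2 produces \emph{precisely} $\whatV_t^{\pi_t}(s_0)-\whatV_t^{\pi^\star}(s_0)$, so that the two stray state-value terms recombine with $\sum_t(V^{\pi_t}(s_0)-V^{\pi^\star}(s_0))$ into $\textsc{Err}_1+\textsc{Err}_2$. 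Should a self-contained argument be preferred, Lemma~\ref{lem:general_perf_decomp} itself admits a direct proof: expand $V^\pi(s_0)-V^{\pi^\star}(s_0)$ via the performance difference lemma, insert $\pm\,\widehat{E}^u$ along the trajectory, and split the state-action sum according to $a=\pi^\star(s)$ versus $a\neq\pi^\star(s)$.
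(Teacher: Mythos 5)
Your proof is correct and follows essentially the same route as the paper: instantiate Lemma~\ref{lem:general_perf_decomp} with $u=\pi^\star$, observe that Estimation Bias 1 vanishes because $\whatQ_t^{\pi^\star}(s,\pi^\star(s)) = \whatV_t^{\pi^\star}(s)$, split the Policy Difference term as $q_t = (q_t - \whatq_t) + \whatq_t$ and identify the $\whatq_t$-piece with $\whatV_t^{\pi_t}(s_0) - \whatV_t^{\pi^\star}(s_0)$ via the performance difference lemma under $\widehat P_t$, then sum over $t$ and rearrange.
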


\begin{proof} (Proof of \pref{lem:general_perf_decomp}) By direct calculation, for all states $s$, we have 
	\begin{align*} 
	 V^\pi(s) - \whatV^{u}(s) 
	& = \sum_{a\in A} \pi(a|s) \rbr{ Q^\pi(s,a)  - \whatQ^{u}(s,a) } + \sum_{a\in A} \pi(a|s) \rbr{\whatQ^{u}(s,a) - \whatV^{u}(s)}  \\
	& = \sum_{a\in A} \pi(a|s) \sum_{s' \in S_{k(s)+1}} P(s'|s,a) \rbr{V^\pi(s') - \whatV^{u}(s')  }   \\
	& \quad + \sum_{a\in A } \pi(a|s) \underbrace{\rbr{ \ell(s,a) + \sum_{s' \in S_{k(s)+1}} P(s'|s,a) \whatV^{u}(s') - \whatQ^{u}(s,a)  }}_{{\widehat{E}^u }(s,a)} \\
	& \quad + \sum_{a\in A} \pi(a|s) \rbr{\whatQ^{u}(s,a) - \whatV^{u}(s)}.
	\end{align*}
	
	By repeatedly expanding $V^\pi(s') - \whatV^{u}(s')$ in the same way, we conclude 
	\begin{equation}
	\begin{split}  
	V^\pi(s_0) - \whatV^{u}(s_0)  & =  \sum_{s\neq s_L}\sum_{a \in A } q(s,a) {\widehat{E}^u }(s,a)  +  \sum_{s\neq s_L}\sum_{a \in A} q(s,a) \rbr{\whatQ^{u}(s,a) -\whatV^{u}(s)}.
	\end{split} 
	\label{eq:prf_general_decomp_bound_1}
	\end{equation}
	
	On the other hand, we also have for all states $s$:
	\begin{align*}
	& V^\pi(s) - \whatV^{u}(s) \\ 
	& = \sum_{a  = \pi^{\star}(s) } \pi(a|s) \rbr{ Q^\pi(s,a)  - \whatV^{u}(s) } + \sum_{a \neq \pi^{\star}(s) } \pi(a|s) \rbr{ Q^\pi(s,a)  - \whatV^{u}(s) } \\ 
	& = \sum_{a  = \pi^{\star}(s) } \pi(a|s) \sum_{s' \in S_{k(s)+1}} P(s'|s,a) \rbr{ V^\pi(s') - \whatV^{u}(s') } \\
	& \quad + \sum_{a = \pi^{\star}(s) } \pi(a|s) \underbrace{\rbr{ \ell(s,a) + \sum_{s' \in S_{k(s)+1}} P(s'|s,a) \whatV^{u}(s') - \whatQ^{u}(s,a)  }}_{{\widehat{E}^u }(s,a)} \\
	& \quad + \sum_{a  = \pi^{\star}(s) } \pi(a|s) \rbr{ \whatQ^{u}(s,a) - \whatV^{u}(s) } \\
	& \quad + \sum_{a \neq \pi^{\star}(s) } \pi(a|s) \rbr{ Q^\pi(s,a)  - \whatV^{u}(s) }. 
	\end{align*}
	Using \pref{lem:cond_occup_expand} (which repeatedly expands $V^\pi(s') - \whatV^{u}(s')$ in the same way) with 
	\begin{align*}
	C(s) &= \sum_{a = \pi^{\star}(s) } \pi(a|s) \widehat{E}^u (s,a) 
	 + \sum_{a  = \pi^{\star}(s) } \pi(a|s) \rbr{ \whatQ^{u}(s,a) - \whatV^{u}(s) }   \\
	 &\quad + \sum_{a \neq \pi^{\star}(s) } \pi(a|s) \rbr{ Q^\pi(s,a)  - \whatV^{u}(s)}
	\end{align*}
	
	 we obtain
	\begin{equation}
		\begin{split}  
		V^\pi(s_0) - \whatV^{u}(s_0)  
		&= \sum_{s\neq s_L} q^{\star}_{\pi}(s)C(s) \\
		& =  \sum_{s\neq s_L} \sum_{a  = \pi^{\star}(s) }q^{\star}_{\pi}(s,a)   {\widehat{E}^u }(s,a)  \\  
		& \quad +  \sum_{s\neq s_L}  \sum_{a \neq \pi^{\star}(s)} q^{\star}_{\pi}(s,a) \rbr{ Q^\pi(s,a) -\whatV^{u}(s)} \\
		& \quad +  \sum_{s\neq s_L}  \sum_{a  = \pi^{\star}(s)} q^{\star}_{\pi}(s,a) \rbr{ \whatQ^{u}(s,a) - \whatV^{u}(s) }.
		\end{split} 
	\label{eq:prf_general_decomp_bound_2}
	\end{equation}
	
	Combining \pref{eq:prf_general_decomp_bound_1} and \pref{eq:prf_general_decomp_bound_2}, we have the following equality:
	\begin{align}
	& \sum_{s\neq s_L} \sum_{a \neq \pi^{\star}(s)} q^{\star}_{\pi}(s,a)  \rbr{ Q^\pi(s,a) -\whatV^{u}(s)} \notag \\
	& = \sum_{s\neq s_L} \sum_{a \in A } q(a,s) {\widehat{E}^u }(s,a)  \notag \\  
	& \quad +  \sum_{s\neq s_L} \sum_{a \in A} q(s,a) \rbr{\whatQ^{u}(s,a) -\whatV^{u}(s)} \notag  \\
	& \quad - \sum_{s\neq s_L}  \sum_{a  = \pi^{\star}(s) } q^{\star}_{\pi}(s,a)  {\widehat{E}^u }(s,a)  \notag \\  
	& \quad -  \sum_{s\neq s_L}  \sum_{a  = \pi^{\star}(s)} q^{\star}_{\pi}(s,a) \rbr{ \whatQ^{u}(s,a) - \whatV^{u}(s) }\notag \\
	& = \sum_{s\neq s_L}  \sum_{a \neq \pi^{\star}(s)} q(s,a) {\widehat{E}^u }(s,a)  &&(\text{Error of Sub-opt actions}) \label{eq:red_box_eq_2}  \\  
	& \quad + \sum_{s\neq s_L} \sum_{a  = \pi^{\star}(s) } \rbr{ q(s,a) - q^{\star}_{\pi}(s,a) }  {\widehat{E}^u }(s,a)  
	&&(\text{Error of Opt actions}) \notag  \\
	& \quad +  \sum_{s\neq s_L}  \sum_{a \in A} q(s,a) \rbr{\whatQ^{u}(s,a) -\whatV^{u}(s)} &&(\text{Policy Difference})  \notag  \\
	& \quad -  \sum_{s\neq s_L}  \sum_{a  = \pi^{\star}(s)} q^{\star}_{\pi}(s,a)\rbr{ \whatQ^{u}(s,a) - \whatV^{u}(s) } && (\text{Estimation Bias 1}) \label{eq:prf_general_decomp_bound_4},
	\end{align}
	
	Next, we consider the following: 
	\begin{align*}
	& V^\pi(s) - V^{\pi^\star}(s) \\ 
	& = \sum_{a  = \pi^{\star}(s) } \pi(a|s) \rbr{  Q^\pi(s,a) - Q^{\star}(s,a) } + \sum_{a \neq \pi^{\star}(s) }\pi(a|s) \rbr{ Q^\pi(s,a) - V^{\pi^\star}(s) } \\
	& = \sum_{a  = \pi^{\star}(s) } \pi(a|s) \sum_{s'\in S_{k(s)+1}} P(s'|s,a) \rbr{ V^\pi(s') - V^{\pi^\star}(s)  }  + \sum_{a \neq \pi^{\star}(s) }  \pi(a|s) \rbr{ Q^\pi(s,a) - V^{\pi^\star}(s) }. 
	\end{align*}
	By \pref{lem:cond_occup_expand} (which again repeatedly expands $V^\pi(s') - V^{\pi^\star}(s)$ in the same way), we obtain
	\begin{equation}
	\label{eq:red_box_eq_3}
		\begin{split}  
		V^\pi(s_0) - V^{\pi^\star}(s_0)  & =  \sum_{s\neq s_L} \sum_{a\neq \pi^{\star}(s)} q^{\star}_{\pi}(s,a)  \rbr{ Q^\pi(s,a) - V^{\pi^\star}(s)}.
		\end{split} 
	\end{equation}
	
	Finally, combining \pref{eq:red_box_eq_2} and \pref{eq:red_box_eq_3}, we arrive at
	\begin{align*}
	& V^\pi(s_0) - V^{\pi^\star}(s_0)  \\ 
	& =  \sum_{s\neq s_L}  \sum_{a\neq \pi^{\star}(s)} q^{\star}_{\pi}(s,a) \rbr{ Q^\pi(s,a) - \whatV^{u}(s) }  + \sum_{s\neq s_L} \sum_{a\neq \pi^{\star}(s)} q^{\star}_{\pi}(s,a) \rbr{ \whatV^{u}(s) - V^{\pi^\star}(s) }  \\  
	& =  \sum_{s\neq s_L}  \sum_{a \neq \pi^{\star}(s)} q(s,a) {\widehat{E}^u }(s,a)  \tag{\text{Transition Error of Sub-opt actions}} \\  
	& \quad + \sum_{s\neq s_L} \sum_{a  = \pi^{\star}(s) } \rbr{ q(s,a) - q^{\star}_{\pi}(s,a) }  {\widehat{E}^u }(s,a)  
	\tag{\text{Transition Error of Opt actions}} \\
	& \quad +  \sum_{s\neq s_L} \sum_{a \in A} q(s,a)\rbr{\whatQ^{u}(s,a) -\whatV^{u}(s)} \tag{\text{Policy Difference}}  \\
	& \quad -  \sum_{s\neq s_L} \sum_{a  = \pi^{\star}(s)} q^{\star}_{\pi}(s,a) \rbr{ \whatQ^{u}(s,a) - \whatV^{u}(s) } \tag{\text{Estimation Bias 1}}  \\
	& \quad + \sum_{s\neq s_L}  \sum_{a\neq \pi^{\star}(s)} q^{\star}_{\pi}(s,a) \rbr{\whatV^{u}(s) - V^{\pi^\star}(s) } \tag{\text{Estimation Bias 2}}
	\end{align*}
	finishing the proof.
\end{proof}

\begin{proof} (Proof of \pref{col:general_perf_decomp}) By applying \pref{lem:general_perf_decomp} with $u = \pi^{\star}$, we know that $V_t^{\pi_t}(s_0) - V_t^{\pi^\star}(s_0)$ equals to 
\begin{align*}
&  \sum_{s\neq s_L} \sum_{a \neq \pi^{\star}(s)} q_t(s,a) { \widehat{E}_t }^{\pi^{\star}}(s,a) \\  
& \quad + \sum_{s\neq s_L} \sum_{a = \pi^{\star}(s) } \rbr{ q_t(s,a) - q^{\star}_{t}(s,a) }  { \widehat{E}_t }^{\pi^{\star}}(s,a) \\
& \quad +  \sum_{s\neq s_L} \sum_{a \in A} \widehat{q}_t(s,a) \rbr{\widehat{Q}_t^{\pi^{\star}}(s,a) -\widehat{V}_t^{\pi^{\star}}(s)} \\  
& \quad +  \sum_{s\neq s_L} \sum_{a \in A} \rbr{ q_t(s,a) - \widehat{q}_t(s,a)} \rbr{\widehat{Q}_t^{\pi^{\star}}(s,a) -\widehat{V}_t^{\pi^{\star}}(s)}  \\
& \quad -  \sum_{s\neq s_L} \sum_{a = \pi^{\star}(s)} q^{\star}_{t}(s,a)  \rbr{ \widehat{Q}_t^{\pi^{\star}}(s,a) - \widehat{V}_t^{\pi^{\star}}(s) } \tag{Estimation Bias 1} \\
& \quad + \sum_{s\neq s_L}  \sum_{a\neq \pi^{\star}(s)}q^{\star}_{t}(s,a)  \rbr{\widehat{V}^{\pi^{\star}}_t(s) - V^{\pi^\star}_t(s) }. \tag{Estimation Bias 2} 
\end{align*}
Now observe the following two facts.
First, the third term above is in fact equal to $\whatV_t^{\pi_t}(s_0) - \whatV_t^{\pi^\star}(s_0)$ according to the standard performance difference lemma~\citep[Theorem~5.2.1]{kakade2003sample}. 
Second, the first estimation bias term is simply $0$ since
$\widehat{Q}_t^{\pi^{\star}}(s,a) = \widehat{V}_t^{\pi^{\star}}(s)$ when $a = \pi^{\star}(s)$.

Therefore, by taking the summation over $t$, we obtain
\begin{align*}
\textsc{Err1}+\textsc{Err2}&= \sum_{t=1}^{T} \rbr{ V_t^{\pi_t}(s_0) - V_t^{\pi^\star}(s_0) } - \rbr{ \whatV_t^{\pi_t}(s_0) - \whatV_t^{\pi^\star}(s_0) } \\
& = \sum_{s\neq s_L} \sum_{a \neq \pi^{\star}(s)} q_t(s,a) { \widehat{E}_t }^{\pi^{\star}}(s,a) \\  
& \quad + \sum_{s\neq s_L} \sum_{a = \pi^{\star}(s) } \rbr{ q_t(s,a) - q^{\star}_{t}(s,a) }  { \widehat{E}_t }^{\pi^{\star}}(s,a) \\
& \quad +  \sum_{s\neq s_L} \sum_{a \in A} \rbr{ q_t(s,a) - \widehat{q}_t(s,a)} \rbr{\widehat{Q}_t^{\pi^{\star}}(s,a) -\widehat{V}_t^{\pi^{\star}}(s)}  \\
& \quad + \sum_{s\neq s_L}  \sum_{a\neq \pi^{\star}(s)}q^{\star}_{t}(s,a)  \rbr{\widehat{V}^{\pi^{\star}}_t(s) - V^{\pi^\star}_t(s) } 
\end{align*} 
which finishes the proof. 
\end{proof}

\begin{lemma}\label{lem:cond_occup_expand}   For any functions $F: S \rightarrow \fR$ and $C: S \rightarrow \fR$ satisfying the following condition:
	\[
	F(s) = \sum_{a = \pi^\star(s)} \pi(a|s) \sum_{s'\in S_{k(s)+1}} P(s'|s,a) F(s')  +  C(s)
	\]
	and $F(s_L) = 0$,
	we have
	\[
	F(s_0) = \sum_{s \neq s_L} q^{\star}_{\pi}(s) C(s). 
	\]
\end{lemma}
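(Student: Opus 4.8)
The plan is to prove a slightly stronger layered identity by forward induction on the layer index $k$: for every $k \in \{0,1,\dots,L\}$,
\[
F(s_0) = \sum_{s \in S_k} q^{\star}_{\pi}(s)\,F(s) \;+\; \sum_{j=0}^{k-1} \sum_{s \in S_j} q^{\star}_{\pi}(s)\,C(s),
\]
and then specialize to $k = L$: since $S_L = \{s_L\}$ and $F(s_L) = 0$, the leading sum vanishes and we are left with exactly $F(s_0) = \sum_{s \neq s_L} q^{\star}_{\pi}(s)\,C(s)$. The base case $k = 0$ is immediate because $S_0 = \{s_0\}$, $q^{\star}_{\pi}(s_0) = 1$ by definition, and the second sum is empty.

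Before the inductive step I would record the one structural fact about $q^{\star}_{\pi}$ that the telescoping needs. Reading off its definition (equivalently, its description as the probability, when executing $\pi$, of realizing the trajectory that plays $\pi^\star$'s action at every state visited strictly before $s$), one checks that $q^{\star}_{\pi}(s,a) = \pi(a\mid s)\,q^{\star}_{\pi}(s)$ since $\sum_a \pi(a\mid s) = 1$, and hence the forward propagation identity holds: for any $s' \in S_{k+1}$,
\[
q^{\star}_{\pi}(s') = \sum_{s \in S_k} q^{\star}_{\pi}(s)\,\pi\bigl(\pi^\star(s)\mid s\bigr)\,P\bigl(s' \mid s, \pi^\star(s)\bigr).
\]
For the inductive step, assuming the identity at level $k$, I substitute into $\sum_{s \in S_k} q^{\star}_{\pi}(s)\,F(s)$ the hypothesis on $F$, which for a state $s$ reads $F(s) = \pi(\pi^\star(s)\mid s)\sum_{s' \in S_{k(s)+1}} P(s'\mid s, \pi^\star(s))\,F(s') + C(s)$ (the sum over $\{a = \pi^\star(s)\}$ being a single term), then swap the order of summation over $s$ and $s'$ and apply the propagation identity to collapse $\sum_{s \in S_k} q^{\star}_{\pi}(s)\,\pi(\pi^\star(s)\mid s)\,P(s'\mid s,\pi^\star(s))$ into $q^{\star}_{\pi}(s')$. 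The $C$-terms just join the running sum, yielding the identity at level $k+1$.

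This is a routine layer-by-layer unrolling, entirely parallel to the analogous unconditional expansion, so there is no genuine obstacle. The only point requiring a little care is the bookkeeping around the $\pi^\star$-conditioned occupancy measure $q^{\star}_{\pi}$ — verifying $q^{\star}_{\pi}(s,a) = \pi(a\mid s)\,q^{\star}_{\pi}(s)$ and the forward recursion above — since the lemma feeds $F$ only through its $\pi^\star$-restricted recursion rather than the full Bellman-type recursion; once that identity is in hand the telescoping is mechanical.
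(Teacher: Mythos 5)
Your proof is correct and follows essentially the same route as the paper's: the paper's "repeatedly expanding" chain of equalities is precisely your layered invariant $F(s_0) = \sum_{s \in S_k} q^{\star}_{\pi}(s)F(s) + \sum_{j<k}\sum_{s\in S_j} q^{\star}_{\pi}(s)C(s)$ unrolled one layer at a time, with the same use of $q^{\star}_{\pi}(s,a)=\pi(a|s)q^{\star}_{\pi}(s)$ and the forward recursion $q^{\star}_{\pi}(s')=\sum_{s\in S_{k(s')-1}}q^{\star}_{\pi}(s)\pi(\pi^\star(s)|s)P(s'|s,\pi^\star(s))$. Your formulation as an explicit induction with a stated invariant is a cleaner packaging of the same argument.
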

\begin{proof} By definition and direct calculation, we have $F(s_0)$ equal to 
	\begin{align*}
	& \sum_{a =\pi^\star(s_0)} q(s_0,a) \sum_{s'\in S_{1}} P(s'|s_0,a) F(s') +   C(s) \tag{$q(s_0)  = 1$}\\
	& =  \sum_{s_1\in S_{1}} q^{\star}_{\pi}(s_1) F(s_1) +  q^{\star}_{\pi}(s_0) C(s) \\ 
	&  = \sum_{s_1\in S_{1}} q^{\star}_{\pi}(s_1) \rbr{ \sum_{a = \pi^\star(s)} \pi(a|s) \sum_{s'\in S_{2}} P(s'|s,a) F(s') } + \sum_{k=0}^{1} \sum_{s\in S_k}  q^{\star}_{\pi}(s) C(s) \\
	&  = \sum_{s_2\in S_{2}} q^{\star}_{\pi}(s_2) F(s_2)  + \sum_{k=0}^{1} \sum_{s\in S_k} q^{\star}_{\pi}(s) C(s) \tag{definition of $q^{\star}_{\pi}(s)$}  \\
	& = \sum_{s_L \in S_L} q^\star_{\pi}(s_L) F(s_L) + \sum_{k=0}^{L-1} \sum_{s\in S_k} q^{\star}_{\pi}(s) C(s) \tag{repeatedly expanding} \\
	& =    \sum_{s\neq s_L} q^{\star}_{\pi}(s) C(s), \tag{$F(s_L)=0$} 
	\end{align*}
which completes the proof.
\end{proof}

\subsection{Self-bounding Terms}
\label{app:self_bounding_terms}
In this section, we summarize all the self-bounding terms we  use in the proofs for the unknown transition settings. 




\begin{definition}[Self-bounding Terms] \label{def:self_bounding_terms}
For some mapping $\pi^\star : S\rightarrow A$, define the following:
\begin{equation*}
\begin{aligned}
{\selfterm_1}(\cons) & = \sum_{t=1}^{T}\sum_{s\neq s_L} \sum_{a \neq \pi^{\star}(s)} q_t(s,a) \sqrt{\frac{J}{ \max\cbr{m_{i(t)}(s,a)} }}, \\
{\selfterm_2}(\cons) & =  \sum_{t=1}^{T}\sum_{s\neq s_L} \sum_{a = \pi^{\star}(s)} \rbr{ q_t(s,a) - q^\star_t(s,a) }  \sqrt{\frac{\cons}{ \max\cbr{m_{i(t)}(s,a) ,1 } }} ,\\
{\selfterm_3}(\cons) & =  \sum_{t=1}^{T}\sum_{s\neq s_L} \sum_{a \neq \pi^{\star}(s)} \sum_{k=0}^{k(s)-1} \sum_{ (u, v, w)\in T_k } q_t(u,v) \sqrt{\frac{P(w|u,v)\cdot \cons}{ \max\cbr{m_{i(t)}(u,v),1} } } q_t(s,a|w), \\
{\selfterm_4}(\cons) & = \sqrt{\cons\cdot \sum_{t=1}^{T}\sum_{s\neq s_L} \sum_{a \neq \pi^{\star}(s)} q_t(s,a)  } , \\ 
{\selfterm_5}(\cons) & = \sum_{s\neq s_L} \sum_{a \neq \pi^{\star}(s)} \sqrt{\cons \sum_{t=1}^{T} q_t(s,a)  } ,  \\ 
{\selfterm_6}(\cons) & = \sum_{t=1}^{T}\sum_{s\neq s_L} \sum_{a = \pi^{\star}(s)} \frac{q_t(s,a) - q^\star_t(s,a) }{ q_t(s,a)} \rbr{ \sum_{k=0}^{k(s)-1} \sum_{ (u, v, w)\in T_k } q_t(u,v) \sqrt{\frac{P(w|u,v)\cdot \cons}{ \max\cbr{m_{i(t)}(u,v),1} } } q_t(s,a|w)}. \\
\end{aligned}
\end{equation*}
\end{definition}

In the next six lemmas, we show that each of these six functions enjoys a certain self-bounding property under Condition~\eqref{eq:loss_condition} so that they are small whenever the regret of the learner is small.
In all these lemmas, the policy $\pi^\star$ used in $\selfterm_1$-$\selfterm_6$ coincides with the $\pi^\star$ in Condition~\eqref{eq:loss_condition}.
Also note that \pref{lem:main_text_self_bounding} is simply a collection of the first four lemmas.

\begin{lemma} \label{lem:self_bounding_term_1} Suppose Condition~\eqref{eq:loss_condition} holds. Then we have for any $\alpha \in \fR_{+}$,
	\begin{align*}
	\E\sbr{ \selfterm_1(\cons)} \leq \alpha \cdot \rbr{ \Reg_T(\pi^\star)+ C }+   \frac{1}{\alpha } \sum_{s\neq s_L}\sum_{a\neq \pi^{\star}(s)} \frac{8\cons }{\gap(s,a) }.
	\end{align*}
\end{lemma}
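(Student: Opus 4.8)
The plan is to run the standard self-bounding argument: split each summand of $\selfterm_1(\cons)$ by AM--GM into a part that Condition~\eqref{eq:loss_condition} absorbs and a part that decays in $m_{i(t)}(s,a)$ fast enough to sum to a gap-weighted constant.

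First I would apply the elementary inequality $\sqrt{xy}\le\tfrac{\alpha}{2}x+\tfrac{1}{2\alpha}y$ (valid for all $\alpha>0$) with the choice
\[
x = q_t(s,a)\gap(s,a),\qquad y = \frac{q_t(s,a)\,\cons}{\gap(s,a)\max\{m_{i(t)}(s,a),1\}},
\]
so that $q_t(s,a)\sqrt{\cons/\max\{m_{i(t)}(s,a),1\}}=\sqrt{xy}$. Summing over $t$ and over $s\neq s_L$, $a\neq\pi^\star(s)$, and taking expectation, the first resulting term is $\tfrac{\alpha}{2}\,\E\bigl[\sum_t\sum_{s\neq s_L}\sum_{a\neq\pi^\star(s)}q_t(s,a)\gap(s,a)\bigr]$, which Condition~\eqref{eq:loss_condition} bounds by $\tfrac{\alpha}{2}(\Reg_T(\pi^\star)+C)\le\alpha(\Reg_T(\pi^\star)+C)$. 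The second term is $\tfrac{1}{2\alpha}\sum_{s\neq s_L}\sum_{a\neq\pi^\star(s)}\tfrac{\cons}{\gap(s,a)}\,\E\bigl[\sum_{t=1}^T \tfrac{q_t(s,a)}{\max\{m_{i(t)}(s,a),1\}}\bigr]$, so it remains only to show that the per-pair quantity $\E\bigl[\sum_{t=1}^T q_t(s,a)/\max\{m_{i(t)}(s,a),1\}\bigr]$ is bounded by a universal constant (at most $16$); this is precisely (a special case of) \pref{lem:aux_exp}. Substituting that bound gives exactly $\tfrac{1}{\alpha}\sum_{s\neq s_L}\sum_{a\neq\pi^\star(s)}\tfrac{8\cons}{\gap(s,a)}$ for the second term, and adding the two pieces proves the lemma.

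The only nontrivial ingredient is that per-pair counting bound, and that is where the doubling epoch schedule is used: since $m_{i(t)}(s,a)$ depends only on the history strictly before episode $t$, one first replaces $q_t(s,a)$ by the visitation indicator $\Indt{s,a}$ via the tower rule, and then uses that within a single epoch the number of visits to $(s,a)$ cannot exceed $\max\{m_{i(t)}(s,a),1\}$, so $\sum_t \Indt{s,a}/\max\{m_{i(t)}(s,a),1\}$ telescopes into a bounded quantity. I would just invoke \pref{lem:aux_exp} for this; the AM--GM split and the appeal to Condition~\eqref{eq:loss_condition} are routine. The same two-ingredient template (AM--GM against $q_t\gap$, plus one occupancy/counting estimate) will be reused, with minor changes, for \pref{lem:self_bounding_term_2}--\pref{lem:self_bounding_term_6}.
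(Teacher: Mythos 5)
There is a genuine gap: your AM--GM step converts the $\sqrt{\cons/\max\{m_{i(t)}(s,a),1\}}$ into $\cons/(\gap(s,a)\max\{m_{i(t)}(s,a),1\})$, and the resulting counting sum is \emph{not} bounded by a universal constant. \pref{lem:aux_exp} gives, per state-action pair,
\[
\E\sbr{\sum_{t=1}^T \frac{q_t(s,a)}{\max\{1,m_{i(t)}(s,a)\}}} \le 2(2\ln T + 1) = \Theta(\ln T),
\]
not $\le 16$ as you assert. The reason is that $1/m$ is not integrable near the origin: the telescoping-to-integral argument gives $\int_1^{m_{N+1}(s,a)} dx/x = \ln m_{N+1}(s,a)$, which can be as large as $\ln T$. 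With this, your second term becomes $\frac{1}{2\alpha}\sum_{s,a}\frac{\cons\cdot\order(\ln T)}{\gap(s,a)}$, an extra $\log T$ factor relative to the lemma's claimed $\frac{8\cons}{\alpha\gap(s,a)}$, and no rescaling of the AM--GM constant can simultaneously keep the first piece at $\alpha(\Reg_T(\pi^\star)+C)$ and kill that $\log T$.

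The paper avoids this by \emph{not} breaking the square root: it subtracts $\alpha\gap(s,a)$ from $\sqrt{\cons/\max\{m_{i(t)}(s,a),1\}}$ pointwise (so the second piece that is absorbed by Condition~\eqref{eq:loss_condition} is identical to yours), and then observes that the remaining term $q_t(s,a)\bigl(\sqrt{\cons/\max\{m_{i(t)}(s,a),1\}}-\alpha\gap(s,a)\bigr)$ is nonpositive once $m_{i(t)}(s,a)\gtrsim \cons/(\alpha^2\gap(s,a)^2)$. The epoch sum is then truncated at the last epoch $N_{s,a}$ where the bracket is positive, and the telescoping-to-integral argument is applied to the \emph{square root}: $\int_0^M\sqrt{\cons/x}\,dx = 2\sqrt{\cons M}$, which stays finite (in fact $\order(\cons/(\alpha\gap(s,a)))$ for $M=2\cons/(\alpha^2\gap(s,a)^2)$) precisely because $x^{-1/2}$, unlike $x^{-1}$, is integrable at zero. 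Your version of the argument, while structurally similar in its use of Condition~\eqref{eq:loss_condition} plus a counting estimate, loses the essential $\sqrt{\cdot}$ structure before the counting step and therefore cannot reproduce the constant (rather than logarithmic) per-pair bound.
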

\begin{proof}  
	
	Under the condition, for any $\alpha \in \fR_{+}$, we have 
	\begin{align*}
	\selfterm_1(\cons) & = \sum_{t=1}^{T}\sum_{s\neq s_L} \sum_{a \neq \pi^{\star}(s)} q_t(s,a) \rbr{ \sqrt{\frac{\cons}{ \max\cbr{m_{i(t)}(s,a),1} }} - \alpha \gap(s,a)} + \alpha\sum_{t=1}^{T}\sum_{s\neq s_L} \sum_{a \neq \pi^{\star}(s)} q_t(s,a)\gap(s,a)
	\end{align*}
	where the expectation of the last term is bounded by $\alpha \cdot \rbr{\Reg_T(\pi^\star) + C }$.
	It thus remains to bound the first term.
	To this end,
	for a fixed state-action pair $(s,a)$, we define $N_{s,a}$ as the last epoch where
	the term in the bracket is still positive, so that:
	\[
	 m_{N_{s,a}+1}(s,a) \leq \frac{2J}{\alpha^2 \gap(s,a)^2 }
	\]
	due to the doubling epoch schedule.
	Then we have  
	\begin{align*}
	& \E\sbr{ \sum_{t=1}^{T} q_t(s,a) \rbr{ \sqrt{\frac{\cons}{ \max\cbr{m_{i(t)}(s,a),1} }} - \alpha \gap(s,a)} }\\ 
	& = \E\sbr{ \sum_{i=1}^{N} \rbr{ m_{i+1}(s,a) - m_{i}(s,a)  }  \rbr{ \sqrt{\frac{\cons}{ \max\cbr{m_{i}(s,a),1} }} - \alpha \gap(s,a)} } \\
	& \leq \E\sbr{ \sum_{i=1}^{N_{s,a} } \rbr{ m_{i+1}(s,a) - m_{i}(s,a)  }  \rbr{ \sqrt{\frac{\cons}{ \max\cbr{m_{i}(s,a),1} }} - \alpha \gap(s,a)} } \\ 
	& \leq \E\sbr{ 2 \int_{0}^{ m_{N_{s,a}+1}(s,a) }\sqrt{\frac{\cons}{x} } dx }  \leq \E\sbr{ 2 \int_{0}^{  \frac{2J}{\alpha^2 \gap(s,a)^2 } }  \sqrt{\frac{\cons}{ x} } dx }   \\
	& \leq 4 \cdot \sqrt{\cons} \cdot \sqrt{ \frac{2\cons}{\alpha^2 \gap(s,a)^2 } } \leq  \frac{8\cons }{\alpha \gap(s,a) }.
	\end{align*}
		Taking the summation over all state-action pairs $(s,a)$ satisfying $a \neq \pi^{\star}(s)$, we thus have 
		\begin{align*}
		\E\sbr{ \selfterm_2(\cons) } & \leq \alpha \cdot ( \Reg_T(\pi^\star)+ C)  + \sum_{s \neq s_L } \sum_{ a \neq \pi^\star(s) }  \frac{8\cons }{\alpha \gap(s,a) }.
		\end{align*}
\end{proof}

\begin{lemma} \label{lem:self_bounding_term_2} Suppose Condition~\eqref{eq:loss_condition} holds. Then we have for any $\beta \in \fR_{+}$, 
	\begin{align*}
	\E\sbr{ \selfterm_2(\cons) } \leq \beta \cdot ( \Reg_T(\pi^\star)+ C)  +   \frac{1}{\beta } \cdot \frac{8|S|L\cons }{\gapmin }.
	\end{align*}
\end{lemma}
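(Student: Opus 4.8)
The plan is to mirror the proof of \pref{lem:self_bounding_term_1}, but accounting for the subtraction of the "$q^\star_t$" part and for the fact that the relevant gaps can all be lower bounded by $\gapmin$. First I would drop the nonpositive contribution of $q^\star_t(s,a)$: since $a = \pi^\star(s)$ and the surplus-style weight $\sqrt{\cons/\max\{m_{i(t)}(s,a),1\}}$ is nonnegative, we have
\[
\selfterm_2(\cons) \;\le\; \sum_{t=1}^{T}\sum_{s\neq s_L} \sum_{a = \pi^{\star}(s)} q_t(s,a)\sqrt{\frac{\cons}{\max\{m_{i(t)}(s,a),1\}}}.
\]
Then, for any $\beta > 0$, I would split each summand using the identity
\[
q_t(s,a)\sqrt{\tfrac{\cons}{\max\{m_{i(t)}(s,a),1\}}}
= q_t(s,a)\Bigl(\sqrt{\tfrac{\cons}{\max\{m_{i(t)}(s,a),1\}}} - \beta\gapmin\Bigr) + \beta\gapmin\, q_t(s,a),
\]
so that the sum of the second pieces over $s$, $a=\pi^\star(s)$ and $t$ is at most $\beta\gapmin \sum_{t}\sum_{s}\sum_a q_t(s,a) \le \beta\gapmin \cdot L T$, which is not directly useful; instead the better route is to relate $\beta\gapmin\, q_t(s,a)$ to Condition~\eqref{eq:loss_condition}. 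Since the right-hand side of \eqref{eq:loss_condition} only involves suboptimal actions, the natural move is different: bound $q_t(s,a) \le q_t(s) = q_t^\star(s) + \sum_{a'\ne\pi^\star(s)} (\text{residual visit mass})$, i.e.\ use that visiting $(s,\pi^\star(s))$ with $q_t(s,a)$ larger than the "correct" $q^\star_t(s,a)$ can only happen if some suboptimal action was taken earlier along the trajectory. More precisely, I would write $q_t(s) = q^\star_t(s) + (q_t(s) - q^\star_t(s))$ and use that $q_t(s)-q^\star_t(s) \le \sum_{s'\prec s}\sum_{a'\ne\pi^\star(s')} q_t(s',a')$ (each unit of "off-path" mass reaching layer $k(s)$ is charged to the first suboptimal action), so summing over $s$ in a fixed layer and then over layers costs at most a factor $L|S|$ relative to $\sum_{t}\sum_{s'}\sum_{a'\ne\pi^\star(s')} q_t(s',a')$.

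With that reduction in hand, the argument becomes: for a fixed $(s,a)$ with $a=\pi^\star(s)$, define $N_{s,a}$ as the last epoch in which $\sqrt{\cons/\max\{m_i(s,a),1\}} \ge \beta\gapmin$; by the doubling schedule $m_{N_{s,a}+1}(s,a) \le 2\cons/(\beta^2\gapmin^2)$, and then, exactly as in the proof of \pref{lem:self_bounding_term_1},
\[
\E\sbr{\sum_{t=1}^T q_t(s,a)\Bigl(\sqrt{\tfrac{\cons}{\max\{m_{i(t)}(s,a),1\}}}-\beta\gapmin\Bigr)}
\le \E\sbr{2\int_0^{m_{N_{s,a}+1}(s,a)}\sqrt{\tfrac{\cons}{x}}\,dx}
\le \frac{8\cons}{\beta\gapmin}.
\]
Summing this over all $s$ (there are at most $|S|$ states, since $a$ is determined as $\pi^\star(s)$) gives a contribution $\frac{8|S|\cons}{\beta\gapmin}$; combined with the extra $L$ factor lost when charging the residual visit mass of $q_t(s,\pi^\star(s))$ to suboptimal actions, the "residual" contribution is $\beta \cdot L|S| \cdot \E[\sum_t\sum_{s'}\sum_{a'\ne\pi^\star(s')} q_t(s',a')] \le \beta(\Reg_T(\pi^\star)+C)$ after invoking Condition~\eqref{eq:loss_condition} (which lower-bounds $\Reg_T(\pi^\star)+C$ by $\E[\sum_t\sum_{s'}\sum_{a'\ne\pi^\star(s')} q_t(s',a')\gap(s',a')] \ge \gapmin\,\E[\sum_t\sum_{s'}\sum_{a'\ne\pi^\star(s')} q_t(s',a')]$, so one actually needs to choose the split threshold more carefully to absorb the $\gapmin$ and the $L|S|$ together). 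Rearranging the bookkeeping so the residual piece is exactly $\beta(\Reg_T(\pi^\star)+C)$ and the deterministic piece is $\order(\tfrac{1}{\beta}\cdot\tfrac{L|S|\cons}{\gapmin})$ yields the claimed inequality.

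The main obstacle I anticipate is the charging step: showing cleanly that the "excess" occupancy $q_t(s,\pi^\star(s)) - q^\star_t(s,\pi^\star(s))$ (and more to the point, how much of $q_t(s,\pi^\star(s))$ itself should be paid for) can be attributed to suboptimal actions with only a polynomial ($L|S|$) blow-up, and doing this in a way that interfaces with Condition~\eqref{eq:loss_condition} via $\gapmin$ rather than via the state-dependent $\gap(s,a)$. This is precisely why the bound for $\selfterm_2$ has the $L|S|/\gapmin$ form rather than the $\sum_{s,a\ne\pi^\star(s)} 1/\gap(s,a)$ form of $\selfterm_1$: the optimal-action term cannot be localized to a single suboptimal $(s,a)$, so one is forced to use the uniform gap. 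Everything else — the doubling-epoch integral estimate and the final AM-GM-style rearrangement — is routine and identical in spirit to \pref{lem:self_bounding_term_1}.
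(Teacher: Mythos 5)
Your high-level plan is the right one --- split the weight $\sqrt{\cons/m}$ at a threshold proportional to $\beta\gapmin$, charge the excess occupancy $q_t(s,\pi^\star(s)) - q^\star_t(s,\pi^\star(s))$ to suboptimal-action visits, and invoke Condition~\eqref{eq:loss_condition} through $\gapmin$ rather than $\gap(s,a)$ --- and you correctly identify why the $L|S|/\gapmin$ form appears. But there are two concrete problems that keep the bookkeeping from closing as stated.

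First, your opening step of ``dropping the nonpositive contribution of $q^\star_t(s,a)$'' throws away exactly the structure the argument needs. After you drop it, you are trying to bound $\sum_t\sum_s q_t(s,\pi^\star(s))\sqrt{\cons/m_{i(t)}(s,\pi^\star(s))}$, but the optimal policy visits $(s,\pi^\star(s))$ on every trajectory, so this sum can be of order $\sqrt{T}$ and cannot be charged to the regret. You seem to recognize this --- your later charging step reintroduces the difference $q_t(s)-q^\star_t(s)$ --- but those two halves of the argument are inconsistent; the subtraction of $q^\star_t$ must be kept throughout.

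Second, and more seriously, your charging estimate is off by a factor $|S|$. You claim that bounding $q_t(s)-q^\star_t(s)$ by prior off-path mass, ``summed over $s$ in a fixed layer and then over layers, costs at most a factor $L|S|$.'' The tight and essential statement (used in the paper and proved there) is that the \emph{layer-level} sum already telescopes: for each layer $k$ one has
$\sum_{s\in S_k}\sum_{a=\pi^\star(s)}\bigl(q_t(s,a)-q^\star_t(s,a)\bigr)
= 1 - \Pr\bigl[\forall \tau\le k,\ a_\tau=\pi^\star(s_\tau)\bigr]
\le \sum_{s\neq s_L}\sum_{a\neq\pi^\star(s)} q_t(s,a)$
by a union bound over the first deviation, so summing over layers costs only $L$, not $L|S|$. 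If you run your split with the $L|S|$ overcount, the residual piece contributes $\beta\cdot L|S|\cdot(\Reg_T(\pi^\star)+C)/\bigl(\ldots\bigr)$, and ``choosing the threshold more carefully'' cannot remove the extra $|S|$: a threshold of $\beta\gapmin/(L|S|)$ absorbs the residual but inflates the deterministic integral term to $\Theta(L|S|^2\cons/(\beta\gapmin))$, overshooting the claimed $8L|S|\cons/(\beta\gapmin)$. The correct threshold is $\beta\gapmin/L$, which paired with the factor-$L$ (not $L|S|$) charging gives exactly $\beta\cdot(\Reg_T(\pi^\star)+C) + \frac{1}{\beta}\cdot\frac{8L|S|\cons}{\gapmin}$. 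So the missing ingredient is the per-layer union-bound identity; once you have that, the epoch-doubling integral and the rearrangement go through exactly as in your sketch.
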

\begin{proof}	Clearly, under the condition, for any $\beta \in \fR_{+}$, we have 
	\begin{align*}
	\selfterm_2(\cons) & = \sum_{t=1}^{T}\sum_{s\neq s_L} \sum_{a = \pi^{\star}(s)} \rbr{ q_t(s,a) - q^\star_t(s,a) }  \rbr{ \sqrt{\frac{\cons}{ \max\cbr{m_{i(t)}(s,a),1} }} - \beta \cdot \frac{\gapmin}{L} } \\
	& + \beta\sum_{t=1}^{T}\sum_{s\neq s_L} \sum_{a =\pi^{\star}(s)} \rbr{ q_t(s,a) - q^\star_t(s,a) } \cdot \frac{\gapmin}{L} 
	\end{align*}
	where the expectation of the last term is bounded by $\beta \cdot \rbr{\Reg_T(\pi^\star) + C}$ according to \pref{lem:perf_diff_lower_bound_gapmin} (deferred to the end of this subsection). 
	It thus remains to bound the first term.
	To this end,
	for a fixed state-action pair $(s,a)$, we similarly define $N_{s,a}$ as the last epoch where
	the term in the bracket is still positive, so that:
	\[
	 m_{N_{s,a}+1}(s,a) \leq \frac{2\cons L^2}{\beta^2 \gapmin^2 }
	\]
	due to the doubling epoch schedule.	
	Then, we have  
	\begin{align*}
	& \E\sbr{ \sum_{t=1}^{T} (q_t(s,a) - q_t^\star(s,a)) \rbr{ \sqrt{\frac{\cons}{ \max\cbr{m_{i(t)}(s,a),1} }} - \beta \cdot \frac{\gapmin}{L}  } }\\ 
	& \leq \E\sbr{ \sum_{i=1}^{N_{s,a} } \rbr{ m_{i+1}(s,a) - m_{i}(s,a)  }  \rbr{ \sqrt{\frac{\cons}{ \max\cbr{m_{i}(s,a),1} }} - \beta \cdot \frac{\gapmin}{L}  } } \tag{$q_t(s,a) \geq q_t^\star(s,a)$ by definition} \\ 
	& \leq \E\sbr{ 2 \int_{0}^{ m_{N_{s,a}+1}(s,a) }\sqrt{\frac{\cons}{x} } dx }  \leq \E\sbr{ 2 \int_{0}^{  \frac{2\cons L^2}{\beta^2 \gapmin^2 } }  \sqrt{\frac{\cons}{ x} } dx }   \\
	& \leq 4 \cdot \sqrt{\cons} \cdot \sqrt{  \frac{2\cons L^2}{\beta^2 \gapmin^2 } } \leq  \frac{8L \cons}{\beta \gapmin}.
	\end{align*}
	Taking the summation over all state-action pairs satisfying $a = \pi^{\star}(s)$, we have 
	\begin{align*}
	\E\sbr{ \selfterm_2(\cons) } & \leq \beta \cdot ( \Reg_T(\pi^\star)+ C)  + \sum_{s \neq s_L } \sum_{ a = \pi^\star(s) } \frac{8L \cons}{\beta \gapmin} \\ 
	& = \beta \cdot \rbr{  \Reg_T(\pi^\star)+ C } +  \frac{8|S| L \cons}{\beta \gapmin}.
	\end{align*}
\end{proof}

\begin{lemma} \label{lem:self_bounding_term_3} Suppose Condition~\eqref{eq:loss_condition} holds. Then we have for any $\alpha,\beta \in \fR_{+}$,
	\begin{align*}
	\E\sbr{ \selfterm_3(\cons)} \leq \rbr{\alpha + \beta } \cdot ( \Reg_T(\pi^\star)+ C )  +   \frac{1}{\alpha } \cdot\sum_{s\neq s_L}\sum_{ a\neq \pi^{\star}(s)} \frac{8L^2|S|\cons }{\gap(s,a) } + \frac{1}{\beta } \cdot \frac{8L^2|S|^2\cons }{\gapmin }  .
	\end{align*}
\end{lemma}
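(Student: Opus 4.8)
The plan is to mimic the strategy of Lemmas \ref{lem:self_bounding_term_1} and \ref{lem:self_bounding_term_2}, but now the summand in $\selfterm_3(\cons)$ is indexed by a suboptimal pair $(s,a)$ at layer $k(s)$ together with an ancestor transition tuple $(u,v,w)$ at some layer $k<k(s)$, with the weight $q_t(u,v)\sqrt{P(w|u,v)\cons/\max\{m_{i(t)}(u,v),1\}}\,q_t(s,a|w)$. The key observation is that the ``counter'' that controls the bonus is $m_{i(t)}(u,v)$ — associated with the ancestor pair $(u,v)$, not with the suboptimal pair $(s,a)$ — so I must charge the bonus to two different sources: one part to the gap $\gap(s,a)$ of the suboptimal action that actually appears in the trajectory, and one part to $\gapmin/L$ via the ancestor pair $(u,v)$, exactly as in the split between Lemmas \ref{lem:self_bounding_term_1} and \ref{lem:self_bounding_term_2}. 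Concretely, I would first fix $(u,v)$ and rewrite $\sum_{s\neq s_L}\sum_{a\neq\pi^\star(s)}\sum_{w}q_t(s,a|w)\le L$ (the inner sum over downstream suboptimal pairs reachable from $w$ is at most $L$, one per layer), and also note $\sum_{w\in S_{k+1}}P(w|u,v)\le 1$; and symmetrically, fixing the suboptimal pair $(s,a)$, sum $q_t(u,v)$ over ancestors at a fixed layer to get at most $1$ and over layers to get at most $L$, while $P(w|u,v)q_t(s,a|w)$ telescopes appropriately.

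First I would use AM-GM on each term: for parameters $\alpha,\beta\in\fR_+$ to be chosen, bound
$\sqrt{P(w|u,v)\cons/\max\{m_{i(t)}(u,v),1\}}$
by splitting the generic term into (i) the part charged against $\alpha\gap(s,a)$ times $q_t(s,a)$ and (ii) the part charged against $\beta(\gapmin/L)$ times $(q_t(u,v)-\text{something})$-type mass, so that summing the ``$\alpha$'' contributions over all $(u,v,w)$ reproduces $\alpha\sum_{t,s,a\neq\pi^\star(s)}q_t(s,a)\gap(s,a)\le \alpha(\Reg_T(\pi^\star)+C)$ by Condition~\eqref{eq:loss_condition}, and summing the ``$\beta$'' contributions reproduces $\beta(\Reg_T(\pi^\star)+C)$ via \pref{lem:perf_diff_lower_bound_gapmin} just as in Lemma \ref{lem:self_bounding_term_2}. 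Then for the leftover ``overshoot'' terms I would, for each fixed ancestor pair $(u,v)$, define $N_{u,v}$ as the last epoch where the corresponding bracketed quantity is still positive, deduce from the doubling schedule that $m_{N_{u,v}+1}(u,v)$ is at most a constant times $\cons L^2|S|^2/(\beta^2\gapmin^2)$ (the extra $|S|$ factor coming from $\sum_w P(w|u,v)\le 1$ after Cauchy–Schwarz, and the extra $L$ factors from the downstream trajectory sums), and integrate $\int_0^{m_{N_{u,v}+1}}\sqrt{\cons/x}\,dx$ as in the earlier lemmas; similarly define a stopping epoch $N_{s,a}$ for the gap-charged part, giving $m_{N_{s,a}+1}(s,a)\lesssim \cons L^2|S|/(\alpha^2\gap(s,a)^2)$. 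Summing the resulting $\order(\cons L^2|S|/(\alpha\gap(s,a)))$ over suboptimal $(s,a)$ and the $\order(\cons L^2|S|^2/(\beta\gapmin))$ over ancestor pairs (there are $\order(|S|)$ of them but the $|S|$ is already absorbed) yields the claimed bound.

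The main obstacle I anticipate is bookkeeping the polynomial factors of $L$ and $|S|$ correctly when the bonus counter is attached to the ancestor pair rather than to the pair whose gap is being used: one must be careful that the Cauchy–Schwarz step used to turn $\sqrt{P(w|u,v)}$ summed over $w$ into $\sqrt{|S_{k+1}|}$ is performed against the right auxiliary mass so that the surviving weights still telescope to $O(L)$ and the resulting $N_{u,v}$ threshold has exactly the stated dependence. A secondary subtlety is justifying, when passing from $\sum_t$ to $\sum_i$ over epochs, that the mass $\sum_{t\in\text{epoch }i}q_t(u,v)q_t(s,a|w)$ is upper bounded by the increment $m_{i+1}(u,v)-m_i(u,v)$ times the downstream factor — this needs the usual argument that $q_t(u,v)$ is the expected number of visits to $(u,v)$ in episode $t$, together with $q_t(s,a|w)\le 1$ and $P(w|u,v)\le 1$ — but this is routine once the indexing is fixed. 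Everything else (the AM-GM split, the integral bound, the final rearrangement) is identical in spirit to Lemmas \ref{lem:self_bounding_term_1}–\ref{lem:self_bounding_term_2}.
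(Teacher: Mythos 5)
Your high-level framing is right — the counter lives on the ancestor pair $(u,v)$ while the suboptimal mass lives on the downstream pair $(s,a)$, and this mismatch is the crux — but the mechanism you propose for resolving it does not go through. You want to split each summand and charge part of it against $\alpha\gap(s,a)q_t(s,a)$, and then run a stopping-epoch argument with a threshold $N_{s,a}$ defined so that $m_{N_{s,a}+1}(s,a)\lesssim\cons L^2|S|/(\alpha^2\gap(s,a)^2)$. This cannot work: the bonus in $\selfterm_3$ is $\sqrt{\cons P(w|u,v)/\max\{m_{i(t)}(u,v),1\}}$, which involves the counter $m_{i(t)}(u,v)$ and \emph{not} $m_{i(t)}(s,a)$. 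There is no relationship between the size of $m_{i(t)}(s,a)$ and whether that bonus has fallen below $\alpha\gap(s,a)$, so defining a stopping epoch in terms of $m(s,a)$ controls nothing. If you instead try to run the threshold on $m(u,v)$ but keep the cutoff $\alpha\gap(s,a)$, then for a fixed ancestor $(u,v)$ the threshold depends on which downstream pair $(s,a)$ and which $w$ you are looking at (through both $\gap(s,a)$ and $P(w|u,v)$); the resulting per-$(u,v,w,s,a)$ bound has a $1/P(w|u,v)$ or $1/\gap(s,a)$ that does not sum to anything like the stated right-hand side.

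The paper avoids this by never attempting to charge $\gap(s,a)$ of a downstream pair against a bonus keyed to an ancestor counter. Instead it splits the outer sum over the ancestor action $v$ into $v\neq\pi^\star(u)$ and $v=\pi^\star(u)$. In the first case, Cauchy–Schwarz over $w$ and the crude bound $\sum_{s,a}q_t(s,a|w)\le L$ collapse the whole expression to $\selfterm_1(L^2|S|\cons)$ \emph{with the suboptimal pair being $(u,v)$ itself}, so the counter and the gap used in \pref{lem:self_bounding_term_1} are both attached to $(u,v)$ and align perfectly; this produces the $\alpha$ term with $\gap(u,v)$ in the denominator (relabeled $\gap(s,a)$ in the statement). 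In the second case ($v$ optimal) there is no $\gap(u,v)$ to charge, so the paper keeps the full triple $(u,v,w)$, uses $\sum_{(u,v,w)}q_t(u,v)P(w|u,v)q_t(s,a|w)=q_t(s,a)$ to relate the subtraction $\beta\gapmin/L$ back to Condition~\eqref{eq:loss_condition}, introduces $\clip[\cdot]$, and defines a stopping epoch $N_{u,v,w}$ per transition triple (crucially depending on $P(w|u,v)$, which then cancels in the integral). Your plan is missing both of these ingredients: the optimal/suboptimal split on $v$, and the per-triple threshold $N_{u,v,w}$; replacing them with a joint $\alpha\gap(s,a)$/$\beta\gapmin$ split plus an $N_{s,a}$ threshold leaves a step that fails.
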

\begin{proof} First we have 
	\begin{align*} 
	\selfterm_3(\cons)  & =  \sum_{t=1}^{T} \sum_{k=0}^{L-1} \sum_{ (u, v, w)\in T_k } q_t(u,v) \sqrt{\frac{P(w|u,v)\cdot \cons}{ \max\cbr{m_{i(t)}(s,a)} } } \rbr{ \sum_{l=k+1}^{L-1} \sum_{s\in S_l} \sum_{a \neq \pi^{\star}(s)}  q_t(s,a|w) } \\
	& = \sum_{t=1}^{T} \sum_{k=0}^{L-1} \sum_{ u \in S_k } \sum_{v \neq \pi^{\star}(s)} q_t(u,v) \rbr{ \sum_{w\in S_{k+1}}\sqrt{\frac{P(w|u,v)\cdot \cons}{\max\cbr{m_{i(t)}(s,a),1 } }}  \sum_{l=k+1}^{L-1} \sum_{s\in S_l} \sum_{a \neq \pi^{\star}(s)} q_t(s,a|w) } \\
	& + \sum_{t=1}^{T} \sum_{k=0}^{L-1} \sum_{ u \in S_k } \sum_{v = \pi^{\star}(s)} q_t(u,v) \rbr{ \sum_{w\in S_{k+1}}\sqrt{\frac{P(w|u,v)\cdot \cons}{\max\cbr{m_{i(t)}(s,a),1 } }}  \sum_{l=k+1}^{L-1} \sum_{s\in S_l} \sum_{a \neq \pi^{\star}(s)} q_t(s,a|w) } \\ 
	& \leq \sum_{t=1}^{T} \sum_{k=0}^{L-1} \sum_{ u \in S_k } \sum_{v \neq \pi^{\star}(s)} q_t(u,v) \cdot \sqrt{\frac{L^2|S|\cdot \cons}{\max\cbr{m_{i(t)}(s,a),1 } }}   \\
	& + \sum_{t=1}^{T} \sum_{k=0}^{L-1} \sum_{ u \in S_k } \sum_{v = \pi^{\star}(s)} q_t(u,v) \rbr{ \sum_{w\in S_{k+1}}\sqrt{\frac{P(w|u,v)\cdot \cons}{\max\cbr{m_{i(t)}(s,a),1 } }}  \sum_{l=k+1}^{L-1} \sum_{s\in S_l} \sum_{a \neq \pi^{\star}(s)} q_t(s,a|w) } 
	\end{align*}
	where the second step separates the optimal and sub-optimal state-action pairs, and the inequality follows from the fact $\sum_{s\neq s_L}\sum_{a\in A} q_t(s,a|w) \leq L$ and the Cauchy-Schwarz inequality. Note that, the first term is simply $\selfterm_1(L^2|S|)$ and can be applied using \pref{lem:self_bounding_term_1}. 
	
	To bound the last term, we first observe the following
	\begin{align*}
	& \sum_{t=1}^{T} \sum_{k=0}^{L-1} \sum_{ u \in S_k } \sum_{v = \pi^{\star}(s)} q_t(u,v) \rbr{ \sum_{w\in S_{k+1}} \rbr{  P(w|u,v) \cdot\frac{\gapmin}{L}  }  \sum_{l=k+1}^{L-1} \sum_{s\in S_l} \sum_{a \neq \pi^{\star}(s)} q_t(s,a|w) } \\
	& = \sum_{t=1}^{T} \sum_{l=0}^{L-1} \sum_{s\in S_l} \sum_{a \neq \pi^{\star}(s)}  \frac{\gapmin}{L} \cdot \rbr{  \sum_{k=0}^{l-1} \sum_{ u \in S_k } \sum_{v = \pi^{\star}(s)} \sum_{w\in S_{k+1}} q_t(u,v)   P(w|u,v)   q_t(s,a|w) } \\
	& \leq \sum_{t=1}^{T} \sum_{l=0}^{L-1} \sum_{s\in S_l} \sum_{a \neq \pi^{\star}(s)}  \frac{\gapmin}{L} \cdot \rbr{   \sum_{k=0}^{l-1}   q_t(s,a) } \\
	& \leq \sum_{t=1}^{T} \sum_{l=0}^{L-1} \sum_{s\in S_l} \sum_{a \neq \pi^{\star}(s)} q_t(s,a) \gapmin 
	\end{align*}
	where the expectation of the last term is bounded by $\Reg_T(\pi^\star) + C$ under Condition~\eqref{eq:loss_condition}.
	
	Let $\text{cilp}\sbr{x} = \max\cbr{x , 0}$ be  the clipping function that removes the negative value. By adding and subtracting $\beta$ times the  term above, we have   
	\begin{align*}
	& \sum_{t=1}^{T} \sum_{k=0}^{L-1} \sum_{ u \in S_k } \sum_{v = \pi^{\star}(s)} q_t(u,v) \rbr{ \sum_{w\in S_{k+1}}\sqrt{\frac{P(w|u,v)\cdot \cons}{\max\cbr{m_{i(t)}(s,a),1 } }}  \sum_{l=k+1}^{L-1} \sum_{s\in S_l} \sum_{a \neq \pi^{\star}(s)} q_t(s,a|w) } \\
	& = \beta \sum_{t=1}^{T} \sum_{k=0}^{L-1} \sum_{ u \in S_k } \sum_{v = \pi^{\star}(s)} q_t(u,v) \rbr{ \sum_{w\in S_{k+1}} \rbr{  P(w|u,v) \cdot\frac{\gapmin}{L}  }  \sum_{l=k+1}^{L-1} \sum_{s\in S_l} \sum_{a \neq \pi^{\star}(s)} q_t(s,a|w) } \\
	& + \sum_{t=1}^{T} \sum_{k=0}^{L-1} \sum_{ u \in S_k } \sum_{v = \pi^{\star}(s)} q_t(u,v) \rbr{ \sum_{w\in S_{k+1}} \rbr{\sqrt{\frac{P(w|u,v)\cdot \cons}{\max\cbr{m_{i(t)}(s,a),1 } }} - \beta \cdot \frac{\gapmin P(w|u,v)}{L} }  \sum_{l=k+1}^{L-1} \sum_{s\in S_l} \sum_{a \neq \pi^{\star}(s)} q_t(s,a|w) }  \\
	& \leq \beta \sum_{t=1}^{T} \sum_{l=0}^{L-1} \sum_{s\in S_l} \sum_{a \neq \pi^{\star}(s)} q_t(s,a) \gapmin  \\
	& +  L \sum_{t=1}^{T} \sum_{k=0}^{L-1} \sum_{ u \in S_k } \sum_{v = \pi^{\star}(s)} \sum_{w\in S_{k+1}}  q_t(u,v) \clip \sbr{ \sqrt{\frac{P(w|u,v)\cdot \cons}{\max\cbr{m_{i(t)}(s,a),1 } }} - \beta \cdot \frac{\gapmin P(w|u,v)}{L}   } 
	\end{align*}
	where the last line follows from the facts $x \leq \clip[x]$ and $\sum_{s\neq s_L}\sum_{a \in A} q_t(s,a|w) \leq L$. 
	
	Fix a tuple $N_{u,v,w}$ where $v = \pi^{\star}(u)$, 
	we similarly define $N_{u,v,w}$ as the last epoch where
	the argument of $\clip(\cdot)$  is still positive, so that:
	\[
	 m_{N_{u,v,w}+1}(s,a) \leq \frac{2\cons L^2}{P(w|u,v)\beta^2 \gapmin^2 }
	\]
	due to the doubling epoch schedule.	
	Then, we have  
	\begin{align*}
	& \E\sbr{ \sum_{t=1}^{T} q_t(u,v) \clip\sbr{ \sqrt{\frac{P(w|u,v)\cdot  \cons}{\max\cbr{m_{i(t)}(s,a),1 } }} - \beta \cdot \frac{\gapmin P(w|u,v)}{L}   }  }\\ 
	& \leq \E\sbr{ \sum_{i=1}^{N_{u,v,w} } \rbr{ m_{i+1}(u,v) - m_{i}(u,v)  }  \clip\sbr{ \sqrt{\frac{P(w|u,v)\cdot \cons}{\max\cbr{m_{i(t)}(s,a),1 } }} - \beta \cdot \frac{\gapmin P(w|u,v)}{L}   } } \\ 
	& \leq \E\sbr{ 2 \int_{0}^{ m_{N_{u,v,w}+1}(s,a) }\sqrt{\frac{P(w|u,v)\cdot \cons}{x} } dx }  \leq \E\sbr{ 2 \int_{0}^{  \frac{2\cons L^2}{P(w|u,v)\beta^2 \gapmin^2 } }  \sqrt{\frac{P(w|u,v)\cons}{ x} } dx }   \\
	& \leq 4 \cdot \sqrt{P(w|u,v)\cdot \cons} \cdot \sqrt{  \frac{2\cons L^2}{P(w|u,v)\beta^2 \gapmin^2 } } \leq  \frac{8L\cons }{\beta \gapmin}.
	\end{align*}
	Taking the summation over  all transition tuple $(u,v,w)$  satisfying $v= \pi^{\star}(s)$ and adding $\E\sbr{\selfterm_1(L^2|S|\cons)}$, we have 
	\begin{align*}
	\E\sbr{\selfterm_3(\cons)} & \leq \beta \cdot ( \Reg_T(\pi^\star)+ C ) + \E\sbr{\selfterm_1(L^2 |S|\cons)} + L \sum_{k=0}^{L-1}\sum_{ u \in S_k} \sum_{ v = \pi^\star(u)} \sum_{w \in S_{k+1}}  \frac{8L\cons }{\beta \gapmin} \\
	& \leq \rbr{\alpha + \beta } \cdot ( \Reg_T(\pi^\star)+ C )  +   \frac{1}{\alpha } \cdot\sum_{s\neq s_L}\sum_{ a\neq \pi^{\star}(s)} \frac{8L^2|S|\cons }{\gap(s,a) } + \frac{1}{\beta } \cdot \frac{8L^2|S|^2\cons }{\gapmin }, 
	\end{align*}
	where the last line follows from the fact $\sum_{k=0}^{L-1} \abr{S_k} \abr{S_k+1} \leq |S|^2$.  
\end{proof}

\begin{lemma} \label{lem:self_bounding_term_4} Suppose Condition~\eqref{eq:loss_condition} holds. Then we have for any $\beta \in \fR_{+}$,
	\begin{align*}
	\E\sbr{ \selfterm_4(\cons)} \leq \beta \cdot \rbr{ \Reg_T(\pi^\star)+ C }+   \frac{1}{\beta } \cdot \frac{\cons }{4\gapmin }.
	\end{align*}
\end{lemma}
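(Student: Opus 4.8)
The plan is to apply the AM-GM inequality directly to $\selfterm_4(\cons) = \sqrt{\cons\cdot X}$, where $X = \sum_{t=1}^{T}\sum_{s\neq s_L}\sum_{a\neq\pi^{\star}(s)} q_t(s,a)$ is a nonnegative random variable, and then invoke Condition~\eqref{eq:loss_condition}. Unlike $\selfterm_1$--$\selfterm_3$, this term does not involve the visit counts $m_{i(t)}(s,a)$, so no doubling-epoch/integral argument is needed and the proof is short. First I would record that for any $\beta\in\fR_{+}$,
\[
\sqrt{\cons X} = \sqrt{\frac{\cons}{2\beta\gapmin}\cdot 2\beta\gapmin X} \;\leq\; \frac{\cons}{4\beta\gapmin} + \beta\gapmin X,
\]
which holds for every realization of the randomness, since $\sqrt{ab}\leq\tfrac12(a+b)$ and $X\geq 0$; the particular split of the product under the square root is chosen so that the additive constant comes out exactly as $\tfrac14\cdot\tfrac{\cons}{\gapmin}$ as in the statement.

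Next I would use that $\gapmin = \min_{s,a\neq\pi^{\star}(s)}\gap(s,a) \leq \gap(s,a)$ for every suboptimal pair $(s,a)$, so that
\[
\gapmin X = \gapmin\sum_{t=1}^{T}\sum_{s\neq s_L}\sum_{a\neq\pi^{\star}(s)} q_t(s,a) \;\leq\; \sum_{t=1}^{T}\sum_{s\neq s_L}\sum_{a\neq\pi^{\star}(s)} q_t(s,a)\gap(s,a).
\]
Taking expectations of both displayed inequalities and chaining them yields
\[
\E\sbr{\selfterm_4(\cons)} \;\leq\; \frac{\cons}{4\beta\gapmin} + \beta\,\E\sbr{\sum_{t=1}^{T}\sum_{s\neq s_L}\sum_{a\neq\pi^{\star}(s)} q_t(s,a)\gap(s,a)}.
\]
Finally, Condition~\eqref{eq:loss_condition} bounds the expectation in the last term by $\Reg_T(\pi^\star)+C$, which gives the claimed inequality.

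There is essentially no hard step here; the only points that require a moment of care are (i) that the AM-GM step is performed pointwise, \emph{before} taking expectation, so that no Jensen-type concavity correction for $\sqrt{\,\cdot\,}$ is needed, and (ii) bookkeeping the constant factor in the split so that it matches the $\tfrac{1}{\beta}\cdot\tfrac{\cons}{4\gapmin}$ appearing in the statement rather than a looser constant.
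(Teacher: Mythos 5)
Your proposal is correct and matches the paper's own proof essentially line for line: the same pointwise AM-GM split $\sqrt{\cons X}=\sqrt{\tfrac{\cons}{2\beta\gapmin}\cdot 2\beta\gapmin X}\leq \tfrac{\cons}{4\beta\gapmin}+\beta\gapmin X$, followed by $\gapmin\leq\gap(s,a)$ and Condition~\eqref{eq:loss_condition}. The only difference is that you make the $\gapmin\leq\gap(s,a)$ step explicit where the paper leaves it implicit, which is if anything slightly cleaner.
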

\begin{proof} By the fact that $2\sqrt{xy} \leq x + y $ for all $x,y\geq 0$, with Condition~\eqref{eq:loss_condition}, we have 
\begin{align*}
\E\sbr{ \selfterm_4(\cons)} &  = \E\sbr{ \sqrt{ 2 \beta \sum_{t=1}^{T} \sum_{s\neq s_L}\sum_{a\neq \pi^{\star}(s)} q_t(s,a) \gapmin \cdot \frac{\cons}{ 2\beta \gapmin}  }   } \\
& \leq \beta \cdot \E\sbr{ \sum_{t=1}^{T} \sum_{s\neq s_L}\sum_{a\neq \pi^{\star}(s)} q_t(s,a) \gapmin} + \frac{\cons}{4\beta \gapmin} \\
& \leq \beta \cdot  \rbr{ \Reg_T(\pi^\star)+ C }  +  \frac{\cons}{4\beta \gapmin}.
\end{align*}
\end{proof}

\begin{lemma} \label{lem:self_bounding_term_5} Suppose Condition~\eqref{eq:loss_condition} holds. Then we have for any $\alpha \in \fR_{+}$,
	\begin{align*}
	\E\sbr{ \selfterm_5(\cons)} \leq \alpha \cdot \rbr{ \Reg_T(\pi^\star)+ C }+    \sum_{s\neq s_L}\sum_{a\neq \pi^{\star}(s)} \frac{\cons}{4\alpha \gap(s,a)}.
	\end{align*}
\end{lemma}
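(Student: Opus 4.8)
The plan is to mirror the (one-line) proof of \pref{lem:self_bounding_term_4}, with the single modification that the AM-GM step is carried out \emph{separately for each suboptimal state-action pair}, so that the per-pair gap $\gap(s,a)$ rather than $\gapmin$ appears in the denominator. First I would fix a pair $(s,a)$ with $s\neq s_L$ and $a\neq\pi^\star(s)$, and note that for any $\lambda>0$,
\[
\sqrt{\cons\sum_{t=1}^{T} q_t(s,a)} = \sqrt{\rbr{\lambda\sum_{t=1}^{T} q_t(s,a)}\cdot\frac{\cons}{\lambda}} \leq \frac{\lambda}{2}\sum_{t=1}^{T} q_t(s,a) + \frac{\cons}{2\lambda},
\]
using $2\sqrt{xy}\leq x+y$. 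Choosing $\lambda = 2\alpha\gap(s,a)$ turns the right-hand side into $\alpha\gap(s,a)\sum_{t=1}^{T} q_t(s,a) + \frac{\cons}{4\alpha\gap(s,a)}$.

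Next I would sum this pointwise inequality over all $s\neq s_L$ and $a\neq\pi^\star(s)$, which gives
\[
\selfterm_5(\cons) \leq \alpha\sum_{t=1}^{T}\sum_{s\neq s_L}\sum_{a\neq\pi^\star(s)} q_t(s,a)\gap(s,a) + \sum_{s\neq s_L}\sum_{a\neq\pi^\star(s)}\frac{\cons}{4\alpha\gap(s,a)}.
\]
Taking expectations and invoking Condition~\eqref{eq:loss_condition}, which yields $\E\sbr{\sum_{t=1}^{T}\sum_{s\neq s_L}\sum_{a\neq\pi^\star(s)} q_t(s,a)\gap(s,a)} \leq \Reg_T(\pi^\star)+C$, immediately produces the claimed bound $\E\sbr{\selfterm_5(\cons)} \leq \alpha\rbr{\Reg_T(\pi^\star)+C} + \sum_{s\neq s_L}\sum_{a\neq\pi^\star(s)}\frac{\cons}{4\alpha\gap(s,a)}$.

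There is essentially no real obstacle in this lemma; the only points worth a moment's care are (i) applying AM-GM pairwise rather than once globally — doing it globally would only recover a $\gapmin$ in the denominator, exactly the weaker form that appears in $\selfterm_4$ — and (ii) observing that the auxiliary parameter $\lambda$ is allowed to depend on $(s,a)$ since the split holds for every $\lambda>0$. All manipulations before the last line are deterministic, so the expectation and the use of Condition~\eqref{eq:loss_condition} can be deferred to the very end.
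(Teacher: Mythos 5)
Your proof is correct and takes essentially the same route as the paper: both apply AM-GM ($\sqrt{xy}\le\frac{x+y}{2}$) pointwise in $(s,a)$ with the pair-dependent weight $\lambda=2\alpha\gap(s,a)$, then take expectations and invoke Condition~\eqref{eq:loss_condition}. The paper simply folds the choice of $\lambda$ directly into how it factors the product under the square root, rather than naming it as a free parameter.
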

\begin{proof} By the fact that $2\sqrt{xy} \leq x + y $ for all $x,y\geq 0$, with Condition~\eqref{eq:loss_condition}, we have 
	\begin{align*}
	\E\sbr{ \selfterm_4(\cons)} &  = \E\sbr{ \sum_{s\neq s_L}\sum_{a\neq \pi^{\star}(s)}  \sqrt{ 2 \alpha \sum_{t=1}^{T} q_t(s,a) \gap(s,a) \cdot \frac{\cons}{ 2\alpha \gap(s,a)}  }   } \\
	& \leq \alpha \cdot \E\sbr{ \sum_{t=1}^{T} \sum_{s\neq s_L}\sum_{a\neq \pi^{\star}(s)} q_t(s,a) \gap(s,a)} + \sum_{s\neq s_L}\sum_{a\neq \pi^{\star}(s)} \frac{\cons}{4\alpha \gap(s,a)} \\
	& \leq \alpha \cdot  \rbr{ \Reg_T(\pi^\star)+ C }  +   \sum_{s\neq s_L}\sum_{a\neq \pi^{\star}(s)} \frac{\cons}{4\alpha \gap(s,a)}.
	\end{align*}
\end{proof}

\begin{lemma} \label{lem:self_bounding_term_6} Suppose Condition~\eqref{eq:loss_condition} holds. Then we have for any $\beta \in \fR_{+}$,
	\begin{align*}
	\E\sbr{ \selfterm_6(\cons)} \leq \beta \cdot \rbr{ \Reg_T(\pi^\star)+ C }+   \frac{1}{\beta } \cdot \frac{8L^3|S|^2|A|  \cdot \cons }{\gapmin }.
	\end{align*}
\end{lemma}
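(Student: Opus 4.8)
The plan is to mimic the template used in the earlier self-bounding lemmas (\pref{lem:self_bounding_term_1}--\pref{lem:self_bounding_term_3}): split $\selfterm_6(\cons)$ into $\beta$ times a term whose expectation is absorbed into $\Reg_T(\pi^{\star})+C$, plus a residual controlled through the doubling epoch schedule. First I would swap the order of summation so that the outermost indices become the layer $k$ and a transition tuple $(u,v,w)\in T_k$, with the inner sum over layers $l>k$ and states $s\in S_l$; abbreviating $\rho_t(s)\defeq\frac{q_t(s,\pi^{\star}(s))-q_t^{\star}(s,\pi^{\star}(s))}{q_t(s,\pi^{\star}(s))}\in[0,1]$ (well defined since $q_t^{\star}\le q_t$ by definition, the $0/0$ case being vacuous because then $(s,\pi^{\star}(s))$ is unreachable), this yields
\[
\selfterm_6(\cons)=\sum_{t=1}^{T}\sum_{k=0}^{L-1}\sum_{(u,v,w)\in T_k}q_t(u,v)\sqrt{\frac{P(w|u,v)\cons}{\max\cbr{m_{i(t)}(u,v),1}}}\ \sum_{l=k+1}^{L-1}\sum_{s\in S_l}\rho_t(s)\,q_t(s,\pi^{\star}(s)\mid w).
\]
Then I would add and subtract $\beta$ times the same expression with the square-root factor replaced by $\tfrac{\gapmin}{L^2}P(w|u,v)$, splitting $\selfterm_6(\cons)$ into a comparison term and a residual.

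For the comparison term I would reorganize once more so that $l$ and $s\in S_l$ are outermost and collapse the inner sum with the occupancy identity $\sum_{(u,v,w)\in T_k}q_t(u,v)P(w|u,v)\,q_t(s,\pi^{\star}(s)\mid w)=q_t(s,\pi^{\star}(s))$ (law of total probability decomposing $q_t(s,\pi^{\star}(s))$ over the state reached at layer $k+1$); using $l\le L$ and $\rho_t(s)\,q_t(s,\pi^{\star}(s))=q_t(s,\pi^{\star}(s))-q_t^{\star}(s,\pi^{\star}(s))$, the comparison term becomes at most $\beta\cdot\tfrac{\gapmin}{L}\sum_{t}\sum_{s\ne s_L}\bigl(q_t(s,\pi^{\star}(s))-q_t^{\star}(s,\pi^{\star}(s))\bigr)$, whose expectation is at most $\beta\bigl(\Reg_T(\pi^{\star})+C\bigr)$ by \pref{lem:perf_diff_lower_bound_gapmin}.

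For the residual I would bound $\sum_{l=k+1}^{L-1}\sum_{s\in S_l}\rho_t(s)\,q_t(s,\pi^{\star}(s)\mid w)\le\sum_{l=k+1}^{L-1}\sum_{s\in S_l}q_t(s,\pi^{\star}(s)\mid w)\le L$ (one state-action per layer), pull out this factor $L$, and replace each bracketed difference by its clipping, leaving at most $L\sum_t\sum_k\sum_{(u,v,w)\in T_k}q_t(u,v)\,\clip\sbr{\sqrt{\frac{P(w|u,v)\cons}{\max\cbr{m_{i(t)}(u,v),1}}}-\beta\tfrac{\gapmin}{L^2}P(w|u,v)}$. Exactly as in the earlier self-bounding lemmas, for each fixed $(u,v,w)$ I would rewrite $\E[\sum_t q_t(u,v)(\cdots)]$ as $\E[\sum_i\bigl(m_{i+1}(u,v)-m_i(u,v)\bigr)(\cdots)]$, note that the clipped argument is positive only up to the last epoch $N_{u,v,w}$, for which the doubling schedule forces $m_{N_{u,v,w}+1}(u,v)\le\frac{2\cons L^4}{P(w|u,v)\beta^2\gapmin^2}$, and bound the resulting sum by $2\int_0^{m_{N_{u,v,w}+1}(u,v)}\sqrt{P(w|u,v)\cons/x}\,dx\le\frac{8\cons L^2}{\beta\gapmin}$. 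Summing over the at most $|A|\sum_k|S_k||S_{k+1}|\le|A||S|^2$ transition tuples, the residual has expectation at most $L\cdot|A||S|^2\cdot\frac{8\cons L^2}{\beta\gapmin}=\frac{8L^3|S|^2|A|\cons}{\beta\gapmin}$; adding the two bounds gives $\E[\selfterm_6(\cons)]\le\beta(\Reg_T(\pi^{\star})+C)+\frac{8L^3|S|^2|A|\cons}{\beta\gapmin}$.

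The main obstacle is calibrating the comparison term: one must use the scaling $\tfrac{\gapmin}{L^2}$ (rather than $\tfrac{\gapmin}{L}$ as in \pref{lem:self_bounding_term_3}) precisely so that, after collapsing $\sum_{(u,v,w)}q_t(u,v)P(w|u,v)q_t(s,\pi^{\star}(s)\mid w)$ to $q_t(s,\pi^{\star}(s))$ and absorbing $\rho_t(s)$ — which converts this into the ``leaked mass'' $q_t(s,\pi^{\star}(s))-q_t^{\star}(s,\pi^{\star}(s))$ — one lands exactly on the quantity governed by \pref{lem:perf_diff_lower_bound_gapmin}, hence by Condition~\eqref{eq:loss_condition}, and not on anything larger. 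This is the only point at which $\selfterm_6$ genuinely differs in structure from $\selfterm_3$.
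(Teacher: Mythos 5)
Your proof is correct and follows essentially the same route as the paper's: add and subtract a comparison term calibrated at $\beta\,\gapmin\,P(w|u,v)/L^2$, reduce the comparison term via the occupancy identity and $k(s)\le L$ to the quantity in \pref{lem:perf_diff_lower_bound_gapmin}, and control the residual by bounding $\sum_{l>k}\sum_{s\in S_l}\rho_t(s)q_t(s,\pi^\star(s)|w)\le L$, clipping, and integrating against the doubling schedule for each of the at most $|S|^2|A|$ transition tuples. The only cosmetic difference is that the paper subtracts $\beta q_t(s,a)\gapmin/L$ at the $(s,a)$ level and then (somewhat loosely, writing an ``$=$'' where a ``$\le$'' belongs) rewrites it at the transition-tuple level, whereas you subtract directly at the tuple level and make the $k(s)\le L$ step explicit.
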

\begin{proof} By adding and subtracting terms, we have $\selfterm_6(\cons)$ equals to  
\begin{align*}
& \sum_{t=1}^{T}\sum_{s\neq s_L} \sum_{a = \pi^{\star}(s) } \frac{ q_t(s,a) - q^\star_t(s,a)  }{ q_t(s,a) } \cdot \\
&\quad \rbr{  \sum_{k=0}^{k(s)-1} \sum_{(u,v,w)\in T_k} q_t(u,v) \sqrt{ \frac{ P(w|u,v) \ln \rbr{\frac{T|S||A|}{\delta}} }{ \max\cbr{ m_{i(t)}(u,v),1} }   } q_t(s,a|w) - \beta q_t(s,a) \cdot \frac{\gapmin}{L} } \\
& \quad +  \frac{\beta}{L} \sum_{t=1}^{T}\sum_{s\neq s_L} \sum_{a = \pi^{\star}(s) } \rbr{ q_t(s,a) - q^\star_t(s,a)  } \gapmin
\end{align*}
where the expectation of the last term is bounded by $\beta \cdot \rbr{ \Reg_T(\pi^\star) + C}$ according to \pref{lem:perf_diff_lower_bound_gapmin}. 

To bound the first term, we observe that
\begin{align*}
&  \sum_{k=0}^{k(s)-1} \sum_{(u,v,w)\in T_k} q_t(u,v) \sqrt{ \frac{ P(w|u,v) \ln \rbr{\frac{T|S||A|}{\delta}} }{ \max\cbr{ m_{i(t)}(u,v),1} }   } q_t(s,a|w)  - \beta q_t(s,a) \cdot \frac{\gapmin}{L} \\
& =  \sum_{k=0}^{k(s)-1}  \sum_{(u,v,w)\in T_k} q_t(u,v) \sqrt{ \frac{ P(w|u,v) \ln \rbr{\frac{T|S||A|}{\delta}} }{ \max\cbr{ m_{i(t)}(u,v),1} }   } q_t(s,a|w) \\
&\quad - \beta \cdot \frac{\gapmin}{L^2} \cdot \rbr{  \sum_{k=0}^{k(s)-1} \sum_{(u,v,w)\in T_k} q_t(u,v) P(w|u,v) q_t(s,a|w) }  \\
& = \sum_{k=0}^{k(s)-1} \sum_{(u,v,w)\in T_k} q_t(u,v) \rbr{  \sqrt{ \frac{ P(w|u,v) \ln \rbr{\frac{T|S||A|}{\delta}} }{ \max\cbr{ m_{i(t)}(u,v),1} }   }  -  P(w|u,v) \cdot \beta \cdot \frac{\gapmin}{L^2}  } \cdot q_t(s,a|w)  \\
& \leq \sum_{k=0}^{k(s)-1} \sum_{(u,v,w)\in T_k} q_t(u,v) \underbrace{ \clip\sbr{  \sqrt{ \frac{ P(w|u,v) \ln \rbr{\frac{T|S||A|}{\delta}} }{ \max\cbr{ m_{i(t)}(u,v),1} }   }  -  P(w|u,v) \cdot \beta \cdot \frac{\gapmin}{L^2}  }}_{=h_t(u,v,w)}q_t(s,a|w)
\end{align*} 
where the first equality uses $\sum_{(u,v,w)\in T_k} q_t(u,v) P(w|u,v) q_t(s,a|w) = q_t(s,a)$ for all layer $k = 0,\ldots k(s)-1$.  (Recall $\clip[x] = \max\{x,0\}$.)

Therefore, with Condition~\eqref{eq:loss_condition},  we bound the $\E\sbr{\selfterm_6(\cons)}$ by
\begin{align*}
&  \E\sbr{ \sum_{t=1}^{T}\sum_{s\neq s_L} \sum_{a = \pi^{\star}(s) } \frac{ q_t(s,a) - q^\star_t(s,a) }{ q_t(s,a) } \rbr{   \sum_{u,v,w}  q_t(u,v)  h_t(u,v,w) q_t(s,a|w) } + \beta \cdot \rbr{ \Reg_T(\pi^\star) + C} } \\
& \leq \E\sbr{ \sum_{t=1}^{T}\sum_{s\neq s_L} \sum_{a = \pi^{\star}(s) } \rbr{   \sum_{u,v,w}  q_t(u,v)  h_t(u,v,w) q_t(s,a|w) } } + \beta \cdot \rbr{ \Reg_T(\pi^\star) + C}\\
& \leq L \E\sbr{ \cdot \sum_{t=1}^{T}  \sum_{u,v,w}  q_t(u,v)  h_t(u,v,w) } + \beta \cdot \rbr{ \Reg_T(\pi^\star) + C}
\end{align*}
where the second line applies the fact  $\frac{ q_t(s,a) - q^\star_t(s,a) }{ q_t(s,a) }\leq 1$, and the third line changes summation order and uses the fact that $\sum_{s\neq s_L}\sum_{a\in A} q_t(s,a|w) \leq L$. 

Finally, following the similar idea of handing $\sum_{t=1}q_t(u,v)  h_t(u,v,w)$ as in \pref{lem:self_bounding_term_3}, we have 
\begin{align*}
\E\sbr{ \sum_{t=1}q_t(u,v)  h_t(u,v,w) } \leq \frac{8L^2 \cons}{ \beta \gapmin }. 
\end{align*}

By taking the summation over all transition triples, we have 
\begin{align*}
\E\sbr{ \selfterm_6(\cons)} & \leq \beta \cdot \rbr{ \Reg_T(\pi^\star)+ C }+   L \cdot \sum_{k=0}^{L-1} \sum_{ (u,v,w) \in T_k } \frac{1}{\beta } \cdot \frac{8L^2  \cdot \cons }{\gapmin } \\
& \leq \beta \cdot \rbr{ \Reg_T(\pi^\star)+ C }+    \frac{1}{\beta } \cdot \frac{8L^3|S|^2|A|  \cdot \cons }{\gapmin },
\end{align*}
where the last line follows from the fact that $\sum_{k=0}^{L}\abr{S_k} \abr{S_{k+1}} \leq |S|^2$. 
\end{proof}

\begin{lemma} Under Condition~\eqref{eq:loss_condition},  we have
	\begin{align*}
	\E\sbr{ \sum_{t=1}^{T}\sum_{s\neq s_L} \sum_{a = \pi^{\star}(s) } \rbr{ q_t(s,a) - q^\star_t(s,a) } \gapmin } \leq L \cdot \E\sbr{ {\Reg_T(\pi^\star) + C} }.
	\end{align*}
	\label{lem:perf_diff_lower_bound_gapmin}
\end{lemma}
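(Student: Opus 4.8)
The plan is to reduce the claim to a deterministic, per-episode inequality relating the "missing mass on the optimal path" to the occupancy mass on suboptimal actions, and then to prove that inequality by a straightforward layer-by-layer induction on occupancy measures. First I would use Condition~\eqref{eq:loss_condition} together with $\gap(s,a)\ge\gapmin$ for all $s\neq s_L$ and $a\neq\pi^\star(s)$ to get
\[
\gapmin\cdot\E\Bigsbr{\sum_{t=1}^T\sum_{s\neq s_L}\sum_{a\neq\pi^\star(s)}q_t(s,a)}\;\le\;\E\Bigsbr{\sum_{t=1}^T\sum_{s\neq s_L}\sum_{a\neq\pi^\star(s)}q_t(s,a)\gap(s,a)}\;\le\;\Reg_T(\pi^\star)+C.
\]
So it suffices to establish, for every fixed episode $t$, the pointwise bound
\[
\sum_{s\neq s_L}\sum_{a=\pi^\star(s)}\bigl(q_t(s,a)-q^\star_t(s,a)\bigr)\;\le\;L\sum_{s\neq s_L}\sum_{a\neq\pi^\star(s)}q_t(s,a),
\]
since multiplying by $\gapmin$, summing over $t$, and taking expectations then yields the lemma (recall $\Reg_T(\pi^\star)$ and $C$ are already deterministic).

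\textbf{Key step (layer induction).} Fix $t$, write $q_t(s)=\sum_a q_t(s,a)$, $q^\star_t(s)=\sum_a q^\star_t(s,a)$, and set $d(s)=q_t(s)-q^\star_t(s)$. Since $q^\star_t(s)$ is exactly the probability that, running $\pi_t$ under $P$, the trajectory reaches $s$ having taken the action $\pi^\star(s')$ at every ancestor $s'$, it is a sub-event of reaching $s$, so $d(s)\ge0$; also $d(s_0)=1-1=0$. From the recursions $q_t(s)=\sum_{s'\in S_{k(s)-1}}\sum_{a'}\pi_t(a'|s')q_t(s')P(s|s',a')$ and $q^\star_t(s)=\sum_{s'\in S_{k(s)-1}}\pi_t(\pi^\star(s')|s')q^\star_t(s')P(s|s',\pi^\star(s'))$, separating $a'=\pi^\star(s')$ from $a'\neq\pi^\star(s')$ gives
\[
d(s)=\sum_{s'\in S_{k(s)-1}}\pi_t(\pi^\star(s')|s')P(s|s',\pi^\star(s'))\,d(s')+\sum_{s'\in S_{k(s)-1}}\sum_{a'\neq\pi^\star(s')}\pi_t(a'|s')q_t(s')P(s|s',a').
\]
Summing over $s\in S_k$ and using $\sum_{s\in S_k}P(s|s',a')=1$, with $D_k:=\sum_{s\in S_k}d(s)$ and $G_k:=\sum_{s\in S_k}\sum_{a\neq\pi^\star(s)}q_t(s,a)$, I get $D_k\le D_{k-1}+G_{k-1}$ and $D_0=0$, hence $D_k\le\sum_{j<k}G_j$. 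Finally, $\sum_{s\neq s_L}\sum_{a=\pi^\star(s)}(q_t(s,a)-q^\star_t(s,a))=\sum_{s\neq s_L}\pi_t(\pi^\star(s)|s)\,d(s)\le\sum_{k=0}^{L-1}D_k\le\sum_{k=0}^{L-1}\sum_{j<k}G_j=\sum_{j=0}^{L-2}(L-1-j)G_j\le L\sum_{j=0}^{L-1}G_j=L\sum_{s\neq s_L}\sum_{a\neq\pi^\star(s)}q_t(s,a)$, which is exactly the desired pointwise bound.

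\textbf{Main obstacle.} There is no real analytic difficulty here: the whole proof is occupancy-measure bookkeeping, and the only substantive observation is the nonnegativity $d(s)\ge0$ (equivalently, that $q_t(s,\pi^\star(s))\ge q^\star_t(s,\pi^\star(s))$, already used implicitly in \pref{lem:self_bounding_term_2}). The point requiring the most care is the layer accounting that turns the one-step recursion $D_k\le D_{k-1}+G_{k-1}$ into the final factor $L$ — in particular keeping track that the $k=0$ term of $\sum_k\sum_{j<k}$ is empty and that $G_j$ is summed over $j=0,\dots,L-1$. One could alternatively route the telescoping through \pref{lem:cond_occup_expand} with an appropriate choice of $C(\cdot)$, but the direct induction above is cleaner and self-contained.
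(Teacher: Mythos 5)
Your proof is correct, and it takes a genuinely different route from the paper's.

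The paper bounds $\sum_{s\in S_k}\sum_{a=\pi^\star(s)}(q_t(s,a)-q^\star_t(s,a))$ for each fixed layer $k$ by a probabilistic argument: it writes $\sum_{s\in S_k}\sum_{a=\pi^\star(s)} q^\star_t(s,a)$ as the probability of following $\pi^\star$ through layer $k$, uses the trivial bound $\sum_{s\in S_k}\sum_{a=\pi^\star(s)} q_t(s,a)\le 1$, applies De Morgan's laws, and then a union bound over the $k+1$ deviation events to conclude the layer-$k$ quantity is at most $\sum_{s\ne s_L}\sum_{a\ne\pi^\star(s)} q_t(s,a)$; summing over $L$ layers gives the factor $L$. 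You instead set $d(s)=q_t(s)-q^\star_t(s)\ge0$, derive the one-step linear recursion $D_k\le D_{k-1}+G_{k-1}$ directly from the occupancy-measure transition equations (keeping only the nonnegativity $d\ge0$ and $\pi_t(\pi^\star(s')|s')\le1$), and telescope to get $D_k\le\sum_{j<k}G_j$; summing $D_k$ over $k$ again gives the factor $L$. Both rest on the same two observations — that $q^\star_t(s,a)\le q_t(s,a)$ and that the occupancy mass lost from the $\pi^\star$-path at each layer is exactly the suboptimal mass at the previous layer — but your version is purely algebraic bookkeeping on occupancy measures, whereas the paper's is phrased in terms of trajectory events and a union bound. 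Your version is marginally tighter (the index $j$ runs over $j<k$ rather than $j\le k$, since you avoid the paper's lossy step $\sum_{s\in S_k}\sum_{a=\pi^\star(s)}q_t(s,a)\le 1$), though this plays no role in the final constant. The reduction to the per-episode pointwise inequality via $\gapmin\le\gap(s,a)$ and Condition~\eqref{eq:loss_condition} is the same in both.
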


\begin{proof}

For each $k$, we proceed as
\begin{align*} 
&  \sum_{s\in S_k} \sum_{a = \pi^{\star}(s) } \rbr{ q_t(s,a) - q^\star_t(s,a) } \notag \\ 
& \leq 1 - \sum_{s\in S_{k}}\sum_{a =\pi^\star(s)} q^{\star}_t(s,a) \tag{ $\sum_{s\in S_k} \sum_{a \in A }  q_t(s,a) = 1$  } \\
& = 1 -  \sum_{s\in S_{k}}\sum_{a =\pi^\star(s)} \pi_t(a|s) \Pr \sbr{ \left.\cbr{s_{k} = s} \; \bigcap \rbr{ \bigcap_{\tau=0}^{k-1} \cbr{a_\tau = \pi^\star(s_\tau)}  } \right\rvert  P, \pi_t} \notag \tag{definition of $q^{\star}_t$}\\
& = 1 -  \Pr \sbr{ \left.   \rbr{ \bigcap_{\tau=0}^{k} \cbr{a_\tau = \pi^\star(s_\tau)}  } \right\rvert  P, \pi_t} \notag \\
& = \Pr \sbr{ \left.   \rbr{ \bigcap_{\tau=0}^{k} \cbr{a_\tau = \pi^\star(s_\tau)} }^c \right\rvert  P, \pi_t}  \notag \\
& = \Pr \sbr{ \left.   \rbr{ \bigcup_{\tau=0}^{k} \cbr{a_\tau \neq \pi^\star(s_\tau)}  } \right\rvert  P, \pi_t}  \tag{De Morgan's laws} \\
& \leq \sum_{\tau=0}^{k} \Pr \sbr{ \left.    a_\tau \neq \pi^\star(s_\tau)   \right\rvert  P, \pi_t} \tag{union bound} \\
& = \sum_{\tau=0}^{k} \sum_{s \in S_\tau} \sum_{ a\neq \pi^\star(s) } q_t(s,a)  = \sum_{s \neq s_L} \sum_{a \neq \pi^\star(s)} q_t(s,a).
\end{align*}

Therefore, we have 
\begin{align*}
& \sum_{t=1}^{T}\sum_{s\neq s_L} \sum_{a = \pi^{\star}(s) } \rbr{ q_t(s,a) - q^{\star}_{\pi}(s,a) } \gapmin \\
& \leq L \cdot  \sum_{t=1}^{T}\sum_{s\neq s_L} \sum_{a \neq \pi^{\star}(s) } q_t(s,a) \cdot \gap(s,a) \\
& \leq L \cdot \E\sbr{ \Reg_T(\pi^\star)+ C} 
\end{align*}
where the last line follows from Condition~\eqref{eq:loss_condition}.
\end{proof}

\subsection{Supplementary Lemmas}



\begin{lemma} (Occupancy Measure Difference) For any policy $\pi$ and  transition functions $P_1$ and $P_2$, with $q_1 = q^{P_1, \pi}$ and $q_2 = q^{P_2, \pi}$ we have for all $s$,
	\begin{equation}
	\begin{split}
	q_1(s) - q_2(s)  & = \sum_{k=0}^{k(s)-1}\sum_{u\in S_{k}} \sum_{v\in A} \sum_{w\in S_{k+1}}q_1(u,v) \sbr{ P_1(w|u,v) - P_2(w|u,v)  } q_2(s|w) \\
	& = \sum_{k=0}^{k(s)-1}\sum_{u\in S_{k}} \sum_{v\in A} \sum_{w\in S_{k+1}}q_2(u,v) \sbr{ P_1(w|u,v) - P_2(w|u,v)  } q_1(s|w) 
	\end{split}
	\label{eq:exetnded_mdp_occup_diff}
	\end{equation}
	where the conditional occupancy measure $q_1(s'|s)$ (similarly for $q_2(s'|s)$) is defined recursively as 
	\begin{equation}
	q_1(s'|s) = 
	\begin{cases}
	0, & k(s') < k(s)  \text{ or }  ( k(s') = k(s) \text{ and } s' \neq s )\\
	1, & k(s') = k(s) \text{ and } s' = s \\
	\sum_{u\in S_{k(s') - 1}} q_1(u|s) \rbr{ \sum_{v \in A} \pi(v|u) P(s'|u,v)  }, & k(s') > k(s) 
	\end{cases}
	\label{eq:exetnded_mdp_state_transfer_def}
	\end{equation}
	which is the conditional probability of visiting state $s'$ from $s$ under $\pi$ and transition $P_1$. 
	\label{lem:general_occup_diff}
\end{lemma}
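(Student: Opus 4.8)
The plan is to establish both identities by induction on the layer $k(s)$, relying only on the defining recursions of occupancy measures under the fixed policy $\pi$. Recall that for either transition $P_j$ (with $j\in\{1,2\}$) one has $q_j(s_0)=1$ and, for $k(s)\ge 1$,
\[
q_j(s) \;=\; \sum_{w\in S_{k(s)-1}}\sum_{v\in A} q_j(w,v)\,P_j(s\mid w,v),\qquad q_j(w,v)=q_j(w)\pi(v\mid w),
\]
and that the conditional occupancy $q_j(s\mid w)$ of \pref{eq:exetnded_mdp_state_transfer_def} obeys the analogous recursion $q_j(s\mid w)=\sum_{u\in S_{k(s)-1}}q_j(u\mid w)\sum_{v\in A}\pi(v\mid u)P_j(s\mid u,v)$ whenever $k(w)<k(s)$, with $q_j(s\mid w)=\mathbbm{1}\{s=w\}$ when $k(w)=k(s)$.

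For the base case $k(s)=0$, i.e.\ $s=s_0$, both sides vanish (the sum over $k$ is empty and $q_1(s_0)=q_2(s_0)=1$). For the inductive step, fix $s$ with $k(s)\ge 1$, assume the first identity for all states in layers strictly below $k(s)$, and expand via the recursion for $q_j(s)$:
\[
q_1(s)-q_2(s) \;=\; \sum_{w\in S_{k(s)-1}}\sum_{v\in A}\Bigl( q_1(w,v)P_1(s\mid w,v) - q_2(w,v)P_2(s\mid w,v) \Bigr).
\]
Adding and subtracting $q_1(w,v)P_2(s\mid w,v)$ splits the right-hand side as $(\mathrm{I})+(\mathrm{II})$ with
\[
(\mathrm{I}) = \sum_{w,v} q_1(w,v)\bigl(P_1(s\mid w,v)-P_2(s\mid w,v)\bigr),\qquad
(\mathrm{II}) = \sum_{w,v}\bigl(q_1(w,v)-q_2(w,v)\bigr)P_2(s\mid w,v).
\]
Since $q_2(s\mid w)=\mathbbm{1}\{s=w\}$ for $w\in S_{k(s)}$, term $(\mathrm{I})$ is exactly the $k=k(s)-1$ summand of the claimed formula (after relabelling the layer-$(k(s)-1)$ state $w$ as the index $u$). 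It therefore remains to identify $(\mathrm{II})$ with the sum of the remaining summands, those with $k\le k(s)-2$.

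For $(\mathrm{II})$, write $q_j(w,v)=q_j(w)\pi(v\mid w)$ to get $(\mathrm{II})=\sum_{w\in S_{k(s)-1}}\bigl(q_1(w)-q_2(w)\bigr)\sum_{v\in A}\pi(v\mid w)P_2(s\mid w,v)$, then apply the inductive hypothesis to $q_1(w)-q_2(w)$ (legitimate since $k(w)=k(s)-1<k(s)$), substitute it in, and swap the order of summation to pull the inner $(u,v',w')$ sums outside. The remaining sum over $w\in S_{k(s)-1}$ is then $\sum_{w\in S_{k(s)-1}}q_2(w\mid w')\sum_{v\in A}\pi(v\mid w)P_2(s\mid w,v)$, which equals $q_2(s\mid w')$ by the conditional-occupancy recursion above (including the boundary case $w'\in S_{k(s)-1}$, where $q_2(w\mid w')=\mathbbm{1}\{w=w'\}$ so the identity reduces to the recursion's first step). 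This collapses $(\mathrm{II})$ into $\sum_{k=0}^{k(s)-2}\sum_{u\in S_k}\sum_{v'\in A}\sum_{w'\in S_{k+1}}q_1(u,v')\bigl(P_1(w'\mid u,v')-P_2(w'\mid u,v')\bigr)q_2(s\mid w')$, and adding $(\mathrm{I})$ yields the full sum over $k=0,\dots,k(s)-1$. The second identity in \eqref{eq:exetnded_mdp_occup_diff} follows by instead adding and subtracting $q_2(w,v)P_1(s\mid w,v)$ and running the symmetric argument. The one delicate point is the sum-swapping in $(\mathrm{II})$ together with verifying that the inner sum over $w$ collapses to $q_2(s\mid w')$ via the layer bookkeeping; everything else is a routine telescoping computation.
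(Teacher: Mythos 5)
Your proof is correct and takes essentially the same approach as the paper: a one-step expansion of the occupancy-measure recursion, an add-and-subtract of a cross term to isolate a transition-difference piece and a residual occupancy-difference piece, followed by unrolling (which you package as induction on $k(s)$, while the paper expands recursively). The only cosmetic difference is that you add and subtract $q_1(w,v)P_2(s\mid w,v)$, which yields the first displayed identity directly, whereas the paper subtracts $q_2(s',a')P_1(s\mid s',a')$ and arrives at the second identity first; both then obtain the other identity by the symmetric manipulation.
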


\begin{proof} Fix a state $s$. We proceed as:
	\begin{align*}
	& q_1(s) - q_2(s) \\
	& = \sum_{s' \in S_{k(s)-1}} \sum_{a' \in A} \rbr{ q_1(s',a') P_1(s|s',a') - q_2(s',a') P_2(s',a')} \\
	& = \sum_{s' \in S_{k(s)-1}} \sum_{a' \in A} \rbr{ q_1(s') - q_2(s')}  P_1(s|s',a') \pi(a'|s') \\ 
	& \quad +  \sum_{s' \in S_{k(s)-1}} \sum_{a' \in A} q_2(s',a')\rbr{ P_1(s|s',a') - P_2(s|s',a')} 
	\end{align*}
	where the second step follows by subtracting and adding $q_2(s',a') P_1(s|s',a')$. 
	Note that, $\sum_{a' \in A} \pi(a'|s') P_1(s|s',a')$ is exactly the conditional probability of transiting to state $s$ from state $s'$ with transition $P_1$. 
	Therefore, we have $\sum_{a' \in A} \pi(a'|s') P_1(s|s',a') = q_1(s|s')$ according to \pref{eq:exetnded_mdp_state_transfer_def}, and further expand $q_1(s) - q_2(s)$ as: 
	\begin{align*}
	& \sum_{s' \in S_{k(s)-1}} \sum_{a' \in A} \rbr{ q_1(s') - q_2(s')}  P_1(s|s',a') \pi(a'|s') \\ 
	& \quad +  \sum_{s' \in S_{k(s)-1}} \sum_{a' \in A} q_2(s',a')\rbr{ P_1(s|s',a') - P_2(s|s',a')}  \\ 
	& =  \sum_{s' \in S_{k(s)-1}} q_1(s|s') \rbr{ q_1(s') - q_2(s')} \\
	& \quad + \sum_{s' \in S_{k(s)-1}} \sum_{a'\in A} q_2(s',a') \sbr{ P_1(s|s',a') - P_2(s|s',a')  } q_1(s|s) 
	\end{align*}
	where the second line follows from the fact that $q_1(s|s) = 1$.  
	
	Therefore, we can recursively expand $q_1(s)-q_2(s)$ as:
	\begin{align*}
	&q_1(s) - q_2(s)  \\
	& = \sum_{s' \in S_{k(s)-1}}  \rbr{ q_1(s') - q_2(s')} q_1(s|s') \\
	& \quad + \sum_{s' \in S_{k(s)-1}} \sum_{a'\in A} q_2(s',a') \sbr{ P_1(s|s',a') - P_2(s|s',a')  } q_1(s|s) \\
	& = \sum_{s' \in S_{k(s)-1}} \rbr{ q_1(s') - q_2(s')} q_1(s|s')  \\
	& \quad + \sum_{k = k(s)}^{k(s)} \sum_{(u,v,w)\in T_k} q_2(u,v) \sbr{ P_1(w|u,v) - P_2(w|u,v)  } q_1(s|w) \\
	& = \sum_{s' \in S_{k(s)-1}} \rbr{ \sum_{s'' \in S_{k(s)-2}} \rbr{ q_1(s'') - q_2(s'')}  q_1(s'|s'')} q_1(s|s') \\
	& \quad + \sum_{k = k(s)-1}^{k(s)} \sum_{(u,v,w)\in T_k} q_2(u,v) \sbr{ P_1(s|s',a') - P_2(s|s',a')  } q_1(s|w) \\
	& = \sum_{s'' \in S_{k(s)-2}} \rbr{ q_1(s'') - q_2(s'')}  q_1(s|s'') + \sum_{k = k(s)-1}^{k(s)} \sum_{(u,v,w)\in T_k} q_2(u,v) \sbr{ P_1(s|s',a') - P_2(s|s',a')  } q_1(s|w) \\
  & = \sum_{k=0}^{k(s)-1}\sum_{u\in S_{k}} \sum_{v\in A} \sum_{w\in S_{k+1}}q_2(u,v) \sbr{ P_1(w|u,v) - P_2(w|u,v)  } q_1(s|w). \tag{expand recursively}
	\end{align*}
	where the second step follows from the fact that $q(s'|s) = 0$ for all states $s \neq s'$ with $k(s) = k(s')$, and the third step follows from the fact $\sum_{s' \in S_k} q(s'|s'') q(s|s') = q(s|s'')$ for all state pairs that $k(s)>k>k(s'')$.
	
	By applying the same technique, we also have 
	\[
	q_2(s) - q_1(s) = \sum_{k=0}^{k(s)-1}\sum_{u\in S_{k}} \sum_{v\in A} \sum_{w\in S_{k+1}}q_1(u,v) \sbr{ P_2(w|u,v) - P_1(w|u,v)  } q_2(s|w).
	\]
	Flipping this equality finishes the proof for the second statement of the lemma:
	\[
	q_1(s) - q_2(s)  = \sum_{k=0}^{k(s)-1}\sum_{u\in S_{k}} \sum_{v\in A} \sum_{w\in S_{k+1}}q_1(u,v) \sbr{ P_1(w|u,v) - P_2(w|u,v)  } q_2(s|w).
	\]\end{proof}


\begin{lemma} \label{lem:sa_conf_width_bound} The following holds:
\begin{equation*}
B_i(s,a) \leq 2 \sqrt{ \frac{ |S_{k(s)+1}| \ln \rbr{\frac{T|S||A|}{\delta}}}{\max \cbr{m_{i}(s,a) , 1}} } + \frac{ 14|S_{k(s)+1}| \ln\rbr{\frac{T|S||A|}{\delta}}}{3\max \cbr{m_{i}(s,a) , 1}}.
\end{equation*}
\end{lemma}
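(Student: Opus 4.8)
The plan is to reduce everything to the elementary fact that $\bar P_i(\cdot|s,a)$ is a probability distribution over $S_{k(s)+1}$. First I would drop the outer $\min$ with $1$ in the definition $B_i(s,a)=\min\bigl\{1,\sum_{s'\in S_{k(s)+1}}B_i(s,a,s')\bigr\}$, so that it suffices to bound $\sum_{s'\in S_{k(s)+1}}B_i(s,a,s')$ by the right-hand side of the claim. Then, term by term, I would bound each per-transition width $B_i(s,a,s')$ from \pref{eq:confidence_width_def}, namely
\[
B_i(s,a,s')\;\le\;2\sqrt{\frac{\bar P_i(s'|s,a)\ln\rbr{\tfrac{T|S||A|}{\delta}}}{\max\cbr{m_i(s,a),1}}}+\frac{14\ln\rbr{\tfrac{T|S||A|}{\delta}}}{3\max\cbr{m_i(s,a),1}},
\]
which for $m_i(s,a)\ge 1$ is precisely the first argument of the $\min$ in \pref{eq:confidence_width_def}.

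Next I would sum this over $s'\in S_{k(s)+1}$. For the second (linear) term, summing merely multiplies by the number of summands, giving $\tfrac{14|S_{k(s)+1}|\ln(\cdot)}{3\max\cbr{m_i(s,a),1}}$. For the first (square-root) term, I would pull out the common factor $2\sqrt{\ln(\cdot)/\max\cbr{m_i(s,a),1}}$ and apply the Cauchy--Schwarz inequality together with $\sum_{s'\in S_{k(s)+1}}\bar P_i(s'|s,a)=1$, which gives $\sum_{s'}\sqrt{\bar P_i(s'|s,a)}\le\sqrt{|S_{k(s)+1}|}$ and hence the factor $2\sqrt{|S_{k(s)+1}|\ln(\cdot)/\max\cbr{m_i(s,a),1}}$. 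Adding the two pieces yields exactly the stated bound.

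The only point requiring a line of care is the degenerate case $m_i(s,a)=0$, where the quantity inside \pref{eq:confidence_width_def} is truncated to $1$; here I would simply invoke the trivial bound $B_i(s,a)\le 1$ and observe that $1$ is dominated by the right-hand side of the claim since $|S_{k(s)+1}|\ge 1$ and $\ln(T|S||A|/\delta)$ is bounded away from $0$. Beyond this bookkeeping there is no real obstacle: the lemma is purely mechanical, with all of the substance already contained in \pref{eq:confidence_width_def}, which is imported from prior work.
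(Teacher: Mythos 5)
Your argument matches the paper's proof essentially line for line: drop both $\min$-truncations, sum the per-$s'$ width, apply Cauchy--Schwarz with $\sum_{s'}\bar P_i(s'|s,a)=1$ to handle the square-root term, and sum the linear term directly. You are in fact slightly more careful than the paper, which writes the first two steps as equalities rather than inequalities and does not explicitly address the $m_i(s,a)=0$ case; your handling of that corner case is the right fix.
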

\begin{proof} By the definition of $B_i(s,a)$, we have 
\begin{align*}
B_i(s,a) & = \sum_{s' \in S_{k(s)+1}} B_i(s,a,s') \\
& = \sum_{s' \in S_{k(s)+1}} \rbr{ 2 \sqrt{\frac{\bar{P}_i(s'|s,a)\ln \rbr{\frac{T|S||A|}{\delta}}}{\max \cbr{m_{i}(s,a) , 1} }} + \frac{ 14 \ln\rbr{\frac{T|S||A|}{\delta}}}{3\max \cbr{m_{i}(s,a) , 1} }  }  \\ 
& \leq 2 \sqrt{ \frac{ |S_{k(s)+1}| \ln \rbr{\frac{T|S||A|}{\delta}}}{\max \cbr{m_{i}(s,a) , 1} }} + \frac{ 14|S_{k(s)+1}| \ln\rbr{\frac{T|S||A|}{\delta}}}{3\max \cbr{m_{i}(s,a) , 1} }  
\end{align*}
where the last line follows from the Cauchy-Schwarz inequality. 
\end{proof}

\begin{lemma} Conditioning on event $\calA$, we have
\begin{equation} \label{eq:conf_width_bound}
B_i(s,a,s') \leq 4 \sqrt{ \frac{ P(s'|s,a ) \ln \rbr{\frac{T|S||A|}{\delta}}}{\max \cbr{m_{i}(s,a) , 1} }} + \frac{40\ln\rbr{\frac{T|S||A|}{\delta}}}{3\max \cbr{m_{i}(s,a) , 1} }.
\end{equation}
\label{lem:conf_width_bound}
\end{lemma}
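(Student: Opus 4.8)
The plan is to exploit the defining property of the event $\calA$ — namely that $\abr{\bar{P}_i(s'|s,a) - P(s'|s,a)} \le B_i(s,a,s')$ for every transition tuple, since $P\in\calP_i$ — in order to swap the empirical probability $\bar{P}_i(s'|s,a)$ sitting inside the Bernstein width $B_i(s,a,s')$ for the true probability $P(s'|s,a)$, paying only a mild self-referential correction that is then absorbed by AM-GM. Throughout, write $\pcons = \frac{T|S||A|}{\delta}$, abbreviate the denominator by $m = \max\cbr{m_i(s,a),1}$, and let $\beta = 2\sqrt{\frac{\bar{P}_i(s'|s,a)\ln\pcons}{m}} + \frac{14\ln\pcons}{3m}$ denote the raw expression appearing in \pref{eq:confidence_width_def}, so that $B_i(s,a,s') = \min\cbr{\beta,1} \le \beta$ (and when $m_i(s,a)\ge 1$ we have $m=m_i(s,a)$, matching the definition exactly). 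The degenerate case $m_i(s,a)=0$ is trivial: there $B_i(s,a,s')=1$ by convention, while the right-hand side of \pref{eq:conf_width_bound} is at least $\frac{40}{3}\ln\pcons\ge 1$ for all parameter regimes of interest.

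First I would combine $B_i(s,a,s')\le\beta$ with event $\calA$ to obtain $\bar{P}_i(s'|s,a) \le P(s'|s,a) + \beta$. Substituting this into the definition of $\beta$ and using $\sqrt{x+y}\le\sqrt{x}+\sqrt{y}$ gives
\[
\beta \;\le\; 2\sqrt{\frac{P(s'|s,a)\ln\pcons}{m}} + 2\sqrt{\frac{\beta\ln\pcons}{m}} + \frac{14\ln\pcons}{3m}.
\]
Next I would bound the middle term by AM-GM, $2\sqrt{\frac{\beta\ln\pcons}{m}} \le \frac{\beta}{2} + \frac{2\ln\pcons}{m}$, move the resulting $\frac{\beta}{2}$ to the left-hand side, and collect the remaining $\frac{\ln\pcons}{m}$ contributions (which sum to $\frac{20}{3}\cdot\frac{\ln\pcons}{m}$), yielding $\frac{\beta}{2} \le 2\sqrt{\frac{P(s'|s,a)\ln\pcons}{m}} + \frac{20\ln\pcons}{3m}$. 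Multiplying through by $2$ and recalling $B_i(s,a,s')\le\beta$ delivers exactly the bound in \pref{eq:conf_width_bound}.

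There is no substantial obstacle here; the only delicate point is the circular appearance of $\beta$ (equivalently of $B_i(s,a,s')$) on both sides of the inequality once event $\calA$ has been invoked, and this is precisely what the AM-GM step resolves — it lets a $\beta/2$ be absorbed into the left-hand side. It is worth double-checking that the outer truncation $\min\cbr{\cdot,1}$ in the definition of $B_i$ never interferes: we only ever use $B_i(s,a,s')\le\beta$, both when applying the containment guarantee of $\calA$ and in the final conclusion, so the truncation is harmless throughout.
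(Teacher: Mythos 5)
Your proposal is correct and follows essentially the same path as the paper's proof: bound $B_i(s,a,s')$ by the raw Bernstein expression, use the defining property of $\calA$ to replace $\bar{P}_i(s'|s,a)$ by $P(s'|s,a)$ plus the confidence width, split the square root via $\sqrt{x+y}\le\sqrt{x}+\sqrt{y}$, absorb the self-referential term with AM-GM, and rearrange. The only cosmetic difference is that you carry the untruncated expression $\beta$ through the algebra and only invoke $B_i\le\beta$ at the very end (and you also spell out the degenerate $m_i(s,a)=0$ case), whereas the paper works directly with $B_i(s,a,s')$ on the left-hand side throughout; the underlying chain of inequalities is identical.
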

\begin{proof}
By direct calculation based on \pref{eq:confidence_width_def} and the condition of event $\calA$, we have 
\[
\begin{split}
B_i(s,a,s') & \leq 2 \sqrt{\frac{\bar{P}_i(s'|s,a)\ln \rbr{\frac{T|S||A|}{\delta}}}{\max \cbr{m_{i}(s,a) , 1} }} + \frac{ 14 \ln\rbr{\frac{T|S||A|}{\delta}}}{3\max \cbr{m_{i}(s,a) , 1} } \\
& \leq 2 \sqrt{ \frac{\rbr{  P(s'|s,a ) + B_i(s,a,s') }\ln \rbr{\frac{T|S||A|}{\delta}}}{\max \cbr{m_{i}(s,a) , 1} }} + \frac{14\ln\rbr{\frac{T|S||A|}{\delta}}}{3\max \cbr{m_{i}(s,a) , 1} } \\
& \leq 2 \sqrt{ \frac{ P(s'|s,a ) \ln \rbr{\frac{T|S||A|}{\delta}}}{\max \cbr{m_{i}(s,a) , 1} }} +  \sqrt{ \frac{ 4B_i(s,a,s') \ln \rbr{\frac{T|S||A|}{\delta}}}{\max \cbr{m_{i}(s,a) , 1} }} + \frac{14\ln\rbr{\frac{T|S||A|}{\delta}}}{3\max \cbr{m_{i}(s,a) , 1} } \\
& \leq 2\sqrt{ \frac{ P(s'|s,a ) \ln \rbr{\frac{T|S||A|}{\delta}}}{\max \cbr{m_{i}(s,a) , 1} }} + \frac{B_i(s,a,s')}{2} + \frac{20\ln\rbr{\frac{T|S||A|}{\delta}}}{3\max \cbr{m_{i}(s,a) , 1} }, 
\end{split}
\]
where the third line applies the fact that $\sqrt{x+y} \leq \sqrt{x} + \sqrt{y}$, and the last line follows from the fact $2\sqrt{xy} \leq x + y$ for $x,y>0$. 

Rearranging the terms yields that 
\[
B_i(s,a,s')  \leq 4 \sqrt{ \frac{ P(s'|s,a ) \ln \rbr{\frac{T|S||A|}{\delta}}}{\max \cbr{m_{i}(s,a) , 1} }} + \frac{40\ln\rbr{\frac{T|S||A|}{\delta}}}{3\max \cbr{m_{i}(s,a) , 1} }. 
\]
\end{proof}

Combining with the fact $B_i(s,a,s')\leq1$, we have the following tighter bound of confidence width. 
\begin{corollary}
\label{col:conf_width_bound} Conditioning on event $\calA$, we have
\begin{equation*} 
\begin{aligned}
B_i(s,a,s') & \leq \min\cbr{ 4  \sqrt{ \frac{ P(s'|s,a ) \ln \rbr{\frac{T|S||A|}{\delta}}}{\max \cbr{m_{i}(s,a) , 1} }}+ \frac{40\ln\rbr{\frac{T|S||A|}{\delta}}}{3\max \cbr{m_{i}(s,a) , 1} }, 1}\\
& \leq \min\cbr{  4  \sqrt{ \frac{ P(s'|s,a ) \ln \rbr{\frac{T|S||A|}{\delta}}}{\max \cbr{m_{i}(s,a) , 1} }} ,1 } + \min\cbr{  \frac{40\ln\rbr{\frac{T|S||A|}{\delta}}}{3\max \cbr{m_{i}(s,a) , 1} }, 1}. 
\end{aligned}
\end{equation*} 
\end{corollary}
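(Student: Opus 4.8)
The plan is to derive the claimed two-part bound directly from \pref{lem:conf_width_bound} together with the trivial fact $B_i(s,a,s') \le 1$, which holds by the very definition of the confidence width in \pref{eq:confidence_width_def} (each coordinate there is explicitly a minimum with $1$). Conditioning on the event $\calA$, \pref{lem:conf_width_bound} gives $B_i(s,a,s') \le 4\sqrt{P(s'|s,a)\ln(T|S||A|/\delta)/\max\{m_i(s,a),1\}} + \frac{40\ln(T|S||A|/\delta)}{3\max\{m_i(s,a),1\}}$, and intersecting this bound with $B_i(s,a,s')\le 1$ immediately yields the first displayed inequality.

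For the second inequality, the only thing left to establish is the elementary sub-additivity of the truncation map, namely $\min\cbr{a+b,1} \le \min\cbr{a,1} + \min\cbr{b,1}$ for all $a,b\ge 0$, which we then apply with $a = 4\sqrt{P(s'|s,a)\ln(T|S||A|/\delta)/\max\{m_i(s,a),1\}}$ and $b = \frac{40\ln(T|S||A|/\delta)}{3\max\{m_i(s,a),1\}}$ (both nonnegative). To verify the fact: if $a+b\le 1$, then also $a\le 1$ and $b\le 1$, so both sides equal $a+b$; if instead $a+b>1$, the left side equals $1$, while the right side cannot be strictly below $1$, since that would force $\min\cbr{a,1}=a$ and $\min\cbr{b,1}=b$ and hence $a+b<1$, a contradiction. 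Combining the two steps completes the proof.

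I do not anticipate any real obstacle: the statement is a one-line corollary combining \pref{lem:conf_width_bound}, the definitional bound $B_i\le 1$, and the truncation inequality above. The only point worth flagging is the reason for splitting the single clipped sum into two separately clipped terms — this is purely for later convenience, when these confidence widths are summed over episodes and state-action pairs and the two pieces (the $1/\sqrt{m}$-type term and the $1/m$-type term) are controlled by different counting arguments.
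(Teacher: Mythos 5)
Your argument is correct and follows exactly what the paper intends: the corollary is stated immediately after the remark ``Combining with the fact $B_i(s,a,s')\le1$,'' so the first inequality is just \pref{lem:conf_width_bound} intersected with the definitional cap from \pref{eq:confidence_width_def}, and the second is the sub-additivity of $x\mapsto\min\{x,1\}$ on nonnegative reals, which you verify correctly. No gaps.
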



We often use the following two lemmas to deal with the small-probability event $\calA^c$ when taking expectation.
\begin{lemma} \label{lem:exp_high_prob_bound} Suppose that a random variable $X$ satisfies the following conditions:
\begin{itemize}
	\item Conditioning on event $\calE$, $ X < Y$ where $Y > 0$ is another random variable; 
	\item $ X < C $ holds always for some fixed $C \in \fR+$. 
\end{itemize}
Then, we have
\begin{align*}
\E\sbr{ X } \leq C \cdot \Pr\sbr{ \calE^c  } + \E\sbr{Y}.   
\end{align*}
\end{lemma}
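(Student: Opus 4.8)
The plan is to split the expectation of $X$ according to whether $\calE$ occurs, i.e.\ write $\E[X] = \E\sbr{X\Ind{\calE}} + \E\sbr{X\Ind{\calE^c}}$, and bound the two pieces using the two hypotheses separately. For the first piece, on the event $\calE$ we have $X < Y$, so $X\Ind{\calE} \le Y\Ind{\calE}$; since $Y > 0$ (this is exactly where that hypothesis is used, to discard the indicator without flipping the inequality), $Y\Ind{\calE} \le Y$, hence $\E\sbr{X\Ind{\calE}} \le \E\sbr{Y}$. For the second piece, the hypothesis $X < C$ holding \emph{always} (deterministically, not merely in expectation) gives $X\Ind{\calE^c} \le C\,\Ind{\calE^c}$, so $\E\sbr{X\Ind{\calE^c}} \le C\,\Pr\sbr{\calE^c}$. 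Adding the two bounds yields $\E[X] \le C\cdot\Pr\sbr{\calE^c} + \E\sbr{Y}$, which is the claim.

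The only points requiring a little care — and the closest thing to an ``obstacle'' in what is otherwise a one-line argument — are the two places where the sign conditions are invoked: we must use $Y>0$ to pass from $\E[Y\Ind{\calE}]$ to $\E[Y]$, and we must use that $X<C$ holds surely (rather than in expectation or with high probability) so that multiplying by $\Ind{\calE^c}$ and taking expectation is legitimate. No measurability or integrability subtleties arise beyond assuming $X$ and $Y$ are integrable, which is implicit in the statement. I would present the argument in three short displayed lines (the decomposition, the bound on $\E[X\Ind{\calE}]$, the bound on $\E[X\Ind{\calE^c}]$) followed by the sum.
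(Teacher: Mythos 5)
Your proof is correct and follows essentially the same decomposition and bounding steps as the paper's: split $X$ into $X\Ind{\calE} + X\Ind{\calE^c}$, bound the first piece by $Y$ using $X<Y$ on $\calE$ and $Y>0$, bound the second by $C\Ind{\calE^c}$, and take expectations. Your version simply spells out the two sign-condition justifications more explicitly than the paper does.
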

\begin{proof} By writing the random variable $X$ as $X \cdot \Ind{\calE} + X \cdot \Ind{\calE^c}$, and noting
\[
X \cdot \Ind{\calE} \leq Y \cdot \Ind{\calE} \leq Y , \text{ and } X \cdot \Ind{\calE^c} \leq C \cdot \Ind{\calE^c},
\]
we prove the statement after
taking the expectations. 
\end{proof}

\begin{lemma} \label{lem:exp_high_prob_bound_cond} Suppose that a random variable $X$ satisfies the following conditions:
	\begin{itemize}
		\item Conditioning on event $\calE$, $ X < Y$ where $Y>0$ is another random variable;
		\item $ X < C $ holds where $C$ is another random variable which ensures $\E\sbr{C|\calE^c } \leq D$ for some fixed $D\in \fR_{+}$. 
	\end{itemize}
	Then, we have
	\begin{align*}
	\E\sbr{ X } \leq D \cdot \Pr\sbr{ \calE^c  } + \E\sbr{Y}.   
	\end{align*}
\end{lemma}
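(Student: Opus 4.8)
The plan is to mimic the proof of \pref{lem:exp_high_prob_bound} almost verbatim, the only new ingredient being that the uniform upper bound $C$ on $X$ is now itself a random variable, so we can no longer pull it out of the expectation as a constant but must instead use the controlled conditional mean $\E\sbr{C \mid \calE^c} \leq D$. First I would split $X = X\cdot\Ind{\calE} + X\cdot\Ind{\calE^c}$. On $\calE$ the first hypothesis gives $X < Y$, and since $Y > 0$ we get $X\cdot\Ind{\calE} \leq Y\cdot\Ind{\calE} \leq Y$; on $\calE^c$ the second hypothesis gives $X < C$, hence $X\cdot\Ind{\calE^c} \leq C\cdot\Ind{\calE^c}$. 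Adding these two bounds and taking expectations yields
\[
\E\sbr{X} \leq \E\sbr{Y} + \E\sbr{C\cdot\Ind{\calE^c}}.
\]

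The second and final step is to rewrite $\E\sbr{C\cdot\Ind{\calE^c}} = \Pr\sbr{\calE^c}\cdot\E\sbr{C \mid \calE^c}$ and invoke the assumption $\E\sbr{C \mid \calE^c} \leq D$ together with $D \geq 0$ to conclude $\E\sbr{C\cdot\Ind{\calE^c}} \leq D\cdot\Pr\sbr{\calE^c}$. Substituting this into the display from the first step gives exactly the claimed bound $\E\sbr{X} \leq D\cdot\Pr\sbr{\calE^c} + \E\sbr{Y}$.

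There is essentially no obstacle here; the statement is a one-line strengthening of \pref{lem:exp_high_prob_bound}, crafted precisely so that it can be applied in the bandit analysis when the crude almost-sure bound on the quantity of interest (for instance something of order $L|S|t$ coming from \pref{col:bandit_est_Q_bound}, or the per-epoch naive bound used in \pref{lem:bobw_bandit_tsallis_reg}) is a random quantity rather than a deterministic constant, but whose conditional expectation on the failure event $\calA^c$ is still only polynomially large. The single point that merits a word of care is the degenerate case $\Pr\sbr{\calE^c} = 0$, where the conditional expectation $\E\sbr{C \mid \calE^c}$ is undefined: there the term $\E\sbr{C\cdot\Ind{\calE^c}}$ is simply $0$ and the inequality is immediate, so I would phrase the argument for $\Pr\sbr{\calE^c} > 0$ and dispose of this edge case in one sentence.
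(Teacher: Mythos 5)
Your proof is correct and follows essentially the same decomposition as the paper's (split $X$ on $\calE$ and $\calE^c$, bound by $Y$ and by $C$ respectively, then take expectations). One small remark: you are in fact slightly more careful than the paper's own write-up, which records the step as the inequality $\E\sbr{C\cdot\Ind{\calE^c}} \leq \E\sbr{C\mid\calE^c}$; taken literally that only yields the weaker conclusion $\E\sbr{X}\leq \E\sbr{Y}+D$, whereas your exact identity $\E\sbr{C\cdot\Ind{\calE^c}}=\Pr\sbr{\calE^c}\,\E\sbr{C\mid\calE^c}$ is what actually produces the stated $D\cdot\Pr\sbr{\calE^c}$ factor. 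The one-sentence handling of the degenerate case $\Pr\sbr{\calE^c}=0$ is also a sensible addition.
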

\begin{proof} By writing the random variable $X$  as $X \cdot \Ind{\calE} + X \cdot \Ind{\calE^c}$, and noting
	\[
	X \cdot \Ind{\calE} \leq Y \cdot \Ind{\calE} \leq Y, \quad  X \cdot \Ind{\calE^c} \leq C \cdot \Ind{\calE^c}, \quad \E\sbr{C \cdot \Ind{\calE^c}} \leq \E\sbr{C|\calE^c },
	\]
we prove the statement after
taking the expectations. 
\end{proof}

\begin{lemma} (\citep[Lemma 10]{jin2019learning})\label{lem:aux}
	With probability at least $1 - 2\delta$, we have for all $k=0,\ldots L-1$, 
	\begin{equation}\label{eq:aux1}
	\sum_{t=1}^T\sum_{s\in S_k, a\in A} \frac{q_t(s,a)}{\max\{1, m_{i(t)}(s,a)\}} =
	\scO\left( |S_k||A|\ln T + \ln(L/\delta) \right)
	\end{equation}
	and
	\begin{equation}\label{eq:aux2}
	\sum_{t=1}^T\sum_{s\in S_k, a\in A} \frac{q_t(s,a)}{\sqrt{\max\{1, m_{i(t)}(s,a)\}}} =  \scO\left(\sqrt{|S_k||A| T}+ |S_k||A|\ln T + \ln(L/\delta) \right).
	\end{equation}	
	Simultaneously, for all $k<h$, we have 
	\begin{equation} \label{eq:aux3}
	\begin{aligned}
	&\sum_{t=1}^{T} \sum_{(u,v,w)\in T_k} \sum_{(x,y,z)\in T_h} q_t(u,v) \sqrt{ \frac{P(w|u,v)}{ \max\{1, m_{i(t)}(u,v)\} }  } \cdot  q_t(x,y|w) \sqrt{ \frac{P(z|x,y)}{ \max\{1, m_{i(t)}(x,y)\} }  } \\
	& =  \order\rbr{ \rbr{ \abr{A} \ln T + \ln\rbr{\nicefrac{L}{\delta}}} \cdot \sqrt{ \abr{S_k}\abr{S_{k+1}}\abr{ S_{h} }\abr{S_{h+1}} } }.
	\end{aligned}
	\end{equation}
	\label{lem:extened_mdp_bobw_tran_error_jin2019}
\end{lemma}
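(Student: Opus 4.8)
The plan is to reduce all three estimates to one ``visit-counting'' bound together with a single martingale step. For a fixed state--action pair $(s,a)$, let $\Indt{s,a}$ denote the indicator that $(s,a)$ is visited in episode $t$, so that $\E_t[\Indt{s,a}] = q_t(s,a)$. I would use two ingredients. First, a \emph{deterministic} bound from the doubling epoch schedule: within any epoch $i$ the number of visits to $(s,a)$ is at most $\max\{1, m_i(s,a)\}$, so the contribution of epoch $i$ to a count-weighted sum telescopes nicely across epochs. Second, a Freedman/Bernstein concentration for the weighted martingale difference sequence $\sum_t w_t\,(q_t(s,a) - \Indt{s,a})$, which applies because each weight $w_t$ (either $1/\max\{1,m_{i(t)}(s,a)\}$ or $1/\sqrt{\max\{1,m_{i(t)}(s,a)\}}$) lies in $[0,1]$ and is measurable with respect to the history before episode $t$. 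The point is that the predictable variance is at most $\sum_t w_t^2\,q_t(s,a) \le \sum_t w_t\,q_t(s,a)$, i.e.\ it is controlled by the very quantity being bounded, so the concentration inequality yields a self-referential inequality $Q \le I + \scO(\sqrt{Q\ln(1/\delta')}+\ln(1/\delta'))$ that one solves for $Q = \sum_t w_t q_t(s,a)$; applying this to the full layer-$k$ sum and taking a union bound over the $L$ layers produces the $\ln(L/\delta)$ terms.

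\textbf{The two single-layer bounds.} For \pref{eq:aux1}, the deterministic bound gives, for each $(s,a)$, $\sum_t \Indt{s,a}/\max\{1,m_{i(t)}(s,a)\} = \sum_i (m_{i+1}(s,a)-m_i(s,a))/\max\{1,m_i(s,a)\} = \scO(\ln T)$ since the counts at least double between consecutive epochs involving $(s,a)$; summing over $(s,a)\in S_k\times A$ gives $\scO(|S_k||A|\ln T)$, and the martingale step adds $\scO(\ln(L/\delta))$. For \pref{eq:aux2}, the same telescoping with the $1/\sqrt{\cdot}$ weight gives $\sum_t \Indt{s,a}/\sqrt{\max\{1,m_{i(t)}(s,a)\}} = \scO(\sqrt{m_{N+1}(s,a)})$ (comparing the epoch sum to $\int_0^{m_{N+1}(s,a)} x^{-1/2}\,dx$), and then Cauchy--Schwarz over the layer together with $\sum_{(s,a)\in S_k\times A} m_{N+1}(s,a) = T$ yields $\sum_{(s,a)\in S_k\times A}\sqrt{m_{N+1}(s,a)} \le \sqrt{|S_k||A|\,T}$; the lower-order $|S_k||A|\ln T$ and $\ln(L/\delta)$ terms come from the counting residue and from the martingale step (whose variance is now bounded using \pref{eq:aux1}).

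\textbf{The two-layer bound.} For \pref{eq:aux3} the plan is to reduce it to \pref{eq:aux1} applied at layers $k$ and $h$. The key structural identity is that $\sum_{(u,v,w)\in T_k} q_t(u,v)\,P(w|u,v)\,q_t(x,y|w) = q_t(x,y)$ for any layer-$h$ pair $(x,y)$ with $h>k$, since every trajectory reaching $(x,y)$ passes through exactly one triple in $T_k$. I would then apply Cauchy--Schwarz three times: first over $(u,v,w)\in T_k$, splitting each summand as $\sqrt{q_t(u,v)P(w|u,v)q_t(x,y|w)}\cdot\sqrt{q_t(u,v)q_t(x,y|w)/\max\{1,m_{i(t)}(u,v)\}}$ and using the identity above to collapse the first factor to $\sqrt{q_t(x,y)}$; next over $(x,y,z)\in T_h$, handling $\sum_z\sqrt{P(z|x,y)}\le\sqrt{|S_{h+1}|}$ and Cauchy--Schwarz in $(x,y)$; and finally over the episodes $t$, which leaves a product of two factors, each a sum of exactly the form bounded in \pref{eq:aux1} for layers $k$ and $h$ respectively, hence $\scO(|A|\ln T + \ln(L/\delta))$ up to the dimensional factor $\sqrt{|S_k||S_{k+1}||S_h||S_{h+1}|}$. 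I expect the main obstacle to be precisely this last part: arranging the Cauchy--Schwarz steps so that \emph{both} confidence-width square roots stay active and the final bound carries no $\sqrt T$ dependence (a careless accounting leaves a spurious $\sqrt T$), which requires tracking the transition-pair structure carefully. This is exactly the content of~\citep[Lemma~10]{jin2019learning}, whose proof can be imported.
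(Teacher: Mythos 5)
Your proposal is correct and follows essentially the same route as the paper: for \pref{eq:aux1} and \pref{eq:aux2} the paper simply cites \citep[Lemma 10]{jin2019learning}, and your visit-counting-plus-martingale sketch is the standard way to prove them; for \pref{eq:aux3} both you and the paper use the identity $\sum_{(u,v,w)\in T_k} q_t(u,v)P(w|u,v)q_t(x,y|w)=q_t(x,y)$ together with Cauchy--Schwarz to reduce to two single-layer applications of \pref{eq:aux1}. The only cosmetic difference is in how the Cauchy--Schwarz is organized: you apply it in three nested stages (over $(u,v,w)$, then $(x,y,z)$, then $t$), whereas the paper applies a single Cauchy--Schwarz over all indices at once with the symmetric ``crossed'' split $\sqrt{q_t(u,v)P(z|x,y)q_t(x,y|w)/m_{uv}}\cdot\sqrt{q_t(u,v)P(w|u,v)q_t(x,y|w)/m_{xy}}$; both arrive at $\sqrt{|S_{k+1}||S_{h+1}|}\sqrt{\sum_t\sum q_t/m}\cdot\sqrt{\sum_t\sum q_t/m}$. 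One small caution: after your first Cauchy--Schwarz, the leftover factor $\sqrt{\sum_{(u,v,w)} q_t(u,v)q_t(x,y|w)/m_{uv}}$ still depends on $(x,y)$ and must be carried into the $(x,y)$-step so that the sum $\sum_{(x,y)} q_t(x,y|w)=1$ can be used; trivially bounding $\sum_w q_t(x,y|w)\le|S_{k+1}|$ at that earlier point would cost an extra $\sqrt{|S_h||A|}$. Also, the risk you flag as a ``spurious $\sqrt T$'' is misplaced --- once \pref{eq:aux1} is applied, $T$ disappears; the actual subtlety is precisely these dimensional counting factors.
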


\begin{proof} \pref{eq:aux1} and \pref{eq:aux2} are from \cite{jin2019learning}. 
	For \pref{eq:aux3}, by direct calculation we have 
	\begin{align*}
	& \sum_{t=1}^{T} \sum_{(u,v,w)\in T_k} \sum_{(x,y,z)\in T_h} q_t(u,v) \sqrt{ \frac{P(w|u,v)}{ \max\{1, m_{i(t)}(u,v)\} }  } \cdot  q_t(x,y|w) \sqrt{ \frac{P(z|x,y)}{ \max\{1, m_{i(t)}(x,y)\} }  }  \\
	& = \sum_{t=1}^{T} \sum_{(u,v,w)\in T_k} \sum_{(x,y,z)\in T_h}  \sqrt{ \frac{q_t(u,v) P(z|x,y)q_t(x,y|w) }{ \max\{1, m_{i(t)}(u,v)\} }  } \cdot  \sqrt{ \frac{q_t(u,v)P(w|u,v)q_t(x,y|w) }{ \max\{1, m_{i(t)}(x,y)\} }  } \\
	& \leq \sqrt{ \sum_{t=1}^{T} \sum_{(u,v,w)\in T_k} \sum_{(x,y,z)\in T_h}  \frac{q_t(u,v) P(z|x,y)q_t(x,y|w) }{ \max\{1, m_{i(t)}(u,v)\} }  } \cdot  \sqrt{ \sum_{t=1}^{T} \sum_{(u,v,w)\in T_k} \sum_{(x,y,z)\in T_h}  \frac{q_t(u,v)P(w|u,v)q_t(x,y|w) }{ \max\{1, m_{i(t)}(x,y)\} }  } \\
	& \leq \sqrt{ |S_{k+1}| \sum_{t=1}^{T} \sum_{u \in S_k} \sum_{a\in A} \frac{q_t(u,v) }{ \max\{1, m_{i(t)}(u,v)\} }  } \cdot  \sqrt{ |S_{h+1}| \sum_{t=1}^{T}  \sum_{x\in S_h} \sum_{a\in A} \frac{q_t(x,y) }{ \max\{1, m_{i(t)}(x,y)\} }  }  \\
	& \leq \order\rbr{ \rbr{ \abr{A} \ln T + \ln\rbr{\nicefrac{L}{\delta}}} \cdot \sqrt{ \abr{S_k}\abr{S_{k+1}}\abr{ S_{h} }\abr{S_{h+1}} } }.
	\end{align*}
\end{proof}

\begin{lemma}
	For all $k = 0, \ldots, L-1$, we have 
	\begin{equation}\label{eq:aux1_exp}
	\E\sbr{ \sum_{t=1}^T\sum_{s\in S_k, a\in A} \frac{q_t(s,a)}{\max\{1, m_{i(t)}(s,a)\}} } =
	\scO\left( |S_k||A|\ln T +  |S_k||A|\right)
	\end{equation}
	and
	\begin{equation}\label{eq:aux2_exp}
	\E\sbr{\sum_{t=1}^T\sum_{s\in S_k, a\in A} \frac{q_t(s,a)}{\sqrt{\max\{1, m_{i(t)}(s,a)\}}}} =  \scO\left(\sqrt{|S_k||A| T} +  |S_k||A|\right).
	\end{equation}
	\label{lem:aux_exp}
\end{lemma}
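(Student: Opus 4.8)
The plan is to obtain both bounds from the corresponding high-probability estimates in \pref{lem:aux} via a routine truncation argument, precisely of the form packaged in \pref{lem:exp_high_prob_bound}.

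First I would record a crude deterministic upper bound on each of the two sums: for every episode $t$ and every layer $k$, since $q_t = q^{P,\pi_t}$ is a genuine occupancy measure we have $\sum_{s\in S_k}\sum_{a\in A} q_t(s,a) = 1$, while both $\max\cbr{1,m_{i(t)}(s,a)}$ and $\sqrt{\max\cbr{1,m_{i(t)}(s,a)}}$ are at least $1$. Hence each of the two sums is at most $T$ surely, which will serve as the sure upper bound needed to control the low-probability complement event.

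Next I would instantiate \pref{lem:aux} with $\delta = 1/T$, so that on the event $\calE$ where \pref{eq:aux1} and \pref{eq:aux2} both hold --- an event of probability at least $1-2/T$ --- the first sum is $\scO(|S_k||A|\ln T + \ln(LT))$ and the second is $\scO(\sqrt{|S_k||A|T} + |S_k||A|\ln T + \ln(LT))$. Applying \pref{lem:exp_high_prob_bound} with these quantities as the conditional bound $Y$ and with the sure bound $C = T$, the complement contributes at most $T\cdot\Pr\sbr{\calE^c}\leq 2$ to each expectation; absorbing this $\scO(1)$ term and, using $L\leq T$, the residual $\ln(LT)\leq 2\ln T\leq 2|S_k||A|\ln T$ (recall $|S_k||A|\geq 1$), collapses the two estimates to exactly \pref{eq:aux1_exp} and \pref{eq:aux2_exp}. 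There is essentially no genuine obstacle here; the only point worth a moment's care is the crude bound on the bad event $\calE^c$, which is why establishing the clean deterministic estimate at the outset is the natural first step, and the mild assumption $L\leq T$ is needed only to fold the leftover logarithmic remainder into the stated form.
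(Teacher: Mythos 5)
Your route is genuinely different from the paper's, and it contains a real gap in the second inequality. The paper proves \pref{lem:aux_exp} by a direct expectation argument with no high-probability step: since $m_{i(t)}(s,a)$ is measurable with respect to the history prior to episode $t$ and $\E_t[\Indt{s,a}]=q_t(s,a)$, one replaces $q_t(s,a)$ by $\Indt{s,a}$ inside the expectation with no loss, then telescopes per epoch using $\sum_{t=t_i}^{t_{i+1}-1}\Indt{s,a}=m_{i+1}(s,a)-m_i(s,a)$ and the doubling property $m_{i+1}(s,a)\leq 2m_i(s,a)$ to obtain $\E\sbr{\cdot}\leq 2(2\sqrt{m_{N+1}(s,a)}+1)$ per pair, and finally applies Cauchy--Schwarz. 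That path yields the stated additive term $|S_k||A|$ with no logarithm.

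Your truncation argument cannot reach the stated bound for the second inequality. The high-probability estimate \pref{eq:aux2} in \pref{lem:aux} carries the additive term $|S_k||A|\ln T$, not $|S_k||A|$, and this $\ln T$ is structural to that route: unlike the expectation calculation, the high-probability version must use concentration (and a union bound over epochs) to relate the realized visit counts $\Indt{s,a}$ to their conditional means $q_t(s,a)$, and the Bernstein-type error contributes $\Theta(|S_k||A|\ln T)$. Your instantiation with $\delta'=1/T$ and \pref{lem:exp_high_prob_bound} therefore produces $\scO\rbr{\sqrt{|S_k||A|T}+|S_k||A|\ln T}$, not the claimed $\scO\rbr{\sqrt{|S_k||A|T}+|S_k||A|}$. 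These are not interchangeable: whenever $|S_k||A|\gtrsim T/\ln^2 T$, the term $|S_k||A|\ln T$ dominates both $\sqrt{|S_k||A|T}$ and $|S_k||A|$, so your statement that the estimates ``collapse to exactly'' \pref{eq:aux2_exp} is incorrect. (For \pref{eq:aux1_exp} there is no issue, since $|S_k||A|\ln T$ is already the leading term there.) The extra logarithm is harmless for the paper's downstream theorems, which are all $\polylog T$ anyway, but it does mean your proof establishes a strictly weaker lemma. The fix is to abandon the high-probability detour and argue in expectation directly, as the paper does; that is both simpler and tight.
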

\begin{proof} For each state-action pair $(s,a)$, we have  
\begin{align*}
& \E\sbr{\sum_{t=1}^{T}  \frac{ q_t(s,a)}{\max\{1, m_{i(t)}(s,a)\}} } \\
& =  \E\sbr{\sum_{t=1}^{T}   \frac{ \Indt{s,a} }{\max\{1, m_{i(t)}(s,a)\}}   }  =  \E\sbr{\sum_{i=1}^{N} \sum_{t=t_i}^{t_{i+1}-1}   \frac{ \Indt{s,a} }{\max\{1, m_{i}(s,a)\}}   } \\
& = \E\sbr{\sum_{i=1}^{N}   \frac{ m_{i+1}(s,a) - m_{i}(s,a) }{\max\{1, m_{i}(s,a)\}}   } \\
& \leq 2 \E\sbr{ 1 + \int_{1}^{ 1 + m_{N+1}(s,a) } \frac{dx}{x} } \leq  2\rbr{ 2 \ln T + 1 } 
\end{align*} 
where the second line follows from the definition of the indicator and occupancy measure $q_t$, and the last line applies the fact $m_{i+1}(s,a) \leq 2 m_{i}(s,a)$ when $m_{i}(s,a) \geq 1$. Taking the summation over all state-action pairs at layer $k$ finishes the proof of \pref{eq:aux1_exp}.

Similarly, we have 
\begin{align*}
& \E\sbr{\sum_{t=1}^{T}  \frac{ q_t(s,a)}{ \sqrt{\max\{1, m_{i(t)}(s,a)\}} }} \\
& =  \E\sbr{\sum_{t=1}^{T}   \frac{ \Indt{s,a} }{\sqrt{\max\{1, m_{i(t)}(s,a)\}}}   }  =  \E\sbr{\sum_{i=1}^{N} \sum_{t=t_i}^{t_{i+1}-1}   \frac{ \Indt{s,a} }{\sqrt{\max\{1, m_{i}(s,a)\}}}   } \\
& = \E\sbr{\sum_{i=1}^{N}   \frac{ m_{i+1}(s,a) - m_{i}(s,a) }{\sqrt{\max\{1, m_{i}(s,a)\}}}   } \\
& \leq 2 \E\sbr{ 1 + \int_{0}^{ m_{N+1}(s,a) } \frac{dx}{\sqrt{x}} } \leq  2\rbr{ 2 \sqrt{ m_{N+1}(s,a)} + 1 } 
\end{align*} 
where $m_{N+1}(s,a)$ is the total number of visiting state-action pair $(s,a)$. Taking the summation over all state-action pairs of layer $k$ yields that 
\begin{align*}
& \E\sbr{\sum_{s\in S_k}\sum_{a\in A} \sum_{t=1}^{T}  \frac{ q_t(s,a)}{ \sqrt{\max\{1, m_{i(t)}(s,a)\}} }}  \\
& \leq \sum_{s\in S_k}\sum_{a\in A} 2\rbr{ 2 \sqrt{ m_{N+1}(s,a)} + 1 }  \leq 2\rbr{ 2 \sqrt{ |S_k||A| T } + |S_k| |A| } 
\end{align*}
where the last inequality follows from the Cauchy-Schwarz inequality. 
\end{proof}

\begin{definition} (Residual Term)  We define the residual term $r_t(s,a)$ as 
\begin{equation}
\begin{aligned}
r_t(s,a) & =  \frac{40}{3} \sum_{k=0}^{k(s)-1}\sum_{(u,v,w)\in T_k} q_t(u,v) \cdot \frac{ P(w|u,v) \ln \rbr{ \frac{T|S||A|}{\delta}} }{ \max\cbr{m_{i(t)}(u,v)  ,1}} \cdot q_t(s,a|w)  \\
& + \sum_{k=0}^{k(s)-1} \sum_{h = k+1}^{k(s)-1} \sum_{(u,v,w)\in T_k} \sum_{(x,y,z)\in T_h} q_t(u,v) B_{i(t)}(u,v,w) q_t(x,y|w) B_{i(t)}(x,y,z) \\
& + \Ind{ \calA^c }. 
\end{aligned}	
\end{equation}
for all state-action pair $(s,a) \in S\times A$ and all episodes $t\in [T]$. 
\label{def:residual_terms}
\end{definition}

\begin{lemma} 
The following hold:
\[
 \abr{ q_t(s,a) - \widehat{q}_t(s,a) }  \leq r_t(s,a)  + 4 \sum_{k=0}^{k(s)-1} \sum_{(u,v,w)\in T_k} q_t(u,v)  \sqrt{ \frac{ P(w|u,v) \ln \pcons }{ \max\cbr{m_{i(t)}(u,v)  ,1}} }q_t(s,a|w)
\]
and
\begin{align*}
\E\sbr{ \sum_{t=1}^{T} \sum_{s\neq s_L} \sum_{a \in A} r_t(s,a) } = \order\rbr{ L^2|S|^3|A|^2 \ln^2 \rbr{ \frac{T|S||A|}{\delta}  } + |S||A| T \cdot \delta } .
\end{align*}
\label{lem:residual_term_property}
\end{lemma}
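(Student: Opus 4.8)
The plan is to prove the pointwise inequality first and then take expectations. For the pointwise bound the workhorse is the occupancy-measure difference identity of \pref{lem:general_occup_diff}, applied with $P_1=P$ (the true transition), $P_2=\bar{P}_{i(t)}$ (the epoch's empirical transition) and the common policy $\pi_t$, so that $q_1=q_t$ and $q_2=\widehat{q}_t$. Since $\pi_t(a|s)$ is common to both, at the state--action level this reads
\[
q_t(s,a) - \widehat{q}_t(s,a) = \sum_{k=0}^{k(s)-1}\sum_{(u,v,w)\in T_k} q_t(u,v)\bigl[P(w|u,v) - \bar{P}_{i(t)}(w|u,v)\bigr]\widehat{q}_t(s,a|w).
\]
On $\calA^c$ I simply invoke $|q_t(s,a)-\widehat{q}_t(s,a)|\le 1 = \Ind{\calA^c}$, which is dominated by $r_t(s,a)$; so from here I work on $\calA$, where $|P(w|u,v)-\bar{P}_{i(t)}(w|u,v)|\le B_{i(t)}(u,v,w)$. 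To replace the leftover $\widehat{q}_t(s,a|w)$ by $q_t(s,a|w)$ I apply the same identity once more to the conditional occupancy difference $q_t(s,a|w)-\widehat{q}_t(s,a|w)$ (an occupancy difference for the sub-MDP rooted at $w$) and bound the innermost conditional factor by $1$; this introduces precisely the bilinear summand of $r_t(s,a)$. What remains is $\sum_{k}\sum_{(u,v,w)\in T_k} q_t(u,v)B_{i(t)}(u,v,w)q_t(s,a|w)$, and splitting $B_{i(t)}(u,v,w)$ via \pref{col:conf_width_bound} into its Bernstein part $\le 4\sqrt{P(w|u,v)\ln\pcons/\max\{m_{i(t)}(u,v),1\}}$ (which is exactly the displayed leading term of the lemma) and its lower-order $1/\max\{m_{i(t)}(u,v),1\}$ part (absorbed into the first summand of $r_t$) closes the pointwise estimate.

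For the expectation I sum $\sum_t\sum_{s\ne s_L}\sum_a$ over the three summands of $r_t(s,a)$. The key observation is that the $(s,a)$-dependence in each summand is only through a factor $q_t(s,a|w)$ (or, for the bilinear summand, only through the constraint $h\le k(s)-1$), so summing over $(s,a)$ first costs a factor at most $L$ via layer-mass conservation, $\sum_{s\ne s_L}\sum_a q_t(s,a|w)\le L$ (resp.\ a factor $|S||A|$ for the bilinear summand). After this, the $\Ind{\calA^c}$ summand contributes at most $|S||A|T\cdot\Pr[\calA^c]\le 4|S||A|T\delta$; the first summand, after using $\sum_w P(w|u,v)=1$, becomes $\order(L\ln\pcons)\cdot\E[\sum_t\sum_{s,a}q_t(s,a)/\max\{m_{i(t)}(s,a),1\}]$, which is $\order(L|S||A|\ln^2\pcons)$ by \pref{eq:aux1_exp} of \pref{lem:aux_exp}; and the bilinear summand is controlled by \pref{lem:aux}, specifically the two-layer bound \pref{eq:aux3}, once both confidence widths are written in the Bernstein form of \pref{col:conf_width_bound} (the lower-order $1/\max\{m,1\}$ pieces handled the same way after $\min\{c/\max\{m,1\},1\}\le\sqrt{c/\max\{m,1\}}$ and summing out the redundant successor states), giving $\order(|S|^3|A|^2\ln^2\pcons)$ once one uses $\sum_{k<h}\sqrt{|S_k||S_{k+1}||S_h||S_{h+1}|}\le|S|^2$ and multiplies by the outer $|S||A|$. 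Collecting terms yields the claimed $\order(L^2|S|^3|A|^2\ln^2\pcons+|S||A|T\delta)$; the fact that \pref{eq:aux3} is a high-probability rather than in-expectation statement is absorbed into the $|S||A|T\delta$ error term exactly as $\calA^c$ was.

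The hard part is the bilinear cross term in the expectation. A direct Cauchy--Schwarz bound on it produces a spurious $\sqrt{T}$ (a factor $1/\max\{m_{i(t)}(x,y),1\}$ carrying no occupancy weight is not summable in $t$), so one must route the sum through \pref{eq:aux3}, which is built so that after summing out the transition probabilities one Cauchy--Schwarz half collapses to the \emph{unconditional} weight $\sum_{(u,v,w)\in T_k} q_t(u,v)P(w|u,v)q_t(x,y|w)=q_t(x,y)$ needed to invoke \pref{eq:aux1}. The discipline of keeping the transition weights $P(w|u,v)$ intact all the way through the expansion --- i.e.\ never bounding $B_{i(t)}(u,v,w)$ by $1$ before this point --- together with careful tracking of layer indices, is where all the care goes; everything else is routine bookkeeping of $L,|S|,|A|$ factors, well within the generous target.
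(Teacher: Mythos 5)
Your proposal is correct and tracks the paper's own proof essentially step by step: the pointwise bound comes from a double application of Lemma~\ref{lem:general_occup_diff} (once at the root and once at the intermediate state $w$), a crude $|q_t - \widehat{q}_t|\le 1$ bound on $\calA^c$, and the Bernstein-type split of $B_{i(t)}(u,v,w)$ from Lemma~\ref{lem:conf_width_bound}/Corollary~\ref{col:conf_width_bound} into the displayed leading $\sqrt{\cdot}$ term and the lower-order $1/\max\{m,1\}$ piece that lands in $r_t$. Your treatment of the expectation also matches: summing over $(s,a)$ first (cost $L$ for the first summand via layer-mass conservation, $|S||A|$ for the bilinear and indicator summands), then routing the first summand through \pref{eq:aux1_exp} of Lemma~\ref{lem:aux_exp}, and the bilinear cross-term through the two-layer bound \pref{eq:aux3} of Lemma~\ref{lem:aux} after keeping the transition weights $P(w|u,v),P(z|x,y)$ intact so that the occupancy products collapse; you correctly identify the $\sqrt{T}$ trap if one Cauchy--Schwarzes naively, and you correctly fold the high-probability nature of \pref{eq:aux3} and the $\calA^c$ mass into the additive $\order(|S||A|T\delta)$ term. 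The only minor difference is some bookkeeping of the Bernstein-times-lower-order cross pieces (the paper applies AM-GM on $\sqrt{P(w|u,v)\cdot \ln\pcons/m}$ rather than your stated $\min\{c/m,1\}\le\sqrt{c/m}$ bound), but both routes are elementary and yield the same $\otil$ order, and your bound of $4|S||A|T\delta$ for the indicator term is if anything cleaner than the paper's.
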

\begin{proof} For simplicity, we let $\pcons = \frac{T|S||A|}{\delta}$ and assume $\delta \in (0,1)$. 
According to the \pref{lem:general_occup_diff}, conditioning on event $\calA$, we have  
\begin{align*}
\abr{ q_t(s,a) - \widehat{q}_t(s,a) } & = \abr{ \sum_{k = 0}^{k(s)-1}  \sum_{(u,v,w)\in T_k} q_t(u,v) \rbr{ P(w|u,v) - \bar{P}_{i(t)}(w|u,v) } \whatq_t(s,a|w) } \\ 
& \leq \sum_{k = 0}^{k(s)-1} \sum_{(u,v,w)\in T_k}  q_t(u,v)  \abr{ P(w|u,v) - \bar{P}_{i(t)}(w|u,v)} \whatq_t(s,a|w) \\
& \leq \sum_{k = 0}^{k(s)-1} \sum_{(u,v,w)\in T_k}  q_t(u,v)  B_{i(t)}(u,v,w) \whatq_t(s,a|w)
\end{align*}
Moreover, we apply \pref{lem:general_occup_diff} again to conditional occupancy measure and obtain
\begin{align*}
\abr{ q_t(s,a|w) - \widehat{q}_t(s,a|w) } & \leq \sum_{h  = k(w) }^{k(s)-1} \sum_{(x,y,z)\in T_h }  q_t(x,y|w)  B_{i(t)}(x,y,z) \whatq_t(s,a|z) \\
& \leq \sum_{h  = k(w) }^{k(s)-1} \sum_{(x,y,z)\in T_h }  q_t(x,y|w)  B_{i(t)}(x,y,z)
\end{align*}
where the second line applies the fact $ \whatq_t(s,a|z) \leq 1$. 

Combining these inequalities yields (under the event $\calA$)
\begin{align*}
& \abr{ q_t(s,a) - \widehat{q}_t(s,a) } \\
& \leq \sum_{k = 0}^{k(s)-1} \sum_{(u,v,w)\in T_k}  q_t(u,v)  B_{i(t)}(u,v,w) q_t(s,a|w) \\
& \quad + \sum_{k = 0}^{k(s)-1} \sum_{(u,v,w)\in T_k}  q_t(u,v)  B_{i(t)}(u,v,w)  \rbr{ \sum_{h  = k(w) }^{k(s)-1} \sum_{(x,y,z)\in T_h }  q_t(x,y|w)  B_{i(t)}(x,y,z) } \\
& \leq 4 \sum_{k=0}^{k(s)-1} \sum_{(u,v,w)\in T_k} q_t(u,v)  \sqrt{ \frac{ P(w|u,v) \ln \pcons }{ \max\cbr{m_{i(t)}(u,v)  ,1}} }q_t(s,a|w) \\
& \quad + \frac{40}{3} \sum_{k=0}^{k(s)-1}\sum_{(u,v,w)\in T_k} q_t(u,v) \cdot \frac{ P(w|u,v) \ln \pcons }{ \max\cbr{m_{i(t)}(u,v)  ,1}} \cdot q_t(s,a|w)  \\
& \quad + \sum_{k=0}^{k(s)-1} \sum_{h = k+1}^{k(s)-1} \sum_{(u,v,w)\in T_k} \sum_{(x,y,z)\in T_h} q_t(u,v) B_{i(t)}(u,v,w) q_t(x,y|w) B_{i(t)}(x,y,z) 
\end{align*}
where the second line follows from \pref{lem:conf_width_bound}. 

On the other hand,  $ \abr{ q_t(s,a) - \widehat{q}_t(s,a) } \leq  1$ holds always. Combining the bounds of these two cases finishes the first statement. 

Recall the definition of the residual terms, we decompose the following into three terms $\textsc{Sum}_1$, $\textsc{Sum}_2$ and $\textsc{Sum}_3$:
\begin{align*}
& \E\sbr{ \sum_{t=1}^{T}\sum_{s\neq s_L} \sum_{a \in A} r_t(s,a)} \\
& = \underbrace{\frac{40}{3} \E\sbr{ \sum_{t=1}^{T}\sum_{s\neq s_L} \sum_{a \in A} \sum_{k=0}^{k(s)-1}\sum_{(u,v,w)\in T_k} q_t(u,v) \cdot \frac{ P(w|u,v) \ln \pcons }{ \max\cbr{m_{i(t)}(u,v)  ,1}} \cdot q_t(s,a|w) }}_{\triangleq \textsc{Sum}_1} \\
& + \underbrace{\E\sbr{ \sum_{t=1}^{T}\sum_{s\neq s_L} \sum_{a \in A}  \Ind{ \calA^c } }}_{\triangleq \textsc{Sum}_2}\\ 
& + \underbrace{\E\sbr{ \sum_{t=1}^{T}\sum_{s\neq s_L} \sum_{a \in A} \sum_{k=0}^{k(s)-1} \sum_{h = k+1}^{k(s)-1} \sum_{(u,v,w)\in T_k} \sum_{(x,y,z)\in T_h} q_t(u,v) B_{i(t)}(u,v,w) q_t(x,y|w) B_{i(t)}(x,y,z) }}_{\triangleq \textsc{Sum}_3}.
\end{align*}
Then, we show that these terms are all logarithmic in $T$.

\paragraph{$\textsc{Sum}_1$} By direct calculation, we have 
\begin{align}
\textsc{Sum}_1 & = \frac{40}{3} \E\sbr{ \sum_{t=1}^{T}\sum_{s\neq s_L} \sum_{a \in A} \sum_{k=0}^{k(s)-1}\sum_{(u,v,w)\in T_k} q_t(u,v) \cdot \frac{ P(w|u,v) \ln \pcons }{ \max\cbr{m_{i(t)}(u,v)  ,1}} \cdot q_t(s,a|w) } \notag \\
& = \frac{40}{3} \E\sbr{ \sum_{t=1}^{T}\sum_{k = 0}^{L-1}\sum_{(u,v,w)\in T_k} q_t(u,v) \cdot \frac{\ln \pcons }{ \max\cbr{m_{i(t)}(u,v)  ,1}} \cdot \rbr{ \sum_{s\neq s_L} \sum_{a \in A}   P(w|u,v) q_t(s,a|w)} } \notag \\
& \leq \frac{40L}{3}\ln \pcons \E\sbr{ \sum_{t=1}^{T}\sum_{u \neq s_L}\sum_{v\in A}  \cdot \frac{q_t(u,v)  }{ \max\cbr{m_{i(t)}(u,v)  ,1}}  } \notag \\
& =  \frac{80L}{3}\ln \pcons \rbr{ \sum_{k=0}^{L-1} |S_k||A| \rbr{ \ln T +1  } } = \order\rbr{ L|S||A| \ln^2\pcons   } \label{eq:residual_bound_sum_1}
\end{align}
where the first line follows from the property of occupancy measures, and the last line applies \pref{eq:aux1_exp} of \pref{lem:aux_exp}.

\paragraph{$\textsc{Sum}_2$} According to the definition of event $\calA$, we have 
\begin{align}
\textsc{Sum}_2 = \E\sbr{ \sum_{t=1}^{T}\sum_{s\neq s_L} \sum_{a \in A}  \Ind{ \calA^c } } = |S||A|T \cdot \E\sbr{ \Ind{ \calA^c }} = |S||A| T \cdot \delta. \label{eq:residual_bound_sum_2}
\end{align}

\paragraph{$\textsc{Sum}_3$} First, we consider the term inside the expectation bracket and show the following conditioning on event $\calA$:
\begin{align*}
& \sum_{t=1}^{T}\sum_{s\neq s_L} \sum_{a \in A} \sum_{k=0}^{k(s)-1} \sum_{h = k+1}^{k(s)-1} \sum_{(u,v,w)\in T_k} \sum_{(x,y,z)\in T_h} q_t(u,v) B_{i(t)}(u,v,w) q_t(x,y|w) B_{i(t)}(x,y,z)  \\
& \leq 4 \sum_{t=1}^{T}\sum_{s\neq s_L} \sum_{a \in A} \sum_{k=0}^{k(s)-1} \sum_{h = k+1}^{k(s)-1} \sum_{(u,v,w)\in T_k} \sum_{(x,y,z)\in T_h} q_t(u,v) \sqrt{\frac{ P(w|u,v) \ln \pcons }{ \max\cbr{m_{i(t)}(u,v)  ,1}}}  q_t(x,y|w) B_{i(t)}(x,y,z)  \\
& \quad + \frac{40}{3} \sum_{t=1}^{T}\sum_{s\neq s_L} \sum_{a \in A} \sum_{k=0}^{k(s)-1} \sum_{h = k+1}^{k(s)-1} \sum_{(u,v,w)\in T_k} \sum_{(x,y,z)\in T_h} q_t(u,v) \rbr{\frac{ P(w|u,v) \ln \pcons }{ \max\cbr{m_{i(t)}(u,v)  ,1}}}  q_t(x,y|w) B_{i(t)}(x,y,z)  \\
& \leq 16|S||A|\ln\pcons \sum_{t=1}^{T} \sum_{k<h}\sum_{(u,v,w)\in T_k} \sum_{(x,y,z)\in T_h} q_t(u,v) \sqrt{\frac{ P(w|u,v)  }{ \max\cbr{m_{i(t)}(u,v)  ,1}}}  q_t(x,y|w) \sqrt{\frac{ P(z|x,y)  }{ \max\cbr{m_{i(t)}(x,y)  ,1}}} \\
& \quad + \frac{160|S||A|}{3} \sum_{t=1}^{T}\sum_{k < h }\sum_{(u,v,w)\in T_k} \sum_{(x,y,z)\in T_h} q_t(u,v) \sqrt{\frac{ P(w|u,v) \ln \pcons }{ \max\cbr{m_{i(t)}(u,v)  ,1}}}  q_t(x,y|w) \min\cbr{\frac{ P(z|x,y) \ln \pcons }{ \max\cbr{m_{i(t)}(x,y)  ,1}},1}    \\
& \quad + \frac{40|S||A|}{3} \sum_{t=1}^{T} \sum_{k<h} \sum_{(u,v,w)\in T_k} \sum_{(x,y,z)\in T_h} q_t(u,v) \rbr{\frac{ P(w|u,v) \ln \pcons }{ \max\cbr{m_{i(t)}(u,v)  ,1}}}  q_t(x,y|w)
\end{align*}
where the second inequality follows from \pref{lem:conf_width_bound} and \pref{col:conf_width_bound}.

Then we consider bounding these three different terms with the help of previous analysis. According to \pref{eq:aux3} of \pref{lem:extened_mdp_bobw_tran_error_jin2019}, The first term is bounded with probability at least $1-2\delta'$:
\begin{align*}
& 16|S||A|\ln\pcons \sum_{t=1}^{T} \sum_{k<h}\sum_{(u,v,w)\in T_k} \sum_{(x,y,z)\in T_h} q_t(u,v) \sqrt{\frac{ P(w|u,v)  }{ \max\cbr{m_{i(t)}(u,v)  ,1}}}  q_t(x,y|w) \sqrt{\frac{ P(z|x,y)  }{ \max\cbr{m_{i(t)}(x,y)  ,1}}}  \\
& \leq 16|S||A|\ln\pcons \cdot \order\rbr{ \rbr{ |A|\ln T + \ln(L/\delta')} \sum_{k<h} \sqrt{ \abr{S_k} \abr{S_{k+1}}\abr{S_h} \abr{S_{h+1}}    }   } \\
& \leq 16|S||A|\ln\pcons \cdot \order\rbr{ \rbr{ |A|\ln T + \ln(L/\delta')}  \sum_{k<h} \rbr{  \abr{S_k} \abr{S_{k+1}} + \abr{S_h} \abr{S_{h+1}}    } }  \\
& \leq \order\rbr{ \rbr{ |A|\ln T + \ln(L/\delta')}  L|S|^3|A| \ln \pcons },
\end{align*}
where the third line follows from the AM-GM inequality. Taking the expectation with $\delta' = \frac{L}{\pcons}$, we have the expectation of the first term bounded by $ \order\rbr{ L |S|^3|A|^2 \ln^2\pcons }$ using \pref{lem:exp_high_prob_bound}.

On the other hand, for the second term, we have 
\begin{align*}
& \frac{160|S||A|}{3} \sum_{t=1}^{T}\sum_{k < h }\sum_{(u,v,w)\in T_k} \sum_{(x,y,z)\in T_h} q_t(u,v) \sqrt{\frac{ P(w|u,v) \ln \pcons }{ \max\cbr{m_{i(t)}(u,v)  ,1}}}  q_t(x,y|w) \min\cbr{\frac{ P(z|x,y) \ln \pcons }{ \max\cbr{m_{i(t)}(x,y)  ,1}},1 } \\
& \leq \frac{80|S||A|}{3} \sum_{t=1}^{T}\sum_{k < h }\sum_{(u,v,w)\in T_k} \sum_{(x,y,z)\in T_h} q_t(u,v) P(w|u,v) q_t(x,y|w) \rbr{\frac{ P(z|x,y) \ln \pcons }{ \max\cbr{m_{i(t)}(x,y)  ,1}}}  \\
& \quad + \frac{80|S||A|}{3} \sum_{t=1}^{T}\sum_{k < h }\sum_{(u,v,w)\in T_k} \sum_{(x,y,z)\in T_h} q_t(u,v)  \frac{ \ln \pcons }{ \max\cbr{m_{i(t)}(u,v)  ,1}}  q_t(x,y|w) \\
& \leq \frac{80L|S||A|}{3}\ln \pcons \sum_{t=1}^{T}  \sum_{x \in S }\sum_{y \in A} \rbr{\frac{ q_t(x,y)  }{ \max\cbr{m_{i(t)}(x,y)  ,1}}}  \\
& \quad +  \frac{80L|S|^2|A|}{3}  \ln \pcons \sum_{t=1}^{T} \sum_{u \neq s_L}\sum_{v\in A}  \rbr{\frac{q_t(u,v)}{ \max\cbr{m_{i(t)}(u,v)  ,1}} } \\
& \leq \frac{160L|S|^2|A|}{3}  \ln \pcons \sum_{t=1}^{T} \sum_{u \neq s_L}\sum_{v\in A}  \frac{q_t(u,v)}{ \max\cbr{m_{i(t)}(u,v)  ,1}}
\end{align*}
where the expectation of the final term is bounded $\order\rbr{  L|S|^3|A|^2 \ln^2\pcons }$ with the help from \pref{lem:aux_exp}. Similarly, we have the expectation of the third term bounded by $\order\rbr{  L|S|^3|A|^2 \ln^2\pcons}$ following the same idea. 

Therefore, we have $\textsc{Sum}_3$ bounded as 
\begin{align}
\textsc{Sum}_3 & = \order\rbr{ L|S|^3|A|^2 \ln^2\pcons   +L|S|^3|A|^2 \ln^2\pcons + |S||A| T \cdot \delta   } \notag \\
& =  \order\rbr{  L|S|^3|A|^2 \ln^2\pcons + |S||A| T \cdot \delta   } \label{eq:residual_bound_sum_3}
\end{align}
where the $|S||A|T\cdot \delta$ comes from the range of $\textsc{Sum}_3$ and the probability of event $\calA^c$. 

Combining the bounds of $\textsc{Sum}_1$, $\textsc{Sum}_2$, and $\textsc{Sum}_3$ stated in \pref{eq:residual_bound_sum_1}, \pref{eq:residual_bound_sum_2} and \pref{eq:residual_bound_sum_3} finishes the proof. 
\end{proof}

\begin{corollary}
The following holds:
\[
\abr{ q_t(s,a) - u_t(s,a) }  \leq 4 r_t(s,a)  + 16 \sum_{k=0}^{k(s)-1} \sum_{(u,v,w)\in T_k} q_t(u,v)  \sqrt{ \frac{ P(w|u,v) \ln \rbr{ \frac{T|S||A|}{\delta}} }{ \max\cbr{m_{i(t)}(u,v)  ,1}} }q_t(s,a|w). 
\]
where $q_t$ is the true occupancy measure of episode $t$, and $u_t$ is the upper occupancy bound of episode $t$ associated with confidence set $\calP_{i(t)}$ and policy $\pi_t$. 
\label{col:residual_term_property}
\end{corollary}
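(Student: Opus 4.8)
The plan is to follow the proof of \pref{lem:residual_term_property} essentially line by line, the only substantive change being that the empirical transition $\bar P_{i(t)}$ is replaced by the transition that realizes the upper occupancy bound $u_t(s,a)$; as will be seen, this merely inflates a few constants. First I would dispose of the event $\calA^c$: since $q_t(s,a), u_t(s,a) \in [0,1]$, on $\calA^c$ we have $\abr{q_t(s,a) - u_t(s,a)} \le 1 \le \Ind{\calA^c} \le r_t(s,a) \le 4 r_t(s,a)$, so the claimed bound holds there and it remains to argue on $\calA$.

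Conditioning on $\calA$, we have $P \in \calP_{i(t)}$, hence $u_t(s,a) = \max_{\widehat P \in \calP_{i(t)}} q^{\widehat P,\pi_t}(s,a) \ge q^{P,\pi_t}(s,a) = q_t(s,a)$, so the absolute value can be dropped. I would let $\widehat P^\star \in \calP_{i(t)}$ attain the maximum (which exists, since $\calP_{i(t)}$ is compact and $\widehat P \mapsto q^{\widehat P,\pi_t}(s,a)$ is continuous), so that $u_t(s,a) - q_t(s,a) = q^{\widehat P^\star,\pi_t}(s,a) - q^{P,\pi_t}(s,a)$, and then apply \pref{lem:general_occup_diff} to the two transitions $\widehat P^\star$ and $P$ under the common policy $\pi_t$, choosing the form of \pref{eq:exetnded_mdp_occup_diff} that keeps the \emph{true} occupancy on the outside:
\[
u_t(s,a) - q_t(s,a) = \sum_{k=0}^{k(s)-1}\sum_{(u,v,w)\in T_k} q_t(u,v)\rbr{\widehat P^\star(w|u,v) - P(w|u,v)}\, q^{\widehat P^\star,\pi_t}(s,a|w).
\]
Since $\widehat P^\star$ and $P$ both lie in $\calP_{i(t)}$, each is within $B_{i(t)}(u,v,w)$ of $\bar P_{i(t)}(w|u,v)$, so $\abr{\widehat P^\star(w|u,v) - P(w|u,v)} \le 2 B_{i(t)}(u,v,w)$ — the single place where an extra factor of two enters relative to \pref{lem:residual_term_property}.

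From here I would repeat the two-step expansion used in that lemma: bound $q^{\widehat P^\star,\pi_t}(s,a|w) \le q_t(s,a|w) + \abr{q^{\widehat P^\star,\pi_t}(s,a|w) - q_t(s,a|w)}$, apply \pref{lem:general_occup_diff} once more to the conditional occupancies while dropping the innermost factor $q^{\widehat P^\star,\pi_t}(s,a|z) \le 1$, giving $\abr{q^{\widehat P^\star,\pi_t}(s,a|w) - q_t(s,a|w)} \le 2\sum_{h=k+1}^{k(s)-1}\sum_{(x,y,z)\in T_h} q_t(x,y|w) B_{i(t)}(x,y,z)$, and finally bound the leading $B_{i(t)}(u,v,w)$ through \pref{lem:conf_width_bound}. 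Collecting terms, the square-root contribution comes out with coefficient at most $16$ in front of $\sum_k\sum_{(u,v,w)\in T_k} q_t(u,v)\sqrt{P(w|u,v)\ln\pcons/\max\{m_{i(t)}(u,v),1\}}\, q_t(s,a|w)$, while the leftover pieces — twice the $\tfrac{40}{3}$-term and four times the double sum in the definition of $r_t(s,a)$ — are together at most $4 r_t(s,a)$ on $\calA$, which is exactly the stated inequality.

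I do not anticipate a real obstacle. The only bookkeeping beyond \pref{lem:residual_term_property} is to keep in mind that the maximizer $\widehat P^\star$ depends on the pair $(s,a)$, and to pick the version of \pref{eq:exetnded_mdp_occup_diff} in which $q_t(u,v)$ (rather than $q^{\widehat P^\star,\pi_t}(u,v)$) is what gets summed against the confidence widths — this is what makes the resulting expressions line up with $\selfterm_3$ and $\selfterm_6$ and lets the expectation bounds already established (via \pref{lem:aux_exp} and \pref{lem:residual_term_property}) apply unchanged; everything else is a mechanical repetition with constants inflated by a factor of two.
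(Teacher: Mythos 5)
Your proof is correct and follows essentially the same route as the paper's own argument for \pref{col:residual_term_property}: fix the maximizing transition $\widehat P^\star \in \calP_{i(t)}$, invoke \pref{lem:general_occup_diff} in the form that keeps $q_t(u,v)$ on the outside, absorb the factor-of-two from $\abr{\widehat P^\star - P}\le 2B_{i(t)}$, expand the conditional occupancy via a second application of the lemma, and close with \pref{lem:conf_width_bound}; the constant accounting you give ($8\le 16$ for the square-root piece, $2\times\tfrac{40}{3}$ plus $4\times$ the double sum absorbed by $4r_t$) is, if anything, slightly tighter than the paper's. The only pedantic quibble is the phrase ``the single place where an extra factor of two enters'' — the same factor of two also appears in the second application of \pref{lem:general_occup_diff} to the conditional occupancies — but this does not affect the argument.
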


\begin{proof} Fix the state-action pair $(s,a)$ and episode $t$ . Let $\widehat{P}$ be the transition in $\calP_{i(t)}$ that realizes the maximum in the definition of $u_t(s,a)$, and $\widetilde{q}_t = q^{\widehat{P}, \pi_t}$ bet the associated occupancy measure. Therefore, we have $\widetilde{q}_t(s,a) = u_t(s,a)$. 

Conditioning on event $\calA$, we have 
\begin{align*}
\abr{ q_t(s,a) - \widetilde{q}_t(s,a) } & = \abr{ \sum_{k = 0}^{k(s)-1}  \sum_{(u,v,w)\in T_k} q_t(u,v) \rbr{ P(w|u,v) - \widehat{P}(w|u,v) } \widetilde{q}_t(s,a|w) } \\ 
& \leq \sum_{k = 0}^{k(s)-1} \sum_{(u,v,w)\in T_k}  q_t(u,v)  \abr{ P(w|u,v) - \widehat{P}(w|u,v)} \widetilde{q}_t(s,a|w) \\
& \leq 2 \sum_{k = 0}^{k(s)-1} \sum_{(u,v,w)\in T_k}  q_t(u,v)  B_{i(t)}(u,v,w) \widetilde{q}_t(s,a|w).
\end{align*}
Moreover, we apply \pref{lem:general_occup_diff} to terms $\whatq_t(s,a|w)$ and obtain
\begin{align*}
\abr{ q_t(s,a|w) - \widetilde{q}_t(s,a|w) } & \leq 2 \sum_{h  = k(w) }^{k(s)-1} \sum_{(x,y,z)\in T_h }  q_t(x,y|w)  B_{i(t)}(x,y,z) \widetilde{q}_t(s,a|z) \\
& \leq 2 \sum_{h  = k(w) }^{k(s)-1} \sum_{(x,y,z)\in T_h }  q_t(x,y|w)  B_{i(t)}(x,y,z)
\end{align*}
where the second line uses $ \whatq_t(s,a|z) \leq 1$. 

Combining these inequalities yields (under the event $\calA$) 
\begin{align*}
& \abr{ q_t(s,a) - \widehat{q}_t(s,a) } \\
& \leq 4 \sum_{k = 0}^{k(s)-1} \sum_{(u,v,w)\in T_k}  q_t(u,v)  B_{i(t)}(u,v,w) q_t(s,a|w) \\
& \quad + 4 \sum_{k = 0}^{k(s)-1} \sum_{(u,v,w)\in T_k}  q_t(u,v)  B_{i(t)}(u,v,w)  \rbr{ \sum_{h  = k(w) }^{k(s)-1} \sum_{(x,y,z)\in T_h }  q_t(x,y|w)  B_{i(t)}(x,y,z) } \\
& \leq 16 \sum_{k=0}^{k(s)-1} \sum_{(u,v,w)\in T_k} q_t(u,v)  \sqrt{ \frac{ P(w|u,v) \ln \pcons }{ \max\cbr{m_{i(t)}(u,v)  ,1}} }q_t(s,a|w) \\
& \quad + \frac{160}{3} \sum_{k=0}^{k(s)-1}\sum_{(u,v,w)\in T_k} q_t(u,v) \cdot \frac{ P(w|u,v) \ln \pcons }{ \max\cbr{m_{i(t)}(u,v)  ,1}} \cdot q_t(s,a|w)  \\
& \quad + 4\sum_{k=0}^{k(s)-1} \sum_{h = k+1}^{k(s)-1} \sum_{(u,v,w)\in T_k} \sum_{(x,y,z)\in T_h} q_t(u,v) B_{i(t)}(u,v,w) q_t(x,y|w) B_{i(t)}(x,y,z) 
\end{align*}
where the second line follows from \pref{lem:conf_width_bound}. 

On the other hand,  $ \abr{ q_t(s,a) - \widetilde{q}_t(s,a) } \leq  1$ holds always. Combining the bounds of these two cases finishes the proof. 
\end{proof}

\begin{lemma}
\label{lem:bobw_N_bound}
\pref{alg:bobw_framework} ensures $N \leq 4|S||A|\rbr{ \log T + 1}$ where $N$ is the number of epochs.  
\end{lemma}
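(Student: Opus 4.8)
The plan is to run the standard doubling-epoch counting argument. First I would record the elementary but crucial fact that, by the layered structure of the MDP, the trajectory of any single episode visits exactly one state-action pair in each layer, so a fixed pair $(s,a)$ (which lies in the single layer $S_{k(s)}$) is visited at most once per episode; hence its total number of visits over all $T$ episodes is at most $T$.

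Next I would charge each epoch change to one state-action pair. For $i = 1, \dots, N-1$, the passage from epoch $i$ to epoch $i+1$ is triggered at the end of some episode $t$ inside epoch $i$ precisely because the running counter of some visited pair satisfies the doubling test $m_i(s_k^t,a_k^t)\ge\max\{1,2m_{i-1}(s_k^t,a_k^t)\}$; fix any such witness (say, the one in the lowest layer) and call it \emph{responsible} for that change. Since there are exactly $N-1$ epoch changes, it then suffices to bound, for each fixed pair $(s,a)$, the number $m_{s,a}$ of changes for which it is responsible, and conclude $N-1 \le \sum_{(s,a)} m_{s,a}$.

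For a fixed $(s,a)$, list the epochs it is responsible for terminating as $e_1 < \dots < e_{m_{s,a}}$, and set $b_j = m_{e_j+1}(s,a)$ — by the convention fixed at the start of the appendix this is the number of visits to $(s,a)$ during epochs $1,\dots,e_j$ — and $a_j = m_{e_j}(s,a)$. Rewritten through this convention, the triggering condition reads $b_j \ge \max\{1, 2a_j\}$. Moreover, since the visit count is nondecreasing and $(s,a)$ triggers nothing strictly between epochs $e_j+1$ and $e_{j+1}$, we get $a_{j+1} \ge b_j$, hence $b_{j+1} \ge \max\{1, 2b_j\}$. Iterating from $b_1 \ge 1$ gives $b_{m_{s,a}} \ge 2^{m_{s,a}-1}$, and combining with $b_{m_{s,a}} \le T$ from the first step yields $m_{s,a} \le \log_2 T + 1$. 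Summing over the $|S||A|$ pairs, $N - 1 \le |S||A|(\log_2 T + 1)$, so $N \le |S||A|(\log_2 T + 2) \le 4|S||A|(\log T + 1)$, as claimed.

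I do not anticipate any real obstacle: the whole argument is elementary counting. The only place that warrants care is translating the algorithm's in-place counter updates (the values $m_i,m_{i-1}$ being overwritten during execution) into the static ``initial-value'' convention used in the analysis, so that the recursion $b_{j+1} \ge 2b_j$ is justified precisely; once that dictionary is in place, the chain of inequalities is immediate.
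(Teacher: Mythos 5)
Your proof is correct and takes essentially the same doubling-epoch counting argument as the paper's: for each fixed $(s,a)$, the visit count at least doubles between consecutive epochs it triggers, so it can trigger at most $\mathcal{O}(\log T)$ epoch changes, and summing over pairs gives the bound. You index by the epoch that $(s,a)$ ends rather than the one it starts and spell out the monotonicity step $a_{j+1}\ge b_j$ and the charge $N-1\le\sum_{(s,a)}m_{s,a}$ a bit more carefully, but the substance is identical to the paper's proof.
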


\begin{proof} For a fixed state-action pair $(s,a)$, let the $i_1\leq i_2 \leq \ldots \leq i_k$ denotes the epochs that triggered by this state-action pair, that is 
\begin{align*}
\cbr{ i_1, i_2, \ldots, i_k } = \cbr{  i : i \in 1, \ldots N, m_{i}(s,a) \geq \max\cbr{1, 2\cdot m_{i-1}(s,a)}  }.
\end{align*}

Clearly, it holds that 
\begin{align*}
1 = m_{i_1}(s,a), \text{ and } m_{i_\tau}(s,a) \geq 2 m_{i_{\tau-1}}(s,a) \tau \in 2,\ldots, k 
\end{align*}
which indicates that $m_{i_k}(s,a) \geq 2^{k-1}$. Combining with the fact that $m_{i_k}(s,a) \leq T$, we have 
\begin{align*}
k = \abr{\cbr{ i_1, i_2, \ldots, i_k } } \leq 4\log T + 4. 
\end{align*} 
Taking the summation over all state-action pairs finishes the proof. 
\end{proof}

\end{document}